\DeclareMathOperator*{\argmin}{arg\,min}
\newtheorem{lemma}{Lemma}
\newtheorem{theorem}{Theorem}
\newtheorem{proposition}{Proposition}
\newtheorem{definition}{Definition}
\newtheorem{remark}{Remark}
\newtheorem{example}{Example}
\newtheorem{claim}{Claim}
\newtheorem{question}{Question}
\newtheorem{notation}{Notation}
\newcommand\naturalnumber{\mathbb{N}}
\newcommand\R{\mathbb{R}}
\newcommand\E{\mathbb{E}}
\newcommand\hpc{\mathcal{H}}
\newcommand\lalgo{\hat{h}_{n}}
\newcommand\RE{\text{RE}(\mathcal{H})}
\newcommand\trueerrorrateh{\text{er}_{P}(h)}
\newcommand\trueerrorratehn{\text{er}_{P}(\hat{h}_{n})}
\newcommand\empiricalerrorrateh{\hat{\text{er}}_{S_{n}}(h)}
\newcommand\empiricalerrorratehn{\hat{\text{er}}_{S_{n}}(\hat{h}_{n})}
\newcommand\datasetstats{S_{n}:=\{(x_{i}, y_{i})\}_{i=1}^{n}\sim P^{n}}
\newcommand\vcdim{\text{VC}(\mathcal{H})}
\newcommand\edim{\text{E}(\mathcal{H})}
\newcommand\sedim{\text{SE}(\mathcal{H})}
\newcommand\vcedim{\text{VCE}(\mathcal{H})}
\newcommand\vcldim{\text{VCL}(\mathcal{H})}
\newcommand\lsdim{\text{LD}(\mathcal{H})}
\newcommand\starnumber{\mathfrak{s}}
\newcommand\regionofdisagreement{\text{DIS}(\mathcal{H})}
\def\ddefloop#1{\ifx\ddefloop#1\else\ddef{#1}\expandafter\ddefloop\fi}
\def\ddef#1{\expandafter\def\csname v#1\endcsname{\ensuremath{\boldsymbol{#1}}}}
\def\ddef#1{\expandafter\def\csname v#1\endcsname{\ensuremath{\boldsymbol{\csname #1\endcsname}}}}
\newlength\oversetwidth
\newlength\underwidth
\title{Universal Rates of Empirical Risk Minimization}
\author{%
    Steve Hanneke \\
    Department of Computer Science \\
    Purdue University \\
    \texttt{steve.hanneke@gmail.com} \\
    \And
    Mingyue Xu \\
    Department of Computer Science \\
    Purdue University \\
    \texttt{xu1864@purdue.edu} \\
}
\begin{document}
\maketitle

\begin{abstract}
  The well-known \emph{empirical risk minimization} (ERM) principle is the basis of many widely used machine learning algorithms, and plays an essential role in the classical PAC theory. A common description of a learning algorithm's performance is its so-called “learning curve”, that is, the decay of the expected error as a function of the input sample size. As the PAC model fails to explain the behavior of learning curves, recent research has explored an alternative universal learning model and has ultimately revealed a distinction between optimal universal and uniform learning rates \citep{bousquet2021theory}. However, a basic understanding of such differences with a particular focus on the ERM principle has yet to be developed. 
  
  In this paper, we consider the problem of universal learning by ERM in the realizable case and study the possible universal rates. Our main result is a fundamental \emph{tetrachotomy}: there are only four possible universal learning rates by ERM, namely, the learning curves of any concept class learnable by ERM decay either at $e^{-n}$, $1/n$, $\log{(n)}/n$, or arbitrarily slow rates. Moreover, we provide a complete characterization of which concept classes fall into each of these categories, via new complexity structures. We also develop new combinatorial dimensions which supply sharp asymptotically-valid constant factors for these rates, whenever possible.
\end{abstract}

\section{Introduction}
  \label{sec:introduction}
The classical statistical learning theory mainly focuses on the celebrated PAC (Probably Approximately Correct) model \citep{vapnik1974theory,valiant1984theory} with emphasis on supervised learning. A particular setting therein, called the \emph{realizable} case, has been extensively studied. Complemented by the ``no-free-lunch" theorem \citep{antos1996strong}, the PAC framework, which adopts a minimax perspective, can only explain the best \emph{worst-case} learning rate by a learning algorithm over all realizable distributions. Such learning rates are thus also called the \emph{uniform} rates. However, the uniform rates can only capture the upper envelope of all learning curves, and are too coarse to explain practical machine learning performance. This is because real-world data is rarely worst-case, and the data source is typically fixed in a given learning scenario. Indeed, \citet{cohn1990can,cohn1992tight} observed from experiments that practical learning rates can be much faster than is predicted by PAC theory. Moreover, many theoretical works \citep[etc.]{schuurmans1997characterizing,koltchinskii2005exponential,audibert2007fast} were able to prove faster-than-uniform rates for certain learning problems, though requiring additional modelling assumptions. To distinguish from the uniform rates, these rates are named the \emph{universal} rates and was formalized recently by \citet{bousquet2021theory} via a distribution-dependent framework. Unlike the simple \emph{dichotomy} of the optimal uniform rates: every concept class $\hpc$ has a uniform rate being either linear $\vcdim/n$ or ``bounded away from zero", the optimal universal rates are captured by a \emph{trichotomy}: every concept class $\hpc$ has a universal rate being either exponential, linear or arbitrarily slow \citep[see Thm.1.6][]{bousquet2021theory}.

In supervised learning, a family of successful learners called the \emph{empirical risk minimization} (ERM) consist of all learning algorithms that output a sample-consistent classifier. In other words, an ERM algorithm is any learning rule, which outputs a concept in $\hpc$ that minimizes the empirical error (see Appendix \ref{sec:preliminaries} for a formal definition). For notation simplicity, we first introduce 

\begin{definition}  [{\textbf{\citealp[Version space,][]{mitchell1977version}}}]
  \label{def:version-space}
Let $\hpc$ be a concept class and $S_{n}:=\{(x_{i},y_{i})\}_{i=1}^{n}$ be a dataset, the \underline{version space induced by $S_{n}$}, denoted by $V_{S_{n}}(\hpc)$ (or $V_{n}(\hpc)$ for short), is defined as $V_{S_{n}}(\hpc):=\{h\in\hpc: h(x_{i})=y_{i}, \forall i\in[n]\}$.
\end{definition}

Now given labeled samples $S_{n}:=\{(x_{i},y_{i})\}_{i=1}^{n}$, an ERM algorithm is any learning algorithm that outputs a concept in the sample-induced \emph{version space}, that is, a sequence of universally measurable functions $\mathcal{A}_{n}: S_{n}\rightarrow \lalgo\in V_{S_{n}}(\hpc), n\in\naturalnumber$. Throughout this paper, we will simply denote an ERM algorithm by its output predictors $\{\lalgo\}_{n\in\naturalnumber}$.

It is well-known that the ERM principle plays an important role in understanding general uniform learnability: a concept class is uniformly learnable if and only if it can be learned by ERM. However, while the optimal $\vcdim/n$ rate is achievable by some improper learner \citep{hanneke2016optimal}, ERM algorithms can at best achieve a uniform rate of $(\vcdim/n)\log{(n/\vcdim)}$. Moreover, such a gap has been shown to be unavoidable in general \citep{auer2007new}, which leaves a challenging question to study: what are the sufficient and necessary conditions on $\hpc$ for the entire family of ERM algorithms to achieve the optimal error? Indeed, many subsequent works have devoted to improving the logarithmic factor in specific scenarios. The work of \citet{gine2006concentration} refined the bound by replacing $\log{(n/\vcdim)}$ with $\log{(\theta(\vcdim/n))}$, where $\theta(\cdot)$ is called the \emph{disagreement coefficient}. Based on this, \citet{hanneke2015minimax} proposed a new data-dependent bound with $\log{(\hat{n}_{1:n}/\vcdim)}$, where $\hat{n}_{1:n}$ is a quantity related to the \emph{version space compression set size} (a.k.a. the \emph{empirical teaching dimension}). As a milestone, the work of \citet{hanneke2016refined} proved an upper bound $(\vcdim/n)\log{(\starnumber_{\hpc}/\vcdim)}$ and a lower bound $(\vcdim+\log{(\starnumber_{\hpc})})/n$, where $\starnumber_{\hpc}$ is called the \emph{star number} of $\hpc$ (see Definition \ref{def:star-number} in Section \ref{sec:technical-overview}). Though not quite matching, these two bounds together yield an optimal linear rate when $\starnumber_{\hpc}<\infty$. Thereafter, the uniform rates by ERM can be described as a \emph{trichotomy}, namely, every concept class $\hpc$ has a uniform rate by ERM being exactly one of the following: $1/n$, $\log{(n)}/n$ and ``bounded away from zero".

From a practical perspective, many ERM-based algorithms are designed and are widely applied in different areas of machine learning, such as the logistic regression and SVM, the CAL algorithm in active learning, the gradient descent (GD) algorithm in deep learning. Since the worst-case nature of the PAC model is too pessimistic to reflect the practice of machine learning, understanding the distribution-dependent performance of ERM algorithms is of great significance. However, unlike that a distinction between the optimal uniform and universal rates has been fully understood, how fast universal learning can outperform uniform learning in particular by ERM remains unclear. Furthermore, we are lacking a complete theory to the characterization of the universal rates by ERM, though certain specific scenarios that admit faster rates by ERM have been discovered \citep{schuurmans1997characterizing,van2013universal}. In this paper, we aim to answer the following fundamental question:
\begin{question}
  \label{ques:question-to-learning-rates}
    Given a concept class $\hpc$, what are the possible rates at which $\hpc$ can be universally learned by ERM? 
\end{question}
We start with some basic preliminaries of this paper. We consider an \emph{instance space} $\mathcal{X}$ and a \emph{concept class} $\hpc\subseteq\{0,1\}^{\mathcal{X}}$. Given a probability distribution $P$ on $\mathcal{X}\times\{0,1\}$, the \emph{error rate} of a classifier $h:\mathcal{X}\rightarrow\{0,1\}$ is defined as $\trueerrorrateh:=P((x,y)\in\mathcal{X}\times\{0,1\}:h(x)\neq y)$. A distribution $P$ is called \emph{realizable} with respect to $\hpc$, denoted by $P\in\RE$, if $\inf_{h\in\hpc}\trueerrorrateh=0$. Note that in this definition, $h^{*}$ satisfying $\text{er}_{P}(h^{*})=\inf_{h\in\hpc}\trueerrorrateh$ is called the \emph{target concept} of the learning problem, and is not necessary in $\hpc$. We may also say that $P$ is a realizable distribution centered at $h^{*}$. Given an integer $n$, we denote by $\datasetstats$ a i.i.d. $P$-distributed dataset. In the universal learning framework, the performance of a learning algorithm is commonly measured by its \emph{learning curve} \citep{bousquet2021theory,hanneke2022universal,bousquet2023fine}, that is, the decay of the \emph{expected error rate} $\E[\trueerrorratehn]$ as a function of sample size $n$. With these settings settled, we are now able to formalize the problem of universal learning by ERM.

\begin{definition}  [\textbf{Universal learning by ERM}]
  \label{def:universally-learnable}
    Let $\hpc$ be a concept class, and $R(n)\rightarrow 0$ be a rate function. We say
    \setlist{nolistsep}
    \begin{itemize}
        \item $\hpc$ is \underline{universally learnable at rate $R$ by ERM}, if for every distribution $P\in\RE$, there exist parameters $C, c > 0$ such that for every ERM algorithm, $\E[\trueerrorratehn] \leq CR(cn)$, for all $n\in\naturalnumber$.
        \item $\hpc$ is \underline{not universally learnable at rate faster than $R$ by ERM}, if there exists a distribution $P\in\RE$ and parameters $C, c > 0$ such that there exists an ERM algorithm satisfying $\E[\trueerrorratehn] \geq CR(cn)$, for infinitely many $n\in\naturalnumber$.
        \item $\hpc$ is \underline{universally learnable with exact rate $R$ by ERM}, if $\hpc$ is universally learnable at rate $R$ by ERM, and is not universally learnable at rate faster than $R$ by ERM.
        \item $\hpc$ requires \underline{arbitrarily slow rates to be universally learned by ERM}, if for every rate function $R(n)\rightarrow 0$, $\hpc$ is not universally learnable at rate faster than $R$ by ERM. 
    \end{itemize}
\end{definition}

\begin{remark}
  \label{rem:remark-to-universally-learnable-by-erm-1}
The above definition inherits the structure of the definition to the optimal universal learning \citep[Definition 1.4][]{bousquet2021theory}. Here, we are actually considering the ``worst-case" ERM algorithm, which is consistent with the PAC theory. A crucial difference between this definition and the PAC one is that here the constants $C,c>0$ are allowed to depend on the distribution $P$. In other words, the PAC model can be defined similarly, but requires uniform constants $C,c>0$. Consequently, $\hpc$ is universally learnbale at rate $R$ by ERM if it is PAC learnable at rate $R$ by ERM.
\end{remark}

\begin{remark}
  \label{rem:remark-to-universally-learnable-by-erm-2}
It is not hard to see that the error rate achieved by any ERM algorithm, given $S_{n}\sim P^{n}$ as input, is at most $\sup_{h\in V_{S_{n}}(\hpc)}\trueerrorrateh$. Furthermore, for any distribution $P\in\RE$, there exist ERM algorithms obtaining error rates arbitrarily close to this value. Hence, to obtain upper bounds of the universal rates by ERM, it requires us to bound the random variable $\sup_{h\in V_{S_{n}}(\hpc)}\trueerrorrateh$, where a common technique is to bound $\mathbb{P}(\sup_{h\in V_{S_{n}}(\hpc)}\trueerrorrateh>\epsilon)=\mathbb{P}(\exists h\in V_{S_{n}}(\hpc): \trueerrorrateh>\epsilon)$. To obtain lower bounds, it requires us to construct specific ``hard" distributions.
\end{remark}

\subsection{Basic examples}
  \label{subsec:basic-examples}
In order to develop some initial intuition for what universal learning rates are possible for ERM, we first introduce several basic examples that illustrate the possibilities in the following Section \ref{subsec:main-results}. To convince the reader, we provide direct analysis (without using our characterization in Section \ref{subsec:main-results}) for those examples (see details in Appendix \ref{subsec:detailed-basic-examples}).
  
\begin{example}  [\textbf{$e^{-n}$ learning rate}]
  \label{ex:finite-class}
Any finite class $\hpc$ is universally learnable at exponential rate \citep{schuurmans1997characterizing}. Indeed, according to their analysis, such exponential rates can also be achieved by any ERM algorithm.
\end{example}

\begin{example}  [\textbf{$1/n$ learning rate}]
  \label{ex:threshold-classifiers-naturalnumbers}
Let $\mathcal{H}_{\text{thresh},\naturalnumber} := \{h_{t}:t\in\naturalnumber\}$ be the class of all threshold classifiers on natural numbers, where $h_{t}(x) := \mathbbm{1}(x\geq t)$, for all $x\in\naturalnumber$. $\mathcal{H}_{\text{thresh},\naturalnumber}$ is universally learnable at exponential rate since this class does not have an infinite Littlestone tree \citep{bousquet2021theory}. However, ERM algorithms cannot guarantee such exponential rates but at best linear rates, when encountering certain realizable distributions centered at the target concept $h_{\text{all-0's}}$, which is the function that labels zero everywhere (see Appendix \ref{sec:preliminaries}).
\end{example}

\begin{example}  [\textbf{$\log{(n)}/n$ learning rate}]
  \label{ex:singleton}
Let $\mathcal{X}=\naturalnumber$ and $\mathcal{H}_{\text{singleton},\naturalnumber} := \{h_{t}: t\in\mathcal{X}\}$ be the class of all singletons on $\mathcal{X}$, where $h_{t}(x):=\mathbbm{1}(x=t)$, for all $x\in\naturalnumber$. It is clear that $\text{VC}(\mathcal{H}_{\text{singleton},\naturalnumber})=1$. Note that $\mathcal{H}_{\text{singleton},\naturalnumber}$ is universally learnable at exponential rate since it has finite Littlestone dimension $\text{LD}(\mathcal{H}_{\text{singleton},\naturalnumber})=1$. However, the exact universal rate by ERM is instead $\log{(n)}/n$. This is because $\mathcal{H}_{\text{singleton},\naturalnumber}$ admits certain realizable distributions centered at $h_{\text{all-0's}}$. Indeed, it is an example where the universal rate by ERM matches the uniform rate, up to a distribution-dependent constant.
\end{example}

\begin{example}  [\textbf{Arbitrarily slow learning rate}]
  \label{ex:arbitrarily-slow}
Let $\mathcal{X}=\bigcup_{i\in\naturalnumber}\mathcal{X}_{i}$ be the disjoint union of finite sets with $|\mathcal{X}_{i}|=2^{i}$, for all $i\in\naturalnumber$. For each $i\in\naturalnumber$, let $\hpc_{i}:=\{h_{S}:=\mathbbm{1}_{S}: S\subseteq\mathcal{X}_{i}, |S|\geq 2^{i-1}\}$ and consider the concept class $\hpc = \bigcup_{i\in\naturalnumber}\hpc_{i}$. $\hpc$ is universally learnable at exponential rate since it does not have an infinite Littlestone tree. However, a bad ERM algorithm can perform arbitrarily slowly.
\end{example}

\begin{example}  [\textbf{Not Glivenko-Cantelli but learnable by ERM}]
  \label{ex:not-learnable-by-erm}
Let $\mathcal{X}=[0,1]$, $\hpc:=\{\mathbbm{1}_{S}:S\subset\mathcal{X},|S|<\infty\}$, and $P$ be the uniform (Lebesgue) distribution on $[0,1]$. $\hpc$ is universally learnable at exponential rate (no infinite Littlestone tree). Moreover, $\hpc$ is not a universal Glivenko-Cantelli class for $P$ \citep{van2013universal}, but is still universally learnable by ERM. However, if we consider the class $\hpc\cup\{h_{\text{all-1's}}\}$, which is still not a universal Glivenko-Cantelli class for $P$, but no longer universally learnable by any ERM algorithm since $\trueerrorratehn=1$ regardless of the sample size.
\end{example}

The above examples indicate that the cases of universal learning by ERM do not match the uniform learning, but contains at least five possible cases: every nontrivial concept class $\hpc$ is either universally learnable at exponential rate (but not faster), or is universally learnable at linear rate (but not faster), or is universally learnable at slightly slower than linear rate $\log{(n)}/n$ (but not faster), or is universally learnable but necessarily with arbitrarily slow rates, or is not universally learnable at all. Throughout this paper, we only consider the case where the given concept class is universally learnable by ERM. We leave it an open question whether there exists a nice characterization that determines the universal learnability by ERM.

\subsection{Main results}
  \label{subsec:main-results}
In this section, we summarize the main results of this paper. The examples in Section \ref{subsec:basic-examples} reveal that there are at least four possible universal rates by ERM. Interestingly, we find that these are also the only possibilities. The following two theorems consist of the main results of this work. In particular, Theorem \ref{thm:main-theorem-target-independent} gives out a complete answer to Question \ref{ques:question-to-learning-rates}. It expresses a fundamental \emph{tetrachotomy}: there are exactly four possibilities for the universal learning rates by ERM: being either exponential, or linear, or $\log{(n)}/n$, or arbitrarily slow rates. Moreover, Theorem \ref{thm:main-theorem-target-dependent} specifies the answer by pointing out for what realizable distributions (targets), those universal rates are sharp.

\begin{theorem}  [\textbf{Universal rates for ERM}]
  \label{thm:main-theorem-target-independent}
For every class $\hpc$ with $|\hpc|\geq 3$, the following hold:
\setlist{nolistsep}
\begin{itemize}
    \item $\hpc$ is universally learnable by ERM with exact rate $e^{-n}$ if and only if $|\hpc|<\infty$.
    \item $\hpc$ is universally learnable by ERM with exact rate $1/n$ if and only if $|\hpc|=\infty$ and $\hpc$ does not have an infinite star-eluder sequence.
    \item $\hpc$ is universally learnable by ERM with exact rate $\log{(n)}/n$ if and only if $\hpc$ has an infinite star-eluder sequence and $\vcdim<\infty$.
    \item $\hpc$ requires at least arbitrarily slow rates to be learned by ERM if and only if $\vcdim=\infty$.
\end{itemize}
\end{theorem}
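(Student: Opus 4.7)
The plan is to prove each of the four items as an independent \emph{if and only if} statement; since the four combinatorial conditions partition the concept classes with $|\hpc|\geq 3$, the tetrachotomy then follows automatically. For each item I would separately establish an upper bound, namely that every ERM attains the stated rate on every realizable distribution, and a matching lower bound, namely a realizable $P$ and an ERM that is no faster. Throughout, the central quantity to control is $\sup_{h\in V_{n}(\hpc)}\trueerrorrateh$, since by Remark~\ref{rem:remark-to-universally-learnable-by-erm-2} this both upper-bounds what every ERM can do and is approached by an adversarial ERM.

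For the upper bounds: in the finite case, the probability that a fixed $h$ with $\trueerrorrateh\geq\varepsilon$ stays consistent with $S_n$ is at most $(1-\varepsilon)^n$, so a union bound over $|\hpc|$ candidates followed by integration in $\varepsilon$ yields $\E[\trueerrorratehn]\leq Ce^{-cn}$. For the $1/n$ upper bound I would bound $\E\bigl[\sup_{h\in V_{n}(\hpc)}\trueerrorrateh\bigr]$ by a peeling argument over dyadic error scales, using the absence of an infinite star-eluder sequence to control, in a distribution-dependent manner, the effective number of ``bad'' version-space members at each scale and thus shave off the logarithmic factor. For the $\log(n)/n$ upper bound I would appeal to the classical VC inequality for version spaces, $\E\bigl[\sup_{h\in V_{n}(\hpc)}\trueerrorrateh\bigr]=O\bigl((d/n)\log(n/d)\bigr)$ with $d=\vcdim<\infty$, which already dominates the worst ERM.

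For the lower bounds: infinite cardinality rules out the exponential rate by a construction generalizing Example~\ref{ex:threshold-classifiers-naturalnumbers}, namely, a sequence of distinct concepts separated by distinct points with masses $\propto 1/k^{2}$, the target being the all-zeros-like concept, so that an adversarial ERM can always choose an alternative inconsistent on a yet-unseen point, producing $\Omega(1/n)$ error. The existence of an infinite star-eluder sequence rules out the $1/n$ rate by a construction generalizing Example~\ref{ex:singleton}: the sequence provides, for each $k$, a coordinate that can be independently flipped by some concept, and a coupon-collector-style argument on carefully chosen masses forces an adversarial ERM to pay $\Omega(\log(n)/n)$. Finally, infinite VC rules out any nontrivial rate by an Ehrenfeucht--Haussler--Kearns--Valiant-style construction on a shattered infinite set, adapted to the universal setting as in \citet{bousquet2021theory}: for any prescribed rate $R$, a sufficiently slowly decaying prior on the shattered points drives some ERM above $R$ infinitely often.

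The principal technical obstacle will be both directions of the star-eluder characterization. For the upper bound one must convert the purely combinatorial ``no infinite star-eluder sequence'' hypothesis into a quantitative version-space bound that saves exactly the $\log n$ factor relative to VC; this likely requires a new concentration argument, combining peeling with a star-number-type bound on each scale whose effective size is finite precisely because the eluder sequence fails to be infinite. For the matching lower bound one must show that this very structural property is the sole obstruction to the $1/n$ rate, carefully coordinating the support of the hard distribution with the adversarial ERM's freedom so that a $\log n$ blow-up is truly unavoidable. These two directions together pin down the combinatorial condition appearing in the tetrachotomy and are the heart of the argument.
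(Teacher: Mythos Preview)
Your overall decomposition into four independent equivalences with matching upper/lower bounds is exactly the paper's strategy, and several of your ingredients (the finite-class union bound, the classical VC bound for the $\log(n)/n$ upper bound, the coupon-collector argument for the $\log(n)/n$ lower bound) are what the paper uses. However, two of your proposed steps have real gaps.

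\textbf{The $1/n$ upper bound.} Your ``peeling over dyadic error scales'' sketch does not engage with the actual content of ``no infinite star-eluder sequence.'' The obstacle is that this hypothesis says nothing about $\hpc$ itself---$\hpc$ may have $\starnumber_{\hpc}=\infty$ (Proposition~\ref{prop:non-equivalence-starnumber-star-eluder-sequence}) and arbitrarily many bad hypotheses at every scale---so a direct count of version-space members cannot save the $\log n$. What the hypothesis \emph{does} give is that along any realizable data sequence the star number of the \emph{version space} $\starnumber_{h^{*}}(V_{n}(\hpc))$ eventually stops growing. The paper exploits this via a two-stage argument: (i) with probability $\geq 1/2$ the first $k=k(P)$ samples already force $\starnumber_{h^{*}}(V_{k}(\hpc))$ below some finite bound; (ii) conditional on that event, apply a \emph{stable sample compression scheme} whose size is the version-space compression set size $\hat n(S_n)\leq \starnumber_{h^{*}}(V_{k}(\hpc))$, yielding $\E[\trueerrorratehn]\lesssim \hat n(S_n)/n$ with no $\log$ (Lemmas~\ref{lem:stable-sample-consistent-sample-compression-scheme} and~\ref{lem:version-space-compression-set-size-bounded-by-starnumber}). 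The sample-compression/star-number machinery is the missing idea; peeling alone will not shave the logarithm.

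\textbf{The arbitrarily-slow lower bound.} You write ``EHKV-style construction on a shattered infinite set,'' but $\vcdim=\infty$ does \emph{not} guarantee an infinite shattered set---only arbitrarily large finite ones---so this construction is not available in general. The paper circumvents this by first proving (Lemma~\ref{lem:equivalence-not-vc-class}) that $\vcdim=\infty$ is equivalent to the existence of an infinite \emph{VC-eluder sequence}: inductively pick a $k$-shattered set inside the current version space, then use the VC-of-unions bound (Lemma~\ref{lem:finite-union-vc}) to argue that at least one labeling of it leaves the version space with infinite VC dimension, and recurse. Only then can the hard distribution be laid down along this sequence (Lemma~\ref{lem:arbitrarily-slow-rate}). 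Similarly, your $1/n$ lower bound needs the analogous fact that $|\hpc|=\infty$ implies an infinite \emph{eluder} sequence (Lemma~\ref{lem:equivalence-finite-class}); this is elementary but must be stated, since ``distinct concepts separated by distinct points'' is not automatic from mere infinitude.
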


\begin{remark}
  \label{rem:remark-to-main-theorem-target-independent}
The formal definition of the star-eluder sequence can be found in Section \ref{sec:technical-overview}. Unlike the separation between exact $e^{-n}$ and $1/n$ rates is determined by the cardinality of the class, and the separation between exact $\log{(n)}/n$ and arbitrarily slow rates is determined by the VC dimension of the class, whether there exists a simple combinatorial quantity that determines the separation between exact $1/n$ and $\log{(n)}/n$ rates is unclear and might be an interesting direction for future work. We thought that it is likely the star number $\starnumber_{\hpc}$ (Definition \ref{def:star-number}) is the correct characterization here, but it turns out not unfortunately (see details in Section \ref{sec:equivalences} and Appendix \ref{subsec:star-related-notions}). 
\end{remark}

Based on Theorem \ref{thm:main-theorem-target-independent}, a distinction between the performance of ERM algorithms and the optimal universal learning algorithms can be revealed, which we present in the following table (the required definitions in ``Case" are deferred to Section \ref{sec:technical-overview}, and examples can be found in Appendix \ref{subsec:erm-is-optimal-or-not}).

\begin{tabular}{|p{1.8cm}|p{3.0cm}|p{5.7cm}|p{1.7cm}|}
 \hline
 Optimal rate & Exact rate by ERM & Case & Example \\
 \hline
 $e^{-n}$ & $1/n$ & \text{\scriptsize infinite eluder sequence but no infinite Littlestone tree} & Example \ref{ex:infinite-eluder-sequence-but-no-infinite-littlestone-tree} \\
 \hline
 $e^{-n}$ & $\log{(n)}/n$ & \text{\scriptsize infinite star-eluder sequence but no infinite Littlestone tree} & Example \ref{ex:infinite-star-eluder-sequence-no-infinite-littlestone-tree} \\
 \hline
 $e^{-n}$ & arbitrarily slow & \text{\scriptsize infinite VC-eluder sequence but no infinite Littlestone tree} & Example \ref{ex:infinite-vc-eluder-sequence-no-infinite-littlestone-tree} \\
 \hline
 $1/n$ & $\log{(n)}/n$ & \text{\scriptsize infinite star-eluder sequence but no infinite VCL tree} & Example \ref{ex:infinite-star-eluder-sequence-no-infinite-vcl-tree} \\
 \hline
 $1/n$ & arbitrarily slow & \text{\scriptsize infinite VC-eluder sequence but no infinite VCL tree} & Example \ref{ex:infinite-vc-eluder-sequence-no-infinite-vcl-tree} \\
 \hline
\end{tabular}

Furthermore, the distinction between the universal rates and the uniform rates by ERM can also be fully captured, and are depicted schematically in Figure \ref{fig:venn} as an analogy to the Fig.4 of \citet{bousquet2021theory}. Besides the examples in Section \ref{subsec:basic-examples}, we also need the following additional example concerning the Littlestone dimension to appear in the diagram.

\begin{example}  [\textbf{$\log{(n)}/n$ learning rate and unbounded Littlestone dimension}]
  \label{ex:half-spaces}
We consider here the class of two-dimensional halfspaces, that is, $\mathcal{X}:=\R^{2}$ and $\hpc_{\text{halfspaces},\R}:=\{\mathbbm{1}(\vw\cdot\vx+b\geq0): \vw\in\R^{2}, b\in\R\}$. It is a classical fact that for any integer $d$, the class of halfspaces on $\R^{d}$ has a finte VC dimension $d$, but has an infinite Littlestone tree, and thus having unbounded Littlestone dimension \citep{shalev2014understanding}. Finally, to show that this class is universally learnable by ERM at exact $\log{(n)}/n$ rate, we simply consider the subspace $\mathbb{S}^{1}\subset\mathcal{X}$, this is indeed an infinite star set of $\hpc_{\text{halfspaces},\R}$ centered at $h_{\text{all-0's}}$ and thus an infinite star-eluder sequence. 
\end{example}

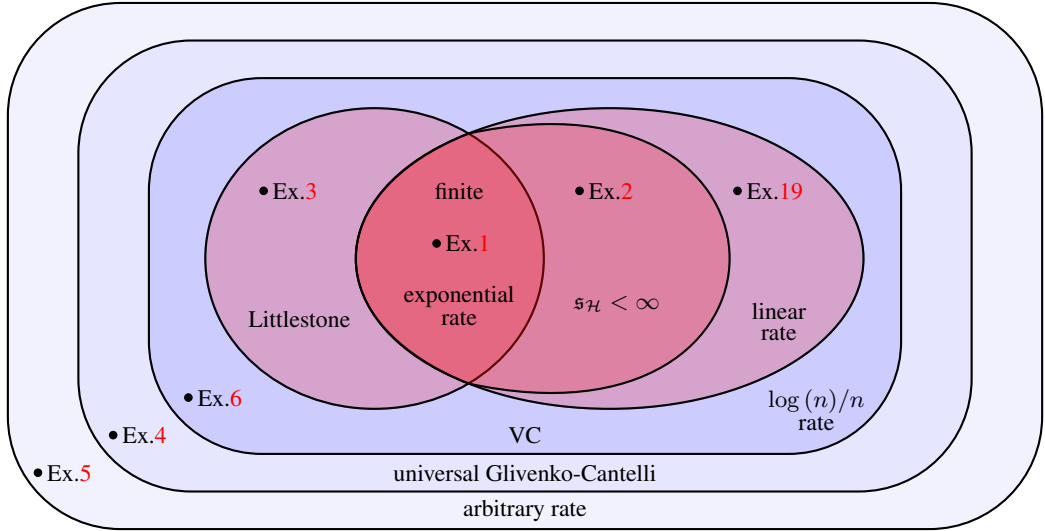
\begin{figure}
\begin{center}
\begin{tikzpicture}

\begin{scope}[xscale=1.25]

\draw[thick,fill=blue!5!white,rounded corners=1.5cm]
(0,-.5) rectangle (11,6.5);

\draw[thick,fill=blue!10!white,rounded corners=1.5cm]
(.75,0) rectangle (10.25,6);

\draw[thick,fill=blue!20!white,rounded corners=1.5cm]
(1.5,.5) rectangle (9.5,5.5);

\draw[thick,fill=red, name path=ellipse 1, opacity=.2] (3.9,3.1) ellipse (1.8 and 2);
\draw[thick] (3.9,3.1) ellipse (1.8 and 2);

\draw[thick,fill=red, name path=ellipse 2, opacity=.2] (6.4,3.1) ellipse (2.7 and 2);
\draw[thick] (6.4,3.1) ellipse (2.7 and 2);

\pgftransparencygroup
\draw[thick,fill=red,fill opacity=0.2,name intersections={of=ellipse 1 and ellipse 2}] (intersection-1) .. controls +(3.7,1) and +(3.7,-1) ..(intersection-2);
\draw[thick,fill=red,fill opacity=0.2,name intersections={of=ellipse 1 and ellipse 2}] (intersection-1) .. controls +(-1.6,-0.77) and +(-1.6,0.77) ..(intersection-2);
\endpgftransparencygroup

\draw (5.5,-.25) node {\small arbitrary rate};
\draw (5.5,.25) node {\small universal Glivenko-Cantelli};
\draw (5.5,.75) node {\small VC};
\draw (8.6,1.2) node {\small $\log{(n)}/n$};
\draw (8.6,0.9) node {\small rate};
\draw (3.1,2.3) node {\small Littlestone};
\draw (8.2,2.4) node {\small linear};
\draw (8.2,2.1) node {\small rate};

\end{scope} 

% \draw[thick,fill=red,opacity=.2] (8.0,3.1) ellipse (2 and 1.6);
% \draw[thick] (8.0,3.1) ellipse (2 and 1.6);
\draw (8.1,2.5) node {\small $\starnumber_{\hpc}<\infty$};
\draw (6,4.0) node {\small finite};
\draw (6,2.6) node {\small exponential};
\draw (6,2.3) node {\small rate};

% \filldraw[color=black] (5,3.7) circle (.05) node[right] {\small Ex.?};
\filldraw[color=black] (9.7,4) circle (.05) node[right] {\small Ex.\ref{ex:infinite-starnumber-but-no-infinite-star-eluder-sequence}};
\filldraw[color=black] (5.7,3.3) circle (.05) node[right] {\small Ex.\ref{ex:finite-class}};
\filldraw[color=black] (7.6,4) circle (.05) node[right] {\small Ex.\ref{ex:threshold-classifiers-naturalnumbers}};
\filldraw[color=black] (3.4,4) circle (.05) node[right] {\small Ex.\ref{ex:singleton}};
\filldraw[color=black] (2.4,1.25) circle (.05) node[right] {\small Ex.\ref{ex:half-spaces}};
\filldraw[color=black] (1.4,0.75) circle (.05) node[right] {\small Ex.\ref{ex:arbitrarily-slow}};
\filldraw[color=black] (0.4,0.25) circle (.05) node[right] {\small Ex.\ref{ex:not-learnable-by-erm}};

\end{tikzpicture}
\end{center}
\caption{A venn diagram depicting the tetrachotomy of the universal rates by ERM and its relation with the uniform rates characterized by the VC dimension and the star number.}
\label{fig:venn}
\end{figure}

As a complement to Theorem \ref{thm:main-theorem-target-independent}, the following Theorem \ref{thm:main-theorem-target-dependent} gives out target-specified universal rates. We say a target concept $h^{*}$ is universally learnable by ERM with exact rate $R$ if all realizable distribution $P$ considered in Definition \ref{def:universally-learnable} are centered at $h^{*}$. In other words, $\hpc$ is universally learnable with exact rate $R$ is equivalent to say all realizable target concepts are universally learnable with exact rate $R$. Concretely, for each of the four possible rates stated in Theorem \ref{thm:main-theorem-target-independent}, Theorem \ref{thm:main-theorem-target-dependent} specifies the target concepts that can be learned at such exact rate by ERM.

\begin{theorem}  [\textbf{Target specified universal rates}]
  \label{thm:main-theorem-target-dependent}
For every class $\hpc$ with $|\hpc|\geq 3$, and a target concept $h^{*}$, the following hold:
\setlist{nolistsep}
\begin{itemize}
    \item $h^{*}$ is universally learnable by ERM with exact rate $e^{-n}$ if and only if $\hpc$ does not have an infinite eluder sequence centered at $h^{*}$.
    \item $h^{*}$ is universally learnable by ERM with exact rate $1/n$ if and only if $\hpc$ has an infinite eluder sequence centered at $h^{*}$, but does not have an infinite star-eluder sequence centered at $h^{*}$.
    \item $h^{*}$ is universally learnable by ERM with exact rate $\log{(n)}/n$ if and only if $\hpc$ has an infinite star-eluder sequence centered at $h^{*}$, but does not have an infinite VC-eluder sequence centered at $h^{*}$.
    \item $h^{*}$ requires at least arbitrarily slow rates to be universally learned by ERM if and only if $\hpc$ has an infinite VC-eluder sequence centered at $h^{*}$.
\end{itemize}
\end{theorem}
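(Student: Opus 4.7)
The plan is to reduce the four-way equivalence to a nested chain of three upper bounds and three matching lower bounds. Since any VC-eluder sequence is a star-eluder sequence and any star-eluder sequence is an eluder sequence, the three ``forbidden structure'' conditions are nested, so it suffices to prove: \textbf{(U1)} absence of an infinite eluder sequence centered at $h^{*}$ yields an exponential universal rate by ERM; \textbf{(U2)} absence of an infinite star-eluder sequence centered at $h^{*}$ yields a $1/n$ rate; \textbf{(U3)} absence of an infinite VC-eluder sequence centered at $h^{*}$ yields a $\log(n)/n$ rate; and matching lower bounds \textbf{(L1)}--\textbf{(L3)} along the same chain.

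For the upper bounds I would invoke the reduction in Remark \ref{rem:remark-to-universally-learnable-by-erm-2} and bound $\mathbb{P}\!\left(\sup_{h\in V_{S_{n}}(\hpc)}\trueerrorrateh>\epsilon\right)$ directly. For \textbf{(U1)}, any infinite chain of ``fresh'' disagreements between surviving version-space elements and $h^{*}$ on i.i.d.\ samples would construct an infinite eluder sequence centered at $h^{*}$; absence of such a sequence bounds the eluder depth by some finite $N=N(h^{*})$, and a union bound over $N$-tuples of support points gives an exponential tail in $n$. For \textbf{(U2)}, I would localize the star-number proof technique: without an infinite star-eluder sequence, a one-inclusion-graph computation around $h^{*}$ should deliver the $O(1/n)$ rate pointwise in $P$. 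For \textbf{(U3)}, absence of an infinite VC-eluder sequence forces $\vcdim<\infty$ (else an infinite shattered set instantly yields an infinite VC-eluder sequence), and the classical $(\vcdim/n)\log(n/\vcdim)$ ERM upper bound applies directly.

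For the lower bounds I would construct realizable distributions centered at $h^{*}$ supported on the points $\{x_{i}\}$ of the appropriate infinite sequence, and exhibit specific bad ERM algorithms. In \textbf{(L1)}, put mass $\propto 2^{-i}$ on $x_{i}$ and let the ERM return the smallest-index surviving eluder concept; this concept disagrees with $h^{*}$ on an unseen $x_{i}$ of mass $\Theta(1/n)$. In \textbf{(L2)}, the star structure enables a coupon-collector construction: with roughly uniform mass on $n$ star elements and a bad ERM that picks a star-consistent concept disagreeing with $h^{*}$ on some unsampled support point, $\Theta(\log n)$ points remain unsampled after $n$ draws, yielding $\log(n)/n$ expected error. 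In \textbf{(L3)}, the infinite VC-eluder sequence lets us embed arbitrarily large shattered subsets along the sequence, and one transfers the arbitrarily-slow construction of \citet{bousquet2021theory} by diagonalizing against any given $R(n)\to 0$.

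\textbf{Main obstacle.} I expect \textbf{(U2)} to be the hardest step. Absence of an infinite star-eluder sequence does not entail any uniform bound in the vein of $\starnumber_{\hpc}<\infty$ (cf.\ Remark \ref{rem:remark-to-main-theorem-target-independent}), so the known $(\vcdim/n)\log(\starnumber_{\hpc}/\vcdim)$ ERM bound does not apply directly. The intended workaround is a distribution-dependent localization: for $P$-a.e.\ support point, star-eluder chains through it must terminate, and a Borel-Cantelli / dominated-convergence argument should translate this pointwise termination into an $O(1/n)$ integrated bound on the expected version-space error.
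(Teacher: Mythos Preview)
Your global architecture—nested upper bounds (U1)--(U3) paired with lower bounds (L1)--(L3)—is exactly the paper's, and your lower-bound constructions match Lemmas~\ref{lem:linear-lower-bound}, \ref{lem:logn-lower-bound}, and~\ref{lem:arbitrarily-slow-rate}. The genuine gap is in \textbf{(U3)}, with the same issue latent in \textbf{(U1)}.

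For (U3) you assert that absence of an infinite VC-eluder sequence centered at $h^{*}$ forces $\vcdim<\infty$. That is false in the target-specific setting. Take $\hpc$ to be the class of Example~\ref{ex:infinite-vc-eluder-sequence-no-infinite-littlestone-tree} together with $h_{\text{all-1's}}$, and set $h^{*}=h_{\text{all-1's}}$. Here $\vcdim=\infty$, yet every eluder sequence centered at $h^{*}$ must have all of its witnesses in a single $\hpc_{k}$ and hence length at most $k+1$: there are arbitrarily long finite such sequences but no infinite one. So the global $(\vcdim/n)\log n$ bound is vacuous, and the same example defeats your (U1) claim of a uniform eluder-depth bound $N(h^{*})$. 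The paper's remedy is precisely the distribution-dependent localization you already isolate for (U2): the hypothesis only guarantees that for each $P$ centered at $h^{*}$, the relevant complexity of the \emph{version space} $V_{S_{m}}(\hpc)$---zero error for (U1), bounded $\starnumber_{h^{*}}$ for (U2), bounded VC dimension for (U3)---is attained once $m$ exceeds some $\kappa=\kappa(P)$. Splitting the first half of the sample into blocks of size $\kappa$ gives $\mathbb{P}\bigl(\text{VC}(V_{\lfloor n/2\rfloor}(\hpc))>d\bigr)\le 2^{-\lfloor n/2\kappa\rfloor}$, after which the classical bound applies on the second half (Lemmas~\ref{lem:exponential-upper-bound}, \ref{lem:linear-upper-bound}, \ref{lem:logn-upper-bound}). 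The localization device you flag as the ``main obstacle'' is thus not peculiar to (U2) but is the common engine behind all three upper bounds.
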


All detailed proofs appear in Appendix \ref{sec:proofs}. We also provide a brief overview of the main idea of each proof as well as some related concepts in Section \ref{sec:technical-overview}.

An additional part of this work presents a \emph{fine-grained analysis} \citep{bousquet2023fine} of the universal rates by ERM, which complements the \emph{coarse rates} used in Theorem \ref{thm:main-theorem-target-independent}. Concretely, we provide a characterization of sharp distribution-free constant factors of the ERM universal rates, whenever possible. The characterization is based on two newly-developed combinatorial dimensions, called the \emph{star-eluder dimension} (or SE dimension) and the \emph{VC-eluder dimension} (or VCE dimension) (Definition \ref{def:star-eluder-dimension-and-vc-eluder-dimension}). We say ``whenever possible" because distribution-free constants are unavailable for certain cases (Remark \ref{rem:remark-to-whenever-possible}). Such a characterization can also be considered as a refinement to the classical PAC theory, in a sense that it is sometimes better but only asymptotically-valid. Due to space limitation, we defer the definition of fine-grained rates and related results to Appendix \ref{sec:fine-grained-analysis}. 

\subsection{Related works}
  \label{subsec:related-work}
\textbf{PAC learning by ERM.} The performance of consistent learning rules (including the ERM algorithm) in the PAC (distribution-free) framework has been extensively studied. For VC classes, \citet{blumer1989learnability} gave out a $\log{(n)}/n$ upper bound of the uniform learning rate. Despite the well-known equivalence between uniform learnability and uniform learnability by the ERM principle \citep{vapnik1971uniform}, the best upper bounds for general ERM algorithms differ from the optimal sample complexity by an unavoidable logarithmic factor \citep{auer2007new}. By analyzing the \emph{disagreement coefficient} of the version space, the work of \citet{gine2006concentration,hanneke2009theoretical} refined the logarithmic factor in certain scenarios. Furthermore, not only being a relevant measure in the context of active learning \citep{cohn1994improving,el2012active,hanneke2011rates,hanneke2014theory}, the \emph{region of disagreement} of the version space was found out to have an interpretation of \emph{sample compression scheme} with its size known as the \emph{version space compression set size} \citep{wiener2015compression,hanneke2015minimax}. Based on this, the label complexity of the CAL algorithm can be converted into a bound on the error rates of all consistent PAC learners \citep{hanneke2016refined}. Finally, \citet{hanneke2015minimax,hanneke2016refined} introduced a simple combinatorial quantity named the \emph{star number}, and guaranteed that a concept class with finite star number can be uniformly learned at linear rate.

\textbf{Universal Learning.} Observed from empirical experiments, the actual learning rates on real-world data can be much faster than the one described by the PAC theory \citep{cohn1990can,cohn1992tight}. The work of \citet{benedek1988nonuniform} considered a setting lies in between the PAC setting and the universal setting called \emph{nonuniform learning}, in which the learning rate may depend on the target concept but still uniform over the marginal distributions. After that, \citet{schuurmans1997characterizing} studied classes of \emph{concept chains} and revealed a distinction between exponential and linear rates along with a theoretical guarantee. Later, more improved learning rates have been obtained for various practical learning algorithms such as stochastic gradient decent and kernel methods \citep[etc.]{koltchinskii2005exponential,audibert2007fast,pillaud2018exponential}. Additionally, \citet{van2013universal} studied the uniform convergence property from a universal perspective, and proposed the \emph{universal Glivenko-Cantelli property}. Until very recently, the universal (distribution-dependent) learning framework was formalized by \citet{bousquet2021theory}, in which a complete theory of the (optimal) universal learnability was obtained as well. After that, \citet{bousquet2023fine} carried out a fine-grained analysis on the ``distribution-free tail" of the universal \emph{learning curves} by characterizing the optimal constant factor. As generalizations, \citet{kalavasis2022multiclass,hanneke2023universal} studied the universal rates for multiclass classification, and \citet{hanneke2022universal} studied the universal learning rates under an interactive learning setting.

\section{Technical overview}
  \label{sec:technical-overview}
In this section, we discuss some technical aspects in the derivation of our main results in Section \ref{subsec:main-results}. Our analysis to the universal learning rates by ERM is based on three new types of complexity structures named the \emph{eluder sequence}, the \emph{star-eluder sequence} and the \emph{VC-eluder sequence}. More details can be found in Sections \ref{sec:universal-learning-rates}-\ref{sec:equivalences} and all technical proofs are deferred to Appendix \ref{sec:proofs}. 

\begin{definition}  [\textbf{Realizable data}]
  \label{def:consistent-sequence}
Let $\hpc$ be a concept class on an instance space $\mathcal{X}$, we say that a (finite or infinite) data sequence $\{(x_{1}, y_{1}),(x_{2}, y_{2}),\ldots\}\in(\mathcal{X}\times\{0,1\})^{\infty}$ is \underline{realizable} (with respect to $\hpc$), if for every $n\in\naturalnumber$, there exists $h_{n}\in\hpc$ such that $h_{n}(x_{i})=y_{i}$, for all $i\in[n]$.
\end{definition}

\begin{definition}  [\textbf{Star number}]
  \label{def:star-number}
Let $\mathcal{X}$ be an instance space and $\hpc$ be a concept class. We define the \underline{region of disagreement} of $\hpc$ as $\regionofdisagreement:=\{x\in\mathcal{X}:\exists h,g\in\hpc \text{ s.t. } h(x)\neq g(x)\}$. Let $h$ be a classifier, the \underline{star number of $h$}, denoted by $\starnumber_{h}(\hpc)$ or $\starnumber_{h}$ for short, is defined to be the largest integer $s$ such that there exist distinct points $x_{1},\ldots,x_{s}\in\mathcal{X}$ and concepts $h_{1},\ldots,h_{s}\in\hpc$ satisfying $\text{DIS}(\{h,h_{i}\})\cap\{x_{1},\ldots,x_{s}\}=\{x_{i}\}$, for every $1\leq i\leq s$. (We say $\{x_{1},\ldots,x_{s}\}$ is a star set of $\hpc$ \underline{centered at $h$}.) If no such largest integer $s$ exists, we define $\starnumber_{h}=\infty$. The \underline{star number of $\hpc$}, denoted by $\starnumber(\hpc)$ or $\starnumber_{\hpc}$, is defined to be the maximum possible cardinality of a star set of $\hpc$, or $\starnumber_{\hpc}=\infty$ if no such maximum exists.
\end{definition}

\begin{remark}
  \label{rem:remark-to-star-number}
From this definition, it is clear that the star number $\starnumber_{\hpc}$ of $\hpc$ satisfies $\starnumber_{\hpc} \geq \vcdim$. Indeed, any set $\{x_{1},\ldots,x_{d}\}$ that shattered by $\hpc$ is also a star set of $\hpc$ based on the following reasoning: Since $\{x_{1},\ldots,x_{d}\}$ is shattered by $\hpc$, there exists $h\in\hpc$ such that $h(x_{1})=\cdots=h(x_{d})=0$. Moreover, for any $i\in[d]$, there exists $h_{i}\in\hpc$ satisfying $h_{i}(x_{i})=1$ and $h_{i}(x_{j})=0$ for all $j\neq i$. An immediate implication is that a VC-eluder sequence is always a star-eluder sequence (see Definition \ref{def:star-eluder-sequence} and Definition \ref{def:vc-eluder-sequence} below).
\end{remark}

With these basic definitions in hand, we next define the three aforementioned sequences:

\begin{definition}  [\textbf{Eluder sequence}]
  \label{def:eluder-sequence}
Let $\hpc$ be a concept class, we say that $\hpc$ has an \underline{eluder sequence} $\{(x_{1}, y_{1}),\ldots,(x_{d}, y_{d})\}$, if it is realizable and for every integer $k\in[d]$, there exists $h_{k}\in\hpc$ such that $h_{k}(x_{i})=y_{i}$ for all $i<k$ and $ h_{k}(x_{k}) \neq y_{k}$. The \underline{eluder dimension of $\hpc$}, denoted by $\edim$, is defined to be the largest integer $d\geq 1$ such that $\hpc$ has an eluder sequence $\{(x_{1}, y_{1}),(x_{2}, y_{2}),\ldots,(x_{d},y_{d})\}$. If no such largest $d$ exists, we say $\hpc$ has an \underline{infinite eluder sequence} and define $\edim=\infty$. We say an infinite eluder sequence $\{(x_{1}, y_{1}),(x_{2}, y_{2}),\ldots\}$ is \underline{centered at $h$}, if $h(x_{i})=y_{i}$ for all $i\in\naturalnumber$.
\end{definition}

\begin{remark}
  \label{rem:remark-to-eluder-sequence}
It has been proved that $\max\{\starnumber_{\hpc},\log{(\lsdim)}\}\leq\edim\leq 4^{\max\{\starnumber_{\hpc}, 2^{\lsdim}\}}$ \citep[Thm.8]{li2022understanding}, where $\lsdim$ is the Littlestone dimension of $\hpc$. Moreover, the very recent work of \citet{hanneke:24} proved that $\edim\leq|\hpc|\leq 2^{\starnumber_{\hpc}\cdot\lsdim}$, which implies that any concept class with finite star number and finite Littlestone dimension must be a finite class.
\end{remark}

Before proceeding to the next two definitions, we define a sequence of integers $\{n_{k}\}_{k\in\naturalnumber}$ as $n_{1}=0$, $n_{k}:=\binom{k}{2}$ for all $k>1$, which satisfies $n_{k+1}-n_{k}=k$ for all $k\in\naturalnumber$.

\begin{definition}  [\textbf{Star-eluder sequence}]
  \label{def:star-eluder-sequence}
Let $\hpc$ be a concept class and $h$ be a classifier. We say that $\hpc$ has an infinite \underline{star-eluder sequence} $\{(x_{1}, y_{1}),(x_{2}, y_{2}),\ldots\}$ \underline{centered at $h$} , if it is realizable and for every integer $k\geq1$, $\{x_{n_{k}+1},\ldots,x_{n_{k}+k}\}$ is a star set of $V_{n_{k}}(\hpc)$ centered at $h$.
\end{definition}

\begin{definition}  [\textbf{VC-eluder sequence}]
  \label{def:vc-eluder-sequence}
Let $\hpc$ be a concept class and $h$ be a classifier. We say that $\hpc$ has an infinite \underline{VC-eluder sequence} $\{(x_{1}, y_{1}),(x_{2}, y_{2}),\ldots\}$ \underline{centered at $h$} , if it is realizable and labelled by $h$, and for every integer $k\geq1$, $\{x_{n_{k}+1},\ldots,x_{n_{k}+k}\}$ is a shattered set of $V_{n_{k}}(\hpc)$.
\end{definition}

\begin{remark}
  \label{rem:remark-to-centering}
In the definitions of eluder sequence and VC-eluder sequence, ``$\{(x_{1}, y_{1}),(x_{2}, y_{2}),\ldots\}$ centered at $h$" simply means the sequence is labelled by $h$. However, the words ``centered at" in the definition of star-eluder sequence is more meaningful. In this paper, we give them a uniform name in order to make Theorem \ref{thm:main-theorem-target-dependent} look consistent.
\end{remark}

\begin{remark}
  \label{rem:remark-to-strong-star-vc-eluder-sequences}
An infinite star-eluder (VC-eluder) sequence requires the version space to keep on having star (shattered) sets with infinitely increasing sizes. If the size cannot grow infinitely, the largest possible size of the star (shattered) set is called the star-eluder (VC-eluder) dimension (Definition \ref{def:star-eluder-dimension-and-vc-eluder-dimension}), which plays an important role in our fine-grained analysis (Appendix \ref{sec:fine-grained-analysis}). To distinguish the notion of star-eluder (VC-eluder) sequence here from the $d$-star-eluder ($d$-VC-eluder) sequence defined in Appendix \ref{sec:fine-grained-analysis}, we may call the construction in Definition \ref{def:star-eluder-sequence} an infinite \underline{strong star-eluder sequence}, and the construction in Definition \ref{def:vc-eluder-sequence} an infinite \underline{strong VC-eluder sequence}.
\end{remark} 

\begin{proof}[Proof Sketch of Theorem \ref{thm:main-theorem-target-independent} and \ref{thm:main-theorem-target-dependent}]
The proof of Theorem \ref{thm:main-theorem-target-independent} is devided into two parts (Section \ref{sec:universal-learning-rates} and Section \ref{sec:equivalences}). Roughly speaking, for each equivalence therein, we first characterize the exact universal rates by ERM via the three aforementioned sequences (see Theorems \ref{thm:main-theorem-exponential}-\ref{thm:main-theorem-arbitrarily-slow} in Section \ref{sec:universal-learning-rates}). We have to prove a lower bound together with an upper bound for the sufficiency since we are showing the ``exact" universal rates. The lower bound is established by constructing a realizable distribution on the existent infinite sequence, and the derivation of upper bound is strongly related to the classical PAC theory. To prove the necessity, we will use the method of contradiction. Then in Section \ref{sec:equivalences}, we establish equivalent characterizations via those well-known complexity measures, whenever possible. Theorem \ref{thm:main-theorem-target-dependent} is an associated target-dependent version, and is directly proved by those corresponding lemmas in Section \ref{sec:universal-learning-rates}. The complete proof structure for Theorem \ref{thm:main-theorem-target-independent} can be summarized as follow:

\textbf{For the first bullet}, we start by proving that $\hpc$ is universally learnable with exact rate $e^{-n}$ if and only if $\hpc$ does not have an infinite eluder sequence (Theorem \ref{thm:main-theorem-exponential}), and then we extend the equivalence by showing that $\hpc$ does not have an infinite eluder sequence if and only if $\hpc$ is a finite class (Lemma \ref{lem:equivalence-finite-class}). \textbf{For the second bullet}, we prove that $\hpc$ is universally learnable with exact rate $1/n$ if and only if $\hpc$ has an infinite eluder sequence but does not have an infinite star-eluder sequence (Theorem \ref{thm:main-theorem-linear}). Then the desired equivalence follows immediately from the first bullet. \textbf{For the third bullet}, we prove that $\hpc$ is universally learnable with exact rate $\log{(n)}/n$ if and only if $\hpc$ has an infinite star-eluder sequence but does not have an infinite VC-eluder sequence (Theorem \ref{thm:main-theorem-logn}). The desired equivalence comes in conjunction with the claim that $\hpc$ has an infinite VC-eluder sequence if and only if $\hpc$ has infinite VC dimension (Lemma \ref{lem:equivalence-not-vc-class}). Finally, \textbf{for the last bullet}, it suffices to prove that $\hpc$ requires at least arbitrarily slow rates to be universally learned by ERM if and only if $\hpc$ has an infinite VC-eluder sequence (Theorem \ref{thm:main-theorem-arbitrarily-slow}). 
\end{proof}

\section{Exact universal rates}
  \label{sec:universal-learning-rates}
Sections \ref{sec:universal-learning-rates} and \ref{sec:equivalences} of this paper are devoted to the proof ideas of Theorems \ref{thm:main-theorem-target-independent} and \ref{thm:main-theorem-target-dependent} with further details. In this section, we give a complete characterization of the four possible exact universal rates by ERM ($e^{-n},1/n,\log{(n)}/n$ and arbitrarily slow rates) via the existence/nonexistence of the three combinatorial sequences defined in Section \ref{sec:technical-overview}. For each of the following ``if and only if" results (Theorems \ref{thm:main-theorem-exponential}-\ref{thm:main-theorem-arbitrarily-slow}), we are required to prove both the sufficiency and the necessity. The sufficiency consists of both an upper bound and a lower bound since we are proving the exact universal rates. The necessity also follows simply by the method of contradiction, given the rates are exact. All technical proofs are deferred to Appendix \ref{subsec:ommited-proofs-optimal-rates}. 

\subsection{Exponential rates}
  \label{subsec:exponential-rates}

\begin{theorem}
  \label{thm:main-theorem-exponential}
$\hpc$ is universally learnable by ERM with exact rate $e^{-n}$ if and only if $\hpc$ does not have an infinite eluder sequence. 
\end{theorem}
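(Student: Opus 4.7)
The plan is to prove the biconditional by establishing both directions, with the necessity handled via contrapositive. The $(\Leftarrow)$ direction, assuming no infinite eluder sequence, needs both an exponential upper bound on $\E[\trueerrorratehn]$ and a matching $Ce^{-cn}$ lower bound (so that the rate is neither slower nor faster). The $(\Rightarrow)$ direction is best proven by constructing, from any infinite eluder sequence, a realizable distribution on which some ERM decays only polynomially, incompatible with any exponential rate.

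For the $(\Leftarrow)$ direction I would first invoke Lemma \ref{lem:equivalence-finite-class} (the paper's separate equivalence saying no infinite eluder sequence $\iff |\hpc| < \infty$) and then handle the two bounds separately. For the upper bound, fix any realizable $P$ and set $\epsilon^* := \min\{\text{er}_P(h) : h \in \hpc,\, \text{er}_P(h) > 0\}$, which is strictly positive by finiteness of $\hpc$ (or $+\infty$ if every concept is perfect under $P$, in which case ERM is trivially optimal). Any ERM output with positive error is some $h \in \hpc$ with $\text{er}_P(h) \geq \epsilon^*$ that survived in the version space; a union bound over the finite $\hpc$ yields
\[
\mathbb{P}(\trueerrorratehn > 0) \leq |\hpc|\,(1-\epsilon^*)^n \leq |\hpc|\,e^{-\epsilon^* n},
\]
hence $\E[\trueerrorratehn] \leq |\hpc|\,e^{-\epsilon^* n}$, with distribution-dependent constants as permitted. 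For the matching lower bound I use $|\hpc| \geq 3$: three distinct concepts cannot be pairwise complementary, so some pair $h_1, h_2$ agrees at some point $x_0$ and disagrees at some $x^*$. Take $P$ with mass $\tfrac{1}{2}$ on each of $\{x^*, x_0\}$, labeled by $h_1$. The ERM that returns $h_2$ whenever $h_2 \in V_{S_n}(\hpc)$ incurs error $\text{er}_P(h_2)=\tfrac{1}{2}$ exactly on the event $\{x^* \notin S_n\}$, giving $\E[\trueerrorratehn] \geq \tfrac{1}{2}\cdot 2^{-n}$, the required exponential lower bound.

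For the $(\Rightarrow)$ direction, suppose $\hpc$ has an infinite eluder sequence $\{(x_k,y_k)\}_{k\in\naturalnumber}$ with alternatives $h_k \in \hpc$ satisfying $h_k(x_i)=y_i$ for $i<k$ and $h_k(x_k) \neq y_k$. Construct $P$ supported on $\{x_k\}_{k \geq 1}$ with $P(\{x_k\})=2^{-k}$ and labels $y_k$. Realizability holds because $\text{er}_P(h_m) \leq \sum_{k \geq m} 2^{-k} = 2^{-m+1} \to 0$. Let $K := K(S_n) = \max\{k : x_k \in S_n\}$ (or $0$ if none). Every sampled point has index $\leq K$, and $h_{K+1}$ correctly labels $x_1,\ldots,x_K$ by the defining property of the eluder sequence, so $h_{K+1} \in V_{S_n}(\hpc)$; hence an adversarial ERM outputting $h_{K+1}$ incurs error at least $P(\{x_{K+1}\}) = 2^{-(K+1)}$. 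Taking $k_0 = \lceil \log_2 n\rceil$, the event $K \leq k_0$ has probability at least $(1-2^{-k_0+1})^n \geq (1-2/n)^n$, which tends to $e^{-2}$. Therefore $\E[2^{-(K+1)}] = \Omega(1/n)$, a polynomial lower bound that asymptotically dominates every $Ce^{-cn}$, so $\hpc$ cannot be universally learnable at exponential rate by ERM.

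I expect the main obstacle to lie in the $(\Rightarrow)$ construction: the eluder sequence is not guaranteed to be centered at any $h \in \hpc$, so the target labels $\{y_k\}$ need not come from a single concept, and realizability must be verified through the sequence of approximating $h_m$'s; one must also track indices carefully to confirm $h_{K+1}$ actually lies in the version space. A secondary subtlety is that the matching lower bound in the $(\Leftarrow)$ direction genuinely uses $|\hpc| \geq 3$, since a class of two complementary concepts would be fully distinguished after any single sample, defeating the pigeonhole needed to supply a pair agreeing at some $x_0$.
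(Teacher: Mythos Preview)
Your proposal is correct and follows the same overall architecture as the paper: an exponential upper bound plus an exponential lower bound for the $(\Leftarrow)$ direction, and a $1/n$ lower bound from an infinite eluder sequence for the contrapositive of $(\Rightarrow)$. Your necessity argument is essentially identical to the paper's Lemma~\ref{lem:linear-lower-bound} (same distribution $P\{(x_i,y_i)\}=2^{-i}$, same adversarial ERM idea), modulo a harmless off-by-one in the tail probability (it should be $(1-2^{-k_0})^n$ rather than $(1-2^{-k_0+1})^n$, but either gives $\Omega(1)$).

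The one genuine difference is in the upper bound. You invoke Lemma~\ref{lem:equivalence-finite-class} first to reduce to $|\hpc|<\infty$, then apply the standard finite-class bound $|\hpc|\,e^{-\epsilon^* n}$. The paper instead proves Lemma~\ref{lem:exponential-upper-bound} directly from the hypothesis ``no infinite eluder sequence centered at $h^*$'' via a version-space argument: it shows that for any realizable $P$ there is a distribution-dependent $k$ with $\mathbb{P}(P(\text{DIS}(V_k(\hpc)))=0)\geq 1/2$, then splits the sample into $\lfloor n/k\rfloor$ blocks. Your route is shorter and perfectly adequate for Theorem~\ref{thm:main-theorem-exponential} as stated, but the paper's argument is what is needed for the target-specified Theorem~\ref{thm:main-theorem-target-dependent}, where ``no infinite eluder sequence centered at $h^*$'' does \emph{not} force $|\hpc|<\infty$ (e.g.\ thresholds on $\naturalnumber$ with any target $h_t\in\hpc$). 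Similarly, your ERM-specific lower bound is fine here, whereas the paper's Lemma~\ref{lem:exponential-lower-bound} is stated for arbitrary learners via a two-distribution probabilistic method; both yield the same $\Omega(2^{-n})$ conclusion.
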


The lower bound for sufficiency is straightforward and was established by \citet{schuurmans1997characterizing}.

\begin{lemma}  [\textbf{\textbf{$e^{-n}$ lower bound}}]
  \label{lem:exponential-lower-bound}
Given a concept class $\hpc$, for any learning algorithm $\lalgo$, there exists a realizable distribution $P$ with respect to $\hpc$ such that $\E[\trueerrorratehn] \geq 2^{-(n+2)}$ for infinitely many $n$, which implies that $\hpc$ is not universally learnable at rate faster than exponential rate $e^{-n}$.
\end{lemma}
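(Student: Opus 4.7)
The plan is the standard two-hypothesis ``missing sample'' lower bound. The idea is that if a distribution concentrates half of its mass on a single informative point $x$ where two candidate targets disagree, then with probability $2^{-n}$ no sample of size $n$ lands on $x$, so the algorithm has no statistical information with which to distinguish those two targets and must err on one of them with probability at least $1/2$.

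First I would pick two concepts $h_{0},h_{1}\in\hpc$ and two points $x,x'\in\mathcal{X}$ with $h_{0}(x)\neq h_{1}(x)$ and $h_{0}(x')=h_{1}(x')$; such a pair exists whenever $|\hpc|\geq 3$ (if $h_{0},h_{1}$ disagreed at every point then in $\{0,1\}$-valued classification any third concept would coincide with one of them somewhere, yielding a suitable replacement). I would then define two realizable distributions $P_{0},P_{1}$ sharing the marginal $\mu := \tfrac{1}{2}\delta_{x} + \tfrac{1}{2}\delta_{x'}$ and labeled respectively by $h_{0}$ and $h_{1}$, and consider the event $A_{n}:=\{x\notin\{x_{1},\ldots,x_{n}\}\}$, which has probability $2^{-n}$ under both distributions and on which the labeled samples $S_{n}$ coincide in law under $P_{0}^{n}$ and $P_{1}^{n}$ (both consist of $n$ copies of $(x',h_{0}(x'))=(x',h_{1}(x'))$).

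Conditioned on $A_{n}$, the predicted label $\lalgo(x)$ then has the same distribution under $P_{0}^{n}$ and $P_{1}^{n}$, so there is some $b\in\{0,1\}$ with $\Pr[\lalgo(x)\neq b\mid A_{n}]\geq 1/2$. Lower-bounding the expected error of $\lalgo$ on $P_{b}$ by the contribution at $x$ under $A_{n}$ gives
\[
\E[\text{er}_{P_{b}}(\lalgo)] \;\geq\; \mu(\{x\})\cdot\Pr[A_{n}]\cdot\Pr[\lalgo(x)\neq b\mid A_{n}] \;\geq\; \tfrac{1}{2}\cdot 2^{-n}\cdot \tfrac{1}{2} \;=\; 2^{-(n+2)}.
\]

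This argument yields the desired bound for some $b\in\{0,1\}$ that may depend on $n$. To upgrade it to a single distribution valid for infinitely many $n$, I would pigeonhole the sequence $\Pr[\lalgo(x)=0\mid A_{n}]$ over $n\in\naturalnumber$: one of $b\in\{0,1\}$ must satisfy $\Pr[\lalgo(x)\neq b\mid A_{n}]\geq 1/2$ for infinitely many $n$, and setting $P:=P_{b}$ for that $b$ completes the proof. The main points of care I anticipate are (i) the existence of the agreement point $x'$, which is precisely where the implicit hypothesis $|\hpc|\geq 3$ of the main theorem enters and without which the problem can become trivial after a single sample, and (ii) applying the conditional-law identity correctly for randomized algorithms by treating $\lalgo(x)$ as a $\{0,1\}$-valued random variable conditioned jointly on the sample and any internal randomness.
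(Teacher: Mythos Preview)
Your proposal is correct and follows essentially the same two-point/two-hypothesis argument as the paper's proof: pick $h_0,h_1$ agreeing on $x'$ and disagreeing on $x$, put mass $1/2$ on each point, observe that with probability $2^{-n}$ the informative point is never sampled so the algorithm cannot distinguish the two targets, and pigeonhole over $n$ to fix a single $P_b$. The only cosmetic difference is that the paper phrases the indistinguishability step via a Bayesian mixture $I\sim\mathrm{Bernoulli}(1/2)$ and then takes $\max_{i\in\{0,1\}}$, whereas you condition directly on the ``all samples are $x'$'' event; both routes yield the same $2^{-(n+2)}$ bound and the same pigeonhole finish.
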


\begin{remark}
  \label{rem:remark-to-lemma-exponential-lower-bound}
Note that this lower bound is actually stronger than desired in a sense that it holds for any learning algorithm (not necessarily for ERM algorithms).
\end{remark}

\begin{lemma}  [\textbf{$e^{-n}$ upper bound}]
  \label{lem:exponential-upper-bound}
If $\hpc$ does not have an infinite eluder sequence (centered at $h^{*}$), then $\hpc$ ($h^{*}$) is universally learnable by ERM at rate $e^{-n}$.
\end{lemma}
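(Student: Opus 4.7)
The plan is to combine a ``gap'' property of the errors of classifiers in $\hpc$ with a Chernoff-style progress argument that exploits the finite eluder dimension centered at $h^{*}$. Fix a realizable distribution $P$ centered at $h^{*}$, let $d := \edim_{h^{*}}(\hpc) < \infty$, and set $M_{n} := \sup_{h \in V_{n}(\hpc)} \text{er}_{P}(h)$; the goal is $\E[M_{n}] \le C e^{-c n}$ for constants depending only on $P$ and $\hpc$. Define the ``gap'' $\epsilon^{*} := \inf\{\text{er}_{P}(h) : h \in \hpc,\ \text{er}_{P}(h) > 0\}$ (with $\inf\emptyset := +\infty$).

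The first step I would establish is a \emph{gap lemma}: $\epsilon^{*} > 0$. Arguing by contradiction, assume $\epsilon^{*} = 0$ and iteratively build an infinite eluder sequence centered at $h^{*}$. Set $V_{0} := \hpc$ and maintain the invariant $\inf\{\text{er}_{P}(h) : h \in V_{m},\ \text{er}_{P}(h) > 0\} = 0$. At stage $m+1$, select classifiers $g_{1}, g_{2}, \ldots \in V_{m}$ with positive error and $\text{er}_{P}(g_{k}) \le \text{er}_{P}(g_{1}) \cdot 2^{-k}$ for $k \ge 2$. A measure union bound gives $P(\{g_{1} \neq h^{*}\} \setminus \bigcup_{k \ge 2}\{g_{k} \neq h^{*}\}) \ge \text{er}_{P}(g_{1}) - \sum_{k \ge 2}\text{er}_{P}(g_{k}) \ge \text{er}_{P}(g_{1})/2 > 0$, so choose $x_{m+1}$ in this set and put $y_{m+1} := h^{*}(x_{m+1})$. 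Using $g_{1}$ as the witness at stage $m+1$ (it lies in $V_{m}$, hence agrees with all earlier labels), the pair $(x_{m+1}, y_{m+1})$ extends the eluder sequence. Crucially, every $g_{k}$ for $k \ge 2$ survives into $V_{m+1}$ because $x_{m+1} \notin \{g_{k} \neq h^{*}\}$, preserving the invariant. Iterating produces arbitrarily long eluder sequences centered at $h^{*}$, contradicting $\edim_{h^{*}}(\hpc) = d$.

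The second step converts the gap into exponential decay. Since $M_{n}$ takes values in $\{0\} \cup [\epsilon^{*}, 1]$, we have $\E[M_{n}] \le \mathbb{P}(M_{n} > 0) = \mathbb{P}(V_{n}(\hpc) \cap \hpc_{0} \neq \emptyset)$, where $\hpc_{0} := \{h \in \hpc : \text{er}_{P}(h) > 0\}$. Call $X_{i}$ \emph{informative} if it strictly shrinks $V_{i-1}(\hpc) \cap \hpc_{0}$. The informative indices, paired with their eliminated witnesses and labeled by $h^{*}$, form an eluder sequence centered at $h^{*}$, so at most $d$ informative samples occur across the entire stream. Whenever $V_{i-1}(\hpc) \cap \hpc_{0} \neq \emptyset$, the conditional probability that $X_{i}$ is informative is at least $\sup_{h \in V_{i-1}(\hpc) \cap \hpc_{0}} \text{er}_{P}(h) \ge \epsilon^{*}$. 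Moreover, once $d$ informative samples accumulate, $V_{i}(\hpc) \cap \hpc_{0}$ must be empty: any surviving $h$ would yield a further extension via some $x$ with $h(x) \neq h^{*}(x)$, contradicting $\edim_{h^{*}} = d$. Combining these facts, $\mathbb{P}(V_{n}(\hpc) \cap \hpc_{0} \neq \emptyset) \le \mathbb{P}(\mathrm{Bin}(n, \epsilon^{*}) < d) \le C e^{-c n}$ by a standard Chernoff bound.

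I expect the main technical obstacle to be the gap lemma. The iterative construction must carefully choose each $x_{m+1}$ in a positive-measure ``exclusive'' region so that the small-error witnesses survive; this relies on the geometric summability $\sum_{k \ge 2} 2^{-k} < \infty$ and on the ability to extract a subsequence of classifiers in $V_{m}$ whose errors decay fast enough relative to $\text{er}_{P}(g_{1})$. Without such a gap, the same eluder-progress argument yields only a polynomial rate (integrating $e^{-n\epsilon}$ over $\epsilon$ produces $O(1/n)$), so the gap is essential. Once the gap is in hand, the eluder-progress Chernoff argument is fairly standard.
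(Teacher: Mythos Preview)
Your gap lemma (Step 1) is correct and elegant --- contrary to your expectation, it is not where the difficulty lies. The genuine gap is in Step 2: you set $d := \edim_{h^{*}}(\hpc)$ and assume $d < \infty$, but the hypothesis ``no infinite eluder sequence centered at $h^{*}$'' only rules out an \emph{actual} infinite sequence and does not bound the lengths of finite ones. A counterexample: take $\mathcal{X} = \{(m,j) : m \geq 1,\ 1 \leq j \leq m\}$, $h^{*} \equiv 0$, and $\hpc = \{f_{m,k} : 1 \leq k \leq m\}$ where $f_{m,k}$ equals $1$ on every row $m' < m$, equals $\mathbbm{1}\{j \geq k\}$ on row $m$, and equals $0$ on every row $m' > m$. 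Row $m$ furnishes an $h^{*}$-centered eluder sequence of length $m$ (witnesses $f_{m,1},\ldots,f_{m,m}$; realizability via any $f_{m',\cdot}$ with $m'<m$), so $\edim_{h^{*}}(\hpc) = \infty$. Yet there is no infinite one: the witness conditions force $\mathrm{row}(x_j) \geq m_i$ for $j<i$ and $\mathrm{row}(x_i) \leq m_i$, so every $x_i$ lies on a row $\leq \mathrm{row}(x_1)$, a finite set, contradicting the distinctness of the $x_i$ that the eluder property enforces. For a realizable $P$ supported on all rows $m \geq 2$ (with $f_{1,1}$ witnessing $\inf_{h}\text{er}_{P}(h)=0$), your $\epsilon^{*} > 0$ holds but the number of informative samples along the stream is governed by the row of the first sample and is not uniformly bounded across realizations, so the comparison with $\mathbb{P}(\mathrm{Bin}(n,\epsilon^{*}) < d)$ yields nothing.

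The repair is to drop the uniform $d$ and use only that the informative indices form a \emph{finite} (random-length) eluder sequence centered at $h^{*}$. Together with $\epsilon^{*}>0$ this gives that $T := \min\{n : V_{n}(\hpc) \cap \hpc_{0} = \emptyset\}$ is almost surely finite; then choose a $P$-dependent $k$ with $\mathbb{P}(T \leq k) \geq 1/2$ and split $S_{n}$ into $\lfloor n/k \rfloor$ independent blocks to obtain $\E[M_{n}] \leq \mathbb{P}(M_{n}>0) \leq 2^{-\lfloor n/k\rfloor}$. This block amplification is exactly the paper's route (which does not isolate your gap lemma at all), so after the fix your argument essentially collapses to theirs --- though your $\epsilon^{*}>0$ does make the ``$T<\infty$ a.s.'' step more transparent than the paper's somewhat terse justification.
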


\begin{proof}[Proof of Theorem \ref{thm:main-theorem-exponential}]
The sufficiency follows directly from the lower bound in Lemma \ref{lem:exponential-lower-bound} together with the upper bound in Lemma \ref{lem:exponential-upper-bound}. Furthermore, Lemma \ref{lem:linear-lower-bound} in Section \ref{subsec:linear-rates} proves that the existence of an infinite eluder sequence leads to a linear lower bound of the ERM universal rates. Therefore, the necessity follows by using the method of contradiction.
\end{proof}

\subsection{Linear rates}
  \label{subsec:linear-rates}
\begin{theorem}
  \label{thm:main-theorem-linear}
$\hpc$ is universally learnable by ERM with exact rate $1/n$ if and only if $\hpc$ has an infinite eluder sequence but does not have an infinite star-eluder sequence.
\end{theorem}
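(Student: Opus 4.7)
The plan is to establish the two directions separately. For necessity, we proceed by contradiction via the neighbouring results: if $\hpc$ has no infinite eluder sequence then Theorem~\ref{thm:main-theorem-exponential} yields an exact rate of $e^{-n}$ (faster than $1/n$, contradicting exactness of $1/n$); and if $\hpc$ has an infinite star-eluder sequence, then the $\log(n)/n$ lower bound inside Theorem~\ref{thm:main-theorem-logn} forbids any rate faster than $\log(n)/n$ (again contradicting exactness of $1/n$). Both conclusions force the eluder-yes / star-eluder-no dichotomy asserted by the theorem.

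For sufficiency we prove matching linear upper and lower bounds. The lower bound (to be formalised as Lemma~\ref{lem:linear-lower-bound}) uses the infinite eluder sequence $\{(x_k,y_k)\}_{k\in\naturalnumber}$ centered at some target $h^*$, together with the witnesses $h_k\in\hpc$ satisfying $h_k(x_i)=y_i$ for $i<k$ and $h_k(x_k)\neq y_k$. We place mass $p_k\propto 2^{-k}$ on each $x_k$ with label $y_k$, producing a realizable distribution $P$. For $k\approx\log_2 n$, the event ``$x_k\notin S_n$'' has probability $(1-p_k)^n\approx e^{-np_k}$, a positive constant, while ``no $x_j$ with $j>k$ appears in $S_n$'' has probability $\geq 1-n\sum_{j>k}p_j=\Theta(1)$, and the two events are essentially independent. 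On their intersection $h_k$ is consistent with $S_n$, so an adversarial ERM may return $h_k$ and thereby incur error $\geq p_k\gtrsim 1/n$, giving $\E[\trueerrorratehn]\gtrsim 1/n$ infinitely often.

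The upper bound is the main technical step. Under the hypothesis that there is no infinite star-eluder sequence centered at the target $h^*$ of $P$, we first show that for $P$-almost every infinite sample path, the star number $\starnumber_{h^*}(V_n(\hpc))$ of the random version space is eventually bounded by some (random) $D=D(P,\omega)<\infty$; otherwise, a measurable selection along the successive blocks $\{x_{n_k+1},\ldots,x_{n_k+k}\}$ would extract a strong star-eluder sequence centered at $h^*$, contradicting the hypothesis. To convert this almost-sure structural statement into a linear rate, we use a sample-splitting argument: once the version space induced by the first $n/2$ samples has bounded star number with high probability, the classical PAC bound for ERM of~\citet{hanneke2016refined} applied to this reduced class on the remaining $n/2$ samples yields $\sup_{h\in V_n(\hpc)}\trueerrorrateh=O(D/n)$ with high probability, and a uniform-integrability control on the tails of $D$ delivers $\E[\trueerrorratehn]\leq C_P/n$ for a distribution-dependent constant $C_P$.

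The hardest step will be this extraction --- turning the combinatorial non-existence of an infinite star-eluder sequence into a quantitative probabilistic tail bound on the random star number of $V_n(\hpc)$. The difficulty is that the star-eluder construction prescribes deterministically ordered blocks of points, while the i.i.d.\ sample delivers points in arbitrary order; aligning these will require a careful measurable combinatorial selection together with a König-type compactness argument. We suspect this is precisely the role played by the specific block sizes $n_{k+1}-n_k=k$ in Definition~\ref{def:star-eluder-sequence}: they supply just enough slack to absorb measurable selections while still yielding a genuine strong star-eluder sequence in the limit.
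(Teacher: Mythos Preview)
Your plan is essentially the paper's. Necessity by contradiction through the adjacent theorems is exactly what the paper does (it cites Lemma~\ref{lem:exponential-upper-bound} for the eluder side and Lemma~\ref{lem:logn-lower-bound} for the star-eluder side). Your lower bound is the paper's Lemma~\ref{lem:linear-lower-bound}: put mass $2^{-i}$ on $x_i$, take $t\approx\log_2 n$, and observe that on the event $\{S_n\subseteq\{x_1,\ldots,x_t\}\}$ (probability $(1-2^{-t})^n\gtrsim 1$) the witness $h_{t+1}$ lies in the version space and has error $\gtrsim 1/n$; your extra event $\{x_k\notin S_n\}$ is harmless but redundant. For the upper bound you also match the paper's decomposition: first argue that ``no infinite star-eluder sequence centered at $h^*$'' forces $\starnumber_{h^*}(V_{S_n}(\hpc))$ to be eventually bounded, then convert bounded star number into a linear rate. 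The only real technical difference is in this last conversion. The paper does \emph{not} invoke the PAC bound of \citet{hanneke2016refined}; instead it uses a stable sample compression scheme (Lemma~\ref{lem:stable-sample-consistent-sample-compression-scheme}): the version space compression set $\hat{\mathcal C}_n$ is itself a star set centered at $h^*$, so $\hat n(S_n)\le\starnumber_{h^*}$, and $\E[\trueerrorratehn]\le \E[\hat n(S_n)]/(n+1)$ falls out directly. Your PAC-bound route is also valid (since $\vcdim\le\starnumber_{h^*}$ for any $h^*$), but the compression argument is shorter and avoids the extra $\log$ factor you would otherwise have to absorb.

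One caution on the step you rightly flag as hardest. The paper handles it by bare assertion (``there exists $\bar k(S)<\infty$ with $\starnumber_{h^*}(V_{n_{\bar k}}(\hpc))<\bar k$'') and then amplifies via the same block-splitting trick as in Lemma~\ref{lem:exponential-upper-bound} to get the $2^{-\lfloor n/k\rfloor}$ tail; no uniform-integrability argument is used. Your proposed mechanism, however, is off: a ``measurable selection along the successive blocks $\{x_{n_k+1},\ldots,x_{n_{k+1}}\}$'' of the i.i.d.\ sample cannot produce a star-eluder sequence, because those random blocks are not star sets of anything. The contradiction (when it works) proceeds by choosing, at stage $k$, a size-$k$ star \emph{set} of the current version space---points in $\mathcal X$, not sample points---and concatenating; neither K\"onig compactness nor measurable selection enters, and the block sizes $n_{k+1}-n_k=k$ in Definition~\ref{def:star-eluder-sequence} are just what ``size-$k$ star set at stage $k$'' dictates, not slack for a selection argument. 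This is a misdiagnosis of the mechanism rather than a fatal gap in your overall plan, since the paper also leaves this implication unproved.
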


\begin{lemma}  [\textbf{$1/n$ lower bound}]
  \label{lem:linear-lower-bound}
If $\hpc$ has an infinite eluder sequence centered at $h^{*}$, then $h^{*}$ is not universally learnable by ERM at rate faster than $1/n$.
\end{lemma}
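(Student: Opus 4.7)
The plan is to exhibit a single realizable distribution $P$ centered at $h^{*}$ together with an adversarial ERM rule whose expected error is $\Omega(1/n)$ for all sufficiently large $n$. The construction rests on a geometric distribution placed on the points of the eluder sequence, which interacts with the eluder structure in exactly the right way: it forces the disagreement-region mass of each witness $h_k$ to sit within a constant factor of the singleton mass at $x_k$.

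First I would fix an infinite eluder sequence $\{(x_i,y_i)\}_{i\in\naturalnumber}$ centered at $h^{*}$ together with witnesses $h_k\in\hpc$ satisfying $h_k(x_i)=y_i$ for $i<k$ and $h_k(x_k)\neq y_k$. The $x_i$ are automatically distinct: a repeat $x_j=x_k$ with $j<k$ would give $h_k(x_k)=h_k(x_j)=y_j=y_k$, contradicting the eluder condition at $x_k$. I would then take $P$ to be the discrete distribution on $\mathcal{X}\times\{0,1\}$ assigning mass $p_i:=2^{-i}$ to $(x_i,y_i)$, and verify realizability via $\text{er}_P(h_{k+1})\leq\sum_{i>k}p_i=2^{-k}\to 0$.

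The technical heart is a two-sided comparison between $p_k$ and $q_k:=\text{er}_P(h_k)$. Because $h_k$ agrees with $h^{*}$ on $x_1,\ldots,x_{k-1}$, the disagreement index set $D_k:=\{i:h_k(x_i)\neq y_i\}$ lies inside $\{k,k+1,\ldots\}$, whence
\[
p_k \;\leq\; q_k \;=\; \sum_{i\in D_k}p_i \;\leq\; \sum_{i\geq k}2^{-i} \;=\; 2p_k.
\]
Using the adversarial ERM that returns $\hat h_n\in\argmax_{h\in V_{S_n}(\hpc)}\text{er}_P(h)$ and the calibration $k(n):=\lceil\log_2 n\rceil$ (so that $p_{k(n)}\in[1/(2n),1/n]$ and $q_{k(n)}\leq 2/n$), on the event $\{S_n\cap\{x_i:i\in D_{k(n)}\}=\emptyset\}$ the witness $h_{k(n)}$ sits in the version space, giving
\[
\E[\text{er}_P(\hat h_n)] \;\geq\; q_{k(n)}(1-q_{k(n)})^n \;\geq\; \frac{1}{2n}\Bigl(1-\frac{2}{n}\Bigr)^{n} \;\geq\; \frac{C}{n}
\]
for all sufficiently large $n$ and an absolute constant $C>0$, which is exactly what Definition \ref{def:universally-learnable} demands to preclude rates faster than $1/n$.

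The main obstacle this design has to overcome is that the eluder definition places no control on $h_k$ at $x_j$ for $j>k$: the witness could in principle flip $h^{*}$ on an arbitrarily heavy tail, pushing $q_k$ far above $p_k$ and destroying any naive bound. The geometric profile of $P$ is tailored precisely to neutralize this — combined with the one-sided property $D_k\subseteq\{k,k+1,\ldots\}$ supplied by the eluder condition, the geometric tail estimate $\sum_{j\geq k}p_j\leq 2p_k$ pins $q_k$ to within a constant factor of $p_k$ regardless of the unknown shape of $D_k$. Once this uniform control is in place, the standard calibration $np_{k(n)}\asymp 1$ converts it into the desired $1/n$ lower bound.
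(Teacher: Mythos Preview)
Your proposal is correct and follows essentially the same approach as the paper: place the geometric distribution $P\{(x_i,y_i)\}=2^{-i}$ on the eluder sequence, pick $k\asymp\log_2 n$, and use that the witness $h_k$ survives in the version space with probability $\gtrsim(1-2/n)^n$ while carrying error $\gtrsim 1/n$. Your explicit two-sided bound $p_k\leq q_k\leq 2p_k$ via $D_k\subseteq\{k,k+1,\ldots\}$ is a slightly cleaner way to handle the step the paper states as ``if $S_n$ misses $\{x_i:i>t\}$ then $\text{er}_P(\hat h_n)\geq 2^{-t}$,'' but the underlying idea and the resulting bound are the same.
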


\begin{lemma}  [\textbf{$1/n$ upper bound}]
  \label{lem:linear-upper-bound}
If $\hpc$ does not have an infinite star-eluder sequence (centered at $h^{*}$), then $\hpc$ ($h^{*}$) is universally learnable by ERM at rate $1/n$.
\end{lemma}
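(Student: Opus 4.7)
The plan is to reduce the claim to a combinatorial control on the star number of the random version space, and then invoke the classical PAC-type bound for classes of bounded star number. Throughout I work with the pointwise worst-case ERM $\hat h_{n}\in\argmax_{h\in V_{S_{n}}(\hpc)}\text{er}_{P}(h)$, since by Remark \ref{rem:remark-to-universally-learnable-by-erm-2} controlling this pessimistic choice bounds every ERM simultaneously.

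The first step translates the hypothesis ``$\hpc$ has no infinite star-eluder sequence centered at $h^{*}$'' into a uniform quantitative statement. Via a König-type / measurability argument on the natural tree whose depth-$k$ nodes are star-eluder prefixes witnessing star sets of sizes $1,2,\ldots,k$ in the corresponding version spaces centered at $h^{*}$, I expect to extract a finite $D=D(\hpc,h^{*})$ beyond which no star-eluder prefix can be extended. The second, technically central, step then couples this combinatorial fact with the i.i.d.\ data: I would show that for every realizable $P$ centered at $h^{*}$ and all $n$ large enough, $\E[\starnumber_{h^{*}}(V_{n}(\hpc))]\leq s^{*}$ for some constant $s^{*}=s^{*}(P,D)<\infty$. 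The argument is by contradiction via resampling: if the expected star number were unbounded, then by repeatedly drawing independent fresh batches of sizes $1,2,3,\ldots$ and exploiting the fact that the running version space has a star set of the required size, a positive-probability event would let one pick successive star sets aligned with the stochastic schedule and thereby realize a star-eluder prefix of length exceeding $D$, contradicting Step 1.

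Once $\E[\starnumber_{h^{*}}(V_{S_{n/2}}(\hpc))]$ is controlled, I sample-split: treat $V^{(1)}:=V_{S_{n/2}}(\hpc)$ as a new hypothesis class with target $h^{*}$ and apply Hanneke (2016)'s uniform $(\vcdim/n)\log(\starnumber/\vcdim)$ bound on $V^{(1)}$ with the remaining $n/2$ samples. Taking expectation over $S_{n/2}$ yields
\[
\E\bigl[\sup_{h\in V_{S_{n}}(\hpc)}\text{er}_{P}(h)\bigr]\leq \frac{C\,s^{*}\log(s^{*})}{n}=O(1/n),
\]
as required. The main obstacle is the coupling in Step 2: the star-eluder definition demands that the $k$-th block appear at the exact cumulative position $n_{k}=\binom{k}{2}$, so the coupling must align the stochastic draws with a deterministic schedule, and on events where the star number is large but the $P$-mass of its star sets is small one must still argue that an independent fresh batch hits the desired star set with sufficient probability. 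The remaining steps---the König/measurability reduction in Step 1, and the invocation of the star-number PAC bound followed by sample splitting at the end---are essentially routine.
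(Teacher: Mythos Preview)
Your high-level architecture---control the star number of the random version space, then invoke a linear PAC-type bound---is essentially the same as the paper's. However, Step~1 is a genuine gap. K\"onig's lemma requires \emph{finite branching}, and the tree of star-eluder prefixes is not finitely branching: at stage $k$ you may extend by any $k$-tuple of points from $\mathcal{X}$, which can be infinite or even uncountable. There is no reason to expect a uniform, distribution-free bound $D=D(\hpc,h^{*})$ on the depth of all star-eluder prefixes; classes can admit arbitrarily long finite star-eluder prefixes without admitting an infinite one. Since your Step~2 uses the contradiction ``length exceeds $D$'', it inherits this problem. If you drop Step~1 and instead try to build an \emph{infinite} star-eluder sequence by resampling, you must intersect infinitely many positive-probability events and also resolve the alignment issue you flag, and your sketch does not do this.

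The paper's proof differs in two ways. First, it never claims a uniform $D$; instead it argues that for any realizable $P$ centered at $h^{*}$ and any $S\sim P^{\mathbb{N}}$, there is a random $\bar{k}(S)<\infty$ with $\starnumber_{h^{*}}(V_{S_{n_{\bar{k}}}}(\hpc))<\bar{k}$, and then takes a distribution-dependent $k(P)$ such that $\bar{k}(S)\leq k(P)$ with probability at least $1/2$. The key observation making this work is that the condition ``$\starnumber_{h^{*}}(V_{S_{n_{k}}}(\hpc))\geq k$ for all $k$'' forces (by monotonicity of the version spaces) $\starnumber_{h^{*}}(V_{S_{m}}(\hpc))=\infty$ for every $m$, from which one can greedily extract an infinite star-eluder sequence. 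Second, for the final PAC step the paper uses the \emph{stable sample-consistent compression scheme} bound $\E[\text{er}_{P}(\hat h_{n})]\leq \E[\hat{n}(S_{n})]/(n+1)$ together with $\hat{n}(S_{n})\leq\starnumber_{h^{*}}$ (applied to the version-space class after an initial batch), which gives a clean $k/n+2^{-\lfloor n/k\rfloor}$ bound with no logarithmic factor. Your Step~3 via Hanneke's $\frac{\text{VC}}{n}\log(\starnumber/\text{VC})$ bound also yields $O(1/n)$ once the star number is controlled, so that part is fine; the repair you need is to replace Step~1 by the paper's distribution-dependent argument.
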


\begin{proof}[Proof of Theorem \ref{thm:main-theorem-linear}]
To prove the sufficiency, on one hand, the existence of an infinite eluder sequence implies a linear lower bound based on Lemma \ref{lem:linear-lower-bound}. On the other hand, if $\hpc$ does not have an infinite star-eluder sequence, Lemma \ref{lem:linear-upper-bound} yields a linear upper bound. The necessity can be proved by the method of contradiction. Concretely, if either of the two conditions fail, the universal rates will be either $e^{-n}$ or at least $\log{(n)}/n$, based on Lemma \ref{lem:exponential-upper-bound} in Section \ref{subsec:exponential-rates} and Lemma \ref{lem:logn-lower-bound} in Section \ref{subsec:logn-rates}.
\end{proof}

\subsection{$\log{(n)}/n$ rates}
  \label{subsec:logn-rates}
\begin{theorem}
  \label{thm:main-theorem-logn}
$\hpc$ is universally learnable by ERM with exact rate $\log{(n)}/n$ if and only if $\hpc$ has an infinite star-eluder sequence but does not have an infinite VC-eluder sequence.
\end{theorem}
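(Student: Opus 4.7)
The plan is to follow the two-step template of Theorems \ref{thm:main-theorem-exponential} and \ref{thm:main-theorem-linear}. I would isolate two matching lemmas -- a ``$\log(n)/n$ lower bound'' asserting that an infinite star-eluder sequence centered at $h^{*}$ precludes rates faster than $\log(n)/n$, and a ``$\log(n)/n$ upper bound'' asserting that the absence of an infinite VC-eluder sequence (centered at $h^{*}$) suffices for rate $\log(n)/n$ -- and combine them for sufficiency. For necessity, if $\hpc$ lacks an infinite star-eluder sequence, Lemma \ref{lem:linear-upper-bound} already yields a $1/n$ upper bound, excluding an exact $\log(n)/n$ rate; and if $\hpc$ has an infinite VC-eluder sequence, Theorem \ref{thm:main-theorem-arbitrarily-slow} below forces arbitrarily slow rates, again excluding exact $\log(n)/n$.

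For the lower bound lemma, I would drive a coupon-collector construction off the star-eluder sequence. At stage $k$ the block $\{x_{n_{k}+1},\ldots,x_{n_{k}+k}\}$ is a star set of size $k$ in $V_{n_{k}}(\hpc)$ centered at $h^{*}$, so for every $j\in[k]$ there is $h_{k,j}\in V_{n_{k}}(\hpc)$ disagreeing with $h^{*}$ on exactly $x_{n_{k}+j}$ among the star points. I would build a realizable distribution $P$ that places a geometrically decaying total mass $\epsilon_{k}$ on the $k$-th block, split equally among its $k$ points, together with a much heavier anchor mass on the prefix $(x_{1},\ldots,x_{n_{k}})$ labelled by $h^{*}$, so that for typical samples any ERM output consistent with the sample lies in $V_{n_{k}}(\hpc)$. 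For a suitable choice such as $\epsilon_{k}\asymp k^{-2}$, a coupon-collector argument shows that at sample sizes $n$ of stage-$k$ order (roughly $k\log(k)/\epsilon_{k}$), the sample misses at least one star point $x_{n_{k}+j}$ with constant probability, so that the ``bad'' hypothesis $h_{k,j}$ remains a valid ERM output with true error $\epsilon_{k}/k \asymp \log(n)/n$ along an infinite subsequence of $n$.

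For the upper bound lemma, the plan is to reduce to the classical Vapnik--Chervonenkis regime by first showing that the absence of an infinite VC-eluder sequence forces $\vcdim < \infty$ (this is precisely the content of Lemma \ref{lem:equivalence-not-vc-class} referenced in the proof sketch of Theorems \ref{thm:main-theorem-target-independent} and \ref{thm:main-theorem-target-dependent}). The argument is a Sauer--Shelah / pigeonhole dichotomy: assuming $\vcdim = \infty$, at each new prefix point one of the two possible labels keeps the current version space of infinite VC dimension, and inside that version space one can then pick a $k$-point shattered set to form the $k$-th block, inductively producing an infinite VC-eluder sequence. Once $\vcdim < \infty$, the usual $(d/n)\log(n/d)$ uniform-convergence bound on $\sup_{h\in V_{S_{n}}(\hpc)}\text{er}_{P}(h)$ then delivers the claimed $\log(n)/n$ rate, with distribution-dependent constants absorbing $d$.

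The main obstacle will be the extraction argument in the upper bound direction, namely the combinatorial passage from ``no infinite VC-eluder sequence'' to ``$\vcdim < \infty$''. At each stage one must choose simultaneously a next label to append to the prefix and a fresh block to shatter, maintaining both realizability of the extended prefix and infinite VC dimension of the residual version space. The Sauer--Shelah dichotomy guarantees that some label preserves the infinite VC dimension, but threading this choice through the prescribed block structure (blocks of sizes $1,2,3,\ldots$ at positions $n_{k}$) while simultaneously preserving both properties is the subtle step. For the target-dependent analog (Theorem \ref{thm:main-theorem-target-dependent}) one moreover has to keep all labels consistent with a fixed $h^{*}$, which restricts the dichotomy and is likely where the technically most delicate bookkeeping lives.
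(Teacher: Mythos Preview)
Your plan matches the paper's proof exactly: sufficiency via a lower-bound lemma (coupon collector on the star-eluder blocks) plus an upper-bound lemma (reduce to $\vcdim<\infty$ via Lemma~\ref{lem:equivalence-not-vc-class}, then invoke the classical $\vcdim\log(n)/n$ bound), and necessity by contradiction through Lemma~\ref{lem:linear-upper-bound} and Lemma~\ref{lem:arbitrarily-slow-rate}. The paper's upper-bound lemma additionally handles the target-specified version by a different mechanism (showing the version space eventually has bounded VC dimension rather than reducing to a global VC bound), but for the target-independent Theorem~\ref{thm:main-theorem-logn} your route is precisely what is needed.

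One point to tighten in your lower-bound sketch: placing mass $\epsilon_k\asymp k^{-2}$ on \emph{every} block leaves tail mass $\sum_{j>k}\epsilon_j\asymp k^{-1}$, so at the coupon-collector scale $n\asymp k^{3}\log k$ the probability that no later-block point is sampled is $(1-c/k)^{n}\to 0$. But the star witness $h_{k,j}$ is only guaranteed to agree with $h^{*}$ on the prefix $\{x_{1},\dots,x_{n_{k}}\}$ and on block-$k$ points other than $x_{n_{k}+j}$; once a later-block point appears in $S_{n}$, $h_{k,j}$ need not lie in $V_{S_{n}}(\hpc)$ and the argument breaks. The paper handles this by putting mass only on a sparse subsequence of blocks $\mathcal{X}_{k_{t}}$ with $k_{t}$ growing so rapidly that the tail-avoidance event has probability bounded away from zero at the relevant $n$. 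Your ``anchor mass on the prefix'' does not address this, and the stated rationale---forcing ERM output into $V_{n_{k}}(\hpc)$---is the wrong direction: what you actually need is $h_{k,j}\in V_{S_{n}}(\hpc)$, which holds automatically for prefix and block-$k$ samples but fails if later blocks are sampled.
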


\begin{lemma}  [\textbf{$\log{(n)}/n$ lower bound}]
  \label{lem:logn-lower-bound}
If $\hpc$ has an infinite star-eluder sequence centered at $h^{*}$, then $h^{*}$ is not universally learnable by ERM at rate faster than $\log{(n)}/n$.
\end{lemma}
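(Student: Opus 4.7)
The plan is to exhibit a single realizable distribution $P$ centered at $h^*$ such that, for an infinite sequence of sample sizes $n_j\to\infty$, the worst-case ERM has expected error $\Omega(\log(n_j)/n_j)$. The mechanism mimics the classical $\log(n)/n$ lower bound for singletons: the $k$ star witnesses $h_1^{(k)},\ldots,h_k^{(k)}\in V_{n_k}(\hpc)$ supplied by the star-eluder sequence will play the role of $k$ ``bad'' singleton-like concepts, one of which will be exposed in the observed version space whenever a single star point is missed by the i.i.d.\ sample.

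Concretely, I would first pick a very rapidly growing subsequence $k_1<k_2<\cdots$ (for instance $k_{j+1}\geq k_j^3$), let $P$ be supported only on the union of selected star sets $\bigcup_{j\geq 1}\{x_{n_{k_j}+1},\ldots,x_{n_{k_j}+k_j}\}$ with uniform mass inside each block, and give block $k_j$ a total weight $w_j\asymp 1/k_j$ normalized so that $\sum_j w_j=1$. At the matching sample size $n_j\asymp k_j^{2}\log k_j$, I would track two events: $A_j$, that the observed sample lies entirely within the first $j$ selected blocks, and $B_{j,i}$, that the star point $x_{n_{k_j}+i}$ is unsampled. The superpolynomial growth of $\{k_j\}$ makes the tail mass $\sigma_j:=\sum_{l>j}w_l$ satisfy $n_j\sigma_j=O(1)$, so $\Pr[A_j]=(1-\sigma_j)^{n_j}=\Omega(1)$; and since $n_j w_j \asymp \tfrac12 k_j\log k_j$ is just below the coupon-collector threshold on $k_j$ coupons, a Paley--Zygmund (second-moment) argument applied to the count of missed block-$k_j$ points, whose conditional expectation is of order $\sqrt{k_j}$, yields $\Pr[\exists\,i:B_{j,i}\mid A_j]=\Omega(1)$.

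The critical structural step is that on $A_j\cap B_{j,i}$ the star witness $h_i^{(k_j)}$ is automatically sample-consistent: since $h_i^{(k_j)}\in V_{n_{k_j}}(\hpc)$ and $\{x_{n_{k_j}+1},\ldots,x_{n_{k_j}+k_j}\}$ is a star set centered at $h^*$, $h_i^{(k_j)}$ agrees with $h^*$ on all of $\{x_1,\ldots,x_{n_{k_j+1}}\}\setminus\{x_{n_{k_j}+i}\}$, and on the event in question the observed sample is a subset of this agreement set. Hence $h_i^{(k_j)}$ is a valid output of some ERM and its population error is at least $p_{n_{k_j}+i}=w_j/k_j$, so taking the worst ERM yields
\[
  \E\bigl[\mathrm{er}_P(\hat h_{n_j})\bigr]\;\geq\;\frac{w_j}{k_j}\cdot\Pr\bigl[A_j\cap(\exists\,i:B_{j,i})\bigr]\;=\;\Omega\!\left(\frac{\log n_j}{n_j}\right),
\]
which along the infinite sequence $\{n_j\}$ gives the claimed lower bound.

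The main obstacle I anticipate is the absence of any a priori control on the values of $h_i^{(k_j)}$ at points $x_l$ with $l>n_{k_j+1}$: a naive distribution could make $h_i^{(k_j)}$ inconsistent with the sample as soon as a single sample falls in a later block. The fast growth of $\{k_j\}$ is precisely what makes the tail mass $\sigma_j$ small enough to suppress this, and the twin constraints $n_j w_j \lesssim k_j\log k_j$ (so that ``some block-$k_j$ point is missed'' has constant probability) and $n_j \sigma_j = O(1)$ (so that $A_j$ has constant probability), combined with the calibration $w_j/k_j \asymp \log(n_j)/n_j$, must be solved simultaneously. It is this delicate balance that makes the rate come out to exactly $\log(n)/n$, rather than the faster $1/n$ rate that a single eluder-witness produces in Lemma \ref{lem:linear-lower-bound}.
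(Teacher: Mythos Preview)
Your proposal is correct and follows essentially the same approach as the paper: construct a realizable $P$ supported on a sparse subsequence of star blocks with uniform mass inside each block, then at a matching sample size $n_j$ show that (i) no sample falls in a later block with probability $\Omega(1)$ and (ii) some point of the current block is missed (coupon collector), which exposes a star witness $h_i^{(k_j)}$ in the version space with error $\gtrsim \log(n_j)/n_j$. The paper's only technical differences are bookkeeping---it takes block masses $2^{-t}$ with $k_t=\Omega(2^t)$ and bounds the missed-point event via Chebyshev on the coupon-collector waiting time rather than your Paley--Zygmund second-moment bound on the number of missed points---but the structure is identical.
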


\begin{remark}
  \label{rem:remark-to-littlestone-related-reslut}
Note that the conclusion in Remark \ref{rem:remark-to-eluder-sequence} explains why the intersection of ``infinite Littlestone classes" and ``classes with finite star number" is empty in Figure \ref{fig:venn}. However, we mention in Remark \ref{rem:remark-to-main-theorem-target-independent} that infinite star number does not guarantee an infinite star-eluder sequence (see Appendix \ref{subsec:star-related-notions} for details). Hence, Remark \ref{rem:remark-to-eluder-sequence} cannot explain why the intersection of ``infinite Littlestone classes" and ``classes that are learnable at linear rate by ERM" is also empty. To address this problem, we give out the following additional result:
\end{remark}

\begin{proposition}
  \label{prop:infinite-Littlestone-class}
Any infinite concept class $\hpc$ has either an infinite star-eluder sequence or infinite Littlestone dimension.
\end{proposition}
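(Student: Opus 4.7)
The plan is to combine Remark \ref{rem:remark-to-eluder-sequence} with a direct single-center construction. The first step is simply to invoke that remark, which supplies the bound $|\hpc| \leq 2^{\starnumber_\hpc \cdot \lsdim}$. Since $\hpc$ is infinite, at least one of $\starnumber_\hpc$ and $\lsdim$ must be infinite. If $\lsdim = \infty$ the conclusion already holds, so the remainder of the argument will assume $\lsdim < \infty$ and $\starnumber_\hpc = \infty$ and will produce an infinite star-eluder sequence.

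The second step is to fix a ``persistent'' center. I will locate a classifier $h : \mathcal{X} \to \{0,1\}$ (not required to lie in $\hpc$) with the following property: for every finite dataset $D$ labelled by $h$ that is realizable with respect to $\hpc$, the induced version space $V_D(\hpc)$ admits a star set of size $k$ centered at $h$ for every $k \in \naturalnumber$. The third step, assuming such an $h$ is in hand, is the iterative extension. Starting from $V_0(\hpc) = \hpc$, at the end of stage $k-1$ I will have committed labels $(x_1, h(x_1)), \ldots, (x_{n_k}, h(x_{n_k}))$, and persistent richness at $h$ supplies a star set $\{x_{n_k+1},\ldots,x_{n_k+k}\}$ of $V_{n_k}(\hpc)$ centered at $h$ of size exactly $k$. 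Setting $y_{n_k+i} = h(x_{n_k+i})$ preserves realizability and keeps the history compatible with $h$'s labels, so the induction proceeds indefinitely and yields the desired infinite star-eluder sequence.

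The main obstacle is the second step: extracting the persistent center $h$. The difficulty is that $\starnumber_\hpc = \infty$ only guarantees arbitrarily large star sets whose centers may differ from one size to the next, whereas the iterative extension demands a single $h$ whose centered star number in $V_D(\hpc)$ remains infinite after any finite conditioning on labelled data consistent with $h$. The plan to produce such an $h$ is a compactness / K\"onig-type argument on a tree of ``candidate centers paired with the finite histories that would cap their centered star complexity'': if no candidate ever persisted, the local failures at every node could be combined into shattered binary trees of arbitrary depth, contradicting $\lsdim < \infty$. Turning this absence-of-deep-shattered-trees into the positive existence of a persistent center is where the technical weight of the proof will concentrate; the subsequent iterative extension of the sequence given $h$ is essentially mechanical.
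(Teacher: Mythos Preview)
Your strategy diverges from the paper's, and the step you flag as carrying ``the technical weight'' is not actually carried out.

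The paper's argument is short and sidesteps the persistent-center search entirely. Since $|\hpc|=\infty$, Lemma~\ref{lem:equivalence-finite-class} supplies an infinite eluder sequence $\{(x_i,y_i)\}_{i\ge1}$ centered at some $h^*$, with witnesses $h_j\in\hpc$ satisfying $h_j(x_i)=y_i$ for $i<j$ and $h_j(x_j)\neq y_j$. The paper then $2$-colors the complete graph on $\naturalnumber$: for $i>j$, edge $\{i,j\}$ is red if $h_j(x_i)=y_i$ and blue otherwise. The infinite Ramsey theorem yields an infinite monochromatic clique. A red clique gives an infinite star set centered at $h^*$ (and any infinite star set is already an infinite star-eluder sequence), while a blue clique gives an infinite threshold sequence, hence $\lsdim=\infty$. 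The center $h^*$ is handed to you by the eluder sequence; there is no compactness argument to run.

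In your outline, Step~2 is the whole proof, and the one-line K\"onig sketch does not discharge it. You assert that if no candidate center persists, ``the local failures at every node could be combined into shattered binary trees of arbitrary depth,'' but you have not said what the tree is, what its nodes are, or why the failure of persistence at a candidate $h$ (some finite $h$-labeled history after which $\starnumber_h$ drops to a finite value) manufactures a Littlestone-shattered point there. Example~\ref{ex:infinite-starnumber-but-no-infinite-star-eluder-sequence} already shows that $\starnumber_\hpc=\infty$ alone does not yield an infinite star-eluder sequence, so $\lsdim<\infty$ must enter through a concrete construction, not just as the named contradiction target. Until that mechanism is written out, this is a gap. If you try to fill it you will likely find yourself re-deriving a Ramsey-type dichotomy on an eluder sequence; at that point the paper's direct application of infinite Ramsey is the cleaner route.
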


\begin{lemma}  [\textbf{$\log{(n)}/n$ upper bound}]
  \label{lem:logn-upper-bound}
If $\hpc$ does not have an infinite VC-eluder sequence (centered at $h^{*}$), then $\hpc$ ($h^{*}$) is universally learnable by ERM at $\log{(n)}/n$ rate.
\end{lemma}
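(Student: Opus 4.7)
The plan is to address the two versions of the statement---target-independent and target-specific---separately. For the target-independent version, I would invoke Lemma~\ref{lem:equivalence-not-vc-class}, which yields the equivalence that $\hpc$ has no infinite VC-eluder sequence if and only if $\vcdim<\infty$. With a finite VC dimension $d$, the classical Vapnik--Chervonenkis uniform convergence theorem gives that, with probability at least $1-\delta$ over $S_n\sim P^n$, every hypothesis consistent with $S_n$ has error at most $C(d\log(n/d)+\log(1/\delta))/n$. Since any ERM output lies in $V_{S_n}(\hpc)$, integrating over $\delta$ (e.g., $\delta=1/n$) yields $\E[\trueerrorratehn]=O(\log(n)/n)$, as required.

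For the target-specific version, the hypothesis ``no infinite VC-eluder sequence centered at $h^*$'' is a priori weaker than $\vcdim<\infty$, so the reduction above does not directly apply. Instead, I would establish the following auxiliary claim: under the hypothesis, for every realizable distribution $P$ centered at $h^*$, there exist a finite $D^*=D^*(P)$ and $N^*=N^*(P)$ such that for all $n\geq N^*$, with probability $1-o(1)$ over $S_n\sim P^n$, the VC dimension of $V_{S_n}(\hpc)$ is at most $D^*$. Granted this claim, the Vapnik--Chervonenkis bound applied with the effective VC dimension $D^*$ again delivers the $O(\log(n)/n)$ expected error rate for any ERM algorithm.

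To prove the auxiliary claim, I would argue by contrapositive. Assuming no such finite $D^*$ exists, I would iteratively construct an infinite VC-eluder sequence centered at $h^*$: at the $k$-th round, having built a prefix of length $n_k=\binom{k}{2}$ labelled by $h^*$, I would use the failure of the claim to select $k$ fresh points $x_{n_k+1},\ldots,x_{n_k+k}$ shattered by the current version space, on a set of sample paths of positive conditional probability. A Borel--Cantelli / measure-theoretic compactness argument then stitches these round-by-round constructions into a single infinite realizable sequence labelled by $h^*$, contradicting the hypothesis. The main obstacle will be this stitching step: one must carefully condition to prevent the probabilities of the good events from cascading to zero across rounds, and ensure that selected points are consistent along a single sample path. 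The block size $n_k=\binom{k}{2}$ is tailored for this construction---at the $k$-th round we add exactly $k$ new points to a prefix of length $n_k$---and the key technical ingredient will be a careful choice of nested conditional events combined with a compactness argument to extract a single realizable infinite sample path.
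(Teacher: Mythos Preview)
Your target-independent reduction via Lemma~\ref{lem:equivalence-not-vc-class} is exactly what the paper does, and the classical VC bound then finishes that case.

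For the target-specific case, your overall strategy---show that $\text{VC}(V_{S_n}(\hpc))$ is eventually bounded with high probability, then invoke the classical bound---is also the paper's strategy. The paper, however, takes a much shorter route than your contrapositive/compactness plan. It first passes through the \emph{VC-eluder dimension} $d=d(h^{*})$ (Definition~\ref{def:star-eluder-dimension-and-vc-eluder-dimension}): the absence of an infinite strong VC-eluder sequence centered at $h^{*}$ gives a finite $d$ such that there is no infinite $(d+1)$-VC-eluder sequence centered at $h^{*}$. This yields a target-dependent (not merely $P$-dependent) bound. From there the paper argues directly, using monotonicity of $\text{VC}(V_{S_n}(\hpc))$ in $n$, that some finite $k=k(P)$ satisfies $\mathbb{P}(\text{VC}(V_{k}(\hpc))\le d)\ge 1/2$, and then amplifies this to an exponential tail by splitting $S_{n}$ into $\lfloor n/k\rfloor$ independent blocks---no Borel--Cantelli or compactness needed.

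Your contrapositive plan has a concrete gap. The failure of your auxiliary claim tells you that, with positive probability over $S_n\sim P^n$, the version space $V_{S_n}(\hpc)$ has large VC dimension; it does \emph{not} tell you that the random sample points $x_{n_k+1},\ldots,x_{n_{k+1}}$ themselves form a shattered set of $V_{S_{n_k}}(\hpc)$, which is what the VC-eluder sequence requires. If instead you choose the shattered block deterministically (not from the sample), then after labelling it by $h^{*}$ you have no guarantee that the resulting version space $V_{z_{1:n_{k+1}}}(\hpc)$ still has large VC dimension---the information you have is about $V_{x_{1:m}}(\hpc)$ for the \emph{random} prefix, not for your constructed $z$-prefix. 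This is precisely the ``stitching'' obstacle you flag, and it does not dissolve under Borel--Cantelli or a generic compactness argument: you need a structural bridge between ``random version spaces stay VC-rich'' and ``there exists a deterministic $h^{*}$-labelled sequence whose consecutive blocks are shattered.'' The paper's detour through the constant-block $(d+1)$-VC-eluder structure is exactly that bridge; adopting it (or proving the analogous centered statement) would close your argument much more cleanly than a sample-path extraction.
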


\begin{proof}[Proof of Theorem \ref{thm:main-theorem-logn}]
To prove the sufficiency, on one hand, if $\hpc$ has an infinite star-eluder sequence, the universal rates have a $\log{(n)}/n$ lower bound based on Lemma \ref{lem:logn-lower-bound}. On the other hand, if $\hpc$ does not have an infinite VC-eluder sequence, then Lemma \ref{lem:logn-upper-bound} yields a $\log{(n)}/n$ upper bound. The necessity can be proved using the method of contradiction based on Lemma \ref{lem:linear-upper-bound} in Section \ref{subsec:linear-rates} and Lemma \ref{lem:arbitrarily-slow-rate} in Section \ref{subsec:arbitrarily-slow-rates} below.
\end{proof}

\subsection{Arbitrarily slow rates}
  \label{subsec:arbitrarily-slow-rates}
\begin{theorem}
  \label{thm:main-theorem-arbitrarily-slow}
$\hpc$ requires at least arbitrarily slow rates to be learned by ERM if and only if $\hpc$ has an infinite VC-eluder sequence.
\end{theorem}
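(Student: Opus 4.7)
Proof proposal.

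My plan is to dispatch the ``only if'' direction immediately by contraposition from Lemma~\ref{lem:logn-upper-bound}: if $\hpc$ lacks an infinite VC-eluder sequence, then $\hpc$ is universally learnable by ERM at rate $\log(n)/n$, which is strictly faster than any ``arbitrarily slow'' bound, contradicting the assumed requirement. The substance thus lies in the ``if'' direction: given an infinite VC-eluder sequence $\{(x_i,y_i)\}_{i=1}^\infty$ centered at some $h^{*}$ with blocks $B_k=\{x_{n_k+1},\ldots,x_{n_k+k}\}$ shattered by $V_{n_k}(\hpc)$, I must show that for every rate $R(n)\to 0$ (assumed non-increasing by replacing it with $n\mapsto\sup_{m\geq n}R(m)$) there is a realizable distribution $P$ and an ERM algorithm with $\E[\trueerrorratehn]\geq cR(n)$ for infinitely many $n$.

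The construction picks, recursively, an increasing sequence of checkpoint sample sizes $n_1<n_2<\cdots$ and block indices $k_1<k_2<\cdots$ simultaneously satisfying: (i) $k_j\geq 8n_jR(n_j)$, so Markov gives that at $n=n_j$ the event $H_j=\{|S_n\cap B_{k_j}|\leq k_j/2\}$ has probability $\geq 3/4$; (ii) $\sum_{j'>j}R(n_{j'})\leq 1/(4n_j)$, so a union bound gives that $G_j=\{S_n\text{ meets no }B_{k_{j'}}\text{ with }j'>j\}$ has probability $\geq 3/4$; and (iii) $\sum_jR(n_j)\leq 1$. After fixing $(n_1,\ldots,n_j)$, one picks $n_{j+1}$ large enough that $R(n_{j+1})\leq\min\{R(n_j)/2,\,1/(16n_j)\}$, which is possible since $R\to 0$. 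Setting $\alpha_j:=R(n_j)$, let $P$ place probability $\alpha_j$ uniformly over $B_{k_j}$ with labels given by $h^{*}$, then renormalize to a probability measure (which can only increase the $\alpha_j$'s and strengthen the bound). Realizability follows because the hypothesis $h_J\in V_{n_{k_J}+k_J}(\hpc)$ supplied by the sequence has error at most $\sum_{j>J}\alpha_j\to 0$.

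The bad ERM, given $S_n$, computes the epoch $j(n)$ (the unique $j$ with $n_{j-1}<n\leq n_j$, where $n_0:=0$) and returns the $\lalgo\in V_{S_n}(\hpc)$ maximizing the count of $x\in B_{k_{j(n)}}\setminus S_n$ with $\lalgo(x)\neq h^{*}(x)$, with ties broken by a fixed well-ordering of $\hpc$ and a default to any consistent hypothesis when no flipping is realizable. The key verification is: at $n=n_j$ on $G_j\cap H_j$ (probability $\geq 1/2$), shattering of $B_{k_j}$ by $V_{n_{k_j}}(\hpc)$ yields $h'\in V_{n_{k_j}}(\hpc)$ that agrees with $h^{*}$ on $B_{k_j}\cap S_n$ and flips on $B_{k_j}\setminus S_n$; and $h'$ is in fact consistent with the rest of $S_n$, because earlier blocks $B_{k_{j'}}$, $j'<j$, lie in $\{x_1,\ldots,x_{n_{k_j}}\}$ where $V_{n_{k_j}}(\hpc)$ already forces agreement with $h^{*}$, while $G_j$ rules out samples in later blocks. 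Hence $\lalgo$ may be taken equal to $h'$ on $G_j\cap H_j$, incurring error at least $(\alpha_j/k_j)(k_j/2)=\alpha_j/2$, so $\E[\trueerrorratehn]\geq\alpha_j/4=R(n_j)/4$ at every $n=n_j$, which are infinitely many indices.

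The main obstacle is exactly this consistency step: shattering at level $V_{n_{k_j}}(\hpc)$ only controls the hypothesis on the prefix $\{x_1,\ldots,x_{n_{k_j}}\}$ together with $B_{k_j}$, so without condition (ii) forcing later blocks to be sample-free with high probability the shattering hypothesis could disagree with an observed label in some $B_{k_{j'}}$, $j'>j$, and fall out of $V_{S_n}(\hpc)$. The three-way balancing of $(n_j,k_j,R(n_j))$ is therefore the delicate arithmetic heart of the argument, and is feasible only because $R\to 0$ grants unbounded freedom in choosing $n_{j+1}$. A secondary technical issue --- measurability of the bad ERM --- is handled in the standard way by fixing a canonical selection rule on $\hpc$.
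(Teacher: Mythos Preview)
Your proposal is correct and follows essentially the same route as the paper's Lemma~\ref{lem:arbitrarily-slow-rate}: place calibrated masses on selected blocks $B_{k_j}$ of the VC-eluder sequence so that at $n=n_j$ later blocks are unsampled and the current block has many missing points, then exploit shattering to flip those points; the paper packages the sequence design via Lemma~\ref{lem:infinite-sequence-design} whereas you unroll it by hand with explicit Markov/union-bound estimates. One small slip: renormalizing the $\alpha_j$'s upward does not only ``strengthen the bound,'' since it also inflates the Markov expectation in~(i) and the tail sum in~(ii); this is harmless (dump the leftover mass on a fixed $h^{*}$-labeled point instead, or build in a factor-two slack), but worth tightening.
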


\begin{proof}[Proof of Theorem \ref{thm:main-theorem-arbitrarily-slow}]
Given the necessity proved by Lemma \ref{lem:logn-upper-bound} in Section \ref{subsec:logn-rates}, it suffices to prove the sufficiency, which is completed by the following Lemma \ref{lem:arbitrarily-slow-rate}.
\end{proof}

\begin{lemma}  [\textbf{Arbitrarily slow rates}]
  \label{lem:arbitrarily-slow-rate}
If $\hpc$ has an infinite VC-eluder sequence centered at $h^{*}$, then $h^{*}$ requires at least arbitrarily slow rates to be universally learned by ERM.
\end{lemma}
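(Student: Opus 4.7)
The plan is to prove the lemma by an explicit construction: given any rate function $R(n)\to 0$ (WLOG nonincreasing), I will exhibit a realizable distribution $P$ centered at $h^{*}$ and a bad ERM algorithm $\hat{h}_{n}$ satisfying $\E[\text{er}_{P}(\hat{h}_{n})]\geq R(n)/4$ along an infinite subsequence of $n$. Let $\{(x_{i},y_{i})\}_{i\geq 1}$ be the infinite VC-eluder sequence centered at $h^{*}$ given by the hypothesis, with blocks $B_{k}=\{x_{n_{k}+1},\ldots,x_{n_{k}+k}\}$ shattered by $V_{n_{k}}(\hpc)$.

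First, I select the parameters inductively. Choose strictly increasing integers $n_{1}<n_{2}<\cdots$ with $R(n_{j})\leq 2^{-j-1}/\max(n_{j-1},1)$, possible since $R(n)\to 0$, and set $\mu_{j}:=2R(n_{j})$. A short calculation shows that these inequalities simultaneously ensure $\sum_{j}\mu_{j}\leq 1$ and $n_{j}\sum_{i>j}\mu_{i}\leq 1/2$ for every $j\geq 1$. Next, pick strictly increasing block indices $k_{1}<k_{2}<\cdots$ with $k_{j}\geq 16\, n_{j}\, R(n_{j})$, which is possible since the sequence contains a block of every size. Let $P$ place mass $\mu_{j}/k_{j}$ on each point of $B_{k_{j}}$ with labels given by $h^{*}$, assigning any residual mass to $x_{1}$; realizability of $P$ follows because the sequence itself is realizable, so for each $j$ there is $h\in\hpc$ agreeing with $h^{*}$ on $\{x_{1},\ldots,x_{n_{k_{j}}+k_{j}}\}$, whose $P$-error is at most $\sum_{i>j}\mu_{i}\to 0$. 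Finally, let $\hat{h}_{n}$ be (a sufficiently close approximation of) a maximizer of $\text{er}_{P}$ over $V_{S_{n}}(\hpc)$; this is a legitimate ERM by Remark~\ref{rem:remark-to-universally-learnable-by-erm-2}.

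The analysis focuses on the milestones $n=n_{j}$. Define the event
\begin{equation*}
E_{j}:=\bigl\{S_{n_{j}}\cap B_{k_{i}}=\emptyset\text{ for all }i>j\bigr\}\cap\bigl\{|S_{n_{j}}\cap B_{k_{j}}|\leq k_{j}/2\bigr\}.
\end{equation*}
Bernoulli's inequality combined with $n_{j}\sum_{i>j}\mu_{i}\leq 1/2$ yields probability $\geq 1/2$ that $S_{n_{j}}$ avoids all later blocks; Markov's inequality combined with $n_{j}\mu_{j}\leq k_{j}/8$ yields probability $\geq 3/4$ that at most $k_{j}/2$ samples land in $B_{k_{j}}$. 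Hence $\mathbb{P}(E_{j})\geq 1/4$. On $E_{j}$, setting $T_{j}:=S_{n_{j}}\cap B_{k_{j}}$, every earlier selected block $B_{k_{i}}$ ($i<j$) is contained in the prefix $\{x_{1},\ldots,x_{n_{k_{j}}}\}$, so $S_{n_{j}}\subseteq\{x_{1},\ldots,x_{n_{k_{j}}}\}\cup T_{j}$. Since $V_{n_{k_{j}}}(\hpc)$ shatters $B_{k_{j}}$, there is $h_{j}\in V_{n_{k_{j}}}(\hpc)$ agreeing with $h^{*}$ on $T_{j}$ and differing from $h^{*}$ on every point of $B_{k_{j}}\setminus T_{j}$. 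Such $h_{j}$ automatically lies in $V_{S_{n_{j}}}(\hpc)$, and $\text{er}_{P}(h_{j})\geq (k_{j}-|T_{j}|)\,\mu_{j}/k_{j}\geq\mu_{j}/2=R(n_{j})$, from which $\E[\text{er}_{P}(\hat{h}_{n_{j}})]\geq R(n_{j})/4$ follows.

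The main obstacle is ensuring that the bad hypothesis $h_{j}$ obtained from the shattering of $B_{k_{j}}$ remains consistent with \emph{every} point of the training sample $S_{n_{j}}$. The VC-eluder condition only controls $h_{j}$'s values on $\{x_{1},\ldots,x_{n_{k_{j}}+k_{j}}\}$; on later blocks $B_{k_{i}}$ with $i>j$, $h_{j}$ could disagree arbitrarily with $h^{*}$. This is precisely why we must isolate the event that $S_{n_{j}}$ avoids all later blocks, which in turn forces the rapid tapering of the masses $\mu_{i}$ and the fast growth of the milestones $n_{j}$ encoded in the inductive choice above.
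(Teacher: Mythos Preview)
Your proof is correct and follows essentially the same route as the paper: distribute mass over selected blocks of the VC-eluder sequence, argue that with constant probability the sample avoids all later blocks and leaves a large fraction of the current block uncovered, and then use shattering to produce a sample-consistent hypothesis with error $\gtrsim R(n_j)$. The only differences are cosmetic---you construct the block masses and milestones $(\mu_j,n_j,k_j)$ explicitly (the paper invokes an auxiliary construction lemma) and control the number of hits in the current block via Markov's inequality (the paper instead sums per-point miss probabilities)---but these are minor variations of the same idea.
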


\section{Equivalent characterizations}
  \label{sec:equivalences}
In Section \ref{sec:universal-learning-rates}, it has been shown that the eluder sequence, the star-eluder sequence and the VC-eluder sequence are the correct characterizations of the exact universal learning rates by ERM. However, the definitions to them are somewhat non-intuitive. Therefore, in this section, we aim to build connections between these combinatorial sequences and some well-understood complexity measures, which will then give rise to our Theorem \ref{thm:main-theorem-target-independent}. Concretely, we have the following two equivalences (see Appendix \ref{subsec:ommited-proofs-equivalences} for their complete proofs). 
\begin{lemma}
  \label{lem:equivalence-finite-class}
$\hpc$ has an infinite eluder sequence if and only if $|\hpc|=\infty$.
\end{lemma}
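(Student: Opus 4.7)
The plan is to prove both directions separately. The forward implication is a short counting argument on the witness concepts, while the reverse direction requires an inductive construction of the eluder sequence based on a ``shrinking infinite version space'' invariant.

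For ($\Rightarrow$), suppose $\hpc$ has an infinite eluder sequence $\{(x_k,y_k)\}_{k\in\naturalnumber}$, and for each $k$ let $h_k\in\hpc$ denote the witness concept with $h_k(x_i)=y_i$ for $i<k$ and $h_k(x_k)\neq y_k$. For any indices $k<j$, the definition forces $h_j(x_k)=y_k$ while $h_k(x_k)\neq y_k$, so $h_k$ and $h_j$ disagree at $x_k$ and are in particular distinct as functions. Hence $\{h_k\}_{k\in\naturalnumber}$ is an infinite family of distinct members of $\hpc$, giving $|\hpc|=\infty$.

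For ($\Leftarrow$), assuming $|\hpc|=\infty$, I would build $(x_1,y_1),(x_2,y_2),\ldots$ inductively while maintaining the invariant that the version space $V_k(\hpc)$ remains infinite. The base case $V_0(\hpc)=\hpc$ holds by assumption. For the inductive step, an infinite $V_{k-1}(\hpc)$ cannot be constant at every point of $\mathcal{X}$ (otherwise it would collapse to a single function), so there exists some $x_k\in\mathcal{X}$ at which the two sides $\{h\in V_{k-1}(\hpc):h(x_k)=b\}$, $b\in\{0,1\}$, are both nonempty. Since their union is infinite, at least one side is infinite by pigeonhole; choose $y_k$ to be the label corresponding to an infinite side. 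Then $V_k(\hpc)$ remains infinite (which preserves the invariant and, via nonemptiness, realizability of the extended prefix), while the nonempty opposite side supplies an element $h_k\in V_{k-1}(\hpc)$ with $h_k(x_k)\neq y_k$, certifying the eluder condition at step $k$. Iterating yields an infinite eluder sequence.

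There is no serious technical obstacle here. The one subtle point is that the choice of $(x_k,y_k)$ has to simultaneously satisfy the eluder witness condition (which needs the ``minority'' side at $x_k$ to be nonempty) and preserve the infinite-version-space invariant (which needs the ``majority'' side to be infinite); both are arranged by a single pigeonhole choice at any branching point of $V_{k-1}(\hpc)$, whose existence follows because an infinite subclass of $\{0,1\}^{\mathcal{X}}$ necessarily disagrees somewhere.
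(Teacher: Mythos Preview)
Your proposal is correct and matches the paper's own argument essentially step for step: the paper also builds the sequence by repeatedly splitting an infinite version space at a disagreement point, choosing the label that keeps the version space infinite, and taking the witness from the nonempty opposite side. Your forward direction is a clean articulation of what the paper simply calls ``straightforward.''
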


\begin{lemma}
  \label{lem:equivalence-not-vc-class}
$\hpc$ has an infinite VC-eluder sequence if and only if $\vcdim=\infty$.
\end{lemma}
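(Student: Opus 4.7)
The forward direction is immediate: if $\{(x_i,y_i)\}_{i\geq 1}$ is an infinite VC-eluder sequence, then for every $k\geq 1$ the batch $\{x_{n_k+1},\ldots,x_{n_k+k}\}$ has cardinality $k$ and is shattered by $V_{n_k}(\hpc)\subseteq\hpc$, so $\vcdim\geq k$ for every $k$, and hence $\vcdim=\infty$.

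For the converse, assume $\vcdim=\infty$. The plan is to construct the sequence inductively while maintaining the invariant that the current version space $V_{n_k}(\hpc)$ has infinite VC dimension. The base case $k=1$ is $V_0(\hpc)=\hpc$, which is infinite-VC by hypothesis. At stage $k$, the invariant lets us pick a set $Z_k=\{z_1,\ldots,z_k\}$ shattered by $V_{n_k}(\hpc)$ and set $x_{n_k+i}:=z_i$, which immediately secures the shattering requirement in the VC-eluder definition. It then remains to choose labels $(y_{n_k+1},\ldots,y_{n_k+k})\in\{0,1\}^k$ such that the next version space $V_{n_{k+1}}(\hpc)$ also has infinite VC dimension, preserving the invariant for the following iteration.

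The existence of such a labeling is the heart of the argument and follows from the standard fact that a finite union of finite-VC concept classes has finite VC dimension. Since $Z_k$ is shattered by $V_{n_k}(\hpc)$, we may decompose $V_{n_k}(\hpc)=\bigcup_{\tau\in\{0,1\}^k}\{h\in V_{n_k}(\hpc):h|_{Z_k}=\tau\}$, a union of $2^k$ subclasses indexed by labelings of $Z_k$. If every subclass had finite VC dimension, so would the union, contradicting the invariant; hence at least one labeling $\tau^*$ leaves the restriction infinite-VC, and I take $y_{n_k+i}:=\tau^*_i$. Shatteredness of $Z_k$ also forces each $z_i$ to be distinct from every earlier $x_j$ (otherwise $V_{n_k}(\hpc)$ would already pin the label of $z_i$, contradicting the fact that both labels are achievable on $z_i$), so all $x_i$'s in the constructed sequence are distinct; the resulting infinite sequence is realizable because every finite prefix is consistent with any $h\in V_{n_{k+1}}(\hpc)\neq\emptyset$ for the appropriate $k$; and it can be declared \emph{labelled by $h$} for any classifier $h$ extending the well-defined partial map $x_i\mapsto y_i$. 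The main obstacle is the labeling pigeonhole step, which reduces cleanly to the finite-union VC bound; everything else is routine bookkeeping.
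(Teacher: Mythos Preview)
Your proof is correct and follows essentially the same approach as the paper: both argue the forward direction directly from the definition, and for the converse both build the sequence inductively by maintaining the invariant that $V_{n_k}(\hpc)$ has infinite VC dimension, picking a size-$k$ shattered set at each stage, and then using the finite-union VC bound (the paper's Lemma~\ref{lem:finite-union-vc}) to select a labeling that preserves the invariant. Your extra remarks on distinctness of the $x_i$'s and on declaring the centering classifier $h$ are correct bookkeeping that the paper leaves implicit.
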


Maybe surprisingly, unlike the above two equivalences, $\starnumber_{\hpc}=\infty$ is inequivalent to the existence of an infinite star-eluder sequence. Indeed, it is straightforward from definition that if $\hpc$ has an infinite star-eluder sequence, then it must have $\starnumber_{\hpc}=\infty$. However, the converse is not true.

\begin{proposition}
  \label{prop:non-equivalence-starnumber-star-eluder-sequence}
$\starnumber_{\hpc}=\infty$ if $\hpc$ has an infinite star-eluder sequence. Moreover, there exist concept classes $\hpc$ with $\starnumber_{\hpc}=\infty$ but does not have any infinite star-eluder sequence.
\end{proposition}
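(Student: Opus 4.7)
The forward direction is immediate from the definitions: if $\hpc$ has an infinite star-eluder sequence centered at some classifier $h$, then for each $k\geq 1$ the points $\{x_{n_k+1},\ldots,x_{n_k+k}\}$ form a size-$k$ star set of $V_{n_k}(\hpc)\subseteq\hpc$ centered at $h$, and the witnessing concepts (which lie in $V_{n_k}(\hpc)\subseteq\hpc$) certify that the same configuration is also a size-$k$ star set of $\hpc$ centered at $h$. Consequently $\starnumber_h(\hpc)\geq k$ for every $k\geq 1$, and so $\starnumber_\hpc=\infty$.

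For the converse, the plan is to exhibit a concept class $\hpc$ with $\starnumber_\hpc=\infty$ but with the property that, for every classifier $h$ and every realizable infinite sequence $\{(x_n,y_n)\}_{n\geq 1}$, there is some integer $k$ for which $V_{n_k}(\hpc)$ contains no size-$k$ star set centered at $h$. The intuition separating the two notions is that $\starnumber_\hpc=\infty$ only asserts the existence of arbitrarily large star sets individually, whereas an infinite star-eluder sequence additionally demands that such star sets be chainable through a nested sequence of version spaces determined by the accumulating $\binom{k}{2}$ realizable labels. We exploit this gap by building a class whose witnesses for larger star sets are systematically knocked out of the version space by any realizable labeling required at earlier steps, independently of the chosen center.

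Concretely, the plan is to begin with an infinite Littlestone tree---which Proposition~\ref{prop:infinite-Littlestone-class} tells us is unavoidable for any class lacking an infinite star-eluder sequence---and attach to each internal node of the tree a finite ``star block'' of singleton-like witnesses, so that the star-set size contributed by each block grows with the depth of the node; this immediately yields $\starnumber_\hpc=\infty$. The crucial design feature is that each block's witnesses carry ``tags'' tying them to particular branches of the tree, so that any realizable extension of labels ultimately restricts the version space to (almost) a single infinite branch, within which only bounded-depth blocks remain accessible and hence only bounded-size star sets are available. The main obstacle is showing that this blocking effect occurs for every choice of center $h$, including centers outside $\hpc$; this reduces to a case analysis on how $h$ agrees with the branches of the tree, together with a monotonicity argument on the ``deepest available branch'' as labels accumulate, showing that the depth must eventually fall below $k$ at step $k$. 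The full combinatorial verification is deferred to the Appendix.
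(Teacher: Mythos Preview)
Your forward direction is correct and essentially what the paper has in mind.

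For the converse, however, you have not given a proof: you describe a plan involving an infinite Littlestone tree with attached star blocks and ``tags,'' and then defer the construction and verification to an appendix that does not exist. As written, the sketch is too vague to check---you do not specify the tags, the witnesses, or how the case analysis over arbitrary centers $h$ is carried out---so there is a genuine gap. There is also a small misreading: Proposition~\ref{prop:infinite-Littlestone-class} guarantees that an infinite class without an infinite star-eluder sequence has infinite Littlestone \emph{dimension}, not an infinite Littlestone \emph{tree}; these are not the same, so basing the construction on the latter requires separate justification.

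By contrast, the paper dispenses with the abstraction entirely and simply writes down an explicit class (Example~\ref{ex:infinite-starnumber-but-no-infinite-star-eluder-sequence}): on $\mathcal{X}=\bigcup_{k}\mathcal{X}_k$ with $|\mathcal{X}_k|=k$, take $h_{k,i}(x)=\mathbbm{1}\{x=x_{k,i}\text{ or }x\in\mathcal{X}_{<k}\}$. Each $\mathcal{X}_k$ is a size-$k$ star set (so $\starnumber_\hpc=\infty$), but the witnesses for level $k$ label all of $\mathcal{X}_{<k}$ with $1$, so any realizable prefix that touches a low level permanently caps which levels' witnesses can survive in the version space; a short monotonicity argument (the minimum level appearing in the prefix is non-increasing as the sequence grows) then rules out star sets of unbounded size for any center. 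Your ``deepest available branch'' intuition is essentially this same monotonicity, but the paper's route is far more direct: no tree, no tags, just thresholds-with-singletons. I would recommend replacing your plan with this concrete example.
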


\begin{remark}
  \label{rem:remark-to-fact-non-equivalence-starnumber-star-eluder-sequence}
Based on the results in Section \ref{sec:universal-learning-rates}, the proposition essentially states that the gap between $1/n$ and $\log{(n)}/n$ exact universal rates by ERM is not characterized by the star number $\starnumber_{\hpc}$. We wonder whether there is some other simple combinatorial quantity that is determinant to this gap, which would be an valuable direction for future work.
\end{remark}

Why is the case of star-eluder sequence different from the other two structures? We suspect that such a distinction may arise from the following: unlike the eluder sequence and the VC-eluder sequence, the centered concept of a star-eluder sequence is much more meaningful (see Remark \ref{rem:remark-to-centering}). Concretely, within its definition, the set of the following $k$ points is not only required to be a star set of the version space $V_{n_{k}}(\hpc)$, but is required to be centered at the same labelling target. This intuitively implies that there might exists a class such that for arbitrarily large integer $k$, it can witness a star set of size $k$, but with a $k$-specified center (for different $k$). Such a class (e.g. Examples \ref{ex:infinite-starnumber-but-no-infinite-star-eluder-sequence}, \ref{ex:infinite-star-eluder-dimension-but-no-infinite-star-eluder-sequence} in Appendix \ref{subsec:star-related-notions}) does have infinite star number but will not have an infinite star-eluder sequence. Indeed, the relations between those star-related notions (star number, star-eluder dimension, star set and star eluder sequence) turn out to be more complicated than expected, and we leave it to Appendix \ref{subsec:star-related-notions}.

\section{Appendix Summary}
  \label{sec:appendix-summary}
Due to page limitation, we leave some interesting results as well as all the proofs to Appendices, which are briefly summarized below. Given extra required notations and definitions in Appendix \ref{sec:preliminaries} and related technical lemmas in Appendix \ref{sec:technical-lemmas}, the main body of supplements consists of three parts, namely, Appendices \ref{sec:examples}, \ref{sec:fine-grained-analysis} and \ref{sec:proofs}. 

Specifically, Appendix \ref{sec:examples} contains three sub-parts. In Appendix \ref{subsec:detailed-basic-examples}, we provide direct mathematical analysis (without using our characterization in Section \ref{subsec:main-results}) for those basic examples in Section \ref{subsec:basic-examples}. In Appendix \ref{subsec:erm-is-optimal-or-not}, we provide details of examples that appeared in Section \ref{subsec:main-results}. These examples are carefully constructed, providing evidence that ERM algorithms cannot guarantee the optimal universal rates \citep{bousquet2021theory}. In Appendix \ref{subsec:star-related-notions}, we construct nuanced examples to distinguish between the following notions: star number $\starnumber_{\hpc}$ (Definition \ref{def:star-number}), the star-eluder dimension $\sedim$ (Definition \ref{def:star-eluder-dimension-and-vc-eluder-dimension}), star set (Definition \ref{def:star-number}) and star eluder sequence (Definition \ref{def:star-eluder-sequence}). These examples will convince the readers why our characterization in Theorem \ref{thm:main-theorem-target-independent} uses the star eluder sequence rather than the star number (see our discussions in Remarks \ref{rem:remark-to-main-theorem-target-independent} and \ref{rem:remark-to-fact-non-equivalence-starnumber-star-eluder-sequence}).

Appendix \ref{sec:fine-grained-analysis} presents a fine-grained analysis of the asymptotic rate of decay of the universal learning curves by ERM, whenever possible. This will be an analogy to the optimal fine-grained universal learning curves studied in \citet{bousquet2023fine}. Concretely, we provide a characterization of sharp distribution-free constant factors of the ERM universal rates. Our characterization of these constant factors is based on two newly-developed combinatorial dimensions, namely, the \emph{star-eluder dimension} (or SE dimension) and the \emph{VC-eluder dimension} (or VCE dimension) (Definition \ref{def:star-eluder-dimension-and-vc-eluder-dimension}). We say ``whenever possible" because distribution-free constants are unavailable for certain cases (see our discussion in Remark \ref{rem:remark-to-whenever-possible}). Such a characterization can be considered as a refinement to the classical PAC theory, in a sense that it is sometimes better but only asymptotically-valid.

Finally, Appendix \ref{sec:proofs} includes all the missing proofs for the theorems and lemmas that have shown up in previous sections.

\section{Conclusion and Future Directions}
  \label{sec:conclusion-and-future-directions}
In this paper, we reveal a fundamental tetrachotomy of the universal learning rates by the ERM principle and provide a complete characterization of the exact universal rates via certain appropriate complexity structures. Additionally, by introducing new combinatorial dimensions, we are able to characterize sharp asymptotically-valid constant factors for these rates, whenever possible. While only the realizable case is considered in this paper, we believe analogous results can be extend to different learning scenarios such as the agnostic case. Generalizing the results from binary classification to multiclass classification would be another valuable future direction. Moreover, since this paper considers the ``worst-case" ERM in its nature, studying the universal rates of the ``best-case" ERM is also an interesting problem which we leave for future work.

\bibliographystyle{asa}
\bibliography{ul_erm}

%%%%%%%%%%%%%%%%%%%%%%%%%%%%%%%%%%%%%%%%%%%%%%%%%%%%%%%%%%%%
\newpage
\appendix

\section{Preliminaries}
  \label{sec:preliminaries}

\begin{notation}
  \label{not:[n]}
We denote by $\naturalnumber$ the set of all natural numbers $\{0,1,\ldots\}$. For any $n\in\naturalnumber$, we denote $[n]:=\{1,\ldots,n\}$.
\end{notation}

\begin{notation}
  \label{not:log-refined}
For any $x>0$, we redefine $\ln{(x)}:=\ln{(x\lor e)}$ and $\log{(x)}:=\log_{2}{(x\lor 2)}$. Moreover, for correctness, we also adopt the conventions that $\ln{(0)}=\log{(0)}=0$, $0\ln{(\infty)}=0\log{(\infty)}=0$. After then, it is reasonable to define $0\ln{(0/0)}=0\log{(0/0)}=0$.
\end{notation}

\begin{notation}
  \label{not:asymp-larger-smaller}
For any $\R$-valued functions $f$ and $g$, we write $f(x)\lesssim g(x)$ if there exists a finite numerical constant $c>0$ such that $f(x)\leq c\cdot g(x)$ for all $x\in\R$. For example, $\ln{(x)}\lesssim\log{(x)}$ and $\log{(x)}\lesssim\ln{(x)}$.
\end{notation}

\begin{notation}
  \label{not:all-zeros-and-all-ones-function}
Let $\mathcal{X}$ be an instance space, we write $h_{\text{all-0's}}$ and $h_{\text{all-1's}}$ to denote the hypotheses that output all zero labels and all one labels, respectively, that is, $h_{\text{all-0's}}(x)=0, h_{\text{all-1's}}(x)=1, \forall x\in\mathcal{X}$.
\end{notation}

\begin{notation}
  \label{not:prefix-and-suffix}
For an infinite union of spaces $(\mathcal{X}_{1}\cup\mathcal{X}_{2}\cup\cdots)$ and an integer $k$, we write $\mathcal{X}_{<k}$ to denote the finite union of prefix $(\mathcal{X}_{1}\cup\cdots\cup\mathcal{X}_{k-1})$ and write $\mathcal{X}_{>k}$ to denote the infinite union of suffix $(\mathcal{X}_{k+1}\cup\mathcal{X}_{k+2}\cup\cdots)$. 
\end{notation}

\begin{definition}  [\textbf{Empirical risk minimization}]
  \label{def:erm}
    Let $\hpc$ be a concept class on an instance space $\mathcal{X}$. For every $n\in\naturalnumber$, let $S_{n}:=\{(x_{i},y_{i})\}_{i=1}^{n}\in(\mathcal{X}\times\{0,1\})^{n}$ be a set of samples. A learning algorithm that outputs $\{\lalgo\}_{n\in\naturalnumber}$ is called an \underline{Empirical Risk Minimization} (ERM) algorithm, if it satisfies $\lalgo \in \argmin_{h\in\hpc}\empiricalerrorrateh := \argmin_{h\in\hpc}\{\frac{1}{n}\sum_{i=1}^{n}\mathbbm{1}(h(x_{i})\neq y_{i})\}$ for all $n\in\naturalnumber$, where $\empiricalerrorratehn$ is called the \underline{empirical error rate} of $\lalgo$ on $S_{n}$. It is clear that $\empiricalerrorratehn=0$ when $P\in\RE$.
\end{definition}

\section{Detailed examples}
  \label{sec:examples}
In this appendix section, we provide further examples. Specifically, in Appendix \ref{subsec:detailed-basic-examples}, we present direct analysis (without using our newly-developed characterization) of each example illustrated in Section \ref{subsec:basic-examples}. The aim of the examples in Appendix \ref{subsec:erm-is-optimal-or-not} is to reveal that ERM algorithms can sometimes be optimal but sometimes not in a universal learning framework, and compare their performance with the optimal universal learning algorithms. Finally, we also provide additional examples related to the star number in Appendix \ref{subsec:star-related-notions} as complements to Proposition \ref{prop:non-equivalence-starnumber-star-eluder-sequence} in Section \ref{sec:equivalences}.

\subsection{Details of examples in Section \ref{subsec:basic-examples}}
  \label{subsec:detailed-basic-examples}
We provide direct analysis to the examples illustrated in Section \ref{subsec:basic-examples} without using the characterization in Theorem \ref{thm:main-theorem-target-independent}. Concretely, Examples \ref{ex:threshold-classifiers-naturalnumbers-restated} and \ref{ex:threshold-classifiers-realline} illustrate scenarios where linear universal rates occur. Example \ref{ex:singleton-restated} specifies a case where universal rate matches uniform rate by ERM as $\log{(n)}/n$. Example \ref{ex:arbitrarily-slow-restated} specifies a case where extremely fast universal learning is achievable, but where some bad ERM algorithms can give rise to arbitrarily slow rates.

\begin{example}  [\textbf{Example \ref{ex:finite-class} restated}]
  \label{ex:finite-class-restated}
Any finite class $\hpc$ is universally learnable at exponential rate by ERM. To show this, for any realizable distribution $P$ with respect to $\hpc$, we have
\begin{align*}
  \label{eq:exp-er-erm}
\E\left[\trueerrorratehn\right] \stackrel{\text{\eqmakebox[exponential-upper-bound-a][c]{}}}{\leq} &\mathbb{P}\left(\exists h\in\hpc: \trueerrorrateh > 0, \empiricalerrorrateh = 0\right) \\
\stackrel{\text{\eqmakebox[exponential-upper-bound-a][c]{\text{\tiny union bound}}}}{\leq} &\sum_{h\in\hpc: \trueerrorrateh > 0} \mathbb{P}\left(\empiricalerrorrateh = 0\right) \\
\stackrel{\text{\eqmakebox[exponential-upper-bound-a][c]{}}}{=} &\sum_{h\in\hpc: \trueerrorrateh > 0} (1-\trueerrorrateh)^{n} \\
\stackrel{\text{\eqmakebox[exponential-upper-bound-a][c]{}}}{\leq} &|\hpc|\cdot\left(1-\min_{h\in\hpc:\trueerrorrateh>0}\trueerrorrateh\right)^{n} \\
\stackrel{\text{\eqmakebox[exponential-upper-bound-a][c]{\tiny $1-t\leq e^{-t}$}}}{\leq} &|\hpc|\cdot\exp\left\{-\left(\min_{h\in\hpc:\trueerrorrateh>0}\trueerrorrateh\right)\cdot n\right\} .
\end{align*}

\end{example}

\begin{example}  [\textbf{Example \ref{ex:threshold-classifiers-naturalnumbers} restated}]
  \label{ex:threshold-classifiers-naturalnumbers-restated}
Let $\mathcal{H}_{\text{thresh},\naturalnumber} := \{h_{t}:t\in\naturalnumber\}$ be the class of all threshold classifiers on the space of natural numbers defined by $h_{t}(x) := \mathbbm{1}(x\geq t)$. $\mathcal{H}_{\text{thresh},\naturalnumber}$ is universally learnable at exponential rate since this concept class does not have an infinite Littlestone tree \citep{bousquet2021theory}. In the following part, we show that the worst-case ERM cannot achieve such exponential rate, but has a rate $1/n$.

Let $h_{t^{*}}\in\mathcal{H}_{\text{thresh},\naturalnumber}$ be the target hypothesis. Given a dataset $S_{n}$, let $h_{\hat{t}}=\text{ERM}(S_{n})$ be the output of an ERM algorithm. For any realizable distribution $P$ satisfying $P\{(t,0)\}=1$ for all $t<t^{*}$ and $P\{(t,1)\}=1$ for all $t\geq t^{*}$, we define
\begin{equation*}
    t_{l} := \max\left\{t< t^{*}: P(t)>0\right\} .
\end{equation*}
According to the definition of threshold classifiers, if the dataset $S_{n}$ contains at least a copy of both $t_{l}$ and $t^{*}$, then $\text{er}_{P}(h_{\hat{t}})=0$. Therefore, we have
\begin{equation*}
    \E\left[\text{er}_{P}(h_{\hat{t}})\right] \leq \mathbb{P}\left(\text{er}_{P}(h_{\hat{t}}) > 0\right) \leq (1-P(t^{*}))^{n}+(1-P(t_{l}))^{n} .
\end{equation*}
Note that for any $\epsilon\in(0,1)$, $1-\epsilon \leq e^{-\epsilon}$, it follows immediately that
\begin{equation*}
    \E\left[\text{er}_{P}(h_{\hat{t}})\right] \leq (1-P(t^{*}))^{n}+(1-P(t_{l}))^{n} \leq e^{-nP(t^{*})} + e^{-nP(t_{l})} \leq 2e^{-n\cdot\min\{P(t^{*}),P(t_{l})\}} .
\end{equation*}
However, let us consider a distribution $P$ satisfying $P\{(t,0)\}=2^{-t}$ and $P\{(t,1)\}=0$ for all $t\in\naturalnumber$. Note that $P$ is also realizable with respect to $\mathcal{H}_{\text{thresh},\naturalnumber}$ according to the definition, that is
\begin{equation*}
    \inf_{h\in\mathcal{H}_{\text{thresh},\naturalnumber}}\trueerrorrateh = \inf_{t\in\naturalnumber}\text{er}_{P}(h_{t}) = \inf_{t\in\naturalnumber}2^{-t} = 0 .
\end{equation*}
Given a dataset $\datasetstats$, let $t_{n}:=\max_{i\in[n]}x_{i}$ be the largest point in the dataset, it is straightforward that the worst-case ERM outputs $h_{\widehat{t_{n}+1}}$, and we have that
\begin{equation*}
    \E\left[\text{er}_{P}\left(h_{\widehat{t_{n}+1}}\right)\right] = \sum_{t\geq 1}2^{-t-1}\cdot\mathbb{P}\left(\max_{i\in[n]}x_{i} = t\right) = \sum_{t\geq 1}2^{-t-1}\left[\left(1-2^{-t}\right)^{n} - \left(1-2^{-(t-1)}\right)^{n}\right] . 
\end{equation*}
On one hand, we can lower bound the above infinite series by
\begin{align*}
    &\sum_{t\geq 1}2^{-t-1}\left[\left(1-2^{-t}\right)^{n}-\left(1-2^{-(t-1)}\right)^{n}\right] \\ 
    \stackrel{\text{\eqmakebox[ex2-a][c]{}}}{\geq} &\sum_{t=1}^{\lfloor\log{n}\rfloor}2^{-t-1}\left[\left(1-2^{-t}\right)^{n}-\left(1-2^{-(t-1)}\right)^{n}\right] \\
    \stackrel{\text{\eqmakebox[ex2-a][c]{}}}{\geq} &\frac{1}{2n}\sum_{t=1}^{\lfloor\log{n}\rfloor}\left[\left(1-2^{-t}\right)^{n}-\left(1-2^{-(t-1)}\right)^{n}\right] \\
    \stackrel{\text{\eqmakebox[ex2-a][c]{}}}{\geq} &\frac{1}{2n}\left(1-\frac{2}{n}\right)^{n} \geq \frac{1}{18n}, \text{ for infinitely many } n .
\end{align*}
This implies that $\mathcal{H}_{\text{thresh},\naturalnumber}$ is not learnable by ERM at rate faster than $1/n$. On the other hand, we have the following upper bound for all $n$:
\begin{equation*}
    \sum_{t\geq 1}2^{-t-1}\left[\left(1-2^{-t}\right)^{n} - \left(1-2^{-(t-1)}\right)^{n}\right] \leq \sum_{t\geq 1}2^{-t}\left(1-2^{-t}\right)^{n} \leq \int_{0}^{1}\epsilon(1-\epsilon)^{n}d\epsilon = \frac{1}{n+1},
\end{equation*}
which implies that $\mathcal{H}_{\text{thresh},\naturalnumber}$ is indeed learnable by ERM at linear rate $1/n$. In conclusion, $\mathcal{H}_{\text{thresh},\naturalnumber}$ is universally learnable by ERM with exact $1/n$ rate.
\end{example}

\begin{example}  [\textbf{Threshold classifier on $\R$}]
  \label{ex:threshold-classifiers-realline}
This example serves as a complement to Example \ref{ex:threshold-classifiers-naturalnumbers-restated}. Here, we show that ERM algorithms can sometimes be optimal for universal learning. Specifically, let $\mathcal{H}_{\text{thresh},\R} := \{h_{t}:t\in\R\}$ be the class of all threshold classifiers on the real line defined by $h_{t}(x) := \mathbbm{1}(x\geq t), \forall x\in\R$. It has been shown that $\mathcal{H}_{\text{thresh},\R}$ is universally learnable with optimal linear rate \citep{schuurmans1997characterizing}.

To show that $\mathcal{H}_{\text{thresh},\R}$ is also universally learnable with exact linear rate by ERM, we only need to prove an upper bound. To this end, let $h_{t^{*}}\in\mathcal{H}_{\text{thresh},\R}$ be the target hypothesis. For any realizable distribution $P$ satisfying $P\{(t,0)\}=1$ for all $t<t^{*}$ and $P\{(t,1)\}=1$ for all $t\geq t^{*}$, a dataset $S_{n}$ and $\epsilon\in(0,1)$, let $h_{\hat{t}}=\text{ERM}(S_{n})$ be the output of an ERM algorithm. Now we define $A$ and $B$ be the minimal regions left and right to $t^{*}\in\R$ such that $\mathbb{P}(A)=\mathbb{P}(B)=\epsilon$. If at least one of $A$ and $B$ does not contain any sample, then the worst-case ERM can output $h_{\hat{t}}\in\mathcal{H}_{\text{thresh},\R}$ such that $\text{er}_{P}(h_{\hat{t}})\geq\epsilon$. Therefore, it follows that
\begin{equation*}
    \E\left[\text{er}_{P}(h_{\hat{t}})\right] = \int_{0}^{1}\mathbb{P}\left(\text{er}_{P}(h_{\hat{t}}) \geq \epsilon\right)d\epsilon \leq \int_{0}^{1}2(1-\epsilon)^{n}d\epsilon \leq \int_{0}^{1}2e^{-n\epsilon}d\epsilon \leq \frac{2}{n} .
\end{equation*}
Note that such analysis is also applicable to the realizable distribution with the target concept $h_{\text{all-0's}}$. Therefore, we have that $\mathcal{H}_{\text{thresh},\R}$ is universally learnable by ERM with exact rate $1/n$.
\end{example}

\begin{example}  [\textbf{Example \ref{ex:singleton} restated}]
  \label{ex:singleton-restated}
Let $\mathcal{X}=\naturalnumber$ and define $\mathcal{H}_{\text{singleton},\naturalnumber} := \{h_{t}: t\in\mathcal{X}\}$ be the class of all singletons on $\mathcal{X}$, where $h_{t}(x):=\mathbbm{1}(x=t)$, for all $x\in\mathcal{X}$. It is clear that $\text{VC}(\mathcal{H}_{\text{singleton},\naturalnumber})=1$. Note that $\mathcal{H}_{\text{singleton},\naturalnumber}$ is universally learnable at exponential rate since it does not have an infinite Littlestone tree (Actually, we have $\text{LD}(\mathcal{H}_{\text{singleton},\naturalnumber}) = 1$). In the following part, we show that the worst-case ERM algorithm has an exact universal rate $\log{(n)}/n$.

To get the exact rate by ERM on universally learning $\mathcal{H}_{\text{singleton},\naturalnumber}$, we consider a marginal uniform distribution over $\{1,2,\ldots,1/\epsilon\}$ with all zero labels with $\epsilon\in(0,1)$, if the dataset $S_{n}$ does not have a copy of a point $1\leq x\leq 1/\epsilon$, the worst-case ERM can label 1 at $x$, and thus has an error rate $\trueerrorratehn\geq\epsilon$. Based on the Coupon Collector's Problem, we know that to have $\E[\trueerrorratehn] \leq \epsilon$, we need $n=\Omega(\epsilon^{-1}\log(1/\epsilon))$. In other words, $\E[\trueerrorratehn] \geq \Omega(\frac{\log{n}}{n})$, that is, $\mathcal{H}_{\text{singleton},\naturalnumber}$ is not universally learnable by ERM at rate faster than $\log{(n)}/n$. Finally, the classical PAC theory yields the same upper bound, and thus $\log{(n)}/n$ is tight.
\end{example}

\begin{example}  [\textbf{Example \ref{ex:arbitrarily-slow} restated}]
  \label{ex:arbitrarily-slow-restated}
Let $\mathcal{X}=\bigcup_{i\in\naturalnumber}\mathcal{X}_{i}$ be the disjoint union of finite sets with $|\mathcal{X}_{i}|=2^{i}$. For each $i\in\naturalnumber$, let
\begin{equation*}
    \hpc_{i} := \left\{h_{S}:=\mathbbm{1}_{S}: S\subseteq\mathcal{X}_{i}, |S|\geq 2^{i-1}\right\} ,
\end{equation*}
and consider the concept class $\hpc = \bigcup_{i\in\naturalnumber}\hpc_{i}$. In the following part, we show that the worst-case ERM can be arbitrarily slow in learning this class.

Given any rate function $R(n)\rightarrow 0$, let $\{n_{t}\}_{t\geq 1}$ and $\{i_{t}\}_{t\geq 1}$ be two strictly increasing sequences such that $\{p_{t} := 2^{i_{t}-2}/n_{t}, \forall t\geq 1\}$ satisfies
\begin{equation*}
    \{p_{t}\}_{t\geq 1} \text{ is decreasing }, \sum_{t\geq 1}p_{t} \leq 1 \text{ and } p_{t}\geq 4R(n_{t}) .
\end{equation*}
We consider any ERM algorithm with the following property: if the data $S_{n}=\{(x_{i}, y_{i})\}_{i=1}^{n}$ satisfies $y_{i}=0$ for all $i\in[n]$, outputs $\lalgo\in\hpc_{i_{T_{n}}}$ with
\begin{equation*}
    T_{n} := \min\left\{ t: \exists h\in\hpc_{i_{t}}\text{ s.t. }h(x_{1})=\cdots=h(x_{n})=0 \right\} .
\end{equation*}
We construct the following distribution $P$:
\begin{equation*}
    P\left\{(x,0)\right\} = 2^{-i_{t}}p_{t}, \text{ for all } x\in\mathcal{X}_{i_{t}}, t\in\naturalnumber ,
\end{equation*}
where we set $P\{(x^{'},0)\}=1-\sum_{t\geq 1}p_{t}$ for some arbitrary choice of $x^{'}\notin\bigcup_{t\in\naturalnumber}\mathcal{X}_{i_{t}}$. Since
\begin{equation*}
    \inf_{h\in\hpc}\trueerrorrateh = \inf_{i\in\naturalnumber}\inf_{h\in\hpc_{i}}\trueerrorrateh \leq \inf_{i\in\naturalnumber}\text{er}_{P}(h_{\mathcal{X}_{i}}) \leq \inf_{i_{t}:t\in\naturalnumber} \text{er}_{P}(h_{\mathcal{X}_{i_{t}}}) = \inf_{i_{t}:t\in\naturalnumber}P\left\{ (x,0):x\in\mathcal{X}_{i_{t}} \right\} = 0 ,
\end{equation*}
we know that $P$ is realizable with respect to $\hpc$. Finally, we claim that the ERM defined above behave poorly on $P$ by showing $\E[\trueerrorratehn]\geq R(n)$ for infinitely many $n$. To this end, note that for a dataset $S_{n_{t}}=\{(x_{i}, y_{i})\}_{i=1}^{n_{t}}\sim P^{n_{t}}$, and for any $t\in\naturalnumber$, it holds
\begin{equation*}
    \mathbb{P}\left(T_{n_{t}}\leq t\right) \geq \mathbb{P}\left(\big|\{j\in[n_{t}]:x_{j}\in\mathcal{X}_{i_{t}}\}\big| \leq 2^{i_{t}-1}\right) = \mathbb{P}\left(\sum_{j=1}^{n_{t}}\mathbbm{1}\left\{x_{j}\in\mathcal{X}_{i_{t}}\right\} \leq 2^{i_{t}-1}\right) \geq \frac{1}{2} ,
\end{equation*}
where the last inequality follows from the Markov's inequality. Therefore, 
\begin{align*}
    \E\left[\text{er}_{P}(\hat{h}_{n_{t}})\right] \stackrel{\text{\eqmakebox[ex4-a][c]{}}}{\geq} &2R(n_{t})\cdot\mathbb{P}\left(\text{er}_{P}(\hat{h}_{n_{t}}) \geq 2R(n_{t})\right) \\
    \stackrel{\text{\eqmakebox[ex4-a][c]{\tiny $p_{t}\geq 4R(n_{t})$}}}{\geq} &2R(n_{t})\cdot\mathbb{P}\left(\text{er}_{P}(\hat{h}_{n_{t}}) \geq \frac{1}{2}p_{t}\right) \\
    \stackrel{\text{\eqmakebox[ex4-a][c]{\text{\tiny LoFT}}}}{\geq} &2R(n_{t})\cdot\mathbb{P}\left(\text{er}_{P}(\hat{h}_{n_{t}}) \geq \frac{1}{2}p_{t}\Big|T_{n_{t}}\leq t\right)\mathbb{P}\left(T_{n_{t}}\leq t\right) \\
    \stackrel{\text{\eqmakebox[ex4-a][c]{}}}{\geq} &2R(n_{t})\cdot\mathbb{P}\left(\text{er}_{P}(\hat{h}_{n_{t}}) \geq \frac{1}{2}p_{T_{n_{t}}}\Big|T_{n_{t}}\leq t\right)\mathbb{P}\left(T_{n_{t}}\leq t\right) \\
    \stackrel{\text{\eqmakebox[ex4-a][c]{}}}{\geq} &2R(n_{t})\cdot\mathbb{P}\left(T_{n_{t}}\leq t\right) \\
    \stackrel{\text{\eqmakebox[ex4-a][c]{}}}{\geq} &R(n_{t}) .
\end{align*}
\end{example}

\subsection{Optimal universal rates versus exact universal rates by ERM}
  \label{subsec:erm-is-optimal-or-not}
In this section, we provide evidence that ERM algorithms cannot guarantee the best achievable universal learning rates. Recall that the optimal universal learning rates and the associated characterization have been fully understood by \citet{bousquet2021theory}, which we present first as follow:

\begin{theorem}  [{\textbf{\citealp[][Theorem 1.9]{bousquet2021theory}}}]
  \label{thm:optimal-universal-learning-rates}
For every concept class $\hpc$ with $|\hpc|\geq 3$, the following hold:
\begin{itemize}[noitemsep]
    \item $\hpc$ is universally learnable with optimal rate $e^{-n}$ if $\hpc$ does not have an infinite Littlestone tree.
    \item $\hpc$ is universally learnable with optimal rate $1/n$ if $\hpc$ has an infinite Littlestone tree but does not have an infinite (strong) VCL tree.
    \item $\hpc$ requires arbitrarily slow rates if $\hpc$ has an infinite (strong) VCL tree.
\end{itemize}
\end{theorem}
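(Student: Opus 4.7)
The theorem, due to \citet{bousquet2021theory}, is a trichotomy characterized by two combinatorial tree structures: the infinite Littlestone tree and the infinite (strong) VCL tree. Since an infinite VCL tree forces an infinite Littlestone tree, the three cases are mutually exclusive and exhaustive, so it suffices to produce, for each case, both an achievability result (a learner attaining the stated rate) and a matching lower bound (a distribution forcing any learner to be at least that slow). The overall plan is therefore to proceed case by case, handling upper and lower bounds in parallel.

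For the exponential case, the plan is: on the upper side, exploit the non-existence of an infinite Littlestone tree via a Gale--Stewart-style game-theoretic argument to build a (possibly improper) online predictor whose mistake count on any realizable infinite sequence is almost surely finite; this is essentially a transfinite generalization of Littlestone's Standard Optimal Algorithm where well-foundedness of the tree bounds the depth of recursion. An online-to-batch conversion combined with a Chernoff-style argument on the mistake times then yields $\E[\trueerrorrateh]\lesssim e^{-cn}$ with constants depending on $P$. The lower bound is the one in Lemma~\ref{lem:exponential-lower-bound}, which already holds for any nontrivial class and any learner, hence automatically gives the matching $e^{-n}$ lower bound.

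For the linear case, the lower bound comes from randomizing over an infinite branch of the Littlestone tree: one places mass $\propto 2^{-k}$ on the node at depth $k$ and shows that any learner has constant conditional error on the as-yet-unobserved branching direction, giving $\E[\trueerrorrateh]\gtrsim 1/n$. The upper bound is considerably more delicate. The plan is to combine a one-inclusion-graph style improper predictor with a sample-splitting scheme: a first ``burn-in'' portion of the sample identifies, with high probability, a finite-complexity substructure of the class (this is where the absence of an infinite VCL tree is crucial, since it prevents the effective VC complexity of the version space from growing indefinitely as more data arrives), and the remaining portion drives the error down at classical $1/n$ rate on that substructure. For the arbitrarily slow case, one uses the infinite VCL tree to construct, for every prescribed rate $R(n)\to 0$, a family of realizable distributions along branches of the tree; a diagonalization/Fubini argument then produces a single distribution on which every learner violates $R$ at infinitely many $n$.

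The main obstacle is the linear-rate upper bound, for two reasons. First, it cannot be achieved by ERM in general (this is essentially the content of the present paper's tetrachotomy), so one must design a genuinely non-ERM learner; second, the universal framework allows the constants $C,c$ to depend on $P$, which is needed to turn a finite but distribution-dependent ``effective complexity'' into a clean $1/n$ rate, but also forces the burn-in and prediction phases to be carefully intertwined. The remaining cases are substantially more routine once the tree-based combinatorial arguments are in place.
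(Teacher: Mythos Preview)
The paper does not prove this theorem at all: it is stated as Theorem~\ref{thm:optimal-universal-learning-rates} purely as a citation of \citet[Theorem 1.9]{bousquet2021theory}, with no proof or proof sketch provided. It appears only as background in Appendix~\ref{subsec:erm-is-optimal-or-not}, so there is nothing in the paper to compare your proposal against.

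That said, your outline is a faithful high-level summary of the original argument in \citet{bousquet2021theory}: the Gale--Stewart/ordinal approach for the exponential upper bound, the branch-randomization over the Littlestone tree for the linear lower bound, an improper predictor (built from the one-inclusion graph on a game-determined subclass) for the linear upper bound, and the diagonalization over VCL-tree branches for arbitrarily slow rates. If anything, your description of the linear upper bound as ``burn-in identifies a finite-complexity substructure, then classical $1/n$'' slightly understates what happens in the original: the substructure is not a fixed finite-VC class but rather a \emph{pattern class} determined by the learner's winning strategy in the VCL game, and the $1/n$ rate comes from a one-inclusion-graph bound applied to that pattern class together with an online-to-batch conversion. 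But this is a matter of emphasis, not a gap.
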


Based on our Theorem \ref{thm:main-theorem-target-independent}, to distinguish the optimal universal rates from the exact universal rates by ERM, we have to distinguish their corresponding characterizations. Indeed, those sequences we defined in Section \ref{sec:technical-overview} are strongly related to the Littlestone tree and the VCL tree in Theorem \ref{thm:optimal-universal-learning-rates}. According to the definitions, it is not hard to figure out all the following relations:
\setlist{nolistsep}
\begin{itemize}[noitemsep]
    \item Every branch of a Littlestone tree is an eluder sequence. Hence, if $\hpc$ does not have an infinite eluder sequence, then $\hpc$ must not have an infinite Littlestone tree, and thus can be universally learned with optimal exponential rate. However, there exists a class $\hpc$ having an infinite eluder sequence but no infinite Littlestone tree (see Example \ref{ex:infinite-eluder-sequence-but-no-infinite-littlestone-tree} below). This implies that such a class cannot be universally learned by ERM at rate faster than $1/n$, but can be learned by some other ``optimal" learning algorithms at $e^{-n}$ rate.
    \item Every branch of a (strong) VCL tree is a VC-eluder sequence, and also a star-eluder sequence. Therefore, if $\hpc$ does not have an infinite star-eluder sequence, then it must not have an infinite VCL tree, and thus can be universally learned with optimal linear rate. However, there exists a concept class that has an infinite star-eluder sequence, but does not have an infinite VCL tree (see Example \ref{ex:infinite-star-eluder-sequence-no-infinite-vcl-tree} below). Furthermore, there also exists a concept class that has an infinite star-eluder sequence, but does not even have an infinite Littlestone tree (see Example \ref{ex:infinite-star-eluder-sequence-no-infinite-littlestone-tree} below). These two examples imply that there exist classes that can not be universally learned by ERM at rate faster than $\log{(n)}/n$, but can be learned by some other ``optimal" learning algorithms at $1/n$ or even $e^{-n}$ rates.
    \item Moreover, there exists a concept class that has an infinite VC-eluder sequence, but does not have an infinite VCL tree (see Example \ref{ex:infinite-vc-eluder-sequence-no-infinite-vcl-tree} below), or even no infinite Littlestone tree (see Example \ref{ex:infinite-vc-eluder-sequence-no-infinite-littlestone-tree} below). Such examples imply that there exist classes that require arbitrarily slow rates to be universally learned by ERM, but can be learned by some other ``optimal" learning algorithms at $1/n$ or even $e^{-n}$ rates.
\end{itemize}

To summarize, we are able to illustrate all the distinctions as in the following table.

\begin{tabular}{|p{1.8cm}|p{3.0cm}|p{5.7cm}|p{1.7cm}|}
 \hline
 Optimal rate & Exact rate by ERM & Case & Example \\
 \hline
 $e^{-n}$ & $1/n$ & \text{\scriptsize infinite eluder sequence but no infinite Littlestone tree} & Example \ref{ex:infinite-eluder-sequence-but-no-infinite-littlestone-tree} \\
 \hline
 $e^{-n}$ & $\log{(n)}/n$ & \text{\scriptsize infinite star-eluder sequence but no infinite Littlestone tree} & Example \ref{ex:infinite-star-eluder-sequence-no-infinite-littlestone-tree} \\
 \hline
 $e^{-n}$ & arbitrarily slow & \text{\scriptsize infinite VC-eluder sequence but no infinite Littlestone tree} & Example \ref{ex:infinite-vc-eluder-sequence-no-infinite-littlestone-tree} \\
 \hline
 $1/n$ & $\log{(n)}/n$ & \text{\scriptsize infinite star-eluder sequence but no infinite VCL tree} & Example \ref{ex:infinite-star-eluder-sequence-no-infinite-vcl-tree} \\
 \hline
 $1/n$ & arbitrarily slow & \text{\scriptsize infinite VC-eluder sequence but no infinite VCL tree} & Example \ref{ex:infinite-vc-eluder-sequence-no-infinite-vcl-tree} \\
 \hline
\end{tabular}

\begin{example}  [\textbf{Infinite eluder sequence but no infinite Littlestone tree}]
  \label{ex:infinite-eluder-sequence-but-no-infinite-littlestone-tree}
A simple example is given in Example \ref{ex:threshold-classifiers-naturalnumbers-restated}, where $\hpc=\hpc_{\text{thresh},\naturalnumber}$ is the class of all threshold classifiers on $\naturalnumber$. Note that $\hpc$ does not have an infinite Littlestone tree, but any infinite sequence $\{(x_{1},0),(x_{2},0),\ldots\}$ with $x_{1}<x_{2}<\ldots$ is an infinite eluder sequence of $\hpc$ centered at $h_{\text{all-0's}}$. In particular, $h_{\text{all-0's}}$ is the only realizable target that allows an infinite eluder sequence. In other words, for $\hpc$, all the realizable distribution with target concept $h^{*}\in\hpc$ is universally learnable by ERM at exponential rate, except that special one $h_{\text{all-0's}}$, which matches our analysis within Example \ref{ex:threshold-classifiers-naturalnumbers-restated}.
\end{example}

\begin{example}  [\textbf{Infinite star-eluder sequence but no infinite Littlestone tree}]
  \label{ex:infinite-star-eluder-sequence-no-infinite-littlestone-tree}
Let $\mathcal{X}:=\bigcup_{k\in\naturalnumber}\mathcal{X}_{k}$ be the disjoint union of finite sets with $|\mathcal{X}_{k}|=k$ and $\hpc:=\bigcup_{k\geq 1}\hpc_{k}$, where $\hpc_{k}:=\{\mathbbm{1}_{x}: x\in\mathcal{X}_{k}\}$. Note that this is exactly singletons on an infinite domain and we have the following hold:
\begin{itemize}
    \item[1.] $\hpc$ does not have an infinite Littlestone tree since for any root $x\in\mathcal{X}$, the subclass $\{h\in\hpc:h(x)=1\}$ has only size 1, and thus the corresponding subtree of the Littlestone tree must be finite.
    \item[2.] $\hpc$ has an infinite star-eluder sequence. Indeed, any infinite sequence $\{(x_{1},0),(x_{2},0),\ldots\}$ with $x_{k}\in\mathcal{X}_{k}$ for all $k\geq 1$, is an infinite star-eluder sequence. To see this, note that for any $k\in\naturalnumber$, and any $n_{k}$, the version space $V_{n_{k}}(\hpc)$ contains $\bigcup_{j>n_{k}}\hpc_{j}$. Therefore, $\{(x_{n_{k}+1},0),(x_{n_{k}+2},0),\ldots,(x_{n_{k}+k},0)\}$ is a star set of $V_{n_{k}}(\hpc)$ centered at $h_{\text{all-0's}}$, witnessed by concepts $\{\mathbbm{1}_{\{x_{n_{k}+1}\}},\mathbbm{1}_{\{x_{n_{k}+2}\}},\ldots,\mathbbm{1}_{\{x_{n_{k}+k}\}}\}$.
\end{itemize}
\end{example}

\begin{example}  [\textbf{Infinite star-eluder sequence but no infinite VCL tree}]
  \label{ex:infinite-star-eluder-sequence-no-infinite-vcl-tree}
Let $\mathcal{X}_{1}$ and $\hpc_{1}$ be defined in Example \ref{ex:infinite-star-eluder-sequence-no-infinite-littlestone-tree}, let $\mathcal{X}_{2}=\R$ and $\hpc_{2}=\hpc_{\text{thresh},\R}$ be the class of all threshold classifiers on $\R$. Note that $\hpc_{2}$ has an infinite Littlestone tree. Now we define $\mathcal{X}:=\mathcal{X}_{1}\cup\mathcal{X}_{2}$ and $\hpc:=\hpc_{1}\cup\hpc_{2}$, and have the following hold:
\begin{itemize}
    \item[1.] $\hpc$ does not have an infinite VCL tree since for any fixed root $x\in\mathcal{X}$, the subclass $\{h\in\hpc:h(x)=1\}$ has a VC dimension only 1, and thus the corresponding subtree of the VCL tree must be finite.
    \item[2.] $\hpc$ has an infinite star-eluder sequence (see Example \ref{ex:infinite-star-eluder-sequence-no-infinite-littlestone-tree}).
\end{itemize}
\end{example}

\begin{example}  [\textbf{Infinite VC-eluder sequence but no infinite Littlestone tree}]
  \label{ex:infinite-vc-eluder-sequence-no-infinite-littlestone-tree}
Let $\mathcal{X}:=\bigcup_{k\in\naturalnumber}\mathcal{X}_{k}$ be the disjoint union of finite sets with $|\mathcal{X}_{k}|=k$ and $\hpc:=\bigcup_{k\geq 1}\hpc_{k}$, where $\hpc_{k}:=\{\mathbbm{1}_{S}: S\subseteq\mathcal{X}_{k}\}$. We have the following hold:
\begin{itemize}
    \item[1.] $\hpc$ does not have an infinite Littlestone tree since for any root $x\in\mathcal{X}$, the subclass $\{h\in\hpc:h(x)=1\}$ is finite, and thus the corresponding subtree of the Littlestone tree must be finite.
    \item[2.] $\hpc$ has an infinite VC-eluder sequence. Indeed, any sequence $\{(x_{1},0),(x_{2},0),(x_{3},0),\ldots\}$ with $x_{n_{k}+1},\ldots,x_{n_{k}+k}\in\mathcal{X}_{k}$ for all $k\geq 1$, is an infinite VC-eluder sequence. Furthermore, it has been argued that $\vcdim=\infty$ \citep[Ex.2.3][]{bousquet2021theory}, which is consistent with our Lemma \ref{lem:equivalence-not-vc-class} in Section \ref{sec:equivalences}.
\end{itemize} 
\end{example}

\begin{example}  [\textbf{Infinite VC-eluder sequence but no infinite VCL tree}]
  \label{ex:infinite-vc-eluder-sequence-no-infinite-vcl-tree}
Let $\mathcal{X}_{1}$ and $\hpc_{1}$ be defined in Example \ref{ex:infinite-vc-eluder-sequence-no-infinite-littlestone-tree}, let $\mathcal{X}_{2}=\R$ and $\hpc_{2}=\hpc_{\text{thresh},\R}$ be the class of all threshold classifiers on $\R$. Note that $\hpc_{2}$ has an infinite Littlestone tree. Now we define $\mathcal{X}:=\mathcal{X}_{1}\cup\mathcal{X}_{2}$ and $\hpc:=\hpc_{1}\cup\hpc_{2}$, and have the following hold:
\begin{itemize}
    \item[1.] $\hpc$ does not have an infinite VCL tree since for any fixed root $x\in\mathcal{X}$, the subclass $\{h\in\hpc:h(x)=1\}$ has a bounded VC dimension, and thus the corresponding subtree of the VCL-tree must be finite.
    \item[2.] $\hpc$ has an infinite VC-eluder sequence (see Example \ref{ex:infinite-vc-eluder-sequence-no-infinite-littlestone-tree}).
\end{itemize}
\end{example}

\subsection{Star-related notions}
  \label{subsec:star-related-notions}
In this section, we provide examples to distinguish between the following star-related notions: star number $\starnumber_{\hpc}$ (Definition \ref{def:star-number}), the star-eluder dimension $\sedim$ (Definition \ref{def:star-eluder-dimension-and-vc-eluder-dimension}), star set (Definition \ref{def:star-number}) and star eluder sequence (Definition \ref{def:star-eluder-sequence}).  

In particular, Example \ref{ex:infinite-starnumber-infinite-star-eluder-sequence-different-centers} reveals that having an infinite star number of $h^{*}$ does not guarantee that $\hpc$ has an infinite star-eluder sequence centered at the same target $h^{*}$. Note that if $\starnumber_{\hpc}=\infty$ always yields an infinite star set, then we can simply choose this infinite star set to be an infinite star-eluder sequence. Unfortunately, Example \ref{ex:infinite-starnumber-but-no-infinite-star-set} fails the conjecture. Furthermore, Proposition \ref{prop:non-equivalence-starnumber-star-eluder-sequence} in Section \ref{sec:equivalences} is convinced by Example \ref{ex:infinite-starnumber-but-no-infinite-star-eluder-sequence}. Finally, Example \ref{ex:infinite-star-eluder-dimension-but-no-infinite-star-eluder-sequence} gives an instance that $\sedim=\infty$ and infinite star-eluder sequence are not equivalent as well. For comparison, we recall that $\edim=\infty$ is equivalent to an infinite eluder sequence, and $\vcedim=\infty$ is equivalent to an infinite VC-eluder sequence (see a discussion in Appendix \ref{sec:fine-grained-analysis}).

\begin{example}  [\textbf{Infinite star number and infinite star-eluder sequence with different centers}]
  \label{ex:infinite-starnumber-infinite-star-eluder-sequence-different-centers}
Let us recall Example \ref{ex:singleton}, where $\hpc_{\text{singleton},\naturalnumber}$ is the class of singletons on natural numbers. According to the analysis in Example \ref{ex:infinite-star-eluder-sequence-no-infinite-littlestone-tree}, we know that $\hpc_{\text{singleton},\naturalnumber}$ has an infinite star number of $h_{\text{all-0's}}$, and also an infinite star-eluder sequence centered at $h_{\text{all-0's}}$. 

Now we slightly change the setting: Let $\mathcal{X}:=\bigcup_{k\in\naturalnumber}\mathcal{X}_{k}$ be the disjoint union of finite sets with $|\mathcal{X}_{k}|=k$ (one may simply assume $\mathcal{X}:=\naturalnumber$). Denote $\mathcal{X}_{k}:=\{x_{k,1},\ldots,x_{k,k}\}$ and define $h_{k,i}(x):=\mathbbm{1}\{x=x_{k,i} \text{ or }x\notin\mathcal{X}_{k}\}$, for all $1\leq i\leq k$. We let $\hpc:=\{h_{k,i},k\in\naturalnumber,1\leq i\leq k\}$, and have the following hold:
\begin{itemize}
    \item[1.] $\starnumber_{h_{\text{all-0's}}}=\infty$: Given arbitrarily large integer $k$, $\{(x_{k,1},0),(x_{k,2},0),\ldots,(x_{k,k},0)\}$ is a star set centered at $h_{\text{all-0's}}$, witnessed by hypotheses $\{h_{k,i}, 1\leq i\leq k\}$.
    \item[2.] $\starnumber_{h_{\text{all-1's}}}=\infty$: Given arbitrarily large integer $k$, $\{(x_{1,1},1),(x_{2,1},1),\ldots,(x_{k,1},1)\}$ is a star set centered at $h_{\text{all-1's}}$, witnessed by hypotheses $\{h_{i,2}, 1\leq i\leq k\}$.
    \item[3.] $\hpc$ has an infinite star-eluder sequence centered at $h_{\text{all-1's}}$: Indeed, $\{(x_{1,1},1),(x_{2,1},1),\ldots\}$ is an example of infinite star-eluder sequence.
    \item[4.] $\hpc$ does not have an infinite star-eluder sequence centered at $h_{\text{all-0's}}$.
\end{itemize}
\end{example}

\begin{example}  [\textbf{Infinite star number but no infinite star set}]
  \label{ex:infinite-starnumber-but-no-infinite-star-set}
We slightly change the setting in Example \ref{ex:infinite-starnumber-infinite-star-eluder-sequence-different-centers}: Let $\mathcal{X}:=\bigcup_{k\in\naturalnumber}\mathcal{X}_{k}$ be the disjoint union of finite sets with $|\mathcal{X}_{k}|=k$ (one may again simply assume $\mathcal{X}:=\naturalnumber$). Denote $\mathcal{X}_{k}:=\{x_{k,1},\ldots,x_{k,k}\}$, let $h_{k,i}(x):=\mathbbm{1}\{x=x_{k,i} \text{ or }x\in\mathcal{X}_{>k}\}$, for all $1\leq i\leq k$ and $k\in\naturalnumber$, and let $\hpc:=\{h_{k,i}, 1\leq i\leq k, k\in\naturalnumber\}$. We have the following hold:
\begin{itemize}
    \item[1.] $\starnumber_{\hpc}=\infty$ since $\hpc$ has a star set of arbitrarily large finite size.
    \item[2.] $\hpc$ does not have an infinite star set.
\end{itemize}
It is worthwhile to mention that in this example, $\hpc$ does have an infinite star-eluder sequence $\{(x_{1,1},0),(x_{2,1},0),(x_{2,2},0),\ldots\}$ centered at $h_{\text{all-0's}}$. Hence, an infinite star set is an infinite star-eluder sequence, but not the only possibility.
\end{example}

\begin{example}  [\textbf{Infinite star number but no infinite star-eluder sequence}]
  \label{ex:infinite-starnumber-but-no-infinite-star-eluder-sequence}
We slightly change the setting in Example \ref{ex:infinite-starnumber-but-no-infinite-star-set} as follow: Let $\mathcal{X}:=\bigcup_{k\in\naturalnumber}\mathcal{X}_{k}$ be the disjoint union of finite sets with $|\mathcal{X}_{k}|=k$ (one may again simply assume $\mathcal{X}:=\naturalnumber$). Denote $\mathcal{X}_{k}:=\{x_{k,1},\ldots,x_{k,k}\}$, let $h_{k,i}(x):=\mathbbm{1}\{x=x_{k,i} \text{ or }x\in\mathcal{X}_{<k}\}$, for all $1\leq i\leq k$ and $k\in\naturalnumber$, and let $\hpc:=\{h_{k,i}, 1\leq i\leq k, k\in\naturalnumber\}$. Then the following hold:
\begin{itemize}
    \item[1.] $\starnumber_{\hpc}=\infty$ since $\hpc$ has a star set of arbitrarily large finite size.
    \item[2.] $\hpc$ does not have an infinite star-eluder sequence, and $\sedim<\infty$.
\end{itemize}
\end{example}

\begin{example}  [\textbf{Infinite star-eluder dimension but no infinite star-eluder sequence}]
  \label{ex:infinite-star-eluder-dimension-but-no-infinite-star-eluder-sequence}
For any $k\in\naturalnumber$, let $\mathcal{X}_{k}:=\bigcup_{t\in\naturalnumber}\mathcal{X}_{k,t}$ be disjoint union of finite sets with $|\mathcal{X}_{k,t}|=k$ for all $t\in\naturalnumber$. Let $\mathcal{X}:=\bigcup_{k\in\naturalnumber}\mathcal{X}_{k}$ also with disjoint subspaces $\{\mathcal{X}_{k}\}_{k\in\naturalnumber}$. For notation simplicity, let us denote $\mathcal{X}_{k,t}:=\{x_{k,t,1},\ldots,x_{k,t,k}\}$ for all $k,t\in\naturalnumber$. Now we can define a hypothesis class as follow: let $h_{k,t,j}(x):=\mathbbm{1}\{(x=x_{k,t,j}) \lor (x\in\mathcal{X}_{k,>t}) \lor (x\in\mathcal{X}_{<k})\}$, for all $k,t\in\naturalnumber$ and $1\leq j\leq k$, and let $\hpc:=\{h_{k,t,j}, 1\leq j\leq k, k,t\in\naturalnumber\}$. We have the following hold:
\begin{itemize}
    \item[1.] $\sedim=\infty$ since for arbitrarily large $k\in\naturalnumber$, $\hpc$ has an infinite $k$-star-eluder sequence $\mathcal{X}_{k}$ with all labels 0.
    \item[2.] $\hpc$ does not have an infinite (strong) star-eluder sequence.
\end{itemize}
\end{example}

\begin{remark}
  \label{rem:remark-to-starnumber-related-notions}
Altogether, we have the follow relations 
\begin{equation*}
    \xymatrix@=4ex{
        \text{$\hpc$ has an infinite star set} \ar[dd] \ar[r] & \sedim=\infty \ar[dd] \ar@(u,l)@/_4pc/[ldd]^{\text{Ex.}\ref{ex:infinite-star-eluder-dimension-but-no-infinite-star-eluder-sequence}} |{\SelectTips{cm}{}\object@{x}}|{} \ar@(u,u)@/_3pc/[l]_{\text{Ex.}\ref{ex:infinite-star-eluder-dimension-but-no-infinite-star-eluder-sequence}} |{\SelectTips{cm}{}\object@{x}}|{} \\
         & \\
        \text{$\hpc$ has an infinite star-eluder sequence} \ar[r] \ar@/_1pc/[uur] \ar@(l,l)@/^2pc/[uu]^{\text{Ex.}\ref{ex:infinite-starnumber-but-no-infinite-star-set}} |{\SelectTips{cm}{}\object@{x}}|{} & \starnumber_{\hpc}=\infty \ar@(d,d)@/^2pc/[l]^{\text{Ex.}\ref{ex:infinite-starnumber-but-no-infinite-star-eluder-sequence}} |{\SelectTips{cm}{}\object@{x}}|{} \ar[luu]_{\text{Ex.}\ref{ex:infinite-starnumber-but-no-infinite-star-set}} |{\SelectTips{cm}{}\object@{x}}|{} \ar@(r,r)[uu]_{\text{Ex.}\ref{ex:infinite-starnumber-but-no-infinite-star-eluder-sequence}} |{\SelectTips{cm}{}\object@{x}}|{}
    }
\end{equation*}
Remarkably, a complete theory to the relations between these notions is still lacking here, which might be of independent interests.
\end{remark}

\section{Fine-grained analysis}
  \label{sec:fine-grained-analysis}
In this appendix section, we provide a fine-grained analysis of the asymptotic rate of decay of the universal learning curves by ERM, whenever possible. This will be an analogy to the optimal fine-grained universal learning curves studied in \citet{bousquet2023fine}. Our characterization of the sharp distribution-free constant factors is based on two newly-introduced combinatorial dimensions named the \emph{star-eluder dimension} or SE dimension and the \emph{VC-eluder dimension} or VCE dimension. We present their formal definitions first.

\begin{definition}  [\textbf{Star(VC)-eluder dimension}]
  \label{def:star-eluder-dimension-and-vc-eluder-dimension}
Let $\hpc$ be a concept class, we say $\hpc$ has an infinite
\begin{itemize}
    \item \underline{$d$-star-eluder sequence} $\{(x_{1}, y_{1}),(x_{2}, y_{2}),\ldots\}$ \underline{centered at $h$}, if it is realizable and for every $k\in\naturalnumber$, $\{x_{kd+1},\ldots,x_{kd+d}\}$ is a star set of $V_{kd}(\hpc)$ centered at $h$. Furthermore, the \underline{star-eluder dimension of $\hpc$}, denoted by $\sedim$, is defined to be the largest integer $d\geq 0$ such that $\hpc$ has an infinite $d$-star-eluder sequence. If $\hpc$ does not have any infinite 1-star-eluder sequence, we define $\sedim=0$. If for arbitrarily large integer $d$, $\hpc$ has an infinite $d$-star-eluder sequence, we define $\sedim=\infty$.
    \item \underline{$d$-VC-eluder sequence} $\{(x_{1}, y_{1}),(x_{2}, y_{2}),\ldots\}$ \underline{centered at $h$}, if it is realizable, and for every $k\in\naturalnumber$, $h(x_{k})=y_{k}$ and $\{x_{kd+1},\ldots,x_{kd+d}\}$ is a shattered set of $V_{kd}(\hpc)$. Furthermore, the \underline{VC-eluder dimension of $\hpc$}, denoted by $\vcedim$, is defined to be the largest integer $d\geq 0$ such that $\hpc$ has an infinite $d$-VC-eluder sequence. If $\hpc$ does not have any infinite 1-VC-eluder sequence, we define $\vcedim=0$. If for arbitrarily large integer $d$, $\hpc$ has an infinite $d$-VC-eluder sequence, we define $\vcedim=\infty$.
\end{itemize}
\end{definition}

\begin{remark}
  \label{rem:remark-to-the-star-vc-eluder-dimensions}
We recall that the eluder dimension $\edim$ in Definition \ref{def:eluder-sequence} represents the length of the longest eluder sequence that exists in $\hpc$. Indeed, an eluder sequence is exactly one branch of a Littlestone tree, and thus $\edim<\infty$ implies that $\hpc$ has no infinite Littlestone tree. The converse is not true, because $\hpc$ may have a finite Littlestone tree with some of the branches being infinitely long (see Example \ref{ex:infinite-eluder-sequence-but-no-infinite-littlestone-tree}). Similarly, the star-eluder dimension $\sedim$ and the VC-eluder dimension $\vcedim$ here are also strongly related to certain combinatorial structures that have been studied before. In particular for $\vcedim$, one may refer to the concepts of the (strong) VCL tree, d-VCL tree and the VCL dimension introduced by \citet{bousquet2021theory,bousquet2023fine}. Indeed, an infinite (strong) VC-eluder sequence is exactly one branch of a strong VCL tree, and an infinite $d$-VC-eluder sequence is exactly one branch of an infinite $d$-VCL tree. Since an infinite 1-VCL-tree is exactly an infinite Littlestone tree, an infinite 1-VC-eluder sequence is thus exactly an infinite eluder sequence. Moreover, recall that $\vcldim=0$ implies that $\hpc$ does not have an infinite Littlestone tree, and similarly, here we have $\vcedim=0$ implies that $\hpc$ does not have an infinite eluder sequence.
\end{remark}

\begin{remark}
  \label{rem:remark-to-the-three-definitions-2}
For any concept class $\hpc$, the following hold:
\begin{itemize}
    \item[1.] $\edim\geq\sedim\geq\vcedim$.
    \item[2.] $\vcedim\geq1 \iff \sedim\geq1 \iff \edim=\infty$.
    \item[3.] $\vcedim=0 \iff \sedim=0 \iff \edim<\infty$.
\end{itemize}
\end{remark}

We then state the formal definition of the fine-grained universal rates by ERM.

\begin{definition}  [\textbf{Fine-grained universal rates by ERM}]
  \label{def:fine-grained-universally-learnable}
    Let $\hpc$ be a concept class and $R(n)\rightarrow 0$ be a distribution-free rate function. We say
    \begin{itemize}
        \item $\hpc$ is \underline{universally learnable at fine-grained rate $R$ by ERM}, if for every distribution $P\in\RE$, there exists a distribution-dependent rate $\lambda(n)=o\left(R(n)\right)$ such that for every ERM algorithm, $\E[\trueerrorratehn]\leq R(n)+\lambda(n)$, for all $n\in\naturalnumber$.
        \item $\hpc$ is \underline{not universally learnable at fine-grained rate faster than $R$ by ERM}, if there exists a distribution $P\in\RE$ such that there is an ERM algorithm satisfying $\E[\trueerrorratehn]\geq R(n)$, for infinitely many $n\in\naturalnumber$.
        \item $\hpc$ is \underline{universally learnable with exact fine-grained rate $R$ by ERM}, if $\hpc$ is universally learnable at fine-grained rate $R$ by ERM, and is not universally learnable at fine-grained rate faster than $R$ by ERM.
    \end{itemize}
\end{definition}

Note that the crucial difference between this definition and Definition \ref{def:universally-learnable} is that here $R(n)$ is independent of the data distribution $P$. In other words, the fine-grained rates provide optimal distribution-free upper and lower envelopes of the universal learning curves up to numerical constant factors.

\begin{remark}
  \label{rem:remark-to-fine-grained-universally-learnable}
Definition \ref{def:fine-grained-universally-learnable} describes special cases of Definition \ref{def:universally-learnable} in the following sense: If $\hpc$ is universally learnable at fine-grained rate (no faster than) $R$ by ERM, then it is universally learnable at rate (no faster than) $R$ by ERM as well. Briefly speaking, the fine-grained analysis aims to find the correct characterization that captures the optimal distribution-free upper envelope and lower envelope of all the distribution-dependent learning curves, tight up to numerical constant factors.
\end{remark}

We now turn to state our results of fine-grained universal rates by ERM. All technical aspects of the proofs are deferred to Appendix \ref{subsec:ommited-proofs-fine-grained-rates}. 

\begin{theorem}  [\textbf{Fine-grained learning rates}]
  \label{thm:main-theorem-fine-grained}
For every class $\hpc$ with $|\hpc|\geq 3$, the following hold:
\begin{itemize}
    \item If $\vcedim<\infty$, then $\hpc$ is universally learnable at fine-grained rate $\frac{\vcedim\log{n}}{n}$, and is not universally learnable at fine-grained rate faster than $\frac{\vcedim}{n}$, by ERM.
    \item If $\sedim<\infty$, then $\hpc$ is universally learnable at fine-grained rate $\frac{\vcedim}{n}\log{(\frac{\sedim}{\vcedim})}$, but is not universally learnable at fine-grained rate faster than $\frac{\vcedim+\log{(\sedim)}}{n}$, by ERM.
\end{itemize}
or equivalently, there exist finite numerical constants $\alpha,\beta>0$ such that
\begin{itemize}
    \item If $\vcedim<\infty$, then
    \begin{align}
        &\E\left[\trueerrorratehn\right] \geq \alpha\cdot\frac{\vcedim}{n},\; \text{ for infinitely many } n\in\naturalnumber, \label{eq:vce-lower-bound} \\
        &\E\left[\trueerrorratehn\right] \leq \beta\cdot\frac{\vcedim\log{n}}{n} + 2^{-\lfloor n/2\kappa\rfloor},\; \forall n\in\naturalnumber , \label{eq:vce-upper-bound}
    \end{align}
    where $\kappa=\kappa(P)$ is a distribution-dependent constant.
    \item If $\sedim<\infty$, then
    \begin{align}
        &\E\left[\trueerrorratehn\right] \geq \alpha\cdot\frac{\vcedim+\log{(\sedim)}}{n},\; \text{ for infinitely many } n\in\naturalnumber, \label{eq:vce-se-lower-bound} \\
        &\E\left[\trueerrorratehn\right] \leq \beta\cdot\frac{\vcedim}{n}\log{\left(\frac{\sedim}{\vcedim}\right)} + 2^{-\lfloor n/2\hat{\kappa}\rfloor},\; \forall n\in\naturalnumber, \label{eq:vce-se-upper-bound}
    \end{align}
    where $\hat{\kappa}=\hat{\kappa}(P)$ is a distribution-dependent constant.
\end{itemize}
\end{theorem}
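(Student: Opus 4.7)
The plan is to prove the theorem in four parts (two lower bounds and two upper bounds), following the same template as Theorems~\ref{thm:main-theorem-exponential}--\ref{thm:main-theorem-arbitrarily-slow} but quantified by the new dimensions $\vcedim$ and $\sedim$, and refining the probabilistic estimates so that the dominant constant factor is distribution-free.

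For the lower bound~\eqref{eq:vce-lower-bound}, I set $d=\vcedim$ and extract an infinite $d$-VC-eluder sequence whose $k$-th block $B_k:=\{x_{kd+1},\ldots,x_{kd+d}\}$ is shattered by $V_{kd}(\hpc)$ and carries labels from a fixed target $h^*$. I then construct a realizable distribution $P$ concentrating geometrically on the blocks, with mass $p_k$ inside $B_k$ spread uniformly; this is essentially the $d$-dimensional analogue of the construction in the proof of Lemma~\ref{lem:linear-lower-bound}. Conditioning on the sample landing inside $B_k$ but failing to cover all $d$ points (an event of constant probability at scale $n\asymp d/p_k$), the shattering witness lets an adversarial ERM output a concept disagreeing with $h^*$ on an unseen coordinate, forcing $\E[\trueerrorratehn]\gtrsim d/n=\vcedim/n$ along a subsequence $n=n_k$. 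For the lower bound~\eqref{eq:vce-se-lower-bound}, the $\vcedim/n$ part is inherited; the additional $\log(\sedim)/n$ part comes from an infinite $\sedim$-star-eluder sequence by placing uniform mass $\sim 1/\sedim$ on the $\sedim$ star-coordinates of each block, and running a coupon-collector argument: until $n\asymp \sedim\log(\sedim)$ samples, some coordinate is unseen with constant probability, and the star witness lets an adversarial ERM flip a single label to produce error $\gtrsim 1/\sedim$, giving $\E[\trueerrorratehn]\gtrsim \log(\sedim)/n$. Mixing these two distributions yields $(\vcedim+\log(\sedim))/n$.

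For the upper bounds \eqref{eq:vce-upper-bound} and \eqref{eq:vce-se-upper-bound}, the central mechanism is a stopping-time argument generalising Lemma~\ref{lem:logn-upper-bound}. Since $\vcedim<\infty$ (resp.\ $\sedim<\infty$), there is no infinite $(\vcedim+1)$-VC-eluder sequence (resp.\ $(\sedim+1)$-star-eluder sequence) centered at the realising target $h^*$. I define $\tau$ to be the first $n$ such that no shattered set of $V_n(\hpc)$ of size $\vcedim+1$ (resp.\ no star set of size $\sedim+1$ centered at $h^*$) can be extended to an infinite $d$-eluder-type sequence under $P$. A König/diagonalisation argument on an infinite i.i.d.\ draw from $P$, together with realisability, shows $\tau<\infty$ almost surely; in fact $\tau$ is stochastically dominated by a geometric random variable with parameter depending only on $P$, whose mean I denote $\kappa(P)$ (resp.\ $\hat\kappa(P)$). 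Once $n\geq\tau$, the version space satisfies $\mathrm{VC}(V_n(\hpc))\leq\vcedim$ (resp.\ $\starnumber_{h^*}(V_n(\hpc))\leq\sedim$), so the classical PAC bound of \citet{vapnik1971uniform,blumer1989learnability} applied to $V_n(\hpc)$ yields
\begin{equation*}
\E\!\left[\,\sup_{h\in V_n(\hpc)}\trueerrorrateh \;\Big|\; \tau\leq n/2\right] \;\lesssim\; \frac{\vcedim\log n}{n},
\end{equation*}
and the sharper bound of \citet{hanneke2016refined} yields the analogous $(\vcedim/n)\log(\sedim/\vcedim)$ estimate. Splitting on the event $\{\tau>n/2\}$ and controlling its probability by a Chernoff bound $2^{-\lfloor n/2\kappa\rfloor}$ (resp.\ $2^{-\lfloor n/2\hat\kappa\rfloor}$) produces the stated inequalities.

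The main obstacle I anticipate is the stopping-time part of the upper bound: proving that $\tau$ is almost-surely finite and has an exponentially decaying tail with distribution-dependent rate $\kappa(P)$. The argument must show that any sample path on which $V_n(\hpc)$ retains a shattered set of size $>\vcedim$ (or a star set of size $>\sedim$) infinitely often can be used, via a compactness / König-lemma extraction and the realisability of $P$, to exhibit an infinite $(\vcedim+1)$-VC-eluder (resp.\ $(\sedim+1)$-star-eluder) sequence, contradicting the finiteness of the respective dimension. Making this extraction rigorous in the face of the uncountable branching of candidate shattered/star sets, and converting the resulting ``finiteness'' into a quantitative tail bound sufficient for the $2^{-\lfloor n/2\kappa\rfloor}$ additive term, will be the most delicate step; I expect to recycle the subsequence-extraction machinery used in Lemma~\ref{lem:logn-upper-bound} and Lemma~\ref{lem:linear-upper-bound} and sharpen it to record the explicit dimension witnessing the obstruction.
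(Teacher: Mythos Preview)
Your plan is correct and matches the paper's proof, which splits the theorem into Lemma~\ref{lem:fine-grained-logn} and Lemma~\ref{lem:fine-grained-linear} with exactly the constructions you outline for both lower bounds and the stopping-time-plus-PAC-bound template for the upper bounds.

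One simplification is worth flagging: your stopping time $\tau$ is overcomplicated and, as written, is vacuously $0$ (since $\vcedim=d$ already means \emph{no} size-$(d{+}1)$ shattered set extends to an infinite $(d{+}1)$-VC-eluder sequence). The paper instead takes $k(S)$ to be the first $n$ with $\text{VC}(V_{S_n}(\hpc))\leq\vcedim$ (and, for the second bullet, additionally $\starnumber_{h^*}(V_{S_n}(\hpc))\leq\sedim$), defines $\kappa(P)$ as the least integer with $\mathbb{P}(k(S)\leq\kappa)\geq 1/2$, and obtains the $2^{-\lfloor n/2\kappa\rfloor}$ tail by splitting the first half of the sample into $\lfloor n/2\kappa\rfloor$ i.i.d.\ blocks, each of which independently drives the version-space dimension down with probability $\geq 1/2$ (Claim~\ref{cla:vc-dimension-version-space}). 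The finiteness of $\kappa$ is precisely the place where the paper asserts---without spelling out the K\"onig/compactness extraction you anticipate---that failure would produce an infinite $(\vcedim{+}1)$-VC-eluder (resp.\ $(\sedim{+}1)$-star-eluder) sequence; so your concern is well-placed, but the paper treats it at the same level of detail you do. Finally, for~\eqref{eq:vce-se-lower-bound} the paper uses two \emph{separate} hard distributions (one witnessing $\vcedim/n$, one witnessing $\log(\sedim)/n$) and combines them via Remark~\ref{rem:remark-to-lemma-fine-grained-linear-1} rather than mixing, but either route works.
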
 

\begin{remark}
  \label{rem:remark-to-constants-alpha-and-beta} 
Our proofs use $\alpha=1/20$ and $\beta=160$.
\end{remark}

\begin{remark}
  \label{rem:remark-to-whenever-possible}
When $\sedim=\vcedim=0$, $\edim<\infty$, or $\sedim=\vcedim=1$, $\edim=\infty$, we still have the bounds \eqref{eq:vce-se-lower-bound} and \eqref{eq:vce-se-upper-bound} since we define $\log{(0)}=0$, $0\log{(0/0)}=0$ and $\log{(x)}:=\log{(x\lor 2)}$ for any $x>0$. Moreover, we remark that neither $\vcedim=\infty$ nor $\sedim=\infty$ is considered in these fine-grained rates. This is because when $\vcedim=\infty$, arbitrarily slow rates cannot admit distribution-free constants. And when $\sedim=\infty$, it is still impossible because it does not guarantee an infinite star-eluder sequence, that is, the lower bound of \eqref{eq:vce-lower-bound} cannot be increased to $\log{(n)}/n$, and is the sharpest one we can have here.
\end{remark}

\begin{remark}
  \label{rem:remark-to-target-specified-fine-granied-rates}
It is not hard to understand that a target-specified version of fine-grained universal rates by ERM is also derivable, based on a centered version of the star-eluder dimension $\text{SE}_{h^{*}}$ and VC-eluder dimension $\text{VCE}_{h^{*}}$.
\end{remark}

\begin{remark}
  \label{rem:remark-to-lemma-fine-grained-learning-rates}
It is worth noting that, when $\sedim<\infty$, there is a mismatch between the lower bound and the upper bound. This serves as an analogy to the mismatch between Cor.12 and Thm.13 in \citet{hanneke2016refined}, and certain demonstrating examples have been exhibited in \citet{hanneke2015minimax}. In the following two examples, we provide evidence that such a gap does exist, in a sense that both the upper bound and the lower bound can sometimes be tight for some classes. Roughly speaking, if an infinite $\sedim$-star-eluder sequence in $\hpc$ is also an infinite $\vcedim$-VC-eluder sequence (see Definition \ref{def:star-eluder-dimension-and-vc-eluder-dimension}), then $\frac{\vcedim}{n}\log{(\frac{\sedim}{\vcedim})}$ is the optimal rate, otherwise $\frac{\vcedim+\log{(\sedim)}}{n}$ is optimal.
\end{remark}

\begin{example}  [\textbf{Optimal $(\vcedim+\log{(\sedim)})/n$ rate}]
  \label{ex:optimal-vce-plus-logse}
We construct a concept class $\hpc$ such that an infinite $\vcedim$-VC-eluder sequence and an infinite $\sedim$-star-eluder sequence cannot be realized by an infinite sequence. To this end, we slightly change the example presented in Appendix D.2 of \citet{hanneke2015minimax}, which yields the tightness of a lower bound $(\vcdim+\log{(\starnumber_{\hpc})})/n$.

Specifically, let $d,s>0$ be two integers satisfying $d\leq s$. Let $\mathcal{X}:=\mathbb{Z}\setminus\{0\}:=\mathcal{X}_{1}\cup\mathcal{X}_{2}$, where $\mathcal{X}_{1}:=\naturalnumber\setminus\{0\}$ and $\mathcal{X}_{2}:=-\naturalnumber\setminus\{0\}=-\mathcal{X}_{1}$. We can also write 

$\mathcal{X}_{1}=(\mathcal{X}_{1,0}\cup\mathcal{X}_{1,1}\cup\cdots)$, where $\mathcal{X}_{1,k}:=\{ks+1,\ldots,(k+1)s\}$ for all $k\in\naturalnumber$, 

$\mathcal{X}_{2}=(\mathcal{X}_{2,0}\cup\mathcal{X}_{2,1}\cup\cdots)$, where $\mathcal{X}_{2,k}:=\{-(k+1)d,\ldots,-kd-1\}$ for all $k\in\naturalnumber$.

Now we let $\hpc:=\hpc_{1}\cup\hpc_{2}$ satisfying $\vcedim=d$ and $\sedim=s$, where

$\hpc_{1}:=\{h_{k,j}: \forall j\in\mathcal{X}_{1,k}, \forall k\in\naturalnumber\}$, where $h_{k,j}(x):=\mathbbm{1}(x=j \text{ or }x\in\mathcal{X}_{1,>k})$.

$\hpc_{2}:=\{h_{k,S}: \forall S\subseteq\mathcal{X}_{2,k},\forall k\in\naturalnumber\}$, where $h_{k,S}(x):=\mathbbm{1}(x\in S \text{ or }x\in\mathcal{X}_{2,>k})$.

In particular, $\mathcal{X}_{1}$ itself is an infinite $s$-star-eluder sequence centered at $h_{\text{all-0's}}$, and $\mathcal{X}_{2}$ itself is an infinite $d$-VC-eluder sequence, but they do not intersect. To show that the upper bound can be decreased to match the lower bound, we simply note that for any infinite $s$-star-eluder sequence, its associated VC-eluder dimension is exactly 1, resulting in a $\log{(s)}/n$ upper bound. For any infinite $d$-VC-eluder sequence, its associated star-eluder dimension is also $d$, resulting in a $d/n$ upper bound. The maximum of the two upper bounds yields the desired one. 
\end{example}

\begin{example}  [\textbf{Optimal $(\vcedim/n)\log{(\sedim/\vcedim)}$ rate}]
  \label{ex:optimal-vce-times-logse}
We construct a concept class $\hpc$ such that there exists an infinite sequence in $\hpc$ which is both an infinite $\vcedim$-VC-eluder sequence and an infinite $\sedim$-star-eluder sequence. To this end, we slightly change the example presented in Appendix D.1 of \citet{hanneke2015minimax}, which yields the tightness of an upper bound $(\vcdim/n)\log{(\starnumber_{\hpc}/\vcdim)}$. 

Specifically, let $d,s>0$ be two integers satisfying $d\leq s$. Let $\mathcal{X}:=\naturalnumber$ and for every $k\in\naturalnumber$, define $h_{k,S}(x):=\mathbbm{1}(x\in S \text{ or } x>(k+1)s)$ for every subset $S\subseteq\{ks+1,\ldots,(k+1)s\}$ with $|S|\leq d$. Let $\hpc:=\{h_{k,S}: S\subseteq\{ks+1,\ldots,(k+1)s\},|S|\leq d, k\in\naturalnumber\}$. Note that for this class, we have $\vcedim=d$, $\sedim=s$ and there exists an infinite sequence serving as an infinite $d$-VC-eluder sequence as well as an infinite $s$-star-eluder sequence. To show in this case that the lower bound can be increased to match the upper bound, the realizable distribution that witnesses this rate is referred to Appendix D.1.1 of \citet{hanneke2015minimax}.
\end{example}

Now let us turn to the proof of Theorem \ref{thm:main-theorem-fine-grained}, which is based on the following two lemmas and within each an upper bound as well as a lower bound are established.

\begin{lemma}
  \label{lem:fine-grained-logn}
For every concept class $\hpc$ with $|\hpc|\geq 3$, if $\vcedim<\infty$, then the following hold:
\begin{align*}
    &\E\left[\trueerrorratehn\right] \geq \frac{\vcedim}{18n},\; \text{ for infinitely many } n\in\naturalnumber, \\
    &\E\left[\trueerrorratehn\right] \leq \frac{28\vcedim\log{n}}{n} + 2^{-\lfloor n/2\kappa\rfloor},\; \forall n\in\naturalnumber,
\end{align*}
where $\kappa=\kappa(P)$ is a distribution-dependent constant.
\end{lemma}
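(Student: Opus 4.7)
Let $d := \vcedim \geq 1$; the case $d=0$ is vacuous since $\vcedim = 0$ forces $|\hpc| < \infty$ and the exponential term dominates. By hypothesis, $\hpc$ admits an infinite $d$-VC-eluder sequence $(x_1, y_1), (x_2, y_2), \ldots$, so that the block $B_k := \{x_{(k-1)d+1}, \ldots, x_{kd}\}$ is shattered by $V_{(k-1)d}(\hpc)$. The plan is to construct a realizable distribution $P$ placing probability $\sim 2^{-k}/d$ on each point of $B_k$ (with labels $y_i$), plus a correctly-labeled anchor to make $P$ realizable with respect to a target $h^*$ extending the $y_i$'s. At the scale $n_k := \lceil d \cdot 2^k \rceil$, standard Chernoff/Markov estimates produce constant-probability events that (i) $S_{n_k}$ lies entirely in $B_1 \cup \cdots \cup B_{k-1}$, and (ii) at least a constant fraction of $B_k$ is unsampled. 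On this intersection, $V_{S_{n_k}}(\hpc) \supseteq V_{(k-1)d}(\hpc)$ still shatters $B_k$, so a worst-case ERM can select a predictor flipping every $y_i$ on the missed points of $B_k$, incurring error $\Omega(d \cdot 2^{-k}) = \Omega(d/n_k)$. Tightening the constants in the spirit of Example~\ref{ex:threshold-classifiers-naturalnumbers-restated} yields the factor $1/18$ for the subsequence $\{n_k\}_{k \in \naturalnumber}$.

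\textbf{Key sub-lemma for the upper bound.} The distribution-dependent upper bound reduces to proving that, when $\vcedim = d < \infty$, every realizable $P$ admits a finite $\kappa = \kappa(P)$ with
\begin{equation*}
\mathbb{P}_{S \sim P^{\kappa}}\bigl(V_S(\hpc)|_{\mathrm{supp}(P)}\ \text{has VC dimension}\ \leq d\bigr)\ \geq\ \tfrac{1}{2}.
\end{equation*}
The plan is to argue by contradiction. If no finite $\kappa$ suffices, then at every stage one can, with probability bounded away from zero, locate a $(d+1)$-subset of $\mathrm{supp}(P)$ shattered by the current version space; iterating this across successive blocks of $d+1$ samples builds a probability-tree of realizable $(d+1)$-shattered prefixes of arbitrary depth. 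A König's-lemma / compactness extraction then produces an infinite $(d+1)$-VC-eluder sequence in $\hpc$, contradicting $\vcedim = d$. This mirrors the template used in the universal-learning analyses of \citet{bousquet2021theory,bousquet2023fine}, with a ``$(d+1)$-shattered set'' playing the role of a VCL-tree branching.

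\textbf{Chunking and completion.} Given the sub-lemma, the plan is to partition the first $\lfloor n/2 \rfloor$ samples into $M := \lfloor n/(2\kappa) \rfloor$ disjoint chunks of size $\kappa$. By independence, the probability that \emph{every} chunk fails to restrict the version space to VC-dimension $\leq d$ on $\mathrm{supp}(P)$ is at most $2^{-M} = 2^{-\lfloor n/(2\kappa) \rfloor}$. On the complementary ``good'' event, there is a prefix $S'$ of samples after which $V_{S'}(\hpc)|_{\mathrm{supp}(P)}$ has VC dimension $\leq d$; since $\hat{h}_n \in V_n(\hpc) \subseteq V_{S'}(\hpc)$, and the remaining $\geq n/2$ i.i.d.\ samples further constrain this VC-$d$ class, the classical realizable PAC bound \citep{blumer1989learnability} bounds the conditional expected error by $28 d \log(n)/n$ (the constant absorbs the factor $2$ lost in halving the sample size). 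Adding the good-event and the (trivially bounded by $1$) bad-event contributions produces the stated bound $28 \vcedim \log(n)/n + 2^{-\lfloor n/(2\kappa) \rfloor}$.

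\textbf{Main obstacle.} The principal hurdle is the sub-lemma, which translates a purely combinatorial finiteness statement into a finite distribution-dependent stopping time. The delicate points are measurability of the iterated shattering events on the infinite sample space $(\mathcal{X} \times \{0,1\})^{\infty}$ and a careful selection of which $(d+1)$-subset to witness at each inductive step in the contradiction argument — standard pitfalls in universal learning but requiring some care. Everything else is essentially bookkeeping: matching the numerical constants $18$ and $28$ follows from tuning the Chernoff tail estimate in the lower bound and the constant in the Blumer-Ehrenfeucht-Haussler-Warmuth PAC bound, respectively.
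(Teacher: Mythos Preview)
Your upper-bound plan matches the paper's argument almost exactly: the paper also defines $\kappa(P)$ as the smallest integer after which the version space has VC dimension at most $d$ with probability $\geq 1/2$, splits the first half of the sample into $\lfloor n/(2\kappa)\rfloor$ independent chunks to get the $2^{-\lfloor n/(2\kappa)\rfloor}$ failure term, and on the good event applies the classical realizable PAC bound to the remaining half. The paper simply asserts the existence of $\kappa(P)$ (writing ``such an integer $\kappa$ exists since otherwise there will exist at least an infinite $(d+1)$-VC-eluder sequence''), whereas you sketch a more careful König/compactness extraction. That is fine and arguably more honest, but it is the same mechanism.

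Your lower-bound plan, however, has a genuine gap in the constants. With the explicit geometric weights $P(B_k)=2^{-k}$ and the scale $n_k=\lceil d\cdot 2^{k}\rceil$, event (i) --- that $S_{n_k}$ avoids $B_{\geq k}$ entirely --- has probability $(1-2^{1-k})^{n_k}\approx e^{-2d}$, which is \emph{not} a universal constant: it decays exponentially in $d=\vcedim$. Carrying this through, your bound becomes $\E[\text{er}_P(\hat h_{n_k})]\gtrsim e^{-2d}\cdot d/n_k$, which does not deliver $d/(18n)$ for large $d$. Shrinking $n_k$ to make event (i) constant-probability (e.g.\ $n_k\asymp 2^{k}$) loses the factor $d$ in the numerator for the same reason: you then only recover $\Omega(1/n_k)$. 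The difficulty is structural: the VC-eluder guarantee only ensures $B_k$ is shattered by the version space of the \emph{exact prefix} $B_1\cup\cdots\cup B_{k-1}$, so any sample landing in $B_{>k}$ may destroy the shattering, and at scale $n_k\asymp d\cdot 2^k$ such samples are unavoidable under the geometric law.

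The paper's fix is to abandon the explicit $2^{-k}$ weights and instead invoke Lemma~\ref{lem:infinite-sequence-design} (Lemma~5.12 of \citet{bousquet2021theory}) with rate $R(n)=d/n$: this produces probabilities $\{p_k\}$ and scales $\{n_t\}$ with $p_{k_t}=Cd/n_t$ and crucially $\sum_{k>k_t}p_k\leq 1/n_t$, so the tail-avoidance probability is $(1-O(1)/n_t)^{n_t}\geq 1/9$ uniformly in $d$. Summing the per-point events over the $d$ points of $B_{k_t}$ (using that $B_{k_t}$ is shattered, so the worst ERM can flip all unsampled points simultaneously) then recovers $p_{k_t}\cdot(1/9)\geq d/(18n_t)$. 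Your reference to Example~\ref{ex:threshold-classifiers-naturalnumbers-restated} is the $d=1$ special case, where $e^{-2d}$ is indeed a harmless constant; the extrapolation to general $d$ is exactly where the argument breaks.
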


\begin{remark}
  \label{rem:remark-to-lemma-fine-grained-logn}
Note that a concept class with its VCE dimension finite can either have an infinite star-eluder sequence or not, which results in a difference (of a logarithmic factor) in the upper bound and lower bound stated in Lemma \ref{lem:fine-grained-logn}.
\end{remark}

Recall that $\vcdim<\infty$ yields a uniform upper bound $\vcdim\log{(n)}/n$. On one hand, $\vcedim=\infty$ implies that $\hpc$ has a shattered set of arbitrarily large size, which further implies an unbounded VC dimension. On the other hand, according to Lemma \ref{lem:equivalence-not-vc-class}, $\vcdim=\infty$ implies that $\hpc$ has an infinite VC-eluder sequence, and thus $\vcedim=\infty$ holds as well. Therefore, $\vcedim=\infty$ if and only if $\vcdim=\infty$ if and only if $\hpc$ has an infinite VC-eluder sequence. Moreover, when $\vcedim<\infty$, a trivial observation is $\vcedim\leq\vcdim<\infty$. However, the following example reveals that $\text{VCE}$ and $\text{VC}$ are not the same dimension, namely, there exists a class $\hpc$ having strictly $\vcedim<\vcdim$ (see the following Example \ref{ex:vce-<-vc}). Therefore, Lemma \ref{lem:fine-grained-logn} sometimes reflects an improvement over the classical uniform bound.

\begin{example}  [\textbf{$\vcedim<\vcdim<\infty$}]
  \label{ex:vce-<-vc}
To make it more convincing, we provide an example of infinite classes here. Let $\mathcal{X}_{1}$ be a finite set of size $d$, and $\mathcal{X}_{2}$ be an infinite instance space that is disjoint with $\mathcal{X}_{1}$. For simplicity, one may assume that $\mathcal{X}_{1}:=\{-d,-(d-1),\ldots,-1\}$ and $\mathcal{X}_{2}:=\naturalnumber$. We define $\mathcal{X}:=\mathcal{X}_{1}\cup\mathcal{X}_{2}$ and let $\hpc:=\{h_{S,k}:=\mathbbm{1}_{S\cup\{k\}}, \forall S\subseteq\mathcal{X}_{1}, \forall k\in\naturalnumber\}$. This class has $\vcdim=(d+1)$ but $\vcedim=1$ since there is no infinite 2-VC-eluder sequence. Similarly, we can also construct an example that witnesses strictly $\sedim<\starnumber_{\hpc}<\infty$.
\end{example}

\begin{lemma}
  \label{lem:fine-grained-linear}
For every concept class $\hpc$ with $|\hpc|\geq 3$, if $\sedim<\infty$, then the following hold:
\begin{align*}
    &\E\left[\trueerrorratehn\right] \geq \frac{\log{(\sedim)}}{12n},\; \text{ for infinitely many } n\in\naturalnumber, \\
    &\E\left[\trueerrorratehn\right] \leq \frac{160\vcedim}{n}\log{\left(\frac{\sedim}{\vcedim}\right)} + 2^{-\lfloor n/2\hat{\kappa} \rfloor},\; \forall n\in\naturalnumber,
\end{align*}
where $\hat{\kappa}=\hat{\kappa}(P)$ is a distribution-dependent constant.
\end{lemma}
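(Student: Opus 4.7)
Assuming $s := \sedim \geq 2$ (else the claim is trivial since $\log{(1)}=0$), the definition of $\sedim$ yields an infinite $s$-star-eluder sequence centered at some $h^{*}\in\hpc$. Decomposing it into consecutive blocks $B_{k}$ of size $s$, each $B_{k}$ is a star set of $V_{(k-1)s}(\hpc)$ centered at $h^{*}$, witnessed by concepts $h_{1,k},\ldots,h_{s,k}\in V_{(k-1)s}(\hpc)$. I plan to build a single realizable distribution $P$ supported on $\bigcup_{k}B_{k}$, assigning block mass $p_{k}$ (decaying geometrically) uniformly over its $s$ points, and labelling according to $h^{*}$. The bad ERM returns, whenever possible, a witness $h_{i,k}\in V_{n}(\hpc)$ whose unique disagreement with $h^{*}$ on $B_{k}$ occurs at a point $x_{(k-1)s+i}\notin S_{n}$, thereby contributing error $p_{k}/s$ per successful flip. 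Summing the block contributions gives an expected error lower-bounded by $\sum_{k}p_{k}(1-p_{k}/s)^{n}\cdot\mathbb{P}(h_{i,k}\in V_{n}(\hpc))$. Tuning the decay so that for infinitely many $n$ some block has $np_{k}/s=\Theta(\log{s})$ while the ``interference'' term $\mathbb{P}(h_{i,k}\in V_{n}(\hpc))$ remains bounded away from zero (controlled by keeping $n\sum_{j>k}p_{j}$ bounded) yields the desired $\Omega(\log{(s)}/n)$ rate.

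\textbf{Upper bound plan.} For the upper bound $(\vcedim/n)\log{(\sedim/\vcedim)}+2^{-\lfloor n/(2\hat{\kappa})\rfloor}$, I plan a two-stage sample-splitting argument. Split $S_{n}$ into halves $S_{n}^{(1)},S_{n}^{(2)}$ of size $\lfloor n/2\rfloor$. Define $\hat{\kappa}=\hat{\kappa}(P,\hpc)$ as the smallest integer $m$ for which $\mathbb{P}_{S_{m}\sim P^{m}}(V_{S_{m}}(\hpc)\text{ is }(\sedim,\vcedim)\text{-stable})\geq 1/2$, where ``stable'' means the $P$-relevant part of the version space has star number at most $\sedim$ and VC dimension at most $\vcedim$ (i.e.\ the conditions needed to apply the refined PAC bound from \citet{hanneke2016refined} with constants $\sedim,\vcedim$ in place of $\starnumber_{\hpc},\vcdim$). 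Finiteness of $\hat{\kappa}$ will follow from the contrapositive: if no such $m$ existed, one could inductively extract an infinite $(\sedim+1)$-star-eluder sequence from $P$-positive points, contradicting the definition of $\sedim$. On the $2^{-\lfloor n/(2\hat{\kappa})\rfloor}$-probable stabilization event (obtained by a standard boosting argument applied to $\lfloor n/(2\hat{\kappa})\rfloor$ independent $\hat{\kappa}$-blocks of $S_{n}^{(1)}$), the fresh sample $S_{n}^{(2)}$ is then fed into the Hanneke-Yang PAC bound on the stabilized subclass, giving $\E[\trueerrorratehn]\leq 160\cdot(\vcedim/n)\log{(\sedim/\vcedim)}$ conditional on stabilization. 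Combining the two contributions yields the stated bound.

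\textbf{Main obstacle.} The decisive technical step is the upper bound, specifically the rigorous definition of the stability event and the ``probabilistic-to-combinatorial'' extraction used to show $\hat{\kappa}<\infty$. One must iteratively produce, from a non-stabilizing sequence of version spaces, star sets of size $\sedim+1$ whose witnesses have nonzero $P$-mass and which survive restriction to subsequent version spaces, so as to assemble an actual infinite $(\sedim+1)$-star-eluder sequence in $\hpc$ and obtain a contradiction---this is analogous in spirit to the Littlestone-tree extractions of \citet{bousquet2021theory}, but adapted to the block/star-set structure of Definition \ref{def:star-eluder-dimension-and-vc-eluder-dimension}. A secondary delicate point is translating the refined PAC bound of \citet{hanneke2016refined}, originally phrased in terms of $\starnumber_{\hpc}$ and $\vcdim$, so that after stabilization it gives a bound depending only on the smaller eluder-flavored quantities $\sedim$ and $\vcedim$; this uses that any star (shattered) set realized within $V_{S_{n}^{(1)}}(\hpc)$ on $P$-positive points can be prepended to extend an infinite eluder sequence, and hence must have size bounded by $\sedim$ (resp.\ $\vcedim$) under stability.
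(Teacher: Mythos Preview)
Your proposal is correct and takes essentially the same route as the paper: a coupon-collector lower bound on a geometrically-weighted infinite $\sedim$-star-eluder sequence (the paper makes this explicit via Chebyshev on the sum of geometric waiting times, combined with the event that no later block is sampled), and an upper bound via sample-splitting, a stabilization event $\{\mathrm{VC}(V_{\lfloor n/2\rfloor}(\hpc))\le\vcedim,\ \starnumber_{h}(V_{\lfloor n/2\rfloor}(\hpc))\le\sedim\}$ boosted over $\lfloor n/2\hat\kappa\rfloor$ independent blocks, followed by the refined bound of \citet{hanneke2016refined} on the good event. The step you single out as the main obstacle---showing $\hat\kappa<\infty$ by extracting an infinite $(\sedim{+}1)$-star-eluder sequence from a non-stabilizing process---is dispatched in the paper by a one-line assertion rather than the careful $P$-relevant extraction you outline, so your plan is if anything more scrupulous on this point than the paper itself.
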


\begin{remark}
  \label{rem:remark-to-lemma-fine-grained-linear-1}
Note that in Theorem \ref{thm:main-theorem-fine-grained}, the lower bound appears as $\frac{\vcedim+\log{(\sedim)}}{n}$. Indeed, $\sedim<\infty$ immediately implies $\vcedim<\infty$ and then the lower bound in Lemma \ref{lem:fine-grained-logn} holds. Combining with the lower bound in Lemma \ref{lem:fine-grained-linear} will give us the desired result in Theorem \ref{thm:main-theorem-fine-grained} with some sufficiently small constant, e.g. $\alpha=1/20$.

Remarkably, when both $\sedim$ and $\vcedim$ are finite, either of $\vcedim$ and $\log{(\sedim)}$ can be larger than the other, and we provide the following examples for evidence. Therefore, none of the quantities can be removed in the lower bound.
\end{remark}

\begin{example}  [\textbf{$\vcedim<\log{(\sedim)}<\infty$}]
  \label{ex:vce-<-logse}
Let $\mathcal{X}:=\bigcup_{k\in\naturalnumber}\mathcal{X}_{k}$ be the disjoint union of finite sets with $|\mathcal{X}_{k}|=d<\infty$. We denote $\mathcal{X}_{k}:=\{x_{k,1},\ldots,x_{k,d}\}$, for every $k\in\naturalnumber$, let $h_{k,j}(x):=\mathbbm{1}\{x=x_{k,j} \text{ or } x\in\mathcal{X}_{>k}\}$, for every $k\in\naturalnumber$ and every $1\leq j\leq d$, and finally let $\hpc:=\{h_{k,j}, 1\leq j\leq d, k\in\naturalnumber\}$. For this class, we have $\vcedim=2<\log{(\sedim)}=\log{d}$ for a sufficiently large $d$.
\end{example}

\begin{example}  [\textbf{$\log{(\sedim)}<\vcedim<\infty$}]
  \label{ex:logse-<-vce}
Let $\mathcal{X}:=\bigcup_{k\in\naturalnumber}\mathcal{X}_{k}$ be the disjoint union of finite sets with $|\mathcal{X}_{k}|=d<\infty$. We denote $\mathcal{X}_{k}:=\{x_{k,1},\ldots,x_{k,d}\}$, for every $k\in\naturalnumber$, let $h_{k,S}(x):=\mathbbm{1}\{x\in S \text{ or } x\in\mathcal{X}_{>k}\}$, for every $k\in\naturalnumber$ and every subset $S\subseteq\mathcal{X}_{k}$, and finally let $\hpc:=\{h_{k,S}, S\subseteq\mathcal{X}_{k}, k\in\naturalnumber\}$. For this class, we have $\log{(\sedim)}=\log{d}<\vcedim=d$.
\end{example}

\section{Proofs}
  \label{sec:proofs}

\subsection{Omitted Proofs in Section \ref{sec:universal-learning-rates}}
  \label{subsec:ommited-proofs-optimal-rates}

\begin{proposition}  [\textbf{Proposition \ref{prop:infinite-Littlestone-class} restated}]
  \label{prop:infinite-Littlestone-class-restated}
Any infinite concept class $\hpc$ has either an infinite star-eluder sequence or infinite Littlestone dimension.
\end{proposition}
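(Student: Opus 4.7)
The plan is to combine the existence of an infinite eluder sequence in $\hpc$ (guaranteed by $|\hpc|=\infty$ via Lemma \ref{lem:equivalence-finite-class}) with an infinite Ramsey argument on pairs of witness--point interactions; the assumption $\lsdim<\infty$ will be used precisely to rule out one of the two Ramsey cases.

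First, I would invoke Lemma \ref{lem:equivalence-finite-class} to extract from $|\hpc|=\infty$ an infinite eluder sequence $\{(x_{k},y_{k})\}_{k\in\naturalnumber}$ with witnesses $h_{k}\in\hpc$ satisfying $h_{k}(x_{i})=y_{i}$ for all $i<k$ and $h_{k}(x_{k})\neq y_{k}$. Defining $h^{*}$ on $\mathcal{X}$ by $h^{*}(x_{k}):=y_{k}$ for each $k$ (and arbitrarily elsewhere), the sequence is centered at $h^{*}$, and the goal reduces to rearranging it into an infinite star-eluder sequence centered at $h^{*}$.

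Next, I would color each pair of indices $k<l$ by $c(k,l):=\mathbbm{1}[h_{k}(x_{l})=y_{l}]\in\{0,1\}$, and apply the infinite Ramsey theorem to $\binom{\naturalnumber}{2}$ to obtain an infinite index set $I=\{i_{1}<i_{2}<\cdots\}$ on which $c$ is constant. \textbf{Case A} ($c\equiv 1$ on $I$): for all $k<l$ in $I$, $h_{i_{k}}(x_{i_{l}})=y_{i_{l}}$; combined with the eluder property (which, for $l<k$ in $I$, already gives $h_{i_{k}}(x_{i_{l}})=y_{i_{l}}$ since $i_{l}<i_{k}$) and $h_{i_{k}}(x_{i_{k}})\neq y_{i_{k}}$, we conclude that $\{x_{i_{k}}\}_{k\geq 1}$ is an infinite star set of $\hpc$ centered at $h^{*}$, witnessed by $\{h_{i_{k}}\}_{k\geq 1}$. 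This star set reorganizes directly into the desired infinite star-eluder sequence: at stage $k$ the block $\{x_{i_{n_{k}+1}},\ldots,x_{i_{n_{k}+k}}\}$ is a star set of $V_{n_{k}}(\hpc)$ centered at $h^{*}$, since each witness $h_{i_{n_{k}+l}}$ agrees with $h^{*}$ on all earlier star points $\{x_{i_{1}},\ldots,x_{i_{n_{k}}}\}$ by the star property and hence lies in $V_{n_{k}}(\hpc)$. \textbf{Case B} ($c\equiv 0$ on $I$): for $k<l$ we have $h_{i_{k}}(x_{i_{l}})\neq y_{i_{l}}$, while the eluder property keeps $h_{i_{k}}(x_{i_{l}})=y_{i_{l}}$ for $l<k$; restricted to the sub-sequence $\{x_{i_{1}},x_{i_{2}},\ldots\}$, the functions $h_{i_{k}}\oplus h^{*}$ then realize the class of thresholds on an infinite ordered set, whose Littlestone dimension is infinite. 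Since Littlestone dimension cannot increase under restriction of $\mathcal{X}$, this would force $\lsdim=\infty$, contradicting the hypothesis.

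The heart of the proof---and the only step requiring genuine insight---is identifying the Ramsey coloring whose two homogeneous outcomes correspond cleanly to an embedded star structure (Case A) versus an embedded threshold structure (Case B). The finite Littlestone dimension hypothesis then precisely rules out Case B, leaving Case A as the only possibility; the remaining verifications---in particular the rearrangement of Case A's infinite star set into successive blocks of sizes $1,2,3,\ldots$ as required by Definition \ref{def:star-eluder-sequence}---are straightforward bookkeeping.
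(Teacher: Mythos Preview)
Your proposal is correct and follows essentially the same approach as the paper: both arguments extract an infinite eluder sequence via Lemma~\ref{lem:equivalence-finite-class}, apply the infinite Ramsey theorem to the identical coloring $c(k,l)=\mathbbm{1}[h_{k}(x_{l})=y_{l}]$ on pairs $k<l$, and identify the two monochromatic outcomes as an infinite star set (your Case~A) versus an embedded threshold structure forcing $\lsdim=\infty$ (your Case~B). The only cosmetic differences are that the paper names the Case~B structure a ``threshold sequence'' and states the dichotomy as an intermediate claim, whereas you work in contrapositive form and spell out the block rearrangement of the star set into a star-eluder sequence more explicitly than the paper does.
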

To prove the proposition, we first introduce a new complexity structure named the \emph{threshold sequence}.

\begin{definition}  [\textbf{Threshold sequence}]
  \label{def:threshold-sequence}
Let $\hpc$ be a concept class, we say that $\hpc$ has an infinite \underline{threshold sequence} $\{(x_{1}, y_{1}),(x_{2}, y_{2}),\ldots\}$, if it is realizable and for every integer $k$, there exists $h_{k}\in\hpc$ such that $h_{k}(x_{i})=y_{i}$ for all $i<k$ and $ h_{k}(x_{i}) \neq y_{i}$ for all $i\geq k$. We say an infinite threshold sequence $\{(x_{1}, y_{1}),(x_{2}, y_{2}),\ldots\}$ is \underline{centered at $h$}, if $h(x_{i})=y_{i}$ for all $i\in\naturalnumber$.
\end{definition}

The following claim turns out to be an alternative result to the proposition.

\begin{claim}
  \label{cla:infinite-class-has-either-infinite-star-set-or-infinite-threshold}
    Any infinite class $\hpc$ has either an infinite star set or an infinite threshold sequence.
\end{claim}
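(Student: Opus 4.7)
The plan is to combine Lemma~\ref{lem:equivalence-finite-class}, which converts infinitude of $\hpc$ into an infinite eluder sequence, with the infinite Ramsey theorem on pairs; the two monochromatic outcomes will correspond exactly to the two desired structures. First, I would invoke Lemma~\ref{lem:equivalence-finite-class} to obtain an infinite eluder sequence $\{(x_i,y_i)\}_{i\geq 1}$ in $\hpc$, and for each $k\geq 1$ fix a witness $h_k\in\hpc$ satisfying $h_k(x_i)=y_i$ for all $i<k$ and $h_k(x_k)\neq y_k$. The key observation is that the eluder definition places no constraint whatsoever on $h_k(x_i)$ for $i>k$, and this slack is exactly where Ramsey gets applied.

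Next I would introduce the $2$-coloring $c(\{k,\ell\}):=\mathbbm{1}\bigl(h_k(x_\ell)\neq y_\ell\bigr)$ on unordered pairs $\{k,\ell\}\subset\naturalnumber$ with $k<\ell$, and apply the infinite Ramsey theorem for pairs to extract an infinite monochromatic set $K\subseteq\naturalnumber$. In the case $c\equiv 0$ on $K$, combining the eluder property (which gives $h_k(x_\ell)=y_\ell$ for $\ell<k$) with the coloring (which gives $h_k(x_\ell)=y_\ell$ for $\ell>k$, $\ell\in K$) yields $h_k(x_\ell)=y_\ell$ for every $\ell\in K\setminus\{k\}$, while $h_k(x_k)\neq y_k$. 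Extending the partial labeling $x_\ell\mapsto y_\ell$ (over $\ell\in K$, well-defined since realizability of the eluder sequence forces the $x_\ell$'s to be pairwise distinct) arbitrarily to a classifier $h^*$ on $\mathcal{X}$, the set $\{x_\ell:\ell\in K\}$ becomes an infinite star set centered at $h^*$ with witnesses $\{h_k:k\in K\}$.

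In the complementary case $c\equiv 1$ on $K$, I would re-enumerate $K$ as $\ell_1<\ell_2<\cdots$ and verify that $h_{\ell_m}$ agrees with $y_{\ell_j}$ for $j<m$ (eluder on the prefix), disagrees at $x_{\ell_m}$ (eluder at $\ell_m$), and disagrees with $y_{\ell_j}$ for $j>m$ (the coloring), making $\{(x_{\ell_m},y_{\ell_m})\}_{m\geq 1}$ an infinite threshold sequence with witnesses $\{h_{\ell_m}\}_{m\geq 1}$. I anticipate no serious obstacle beyond making sure the proof of Lemma~\ref{lem:equivalence-finite-class} itself does not rely on the present claim; this is routine, as an infinite eluder sequence can be built greedily from an infinite $\hpc$ by maintaining an infinite version space $V_k$ at each step and picking, at step $k+1$, a disagreement point $x_{k+1}$ of $V_k$ together with the label $y_{k+1}$ for which $V_{k+1}$ remains infinite.
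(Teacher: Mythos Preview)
Your proposal is correct and is essentially the same argument as the paper's: both obtain an infinite eluder sequence via Lemma~\ref{lem:equivalence-finite-class}, $2$-color pairs $\{k,\ell\}$ with $k<\ell$ according to whether the earlier witness $h_k$ agrees with the later label $y_\ell$, and then invoke the infinite Ramsey theorem so that the two monochromatic cases yield, respectively, an infinite star set and an infinite threshold sequence. Your extra remarks about distinctness of the $x_\ell$'s and the non-circularity of Lemma~\ref{lem:equivalence-finite-class} are correct and match the paper's independent greedy proof of that lemma.
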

Given that the claim holds, the proof to the proposition is then straightforward. This is because an infinite star set itself is an infinite star-eluder sequence. Moreover, an infinite threshold sequence gives rise to infinite Littlestone dimension since $\hpc$ can have a Littlestone tree of arbitrarily large depth (an easy example is $\hpc_{\text{thresh},\naturalnumber}$). Therefore, it suffices to prove Claim \ref{cla:infinite-class-has-either-infinite-star-set-or-infinite-threshold}. The remaining proof relies on a connection to the classical Ramsey theory, which we briefly introduced as follow.

The classical Ramsey's theorem states that one will find monochromatic cliques in any edge labelling (with colors) of a sufficiently large complete graph. Specifically, let $r$ be an positive integer, a simple 2-colors version of the Ramsey's theorem states that there exists a smallest positive integer $R(r,r)$, named the (diagonal) Ramsey number, such that every red-blue edge coloring of the complete graph on $R(r,r)$ vertices contains either a red clique on $r$ vertices or a blue clique on $r$ vertices. However, we will need the following extension of the theorem to an infinite graph.

\begin{theorem}  [{\textbf{\citealp[Infinite Ramsey's theorem,][]{ramsey1987problem}}}]
  \label{thm:infinite-ramsey-theorem}
For any countably infinite set, if its induced complete graph is colored with finitely many colors, then there is an infinite monochromatic clique.
\end{theorem}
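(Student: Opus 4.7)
The plan is to prove the infinite Ramsey theorem by an iterated pigeonhole argument that builds a witness sequence inductively. Let $V$ be a countably infinite vertex set and suppose the edges of the complete graph on $V$ are colored using colors from a finite set $[c]$. I will construct an infinite sequence of distinct vertices $u_1, u_2, \ldots \in V$ together with colors $\chi_1, \chi_2, \ldots \in [c]$ such that for every $i < j$, the edge $\{u_i, u_j\}$ has color $\chi_i$. The construction proceeds inductively. Set $W_0 := V$. At stage $i \geq 1$, assume $W_{i-1}$ is an infinite subset of $V$; pick any $u_i \in W_{i-1}$, and partition $W_{i-1} \setminus \{u_i\}$ into at most $c$ classes according to the color of the edge from $u_i$. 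Since we are splitting an infinite set into finitely many parts, some class must be infinite; let $W_i$ be one such infinite class and $\chi_i$ its color. This maintains the invariant that $W_i$ is infinite, and by construction every edge from $u_i$ to a vertex in $W_i$, and hence to every $u_j$ with $j > i$, has color $\chi_i$.

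Having built the sequence, the second step applies pigeonhole to the colors $\chi_1, \chi_2, \ldots \in [c]$: since these lie in a finite set, some color $\chi^{*}$ appears infinitely often. Choose an infinite subsequence of indices $i_1 < i_2 < \cdots$ with $\chi_{i_k} = \chi^{*}$ for all $k$. Then $\{u_{i_1}, u_{i_2}, \ldots\}$ is an infinite clique all of whose edges carry color $\chi^{*}$: for any $k < \ell$, the edge $\{u_{i_k}, u_{i_\ell}\}$ has color $\chi_{i_k} = \chi^{*}$ by the construction. The main obstacle is essentially non-existent, as this is a textbook fact; the only care required is the inductive bookkeeping ensuring each $W_i$ stays infinite, which is automatic from pigeonhole on partitions of an infinite set into finitely many classes. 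An alternative route via K\"onig's lemma applied to a finitely-branching tree would work equally well, but the double-pigeonhole argument above is self-contained and slightly more direct.
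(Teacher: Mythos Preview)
Your argument is the standard and correct proof of the infinite Ramsey theorem via iterated pigeonhole followed by a second pigeonhole pass on the resulting color sequence. The paper, however, does not supply its own proof of this statement: it quotes the theorem as a classical result of Ramsey and invokes it as a black box in the proof of Claim~\ref{cla:infinite-class-has-either-infinite-star-set-or-infinite-threshold}. So there is nothing to compare against; your write-up simply fills in a proof the authors chose to cite rather than reproduce.
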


\begin{proof}[Proof of Claim \ref{cla:infinite-class-has-either-infinite-star-set-or-infinite-threshold}]
Based on Lemma \ref{lem:equivalence-finite-class}, we know that any infinite class $\hpc$ has an infinite eluder sequence. Let $\{(x_{1},y_{1}),(x_{2},y_{2}),\ldots\}$ be an infinite eluder sequence centered at $h^{*}$, that is, for any $j\in\naturalnumber$, there exists $h_{j}\in\hpc$ such that $h_{j}(x_{i})=y_{i}=h^{*}(x_{i})$ for all $i<j$ and $h_{j}(x_{j})\neq y_{j}=h^{*}(x_{j})$. We aim to show that there exists an infinite subsequence $\{(x_{i_{1}},y_{i_{1}}),(x_{i_{2}},y_{i_{2}}),\ldots\}$ that is either an infinite star set centered at $h^{*}$ or an infinite threshold sequence centered at $h^{*}$. To this end, we consider the infinite eluder sequence $\{(x_{1},y_{1}),(x_{2},y_{2}),\ldots\}$ as a red-blue coloring of an infinite complete graph according to the following: let the vertices be indexed by $\naturalnumber$, then for every edge $e_{i,j}$ with integers $i>j$, we color it red if $h_{j}(x_{i})=y_{i}$ and blue otherwise.

Note that for any infinite subsequence $\{(x_{i_{1}},y_{i_{1}}),(x_{i_{2}},y_{i_{2}}),\ldots\}$, if the infinite subgraph comprised of the vertices $\{i_{1},i_{2},\ldots\}$ is monochromatically red, then $h_{i_{j}}(x_{i_{k}})=y_{i_{k}}=h^{*}(x_{i_{k}})$ for all integers $k>j$. Since $h_{i_{j}}(x_{i_{k}})=y_{i_{k}}=h^{*}(x_{i_{k}})$ for all integers $k<j$ and $h_{i_{j}}(x_{i_{j}})\neq y_{i_{j}}=h^{*}(x_{i_{j}})$, it implies that $\{(x_{i_{1}},y_{i_{1}}),(x_{i_{2}},y_{i_{2}}),\ldots\}$ is an infinite star set centered at $h^{*}$, witnessed by $\{h_{i_{j}}\}_{j\in\naturalnumber}$. Moreover, if the infinite subgraph comprised of the vertices $\{i_{1},i_{2},\ldots\}$ is monochromatically blue, it is not hard to verify that $\{(x_{i_{1}},y_{i_{1}}),(x_{i_{2}},y_{i_{2}}),\ldots\}$ is an infinite threshold sequence centered at $h^{*}$. The proof is completed by applying the infinite Ramsey's theorem.
\end{proof}

\begin{lemma}  [\textbf{Lemma \ref{lem:exponential-lower-bound} restated}]
  \label{lem:exponential-lower-bound-restated}
Given a concept class $\hpc$, for any learning algorithm $\lalgo$, there exists a realizable distribution $P$ with respect to $\hpc$ such that $\E[\trueerrorratehn] \geq 2^{-(n+2)}$ for infinitely many $n$, which implies that $\hpc$ is not universally learnable at rate faster than exponential $e^{-n}$.
\end{lemma}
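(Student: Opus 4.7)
The plan is a two-hypothesis adversarial argument combined with an infinite pigeonhole to convert a ``for each $n$'' lower bound into an ``infinitely often'' lower bound against a single distribution.

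\textbf{Step 1 (selecting hypotheses and points).} I first produce $h_{0},h_{1}\in\hpc$ and two distinct points $x^{\ast},x^{\ast\ast}\in\mathcal{X}$ such that $h_{0}(x^{\ast})\neq h_{1}(x^{\ast})$ while $h_{0}(x^{\ast\ast})=h_{1}(x^{\ast\ast})=:y^{\ast\ast}$. If there exist two hypotheses in $\hpc$ that disagree on some point but agree on another, take them; otherwise every pair of distinct hypotheses in $\hpc$ must disagree on the entire domain, so $\hpc\subseteq\{h,\neg h\}$ for some $h$, and the assumption $|\hpc|\geq 3$ (as in Theorems \ref{thm:main-theorem-target-independent} and \ref{thm:main-theorem-target-dependent}) rules this out by producing a third hypothesis that can be paired with one of the others to supply the required $x^{\ast\ast}$.

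\textbf{Step 2 (the two bad distributions).} For $i\in\{0,1\}$, define $P_{i}$ to have the common marginal $\mu(\{x^{\ast}\})=\mu(\{x^{\ast\ast}\})=1/2$ with deterministic labels given by $h_{i}$. Concretely, $P_{i}\{(x^{\ast},h_{i}(x^{\ast}))\}=1/2$ and $P_{i}\{(x^{\ast\ast},y^{\ast\ast})\}=1/2$, so each $P_{i}$ is realizable (witnessed by $h_{i}\in\hpc$).

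\textbf{Step 3 (the indistinguishability event).} The event $E_{n}:=\{S_{n}:x^{\ast}\notin S_{n}\}$ satisfies $\mathbb{P}_{P_{0}^{n}}[E_{n}]=\mathbb{P}_{P_{1}^{n}}[E_{n}]=2^{-n}$, and conditional on $E_{n}$ the sample consists of $n$ copies of $(x^{\ast\ast},y^{\ast\ast})$ under both $P_{0}$ and $P_{1}$. Hence the algorithm's output $\lalgo(S_{n})$, and in particular its prediction $\lalgo(S_{n})(x^{\ast})$, is the same random variable under both distributions on $E_{n}$. Since $h_{0}(x^{\ast})\neq h_{1}(x^{\ast})$, for every realization $S_{n}\in E_{n}$ there is exactly one $i\in\{0,1\}$ with $\lalgo(S_{n})(x^{\ast})\neq h_{i}(x^{\ast})$. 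Splitting $E_{n}$ into the two corresponding events $B_{0},B_{1}$ gives $\mathbb{P}[B_{0}]+\mathbb{P}[B_{1}]=2^{-n}$.

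\textbf{Step 4 (summing and pigeonholing).} Because a mistake on $x^{\ast}$ contributes at least $1/2$ to $\trueerrorratehn$ under $P_{i}$, I get
\[
\E_{S_{n}\sim P_{0}^{n}}\!\bigl[\text{er}_{P_{0}}(\lalgo)\bigr]+\E_{S_{n}\sim P_{1}^{n}}\!\bigl[\text{er}_{P_{1}}(\lalgo)\bigr]\;\geq\;\tfrac{1}{2}\bigl(\mathbb{P}[B_{0}]+\mathbb{P}[B_{1}]\bigr)=2^{-(n+1)},
\]
so $\max_{i\in\{0,1\}}\E[\text{er}_{P_{i}}(\lalgo)]\geq 2^{-(n+2)}$ for \emph{every} $n$. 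For each $n$ pick $i_{n}\in\{0,1\}$ that attains the max; since $\{i_{n}\}_{n\in\naturalnumber}$ takes only two values, one value $i^{\ast}$ occurs infinitely often, and the single distribution $P:=P_{i^{\ast}}$ then satisfies $\E[\text{er}_{P}(\lalgo)]\geq 2^{-(n+2)}$ for infinitely many $n$, as required.

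The only genuinely delicate point is Step 1 -- guaranteeing the existence of an ``agreement'' point $x^{\ast\ast}$ alongside a ``disagreement'' point $x^{\ast}$; this is where the hypothesis $|\hpc|\geq 3$ is invoked to exclude the trivial complementary-pair class. The rest is bookkeeping: conditional-distribution identity on $E_{n}$, a two-term union-type inequality, and infinite pigeonhole.
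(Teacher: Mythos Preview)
Your proposal is correct and follows essentially the same approach as the paper: two realizable distributions $P_0,P_1$ supported on a common ``agreement'' point and a ``disagreement'' point with equal mass $1/2$, the indistinguishability event that only the agreement point is sampled (probability $2^{-n}$), an averaging/max argument over the two distributions to get $2^{-(n+2)}$ for every $n$, and pigeonhole to extract a single $P$ working infinitely often. The only cosmetic difference is that the paper phrases the averaging via a random Bernoulli index $I\sim\mathrm{Bernoulli}(1/2)$ and the inequality $\E[\mathbb{P}(\lalgo(x_{n+1})\neq y_{n+1}\mid S_n,I)]\geq \tfrac12\,\mathbb{P}(x_1=\cdots=x_n=x,\,x_{n+1}=x')$, whereas you partition $E_n$ into $B_0\cup B_1$ and sum the two expectations directly; these are the same computation.
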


\begin{proof}[Proof of Lemma \ref{lem:exponential-lower-bound-restated}]
We prove the lemma by using the ``probabilistic method". Let us consider non-trivially that $|\hpc|>2$, let $h_{1}, h_{2}\in\hpc$ and $x,x^{'}\in\mathcal{X}$ such that $h_{1}(x)=h_{2}(x)=y$ and $h_{1}(x^{'})\neq h_{2}(x^{'})$. Now for any learning algorithm $\lalgo$, we define the following two realizable distributions $P_{0}$ and $P_{1}$, where $P_{i}\{(x,y)\}=0.5$ and $P_{i}\{(x^{'},i)\}=0.5$, $i\in\{0,1\}$. Let $I\sim\text{Bernoulli}(0.5)$, and conditioned on $I$, let $S_{n}:=\{(x_{1},y_{1}),(x_{2},y_{2}),\ldots,(x_{n},y_{n})\}$ and $(x_{n+1},y_{n+1})$ be i.i.d. samples from $P_{I}$ that the learning algorithm $\lalgo$ is trained on. We note that
\begin{equation*}
    \E\left[\mathbb{P}\left(\lalgo(x_{n+1})\neq y_{n+1} \big| S_{n}, I\right)\right] \geq \frac{1}{2}\mathbb{P}\left(x_{1}=\ldots=x_{n}=x, x_{n+1}=x^{'}\right) = 2^{-(n+2)} .
\end{equation*}
Furthermore, by the law of total probability, we have
\begin{align*}
    \E\left[\mathbb{P}\left(\lalgo(x_{n+1})\neq y_{n+1} \big| S_{n}, I\right)\right] \stackrel{\text{\eqmakebox[lem1-a][c]{}}}{=} &\frac{1}{2}\sum_{i\in\{0,1\}}\E\left[\mathbb{P}\left(\lalgo(x_{n+1})\neq y_{n+1} \big| S_{n}, I=i\right) \big| I=i\right] \\
    \stackrel{\text{\eqmakebox[lem1-a][c]{}}}{\leq} &\max_{i\in\{0,1\}}\E\left[\mathbb{P}\left(\lalgo(x_{n+1})\neq y_{n+1} \big| S_{n}, I=i\right) \big| I=i\right] .
\end{align*}
The above two inequalities imply that for every $n$, there exists $i_{n}\in\{0,1\}$ such that 
\begin{equation*}
    \E\left[\mathbb{P}\left(\lalgo(x_{n+1})\neq y_{n+1} \big| S_{n}, I=i_{n}\right) \big| I=i_{n}\right] = \E[\text{er}_{P_{i_{n}}}(\lalgo)] \geq 2^{-(n+2)} .  
\end{equation*}
In particular, by the pigeonhole principle, there exists $i\in\{0,1\}$ such that $i_{n}=i$ infinitely often, which completes the proof.
\end{proof}

\begin{lemma}  [\textbf{Lemma \ref{lem:exponential-upper-bound} restated}]
  \label{lem:exponential-upper-bound-restated}
If $\hpc$ does not have an infinite eluder sequence centered at $h^{*}$, then $h^{*}$ is universally learnable by ERM at rate $e^{-n}$.
\end{lemma}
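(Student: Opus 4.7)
The plan is to prove the lemma by constructing, for each realizable distribution $P$ centered at $h^{*}$, a finite ``teaching set'' $T\subseteq\text{supp}(P)$ consisting of atoms of $P_{X}$, such that every hypothesis in $\hpc$ that agrees with $h^{*}$'s labels on $T$ must have zero $P$-error. Once such a $T$ is in hand, the probability that the i.i.d.\ sample $S_{n}$ fails to contain $T$ decays exponentially, which immediately yields the desired bound on $\E[\trueerrorratehn]$.

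To build $T$, I would run a greedy elimination. Initialize $V^{(0)}:=\hpc$ and $T^{(0)}:=\emptyset$; at step $k\geq 0$, if some $h\in V^{(k)}$ satisfies $\trueerrorrateh>0$, pick such a witness $h_{k+1}$ and pick a point $x_{k+1}\in\text{supp}(P)$ with $P_{X}(\{x_{k+1}\})>0$ at which $h_{k+1}(x_{k+1})\neq h^{*}(x_{k+1})$; then set $T^{(k+1)}:=T^{(k)}\cup\{x_{k+1}\}$ and $V^{(k+1)}:=V^{(k)}\cap\{h:h(x_{k+1})=h^{*}(x_{k+1})\}$. The structural observation that drives the argument is that the generated sequence $\{(x_{i},h^{*}(x_{i}))\}_{i\geq 1}$, together with the witnesses $\{h_{i}\}_{i\geq 1}$, constitutes an eluder sequence centered at $h^{*}$: each $h_{i+1}$ lies in $V^{(i)}$ and therefore agrees with $h^{*}$ on $x_{1},\ldots,x_{i}$, while $h_{i+1}(x_{i+1})\neq h^{*}(x_{i+1})$ by construction. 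Since by hypothesis $\hpc$ has no infinite eluder sequence centered at $h^{*}$, the loop terminates after some finite $K=K(P)\geq 0$ iterations, and at termination every $h\in V^{(K)}$ satisfies $\trueerrorrateh=0$.

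To conclude, observe that if $T\subseteq S_{n}$ then the labels of $S_{n}$ on $T$ coincide with $h^{*}$'s (because $P$ is realizable centered at $h^{*}$), so any ERM output $\lalgo\in V_{S_{n}}(\hpc)$ lies in $V^{(K)}$ and hence has $\trueerrorratehn=0$. Writing $p_{\min}:=\min_{x\in T}P_{X}(\{x\})>0$, a union bound gives $\mathbb{P}(T\not\subseteq S_{n})\leq K(1-p_{\min})^{n}\leq K e^{-p_{\min}n}$, and therefore $\E[\trueerrorratehn]\leq \mathbb{P}(\trueerrorratehn>0)\leq K e^{-p_{\min}n}$, which is exponential with the distribution-dependent constants $C=K$ and $c=p_{\min}$, as required.

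The main obstacle is justifying the greedy step in which $x_{k+1}$ is chosen to be a point of positive $P_{X}$-mass inside the disagreement region of $h_{k+1}$ and $h^{*}$. This is immediate when the relevant portion of $P$ is atomic, so the clean atomic case drives the argument; when $P$ has a diffuse component, one must either argue that the no-infinite-eluder condition restricts $\hpc$ on the diffuse support enough to yield only finitely many $P$-equivalence classes of nonzero-error hypotheses (which are then directly controllable by a union bound of the form $\sum_{i}(1-\trueerrorrateh[h_{i}])^{n}$ at exponential rate), or replace atoms with measurable events of positive $P_{X}$-mass in the greedy procedure and rerun a coupon-collector-type bound; in either case, termination of the greedy process is still forced by the absence of an infinite eluder sequence centered at $h^{*}$.
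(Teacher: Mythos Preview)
Your argument for purely atomic $P$ is clean and correct: the greedy procedure produces an eluder sequence centered at $h^*$, hence terminates, and the resulting finite set $T$ of atoms is collected by $S_n$ with all but exponentially small probability.

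The gap in the diffuse case is real, and neither proposed fix closes it. For the second fix (replace atoms by positive-measure sets $A_k$), take $\mathcal{X}=[0,1]$, $P$ Lebesgue, $h^*\equiv 0$, and $\hpc=\{h^*,g_1,g_2\}$ with $g_1=\mathbbm{1}_{[0,1/2)}$, $g_2=\mathbbm{1}_{[0,1/4)}$. The greedy may select $h_1=g_1$ and $A_1=[0,1/2)$, whereupon $V^{(1)}=\{h:h|_{A_1}\equiv 0\}=\{h^*\}$ and the loop halts. But a single sample point $z\in[1/4,1/2)$ eliminates $g_1$ without eliminating $g_2$: hitting $A_1$ does not force $\hat{h}_n\in V^{(1)}$, since membership in $V^{(1)}$ requires agreement with $h^*$ on \emph{all} of $A_1$, not just at one sampled point. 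For the first fix, the claim that the no-infinite-eluder hypothesis forces only finitely many positive-error $P$-equivalence classes is simply asserted, not proved; and the general implication is false (for $\hpc_{\text{thresh},\naturalnumber}$ with target $h_t\in\hpc$ and fully supported $P$, there is no infinite eluder sequence centered at $h_t$, yet every $h_s$ with $s\neq t$ has positive error and lies in its own equivalence class), so at minimum a separate argument specific to the diffuse part would be required.

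The paper avoids this obstacle by abandoning a deterministic teaching set and letting the random sample itself generate the eluder sequence. For $S\sim P^{\naturalnumber}$, the successive sample indices at which some $h$ in the current version space disagrees with $h^*$ form an eluder sequence centered at $h^*$, hence are finitely many almost surely; this forces $P\bigl(x:\exists h\in V_{S_n}(\hpc),\,h(x)\neq h^*(x)\bigr)=0$ for all sufficiently large $n$, regardless of whether $P$ has atoms. One then extracts a deterministic $k=k(P)$ for which this event holds with probability at least $1/2$ on $k$ samples, and boosts to $2^{-\lfloor n/k\rfloor}$ by splitting $S_n$ into $\lfloor n/k\rfloor$ independent blocks.
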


\begin{proof}[Proof of Lemma \ref{lem:exponential-upper-bound-restated}]
Since $\hpc$ does not have an infinite eluder sequence centered at $h^{*}$, then for any realizable distribution $P$ centered at $h^{*}$ and data sequence $S:=\{(x_{1},h^{*}(x_{1})),(x_{2},h^{*}(x_{2})),\ldots\}\sim P^{\naturalnumber}$, we have
\begin{equation*}
    \#\left\{t\in\naturalnumber: \exists t^{'}>t \;\text{ s.t. }\;\exists h\in V_{S_{t}}(\hpc): h(x_{t^{'}})\neq h^{*}(x_{t^{'}})\right\} < \infty .
\end{equation*}
For the largest such integer $t$, we further have $\mathbb{P}(\exists h\in V_{S_{t}}(\hpc): h(x_{t^{'}})\neq h^{*}(x_{t^{'}}))=1$ for some $t^{'}:=t^{'}(S)>t$. This is true because the probability decays exponentially. Therefore, we have
\begin{equation*}
    \lim_{n\rightarrow\infty}\mathbb{P}_{S\sim P^{\naturalnumber}}\left(P\left(x\in\mathcal{X}: \exists h\in V_{n}(\hpc) \text{ s.t. }h(x)\neq h^{*}(x)\right)=0\right) = 1,
\end{equation*}
which implies that there is a distribution-dependent positive integer $k:=k(P)<\infty$ such that 
\begin{equation*}
    \mathbb{P}\left(P\left(x\in\mathcal{X}: \exists h\in V_{k}(\hpc) \text{ s.t. }h(x)\neq h^{*}(x)\right)=0\right) \geq 1/2 .
\end{equation*}
Now for any integer $n>k$, we split the dataset $S_{n}\sim P^{n}$ into $\lfloor n/k\rfloor$ parts with each one sized at least $k$, denoted by $S_{n,1},\ldots,S_{n,\lfloor n/k\rfloor}$. It holds then
\begin{align*}
    &\mathbb{P}\left(P\left(x\in\mathcal{X}: \exists h\in V_{n}(\hpc) \text{ s.t. }h(x)\neq h^{*}(x)\right)\neq 0\right) \\
    \leq &\mathbb{P}\left(\forall i\in\{1,\ldots,\lfloor n/k\rfloor\}: P\left(x\in\mathcal{X}: \exists h\in V_{S_{n,i}}(\hpc) \text{ s.t. }h(x)\neq h^{*}(x)\right)\neq 0\right) \\
    = &\prod_{i=1}^{\lfloor n/k\rfloor}\mathbb{P}\left(P\left(x\in\mathcal{X}: \exists h\in V_{S_{n,i}}(\hpc) \text{ s.t. }h(x)\neq h^{*}(x)\right)\neq 0\right) \leq 2^{-\lfloor n/k\rfloor},
\end{align*}
which also holds for $n\leq k$. Finally, it follows that
\begin{align*}
\E\left[\trueerrorratehn\right] \leq &\mathbb{P}\left(\exists h\in V_{n}(\hpc): \trueerrorrateh > 0\right) \\
= &\mathbb{P}\left(P\left(x\in\mathcal{X}: \exists h\in V_{n}(\hpc) \text{ s.t. }h(x)\neq h^{*}(x)\right)\neq 0\right) \leq 2^{-\lfloor n/k\rfloor}, \;\;\forall n\in\naturalnumber .
\end{align*}
\end{proof}

\begin{lemma}  [\textbf{Lemma \ref{lem:linear-lower-bound} restated}]
  \label{lem:linear-lower-bound-restated}
If $\hpc$ has an infinite eluder sequence centered at $h^{*}$, then $h^{*}$ is not universally learnable by ERM at rate faster than $1/n$.
\end{lemma}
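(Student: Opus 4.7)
The plan is to construct an explicit realizable distribution $P$ centered at $h^{*}$ together with an ``adversarial'' ERM that converges no faster than $1/n$ on $P$. Let $\{(x_i,y_i)\}_{i\in\naturalnumber}$ be the given infinite eluder sequence centered at $h^{*}$, and let $h_k\in\hpc$ be the associated witnesses, so that $h_k(x_i)=y_i$ for $i<k$ and $h_k(x_k)\neq y_k$. First note that the points $x_i$ must be pairwise distinct: if $x_j=x_k$ with $j<k$, then $y_j=h^{*}(x_j)=h^{*}(x_k)=y_k$, but the eluder property forces $h_k(x_j)=y_j$ and $h_k(x_k)\neq y_k$, a contradiction.

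Next I would take $P$ to be the distribution on $\mathcal{X}\times\{0,1\}$ that puts mass $p_i:=2^{-i}$ on the point $(x_i,y_i)$. Since $h_{k+1}\in\hpc$ agrees with $h^{*}$ on $x_1,\ldots,x_k$, we get $\text{er}_P(h_{k+1})\leq\sum_{i>k}2^{-i}=2^{-k}$, so $\inf_{h\in\hpc}\text{er}_P(h)=0$, i.e. $P\in\RE$ and is centered at $h^{*}$. Now I would define the adversarial ERM as follows: given $S_n=\{(x_{i_1},y_{i_1}),\ldots,(x_{i_n},y_{i_n})\}$, set
\[
K_n \;:=\; 1+\max\{i_1,\ldots,i_n\},
\]
and output $\hat h_n:=h_{K_n}$. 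By the eluder property, $h_{K_n}(x_{i_j})=y_{i_j}$ for every $j$ (since $i_j<K_n$), so $h_{K_n}\in V_{S_n}(\hpc)$ and $\hat h_n$ is a valid ERM output. Moreover, because $h_{K_n}(x_{K_n})\neq y_{K_n}$,
\[
\text{er}_P(\hat h_n) \;\geq\; P\{(x_{K_n},y_{K_n})\}\cdot\mathbbm{1}\!\left(h_{K_n}(x_{K_n})\neq y_{K_n}\right) \;=\; 2^{-K_n}.
\]

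The remaining step is the probabilistic calculation $\E[2^{-K_n}]\gtrsim 1/n$ for infinitely many (indeed, all sufficiently large) $n$. Writing $M_n:=\max\{i_1,\ldots,i_n\}$, we have $\Pr(M_n\leq k)=(1-2^{-k})^n$ since $P(\{x_i:i\leq k\})=1-2^{-k}$. Fixing $k(n):=\lceil\log_2 n\rceil$, the bound $(1-1/n)^n\geq 1/4$ for $n\geq 2$ yields $\Pr(K_n\leq k(n)+1)\geq 1/4$, and on this event $2^{-K_n}\geq 2^{-(k(n)+1)}\geq 1/(4n)$. Hence
\[
\E[\text{er}_P(\hat h_n)] \;\geq\; \E[2^{-K_n}] \;\geq\; \frac{1}{16n} \qquad\text{for all sufficiently large }n,
\]
which by Definition \ref{def:universally-learnable} precisely says that $h^{*}$ is not universally learnable by ERM at any rate faster than $1/n$.

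I do not expect a real obstacle here: the only delicate points are verifying distinctness of the $x_i$'s, checking realizability of $P$ despite $h^{*}$ possibly lying outside $\hpc$ (handled by using the eluder witnesses $h_{k+1}$ to achieve $\text{er}_P\to 0$), and confirming that the adversarial choice $h_{K_n}$ is genuinely in the version space. Each follows immediately from the eluder property. The geometric weighting $p_i=2^{-i}$ is chosen so that the typical maximal observed index is $O(\log n)$, which is exactly what turns the lower bound $p_{K_n}$ into order $1/n$; any mildly heavy-tailed distribution on the sequence with the same concentration behaviour would work equally well.
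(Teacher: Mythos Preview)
Your proposal is correct and follows essentially the same approach as the paper: both place geometric weights $2^{-i}$ on the eluder sequence, consider the adversarial ERM that outputs the eluder witness $h_{K_n}$ for the first unseen index, and lower-bound the expected error by $\Theta(1/n)$ via the observation that the maximal observed index is $O(\log n)$ with constant probability. Your write-up is in fact a bit more careful than the paper's in explicitly verifying distinctness of the $x_i$'s, realizability of $P$ via the witnesses $h_{k+1}$, and membership of $h_{K_n}$ in the version space.
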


\begin{proof}[Proof of Lemma \ref{lem:linear-lower-bound-restated}]
Let $\{(x_{1},y_{1}),(x_{2},y_{2}),\ldots\}$ be an infinite eluder sequence centered at $h^{*}$, we consider the following distribution $P$: $P\{(x_{i},y_{i})\}=2^{-i}$ and $P\{(x_{i},1-y_{i})\}=0$ for all $i\in\naturalnumber$. Note that $P$ is realizable (with respect to $\hpc$) with target $h^{*}$. Given a dataset $\datasetstats$, let the worst-case ERM outputs $\lalgo:=\text{ERM}(S_{n})$. For any $t\in\naturalnumber$, if $S_{n}$ does not contain any copy of the points in $\{x_{i},i>t\}$, we have $\trueerrorratehn\geq2^{-t}$. The probability of such event is
\begin{equation*}
    \mathbb{P}\left(\sum_{i=1}^{n}\mathbbm{1}\left\{X_{i}\in\{x_{t+1},x_{t+2},\ldots\}\right\}=0\right) = \prod_{i=1}^{n}\mathbb{P}\left(X_{i}\in\{x_{1},\ldots,x_{t}\}\right) = \left(1-2^{-t}\right)^{n} .
\end{equation*}
Therefore, it follows immediately that
\begin{equation*}
    \E\left[\trueerrorratehn\right] \geq \sum_{t=1}^{\infty}2^{-t}\left(1-2^{-t}\right)^{n} \geq \frac{1}{n}\left(1-\frac{2}{n}\right)^{n} \geq \frac{1}{9n} ,
\end{equation*}
where the second inequality follows from choosing $t=\lfloor\log{n}\rfloor$.
\end{proof}

\begin{lemma}  [\textbf{Lemma \ref{lem:linear-upper-bound} restated}]
  \label{lem:linear-upper-bound-restated}
If $\hpc$ does not have an infinite star-eluder sequence centered at $h^{*}$, then $h^{*}$ is universally learnable by ERM at rate $1/n$.
\end{lemma}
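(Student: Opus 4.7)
The plan is to reduce to bounding the expected disagreement mass of the version space via a standard symmetrization identity, and then translate the combinatorial hypothesis (no infinite star-eluder sequence centered at $h^{*}$) into a bound on this expectation through a contradiction argument that extracts an infinite star-eluder sequence from hypothetical failure.

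First, I fix a realizable $P$ centered at $h^{*}$ with marginal $\mu=P_{\mathcal{X}}$. Since any ERM output $\hat{h}_{n}$ lies in $V_{n}(\hpc)$ and the data are labelled by $h^{*}$, one has
\[
\trueerrorratehn \;\le\; \sup_{h\in V_{n}(\hpc)} \text{er}_{P}(h) \;\le\; R_{n} := \mu\bigl(\{x : \exists h\in V_{n}(\hpc),\, h(x)\neq h^{*}(x)\}\bigr),
\]
so it suffices to show $\E[R_{n}] \lesssim_{P} 1/n$. A ghost-sample exchange rewrites this as $\E[R_{n}] = \tfrac{1}{n+1}\E[N_{n+1}]$, where $N_{n+1}$ counts indices $i\in[n+1]$ such that some $h\in\hpc$ agrees with $h^{*}$ on $\{X_{j}\}_{j\neq i}$ yet disagrees at $X_{i}$. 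Hence the goal reduces to proving $\E[N_{n+1}]\le C(P)<\infty$ uniformly in $n$.

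Second, I will exhibit a star-set substructure inside $N_{n+1}$: any $k$ contributing indices $i_{1}<\cdots<i_{k}$, with their witness hypotheses $h_{1},\ldots,h_{k}$, yield that $\{X_{i_{1}},\ldots,X_{i_{k}}\}$ forms a star set of the version space built from the remaining $n+1-k$ samples, centered at $h^{*}$ (since each $h_{j}$ agrees with $h^{*}$ on $\{X_{l}\}_{l\neq j}$ and in particular on $\{X_{i_{l}}\}_{l\neq j}$). Relating this to the star-eluder-sequence structure via exchangeability: if $T_{k}$ denotes the event ``$\{X_{n_{k}+1},\ldots,X_{n_{k}+k}\}$ is a star set of $V_{n_{k}}(\hpc)$ centered at $h^{*}$'', then $\mathbb{P}(T_{k}) \ge \binom{n_{k+1}}{k}^{-1}\mathbb{P}(N_{n_{k+1}}\ge k)$.

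Third, suppose for contradiction that $\E[N_{n+1}]$ is unbounded along some subsequence. Via Markov together with a diagonal choice of $k=k(n)$, $\mathbb{P}(T_{k})$ stays bounded below along a suitable subsequence; a measure-theoretic compactness argument (relying on regular conditional distributions of the version-space process given $\mathcal{F}_{n_{k}}$ and tightness of $\mu$) then produces with positive probability a single trajectory in $\bigcap_{k\ge 1}T_{k}$, i.e., an infinite star-eluder sequence centered at $h^{*}$, contradicting the hypothesis. Hence $\E[N_{n+1}]\le C(P)$, yielding $\E[\trueerrorratehn]\le C(P)/(n+1)$. The main obstacle is precisely this extraction step: because $\mathcal{X}$ carries no canonical topology and the star-eluder tree may branch infinitely at each level, König's lemma is unavailable naively, and the passage from per-$k$ positivity of $\mathbb{P}(T_{k})$ to positivity of the intersection $\mathbb{P}(\bigcap_{k}T_{k})$ is delicate. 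I expect to bridge this via a tightness/measurable-selection argument rooted in the Borel structure of $(\mathcal{X},\mu)$, possibly adapting the closure-type techniques used in \citet{bousquet2021theory} for related tree-non-existence arguments.
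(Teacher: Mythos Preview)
Your first two steps coincide with the paper's: the identity $\E[R_n]=\E[N_{n+1}]/(n+1)$ is exactly the stable-compression bound $\E[\trueerrorratehn]\le\E[\hat{n}(S_n)]/(n+1)$ (your $N_{n+1}$ is the version-space compression set size), and your star-set observation is precisely Lemma~\ref{lem:version-space-compression-set-size-bounded-by-starnumber}.

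The gap is step 3. Your exchangeability bound $\mathbb{P}(T_k)\ge\binom{n_{k+1}}{k}^{-1}\mathbb{P}(N_{n_{k+1}}\ge k)$ is far too weak: with $n_{k+1}=\binom{k+1}{2}$ the binomial factor decays like $k^{-k}$, so $\mathbb{P}(T_k)\to0$ regardless of how large $\E[N_n]$ is, and no diagonal choice of $k(n)$ repairs this. And even granting $\mathbb{P}(T_k)>0$ for every $k$, passing to a \emph{single} trajectory in $\bigcap_k T_k$ is not furnished by tightness of $\mu$ or regular conditional distributions --- there is no finitely-branching tree here, and the closure techniques of \citet{bousquet2021theory} concern trees built offline, not random i.i.d.\ trajectories. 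The paper bypasses extraction entirely: it argues that nonexistence of an infinite star-eluder sequence centered at $h^{*}$ forces, for every realizable data stream $S$, some finite $\bar{k}(S)$ with $\starnumber_{h^{*}}(V_{S_{n_{\bar{k}}}}(\hpc))<\bar{k}$; it then picks $k=k(P)$ with $\mathbb{P}(\bar{k}\le k)\ge 1/2$, splits $S_n$ into $\lfloor n/k\rfloor$ independent blocks of size $k$, and boosts the $1/2$ to $1-2^{-\lfloor n/k\rfloor}$. On that event the version space after one good block has star number at most $k$, so the compression size on the full sample is at most $k$, giving $\E[\trueerrorratehn]\lesssim k/n+2^{-\lfloor n/k\rfloor}$. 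This block-splitting/boosting device is the idea your plan is missing; you should replace the compactness step with it.
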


Before proceeding to the proof of Lemma \ref{lem:linear-upper-bound-restated}, we first introduce several useful tools. The following definition of the sample compression scheme was originally stated in \citet{littlestone1986relating}.

\begin{definition}  [\textbf{Sample compression scheme}]
  \label{def:sample-compression-scheme}
Let $\hpc$ be a concept class and $S_{n}:=\{(x_{i}, y_{i})\}_{i=1}^{n}$. A \underline{sample compression scheme} for $\hpc$ consists of two maps $(\kappa, \rho)$ such that the following hold:
\begin{itemize}
    \item[---] The compression map $\kappa$ takes $S_{n}$ to $T:=\kappa(S_{n})$, for some $T\in\bigcup_{t=0}^{\infty}\{(x,y)\in S_{n}\}^{t}$.
    \item[---] The reconstruction function $\rho$ takes $T$ to $\rho(T):\mathcal{X}\rightarrow\{0,1\}$.
\end{itemize}
The size of the sample compression scheme $(\kappa,\rho)$ is defined as $\max_{S_{n}\in(\mathcal{X}\times\{0,1\})^{n}}|\kappa(S_{n})|$. A sample compression scheme $(\kappa,\rho)$ is called \underline{sample-consistent for $\hpc$}, if for any realizable distribution $P$ with respect to $\hpc$ and $\datasetstats$, it holds that $\hat{\text{er}}_{S_{n}}(\rho(\kappa(S_{n})))=0$. A sample compression scheme $(\kappa,\rho)$ is called \underline{stable} if for every subsequence $S^{'}$ satisfying $\kappa(S_{n})\subseteq S^{'}\subset S_{n}$, it holds that $\rho(\kappa(S^{'}))=\rho(\kappa(S_{n}))$, that is, removing any non-compression point from $S_{n}$ does not change the classifier returned by the sample compression scheme.
\end{definition}

\begin{definition}  [\textbf{Version space compression set}]
  \label{def:version-space-compression-set}
Let $\hpc$ be a concept class and $P$ be a realizable distribution with respect to $\hpc$. For any $n\in\naturalnumber$, and any dataset $\datasetstats$, the \underline{version space compression set} $\hat{\mathcal{C}}_{n}$ is defined to be the smallest subset of $S_{n}$ satisfying $V_{S_{n}}(\hpc) = V_{\hat{\mathcal{C}}_{n}}(\hpc)$. Furthermore, we define the \underline{version space compression set size} as $\hat{n}(S_{n}):=|\hat{\mathcal{C}}_{n}|$, which is a data-dependent quantity. Finally, we define $\hat{n}_{1:n}:=\max_{1\leq m\leq n}\hat{m}(S_{m})$, which is also data-dependent. However, $\hat{n}_{1:n}$ is not only just dependent on the full sample, but is also dependent on any prefix of the sample (that is, the order of the sample).
\end{definition}

\begin{remark}
  \label{rem:remark-to-version-space-compression-set}
It has been argued in \citet{wiener2015compression} that the region of disagreement of the version space $\text{DIS}(V_{n}(\hpc))$ can be described as a compression scheme, where the size of the compression scheme is exactly the version space compression set size $\hat{n}(S_{n})$.
\end{remark}

With these definitions in hand, we are now able to prove the lemma.

\begin{proof}[Proof of Lemma \ref{lem:linear-upper-bound-restated}]
%Throughout the proof, we abbreviate $V_{n}(\hpc)=V_{S_{n}}(\hpc)$ if no confusion arises. 
Let $P$ be a realizable distribution with respect to $\hpc$ centered at $h^{*}$, let $\datasetstats$ be a dataset and $\hat{C}_{n}\subseteq S_{n}$ be the corresponding version space compression set with size $|\hat{C}_{n}|=\hat{n}(S_{n})$. We let $(\kappa,\rho)$ be a sample compression scheme of size $\hat{n}(S_{n})$ defined by $\kappa(S_{n})=\hat{C}_{n}$ and $\rho(\hat{C}_{n})=\lalgo$. Since any ERM algorithm will output predictors $\{\lalgo\}_{n\in\naturalnumber}$ satisfying $\lalgo\in V_{n}(\hpc)$, it is clear that 
\begin{equation*}
    \hat{\text{er}}_{S_{n}}\left(\rho\left(\kappa\left(S_{n}\right)\right)\right) = \sum_{i=1}^{n}\mathbbm{1}\left\{\rho\left(\kappa\left(S_{n}\right)\right)(x_{i})\neq y_{i}\right\} = \sum_{i=1}^{n}\mathbbm{1}\left\{\lalgo(x_{i})\neq y_{i}\right\} = 0,
\end{equation*}
and thus it is sample-consistent. Furthermore, let $S^{'}$ be any subsequence satisfying $\hat{C}_{n}\subseteq S^{'}\subset S_{n}$. On one hand, we have $V_{S_{n}}(\hpc)\subseteq V_{S^{'}}(\hpc)$. On the other hand, we also have $V_{S^{'}}(\hpc)\subseteq V_{\hat{C}_{n}}(\hpc)=V_{S_{n}}(\hpc)$. Therefore, we conclude $V_{S_{n}}(\hpc)=V_{S^{'}}(\hpc)$, and thus $\rho(\kappa(S^{'}))=\rho(\kappa(S_{n}))$, that is, the compression scheme $(\kappa,\rho)$ is also stable. Now we can apply Lemma \ref{lem:stable-sample-consistent-sample-compression-scheme}, and then obtain
\begin{equation*}
    \E\left[\trueerrorratehn\right] = \E\left[\text{er}_{P}\left(\rho\left(\kappa\left(S_{n}\right)\right)\right)\right] \leq \frac{\E[\hat{n}(S_{n})]}{n+1} .
\end{equation*}
% If $x_{n+1}\in\text{DIS}(V_{n}(\hpc))$, i.e. there exists $h,g\in V_{n}(\hpc)=V_{\hat{C}_{n}}(\hpc)$ such that $h(x_{n+1})\neq g(x_{n+1})$, then for any $y_{n+1}\in\{0,1\}$, the version space introduced by $S_{n+1}:=S_{n}\cup\{(x_{n+1},y_{n+1})\}$ cannot have both $h$ and $g$, and thus $(x_{n+1},y_{n+1})\in\hat{C}_{n+1}$. If $x_{n+1}\notin\text{DIS}(V_{n}(\hpc))$, we will have either $V_{n+1}(\hpc)=V_{n}(\hpc)$ or $V_{n+1}(\hpc)=\emptyset$. Moreover, note that $\hat{C}_{n}$ is the smallest subset of $S_{n}$ such that $V_{\hat{\mathcal{C}}_{n}}(\hpc)=V_{S_{n}}(\hpc)$, we have $\hat{n}(S_{n}) = \big|\hat{C}_{n}\big| = \sum_{i=1}^{n}\mathbbm{1}\{x_{i}\in\text{DIS}(V_{S_{n}\setminus\{(x_{i},y_{i})\}}(\hpc))\}$.
Indeed, it has been proved that $\hat{n}(S_{n})\leq\starnumber_{h^{*}}$ \citep[Thm.13][]{hanneke2015minimax}, and for completeness, we prove it as in Lemma \ref{lem:version-space-compression-set-size-bounded-by-starnumber} in Appendix \ref{sec:technical-lemmas}. The only remaining concern is that the fact ``$\hpc$ does not have an infinite star-eluder sequence centered at $h^{*}$" does not guarantee $\starnumber_{h^{*}}<\infty$. However, it essentially states that the version space will eventually have a bounded star number centered at $h^{*}$. Since $\hpc$ does not have an infinite star-eluder sequence centered at $h^{*}$, for any sequence $S:=\{(x_{1},y_{1}),(x_{2},y_{2}),\ldots\}\sim P^{\naturalnumber}$, there exists a data-dependent integer $\bar{k}:=\bar{k}(S)<\infty$ such that $\starnumber_{h^{*}}(V_{n_{\bar{k}}}(\hpc))<\bar{k}$. Moreover, we know there exists a distribution-dependent constant factor $k:=k(P)<\infty$ such that $\bar{k}(S)\leq k(P)$ with probability at least $1/2$. By using a similar argument in the proof of Lemma \ref{lem:linear-upper-bound-restated}, we have
\begin{equation*}
    \E\left[\trueerrorratehn\right] \lesssim \frac{k}{n} + 2^{-\lfloor n/k\rfloor}, \forall n\in\naturalnumber ,
\end{equation*}
which proves a target-specified linear upper bound.
\end{proof}

\begin{lemma}  [\textbf{Lemma \ref{lem:logn-lower-bound} restated}]
  \label{lem:logn-lower-bound-restated}
If $\hpc$ has an infinite star-eluder sequence centered at $h^{*}$, then $h^{*}$ is not universally learnable by ERM at rate faster than $\log{(n)}/n$.
\end{lemma}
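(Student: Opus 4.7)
The plan is to construct, from the given infinite star-eluder sequence, a single realizable distribution $P$ centered at $h^*$ together with an ERM algorithm whose expected error is at least $c\log(n)/n$ for infinitely many $n$. The idea is to distribute mass uniformly over a carefully spaced subsequence of blocks, with super-geometrically decaying masses, so that at a correspondingly spaced sequence of sample sizes the coupon-collector threshold on one block is just out of reach while all later blocks remain invisible. First, fix witnesses for the star-eluder structure: for each $k\geq 1$ and $j\in[k]$, let $h_{k,j}\in V_{n_k}(\hpc)$ be such that $\text{DIS}(\{h^*,h_{k,j}\})\cap\{x_{n_k+1},\ldots,x_{n_k+k}\}=\{x_{n_k+j}\}$. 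Then choose strictly increasing indices $k_1<k_2<\cdots$ and strictly decreasing probabilities $p_1>p_2>\cdots$ with $\sum_t p_t\leq 1$ satisfying (i) $p_{t+1}\leq p_t/(10\,k_t\log k_t)$ (so $\sum_{s>t}p_s\leq p_t/(5\,k_t\log k_t)$) and (ii) $p_t\geq k_t^{-C}$ for some absolute $C$; a concrete realization is $k_{t+1}=k_t^2$, $p_t=k_t^{-2}$ with $k_1$ sufficiently large. Define $P$ by placing mass $p_t/k_t$ on each $(x_{n_{k_t}+j},y_{n_{k_t}+j})$ for $t\geq 1$, $j\in[k_t]$, and assigning any leftover mass to $(x_1,y_1)$. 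Realizability follows because, for every $T$, the realizability of the star-eluder sequence provides an $h_T\in\hpc$ consistent with all labels up to position $n_{k_T}+k_T$, whose $P$-error is at most $\sum_{s>T}p_s\to 0$.

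Next, fix the worst-case ERM $\hat h_n:=\argmax_{h\in V_{S_n}(\hpc)}\trueerrorrateh$ and set $n_t:=\lceil (1/2)k_t\log(k_t)/p_t\rceil$. Define the events
\[
A_t:=\{\text{no sample of }S_{n_t}\text{ lies in }B_{k_s}\text{ for any }s>t\},\qquad B_t:=\{\text{some point of }B_{k_t}\text{ is unsampled}\}.
\]
Condition (i) gives $n_t\sum_{s>t}p_s\leq 1/10$, so a Bernoulli bound yields $\mathbb{P}(A_t)\geq 9/10$. For $B_t$, the number $X$ of uncovered points in $B_{k_t}$ has $\E[X]=k_t(1-p_t/k_t)^{n_t}\geq k_t^{1/2}/2$ and $\mathrm{Var}(X)\leq\E[X]$ by the standard second-moment calculation, so Chebyshev gives $\mathbb{P}(B_t)\geq 1-2/\E[X]\geq 9/10$ for all large $t$. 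On $A_t\cap B_t$, pick any $j$ with $x_{n_{k_t}+j}\notin S_{n_t}$; the witness $h_{k_t,j}$ then lies in $V_{S_{n_t}}(\hpc)$ since (a) it is consistent with $h^*$ on the first $n_{k_t}$ sequence points (hence on any samples from earlier blocks and on $x_1$), (b) it agrees with $h^*$ on $B_{k_t}\setminus\{x_{n_{k_t}+j}\}$ by the star property, and (c) no samples come from later blocks by $A_t$. Since $h_{k_t,j}$ contributes at least $P(\{x_{n_{k_t}+j}\})=p_t/k_t$ to its $P$-error, we obtain $\E[\trueerrorrate{\hat h_{n_t}}]\geq (4/5)\,p_t/k_t$. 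Condition (ii) together with $n_t=\Theta(k_t^{C+1}\log k_t)$ yields $\log(n_t)/n_t\lesssim p_t/k_t$, so $\E[\trueerrorrate{\hat h_{n_t}}]\gtrsim \log(n_t)/n_t$ for infinitely many $n=n_t$, proving the lemma.

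The main obstacle is the third bullet in step (c) above: the star-eluder definition only controls $h_{k,j}$ on the finite prefix $\{x_1,\ldots,x_{n_k+k}\}$, so $h_{k,j}$ may disagree arbitrarily with $h^*$ on points belonging to later blocks. This is what forces the rapid decay condition (i): the probabilities must shrink fast enough that, at sample size $n_t$, the expected number of samples drawn from the union of all subsequent blocks is small, so that with constant probability no such samples occur and the witness $h_{k_t,j}$ remains in the empirical version space regardless of its uncontrolled behavior on the tail of the domain. Once this separation between ``current'' and ``future'' blocks is achieved, the remainder of the argument is a standard coupon-collector lower bound.
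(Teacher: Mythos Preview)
Your proposal is correct and follows essentially the same approach as the paper: place geometrically (or faster) decaying mass on a sparse subsequence of the star-eluder blocks, then at each matched sample size argue that with constant probability no later block is sampled while coupon-collector considerations leave some point of the current block uncovered, so the corresponding star-witness $h_{k_t,j}$ survives in the version space and contributes error $\gtrsim p_t/k_t \asymp \log(n_t)/n_t$. The only differences are cosmetic---your concrete parameters ($k_{t+1}=k_t^2$, $p_t=k_t^{-2}$) versus the paper's ($p_t=2^{-t}$, $k_t=\Omega(2^t)$), and your second-moment bound on the number of uncovered points versus the paper's Chebyshev bound on the coupon-collector hitting time $\hat n_{k_t}$.
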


\begin{proof}[Proof of Lemma \ref{lem:logn-lower-bound-restated}]
Suppose that $S:=\{(x_{1},y_{1}),(x_{2},y_{2}),\ldots\}$ is an infinite star-eluder sequence in $\hpc$ centered at $h^{*}$, that is $h^{*}(x_{i})=y_{i}$ for all $i\in\naturalnumber$. For notation simplicity, let $\mathcal{X}_{1}:=\{x_{1}\}, \mathcal{X}_{2}:=\{x_{2},x_{3}\},\mathcal{X}_{3}:=\{x_{4},x_{5},x_{6}\},\ldots,\mathcal{X}_{k}:=\{x_{n_{k}+1},\ldots,x_{n_{k}+k}\},\ldots$, with $n_{k}:=\binom{k}{2}$. We consider a strictly increasing sequence $\{k_{t}\}_{t\in\naturalnumber}$ that will be specified later, and only put non-zero probability masses on these disjoint sets $\mathcal{X}_{k_{t}}$ with $t\in\naturalnumber$. Then, let $\mathcal{X}:=\bigcup_{t\in\naturalnumber}\mathcal{X}_{k_{t}}$ be a union of disjoint finite sets with $|\mathcal{X}_{k_{t}}|=k_{t}$, and consider the following marginal distribution $P_{\mathcal{X}}$ on $\mathcal{X}$:
\begin{equation*}
    P_{\mathcal{X}}\left\{x\in\mathcal{X}_{k_{t}}\right\}=2^{-t} \text{ and } P_{\mathcal{X}}(x)=2^{-t}/k_{t},\; \forall x\in\mathcal{X}_{k_{t}} .
\end{equation*}
It immediately implies the joint distribution $P:=P(P_{\mathcal{X}},h^{*})$ that is realizable with respect to $\hpc$: 
\begin{equation*}
    P\left\{\left(x,h^{*}(x)\right)\right\}=2^{-t}/k_{t},\; P\left\{\left(x,1-h^{*}(x)\right)\right\}=0,\; \forall x\in\mathcal{X}_{k_{t}},\; \forall k\in\naturalnumber .
\end{equation*}
Now for any $n\in\naturalnumber$, we let $\datasetstats$ and consider the event $\mathcal{E}:=\mathcal{E}_{1}\cap\mathcal{E}_{2}$, where 
\begin{align*}
    &\mathcal{E}_{1} := \left\{\text{$S_{n}$ does not contain a copy of any point in $\mathcal{X}_{k_{>t}}$}\right\} , \\
    &\mathcal{E}_{2} := \left\{\text{$S_{n}$ does not contain a copy of at least one point in $\mathcal{X}_{k_{t}}$}\right\} .
\end{align*} 
If $\mathcal{E}$ happens, the worst-case ERM can output some $\lalgo\in V_{S_{n}}(\hpc)$ such that $\trueerrorratehn\geq 2^{-t}/k_{t}$. This is because: $V_{n_{k_{t}}}(\hpc)\subseteq V_{S_{n,k_{<t}}}(\hpc)$, where $S_{n,k_{<t}}$ contains the samples of $S_{n}$ that falling into $\mathcal{X}_{k_{1}}\cup\cdots\cup\mathcal{X}_{k_{t-1}}$, and then $\mathcal{X}_{k_{t}}=\{x_{n_{k_{t}}+1},\ldots,x_{n_{k_{t}}+k_{t}}\}$ is a star set of $V_{S_{n,k_{<t}}}(\hpc)$ witnessed by a set of functions, denoted by $\{h_{n_{k_{t}}+1},\ldots,h_{n_{k_{t}}+k_{t}}\}$. In other words, $V_{S_{n,k_{<t}}}(\hpc)$ contains a size-$k_{t}$ ``singletons" with point-wise probability mass $2^{-t}/k_{t}$. However, the remaining samples $S_{n}\cap\mathcal{X}_{k_{t}}$ does not contain a copy of every point in $\mathcal{X}_{k_{t}}$, which results in an error rate $\trueerrorratehn\geq 2^{-t}/k_{t}$, with $\lalgo:=h_{n_{k_{t}}+j}$ for some $1\leq j\leq k_{t}$.

Hence, it remains to characterize the probability of $\mathcal{E}$. To this end, we refer to the so-called \emph{Coupon Collector’s Problem}, and define a random variable
\begin{equation*}
    \hat{n}_{k_{t}} := \min\left\{n\in\naturalnumber: \mathcal{X}_{k_{t}}\subseteq S_{n}\right\} .
\end{equation*}
Note that $\hat{n}_{k_{t}}$ can be represented as a sum $\sum_{j=1}^{k_{t}}G_{j}$ of independent geometric random variables $G_{j}\sim\text{Geometric}(\frac{k_{t}+1-j}{k_{t}}2^{-t})$ for $1\leq j\leq k_{t}$, with
\begin{equation*}
\begin{cases}
    &\E\left[\hat{n}_{k_{t}}\right] = \sum_{j=1}^{k_{t}}\E\left[G_{j}\right] = \sum_{j=1}^{k_{t}}\frac{k_{t}\cdot2^{t}}{k_{t}+1-j} = k_{t}\cdot2^{t}\left(\sum_{j=1}^{k_{t}}\frac{1}{k_{t}+1-j}\right) = k_{t}\cdot2^{t}\cdot H_{k_{t}} \\
    &\text{Var}\left[\hat{n}_{k_{t}}\right] = \sum_{j=1}^{k_{t}}\text{Var}\left[G_{j}\right] < \sum_{j=1}^{k_{t}}\left(\frac{k_{t}+1-j}{k_{t}}2^{-t}\right)^{-2} < \frac{\pi^{2}\cdot k_{t}^{2}\cdot2^{2t}}{6}
\end{cases} ,
\end{equation*}
where $H_{m}$ is $m^{th}$ harmonic number satisfying $H_{m}\gtrsim\log{(m)}$. Then the standard Chebyshev's inequality implies that $\mathbb{P}(|\hat{n}_{k_{t}}-\E[\hat{n}_{k_{t}}]|>z)\leq \text{Var}[\hat{n}_{k_{t}}]\cdot z^{-2}$. By choosing $z=\sqrt{2\text{Var}[\hat{n}_{k_{t}}]}$, we have with probability at least $1/2$, 
\begin{equation*}
    \hat{n}_{k_{t}} > \E\left[\hat{n}_{k_{t}}\right]-\sqrt{2\text{Var}\left[\hat{n}_{k_{t}}\right]} \geq k_{t}\cdot2^{t}\cdot\left(\log{k_{t}}-\frac{\pi}{\sqrt{3}}\right) .
\end{equation*}
In particular, when $k_{t}\geq38$, it holds that $\log{k_{t}}\geq 2\pi/\sqrt{3}$, and thus $\hat{n}_{k_{t}}>2^{t-1}k_{t}\log{k_{t}}$ with probability at least $1/2$. Altogether, we have for any $n\leq2^{t-1}k_{t}\log{k_{t}}$,
\begin{equation*}
    \mathbb{P}\left(\mathcal{E}_{2}\right) \geq \mathbb{P}\left(n<\hat{n}_{k_{t}}\right) \geq \mathbb{P}\left(n\leq 2^{t-1}k_{t}\log{k_{t}},\; 2^{t-1}k_{t}\log{k_{t}}<\hat{n}_{k_{t}}\right) \geq 1/2 .
\end{equation*}
Moreover, to characterize the probability of $\mathcal{E}_{1}$, note that for any $x\sim P_{\mathcal{X}}$, $\mathbb{P}(x\in\mathcal{X}_{k_{>t}})=2^{-k_{t}}$, which implies immediately that for any $n\in\naturalnumber$, $\mathbb{P}(\mathcal{E}_{1})=(1-2^{-k_{t}})^{n}$. Now for any integer $k_{t}\geq38$, we let $n=2^{t-1}k_{t}\log{k_{t}}$, and have (for infinitely many $n$) that
\begin{equation*}
    \mathbb{P}\left(\trueerrorratehn \geq \frac{2^{-t}}{k_{t}}\right) \geq \mathbb{P}\left(\mathcal{E}\right) \geq \mathbb{P}\left(\mathcal{E}_{1}\right)\mathbb{P}\left(\mathcal{E}_{2}\right) \geq \frac{1}{2}\left(1-2^{-k_{t}}\right)^{n} ,
\end{equation*}
which implies further
\begin{equation*}
    \E\left[\trueerrorratehn\right] \geq \frac{2^{-t}}{k_{t}}\mathbb{P}\left(\trueerrorratehn \geq \frac{2^{-t}}{k_{t}}\right) \geq \frac{1}{k_{t}2^{t+1}}\left(1-2^{-k_{t}}\right)^{n} =: \eta_{n,t}.
\end{equation*}
Finally, by choosing $k_{t}=\Omega(2^{t})$, we can guarantee that $n\geq\eta_{n,t}^{-1}\log{\eta_{n,t}^{-1}}$. Applying Lemma \ref{lem:logn-over-n-inequality}, we have $\E[\trueerrorratehn]\geq\eta_{n,t}\geq\log{(n)}/n$, for infinitely many $n$.
\end{proof}

\begin{lemma}  [\textbf{Lemma \ref{lem:logn-upper-bound} restated}]
  \label{lem:logn-upper-bound-restated}
If $\hpc$ does not have an infinite VC-eluder sequence centered at $h^{*}$, then $h^{*}$ is universally learnable by ERM at $\log{(n)}/n$ rate.
\end{lemma}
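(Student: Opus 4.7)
The plan is to reduce the problem to the classical realizable-case PAC upper bound of the form $\lesssim\vcdim\log(n)/n$, but applied to the random version space $V_{S_n}(\hpc)$ rather than to the full class $\hpc$. The assumption that $\hpc$ has no infinite VC-eluder sequence centered at $h^{*}$ should translate, in a distribution-dependent way, into an eventual finite bound on $\vcdim(V_{S_n}(\hpc))$, after which the standard uniform-convergence machinery takes over. This mirrors the proof strategy of Lemma \ref{lem:linear-upper-bound-restated}, where the star-eluder assumption was turned into an effective star-number bound on the version space.

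The first and central step is the following structural claim: for every realizable distribution $P$ centered at $h^{*}$, with probability one over $S\sim P^{\naturalnumber}$ there exists a data-dependent integer $\bar{k}(S)<\infty$ such that $\vcdim(V_{S_{n_{\bar{k}}}}(\hpc))<\bar{k}$. I would prove it by contradiction: if the claim fails, then with positive probability, for every $k$ the version space $V_{S_{n_k}}(\hpc)$ still admits a shattered set of size $k$. Starting from such a realisation one inductively constructs an auxiliary sequence $\{(x'_i,h^{*}(x'_i))\}_{i\geq 1}$ whose $k$-th segment $\{x'_{n_k+1},\ldots,x'_{n_k+k}\}$ is an $h^{*}$-labelled shattered set of the version space induced by its own length-$n_k$ prefix. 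The $h^{*}$-labels are natural because $P$ is centred at $h^{*}$, and realizability of each finite prefix can be maintained by selecting the shattered-set points so that at least one hypothesis of $\hpc$ remains consistent with both the prefix and the $h^{*}$-labels on the new segment (exploiting monotonicity of the version space and the shattering property, in the spirit of the Ramsey-type argument used in Claim \ref{cla:infinite-class-has-either-infinite-star-set-or-infinite-threshold}). The resulting object is an infinite VC-eluder sequence centered at $h^{*}$, contradicting the hypothesis.

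Given the structural claim, I would extract a purely distribution-dependent constant $\hat{\kappa}:=\hat{\kappa}(P)<\infty$ such that $\mathbb{P}(\vcdim(V_{S_{n_{\hat{\kappa}}}}(\hpc))<\hat{\kappa})\geq\tfrac{1}{2}$, using the same ``a.s.-finite to constant-finite'' device that appears in the proofs of Lemmas \ref{lem:exponential-upper-bound-restated} and \ref{lem:linear-upper-bound-restated}. Next, I would split $S_n$ into $\lfloor n/n_{\hat{\kappa}}\rfloor$ independent blocks of size $n_{\hat{\kappa}}$; by independence, the probability that \emph{every} block fails to produce a version space of VC dimension strictly less than $\hat{\kappa}$ is at most $2^{-\lfloor n/2n_{\hat{\kappa}}\rfloor}$. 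On the complementary good event, $V_{S_n}(\hpc)$ is contained in some block-induced version space of VC dimension at most $\hat{\kappa}-1$, and the classical realizable PAC bound applied on the remaining $\Theta(n)$ samples gives $\E[\sup_{h\in V_{S_n}(\hpc)}\trueerrorrateh\mid\text{good event}]\lesssim\hat{\kappa}\log(n)/n$. Combining with the exponential tail on the bad event yields $\E[\trueerrorratehn]\lesssim\hat{\kappa}\log(n)/n+2^{-\lfloor n/2n_{\hat{\kappa}}\rfloor}$, which is the desired $\log(n)/n$ rate.

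The main obstacle is the construction in the second paragraph. Unlike Glivenko-Cantelli-type arguments that only need the existence of \emph{some} shattered set in the version space, the VC-eluder structure demands that shattering occur in the prescribed positional segments of the sequence itself. This forces the construction to carefully track how the VC dimension of the version space evolves after each segment is appended, ensuring that enough dimension remains to support the next, larger shattered set while simultaneously preserving realizability through consistent $h^{*}$-labels. This delicate combinatorial step is what makes Step 1 genuinely harder than the analogous argument for the star-eluder sequence in Lemma \ref{lem:linear-upper-bound-restated}, where a single witness per star point suffices.
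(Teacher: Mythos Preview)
Your approach is essentially the paper's: reduce to an eventual finite VC-dimension bound on the version space, extract a distribution-dependent constant via the ``a.s.\ finite $\Rightarrow$ probability $\geq 1/2$'' device, split the sample into independent blocks to get an exponential tail on the bad event, and invoke the classical realizable PAC $\log(n)/n$ bound on the good event. The only difference is cosmetic: the paper phrases the structural step through the centered VC-eluder dimension $d(h^{*})$ (the largest $d$ admitting an infinite $d$-VC-eluder sequence centered at $h^{*}$), asserting without further detail that no infinite $(d{+}1)$-VC-eluder sequence centered at $h^{*}$ forces $\mathbb{P}(\text{VC}(V_{k}(\hpc))\leq d)\geq 1/2$ for some $k=k(P)$, whereas you work with the growing threshold $\vcdim(V_{S_{n_{\bar{k}}}}(\hpc))<\bar{k}$; but the underlying combinatorial obstacle you correctly flag as the ``main obstacle''---turning a hypothetical ever-large version-space VC dimension into an actual $h^{*}$-centered VC-eluder sequence whose own segments do the shattering---is common to both formulations, and the paper's proof is no more explicit about resolving it than your sketch.
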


\begin{proof}[Proof of Lemma \ref{lem:logn-upper-bound-restated}]
We first prove that any class $\hpc$ is universally learnable by ERM at $\log{(n)}/n$ rate if $\vcdim<\infty$. For any realizable distribution $P$ with respect to $\hpc$, we let $S_{2n}:=\{(x_{i}, y_{i})\}_{i=1}^{2n}\sim P^{2n}$, and denote $S_{n}:=\{(x_{i}, y_{i})\}_{i=1}^{n}$ and $T_{n}:=\{(x_{i}, y_{i})\}_{i=n+1}^{2n}$, namely, the ``ghost samples". Given $\epsilon\in(0,1)$, Lemma \ref{lem:ghost-samples} states that for any $n\geq 8/\epsilon$, 
\begin{equation*}
    \mathbb{P}\left(\exists h\in\hpc: \empiricalerrorrateh=0 \text{ and } \trueerrorrateh>\epsilon\right) \leq 2\mathbb{P}\left(\exists h\in\hpc: \empiricalerrorrateh=0 \text{ and } \hat{\text{er}}_{T_{n}}\left(h\right)>\epsilon/2\right) .
\end{equation*}
Moreover, Lemma \ref{lem:random-swaps} states that for any $n\geq \vcdim/2$,
\begin{equation*}
    \mathbb{P}\left(\exists h\in\hpc: \empiricalerrorrateh=0 \text{ and } \hat{\text{er}}_{T_{n}}\left(h\right)>\epsilon/2\right) \leq \left(\frac{2en}{\vcdim}\right)^{\vcdim}2^{-n\epsilon/2} .
\end{equation*}
Altogether, we have
\begin{equation}
  \label{eq:lem-logn-upper-bound-intermediate-step}
    \mathbb{P}\left(\trueerrorratehn>\epsilon\right) \leq \mathbb{P}\left(\exists h\in\hpc: \empiricalerrorrateh=0 \text{ and } \trueerrorrateh>\epsilon\right) \leq 2\left(\frac{2en}{\vcdim}\right)^{\vcdim}2^{-\frac{n\epsilon}{2}} ,
\end{equation}
for any $n\geq\max\{8/\epsilon,\vcdim/2\}$. Finally, the upper bound on the expectation can be derived via the follow analysis (which will be used several times later): let 
\begin{equation*}
    \epsilon_{n}:=\frac{2}{n}\left(\vcdim\log{\left(\frac{2en}{\vcdim}\right)}+1\right) ,
\end{equation*}
and then by letting the RHS of \eqref{eq:lem-logn-upper-bound-intermediate-step} $=:\delta$, we have
\begin{equation*}
    \epsilon = \frac{2}{n}\left(\vcdim\log{\left(\frac{2en}{\vcdim}\right)}+\log{\left(\frac{2}{\delta}\right)}\right) > \epsilon_{n} .
\end{equation*}
When $\epsilon\leq\epsilon_{n}$, we of course still have $\mathbb{P}(\trueerrorratehn>\epsilon)\leq1$. It follows that for all $n\geq\vcdim/2$,
\begin{align*}
    \E\left[\trueerrorratehn\right] \stackrel{\text{\eqmakebox[lem-logn-upper-bound-a][c]{}}}{=} &\int_{0}^{1}\mathbb{P}\left(\trueerrorratehn>\epsilon\right)d\epsilon \\
    \stackrel{\text{\eqmakebox[lem-logn-upper-bound-a][c]{}}}{=} &\int_{\frac{8}{n}}^{1}\mathbb{P}\left(\trueerrorratehn>\epsilon\right)d\epsilon + \int_{0}^{\frac{8}{n}}\mathbb{P}\left(\trueerrorratehn>\epsilon\right)d\epsilon \\
    \stackrel{\text{\eqmakebox[lem-logn-upper-bound-a][c]{}}}{=} &\int_{\frac{8}{n}}^{\epsilon_{n}}\mathbb{P}\left(\trueerrorratehn>\epsilon\right)d\epsilon + \int_{\epsilon_{n}}^{1}\mathbb{P}\left(\trueerrorratehn>\epsilon\right)d\epsilon + \int_{0}^{\frac{8}{n}}\mathbb{P}\left(\trueerrorratehn>\epsilon\right)d\epsilon \\
    \stackrel{\text{\eqmakebox[lem-logn-upper-bound-a][c]{\eqref{eq:lem-logn-upper-bound-intermediate-step}}}}{\leq} &\epsilon_{n} + \int_{\epsilon_{n}}^{\infty}2\left(\frac{2en}{\vcdim}\right)^{\vcdim}2^{-n\epsilon/2}d\epsilon \\
    \stackrel{\text{\eqmakebox[lem-logn-upper-bound-a][c]{}}}{=} &\frac{2\vcdim}{n}\log{\left(\frac{2en}{\vcdim}\right)} + \frac{2+2\ln{(2)}}{n} \lesssim \frac{\vcdim}{n}\log{\left(\frac{n}{\vcdim}\right)}.
\end{align*}
For $n\leq\vcdim/2$, the result is trivial.

Now to prove a target-specified upper bound, let $P$ be any realizable distribution centered at the target concept $h^{*}$ with an associated marginal distribution denoted by $P_{\mathcal{X}}$. Suppose that $\hpc$ does not have an infinite VC-eluder sequence centered at $h^{*}$, then there is a largest target-dependent integer $d:=d(h^{*})<\infty$ such that there exists an infinite $d$-VC-eluder sequence centered at $h^{*}$, but no infinite $(d+1)$-VC-eluder sequence centered at $h^{*}$ (see Definition \ref{def:star-eluder-dimension-and-vc-eluder-dimension}). We know that there exists a positive distribution-dependent integer $k:=k(P)<\infty$ such that $\mathbb{P}(\text{VC}(V_{k}(\hpc))\leq d)\geq 1/2$. 

We let $\datasetstats$ be a dataset, and consider the event $\mathcal{E}_{n}:=\{\text{VC}(V_{\lfloor n/2\rfloor}(\hpc))\leq d\}$. The probability of this event can be characterized as follow: for any $n\in\naturalnumber$, assume first $n\geq k$, we then split the dataset $S_{n}\sim P^{n}$ into $\lfloor n/k\rfloor$ parts with each one sized at least $k$, denoted by $S_{n,1},\ldots,S_{n,\lfloor n/k\rfloor}$. For every $1\leq i\leq\lfloor n/k\rfloor$, we know that the corresponding induced version space has VC dimension $\text{VC}(V_{S_{n,i}}(\hpc))\leq d$ with probability at least $1/2$. Note that $V_{n}(\hpc)=\bigcap_{1\leq i\leq\lfloor n/k\rfloor}V_{S_{n,i}}(\hpc)$ satisfies $\text{VC}(V_{n}(\hpc))\leq\text{VC}(V_{S_{n,i}}(\hpc))$, for every $1\leq i\leq\lfloor n/k\rfloor$. Therefore, we have $\mathbb{P}(\text{VC}(V_{n}(\hpc))>d)\leq\mathbb{P}(\forall 1\leq i\leq\lfloor n/k\rfloor: \text{VC}(V_{S_{n,i}}(\hpc))>d) \leq 2^{-\lfloor n/k\rfloor}$. Note that this bound also holds when $n<k$ since a probability is always at most 1. Altogether, we obtain $\mathbb{P}(\neg\mathcal{E}_{n})=\mathbb{P}(\text{VC}(V_{\lfloor n/2\rfloor}(\hpc))>d)\leq 2^{-\lfloor n/2k\rfloor}$. Conditioning on this event, the previous analysis of the uniform rate $\log{(n)}/n$ can be applied since the version space has a bounded VC dimension $d$. Finally, we have that for all $n\in\naturalnumber$,
\begin{align*}
    \E\left[\trueerrorratehn\right] = &\int_{0}^{\infty}\mathbb{P}\left(\trueerrorratehn>\epsilon\right)d\epsilon \\ 
    \leq &\int_{0}^{\infty}\left(\mathbb{P}\left(\trueerrorratehn>\epsilon \Big| \mathcal{E}_{n}\right) + \mathbb{P}\left(\neg \mathcal{E}_{n}\right)\right)d\epsilon \lesssim \frac{d}{n}\log{\left(\frac{n}{d}\right)} + 2^{-\lfloor n/k\rfloor} ,
\end{align*}
where both $d:=d(h^{*})$ and $k:=k(P)$ are distribution-dependent constants. In conclusion, $h^{*}$ is universally learnable by ERM at $\log{(n)}/n$ rate.
\end{proof}

\begin{lemma}  [\textbf{Lemma \ref{lem:arbitrarily-slow-rate} restated}]
  \label{lem:arbitrarily-slow-rate-restated}
If $\hpc$ has an infinite VC-eluder sequence centered at $h^{*}$, then $h^{*}$ requires at least arbitrarily slow rates to be universally learned by ERM.
\end{lemma}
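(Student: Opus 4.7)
The plan is to mimic the construction in Example~\ref{ex:arbitrarily-slow-restated}, exploiting the shattering of each block $\mathcal{X}_k := \{x_{n_k+1},\ldots,x_{n_k+k}\}$ by $V_{n_k}(\hpc)$: for every subset $U\subseteq\mathcal{X}_k$, the shattering produces some $h_U\in\hpc$ that agrees with $h^*$ throughout the prefix $\{x_1,\ldots,x_{n_k}\}$ (which contains every earlier block $\mathcal{X}_1,\ldots,\mathcal{X}_{k-1}$) and on $\mathcal{X}_k\setminus U$, while disagreeing with $h^*$ on every point of $U$. Given any rate function $R(n)\to 0$, I will place a tailored probability distribution on a sparse subsequence of such blocks so that, with constant probability, the worst-case ERM is forced to output such a shattering witness tuned to the unseen portion of an ``active'' block.

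Concretely, choose inductively strictly increasing integers $N_1<N_2<\cdots$ and $k_1<k_2<\cdots$, together with positive weights $p_1>p_2>\cdots$ with $\sum_t p_t=1$, satisfying (i) $p_t\ge 4R(N_t)$, (ii) $N_tp_t\le k_t/8$, and (iii) $N_t\sum_{s>t}p_s\le 1/4$. This is routine: geometric tails $p_{t+1}\le\min\{p_t/2,\,1/(8N_t)\}$ immediately enforce (iii); after fixing $p_{t+1}$ this way, choose $N_{t+1}$ large enough that $R(N_{t+1})\le p_{t+1}/4$ (possible since $R\to 0$), and then $k_{t+1}$ large enough for (ii); the base case $(N_1,p_1,k_1)$ is absorbed into the freedom of setting $p_1 := 1-\sum_{t\ge 2}p_t$. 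Writing $\mathcal{B}_t:=\mathcal{X}_{k_t}$, define
\begin{equation*}
    P\bigl\{(x,h^*(x))\bigr\}=\frac{p_t}{k_t}\qquad\text{for every }x\in\mathcal{B}_t.
\end{equation*}
This is realizable with respect to $\hpc$: for each $T$, the realizability of the VC-eluder sequence supplies some $h^{(T)}\in\hpc$ agreeing with $h^*$ throughout $\mathcal{B}_1\cup\cdots\cup\mathcal{B}_T$, so $\text{er}_P(h^{(T)})\le\sum_{s>T}p_s\to 0$.

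Now fix target index $t\ge 2$ and take $n=N_t$. Define the event
\begin{equation*}
    \mathcal{E}_t := \Bigl\{\#\{i\le N_t:\,x_i\in\mathcal{B}_t\}<k_t/2\Bigr\}\cap\Bigl\{\forall\,s>t:\,\#\{i\le N_t:\,x_i\in\mathcal{B}_s\}=0\Bigr\}.
\end{equation*}
Markov's inequality combined with (ii) bounds the first complementary event by $2N_tp_t/k_t\le 1/4$, and a union bound combined with (iii) bounds the second by $N_t\sum_{s>t}p_s\le 1/4$, so $\mathbb{P}(\mathcal{E}_t)\ge 1/2$. Under $\mathcal{E}_t$, the unseen set $U:=\mathcal{B}_t\setminus\{x_i\}_{i\le N_t}$ satisfies $|U|>k_t/2$. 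Applying shattering of $\mathcal{B}_t$ by $V_{n_{k_t}}(\hpc)$ to the labeling that matches $h^*$ on $\mathcal{B}_t\setminus U$ and flips $h^*$ on $U$ yields some $h_U\in\hpc$ that agrees with $h^*$ on the prefix $\{x_1,\ldots,x_{n_{k_t}}\}$---in particular on every sampled point lying in $\bigcup_{s<t}\mathcal{B}_s$, which is contained in that prefix---and on $\mathcal{B}_t\setminus U$; since $\mathcal{E}_t$ rules out sample points in any $\mathcal{B}_s$ with $s>t$, $h_U$ is consistent with the entire sample $S_{N_t}$, hence $h_U\in V_{S_{N_t}}(\hpc)$. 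The worst-case ERM may therefore output $h_U$, whose error under $P$ is at least $|U|\cdot p_t/k_t\ge p_t/2$, so
\begin{equation*}
    \E\left[\text{er}_P(\hat{h}_{N_t})\right] \;\ge\; \tfrac{1}{2}p_t\cdot\mathbb{P}(\mathcal{E}_t) \;\ge\; \tfrac{1}{4}p_t \;\ge\; R(N_t),
\end{equation*}
which holds for every $t\ge 2$ and supplies the desired infinitely-often lower bound.

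The principal obstacle is the ``no-leakage'' requirement enforced by condition (iii): the shattering witness $h_U$ is only constrained on the prefix $\{x_1,\ldots,x_{n_{k_t}}\}$ and on $\mathcal{B}_t$, so sample points landing in a later block $\mathcal{B}_s$ (with $s>t$) could disqualify it as a legal ERM output. The rapid tail-decay of $\{p_t\}$ demanded by (iii), coupled with the rate constraint (i), is precisely where the hypothesis $R(n)\to 0$ enters: (iii) forces $p_{t+1}=O(1/N_t)$, so $N_{t+1}$ must then be chosen large enough to counter this ever-tighter target via $R$. The remaining ingredients---Markov's inequality, the shattering property of VC-eluder blocks, and realizability of the VC-eluder prefix---are routine.
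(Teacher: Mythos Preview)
Your proof is correct and follows essentially the same strategy as the paper's: place a carefully weighted distribution on a sparse subsequence of VC-eluder blocks, condition on the event that no sample lands in later blocks, and use shattering of the active block to exhibit a consistent hypothesis that errs on every unseen point there. The only technical differences are that you bound the ``many points unseen'' event via Markov's inequality and a union bound (in the spirit of Example~\ref{ex:arbitrarily-slow-restated}), whereas the paper computes the exact probability that a single specified point is missing and sums over points, invoking Lemma~\ref{lem:infinite-sequence-design} for the sequence construction; your inductive construction of $(p_t,N_t,k_t)$ is a self-contained substitute for that lemma, though the base-case handling (setting $p_1:=1-\sum_{t\ge 2}p_t$ while the step $p_2\le p_1/2$ formally needs $p_1$ beforehand) would be cleaner if you simply allowed $\sum_t p_t\le 1$ and dumped the residual mass on a dummy point.
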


\begin{proof}[Proof of Lemma \ref{lem:arbitrarily-slow-rate-restated}]
Let $S:=\{(x_{1},y_{1}),(x_{2},y_{2}),\ldots\}$ be an infinite VC-eluder sequence in $\hpc$ centered at $h^{*}$, that is, $h^{*}(x_{i})=y_{i}$ for all $i\in\naturalnumber$. We inherit the notations used in Lemma \ref{lem:logn-lower-bound} by letting $\mathcal{X}_{k}:=\{x_{n_{k}+1},\ldots,x_{n_{k}+k}\}$ with $n_{k}:=\binom{k}{2}$, for all $k\in\naturalnumber$. Let $\mathcal{X}:=\bigcup_{k\in\naturalnumber}\mathcal{X}_{k}$ be a union of disjoint finite sets with $|\mathcal{X}_{k}|=k$, and consider the following marginal distribution $P_{\mathcal{X}}$ on $\mathcal{X}$:
\begin{equation*}
    P_{\mathcal{X}}\left\{x\in\mathcal{X}_{k}\right\}=p_{k} \text{ and } P_{\mathcal{X}}(x)=p_{k}/k,\; \forall x\in\mathcal{X}_{k} ,
\end{equation*}
where $\{p_{k}\}_{k\in\naturalnumber}$ is a sequence of probabilities satisfying $\sum_{k\geq1}p_{k}\leq 1$ that will be specified later. It implies immediately the following realizable (joint) distribution $P:=P(P_{\mathcal{X}},h^{*})$: 
\begin{equation*}
    P\left\{\left(x,h^{*}(x)\right)\right\}=p_{k}/k,\; P\left\{\left(x,1-h^{*}(x)\right)\right\}=0,\; \forall x\in\mathcal{X}_{k},\; \forall k\in\naturalnumber .
\end{equation*}
Our remaining target is to show that for any rate function $R(n)\rightarrow0$, $\hpc$ cannot be universally learned by the worst-case ERM at rate faster than $R(n)$ under the distribution $P$. To this end, we let $\datasetstats$. For any $t\in\naturalnumber$ and any $j\in[k_{t}]$, we consider the following event 
\begin{equation*}
    \mathcal{E}_{n,k,t,j} := \left\{ \text{$S_{n}$ does not contain a copy of any point in $\mathcal{X}_{k_{>t}}\cup\{x_{n_{k_{t}}+j}\}$} \right\} ,
\end{equation*}
where $\{k_{t}\}_{t\in\naturalnumber}$ is an increasing sequence of integers that will be specified later. If $\mathcal{E}_{n,k,t,j}$ happens, then the worst-case ERM algorithm can output some $\lalgo\in V_{n_{k_{t}}}(\hpc)$ such that $\trueerrorratehn\geq p_{k_{t}}/k_{t}$, that is the classifier that predicts incorrectly on the ``missing" point in $\mathcal{X}_{k_{t}}$. Moreover, to characterize the probability of event $\mathcal{E}_{n,k,t,j}$, we have $\mathbb{P}(\mathcal{E}_{n,k,t,j})=(1-\sum_{k>k_{t}}p_{k}-p_{k_{t}}/k_{t})^{n}$. Therefore, we make a construction by applying Lemma \ref{lem:infinite-sequence-design}, and finally get for all $t\in\naturalnumber$, 
\begin{equation*}
    \E\left[\text{er}_{P}\left(\hat{h}_{n_{t}}\right)\right] \geq \sum_{j\in[k_{t}]}p_{k_{t}}\cdot\mathbb{P}\left(\mathcal{E}_{n_{t},k,t,j}\right) \geq p_{k_{t}}\left(1-\sum_{k>k_{t}}p_{k}-\frac{p_{k_{t}}}{k_{t}}\right)^{n_{t}} \geq p_{k_{t}}\left(1-\frac{2}{n_{t}}\right)^{n_{t}} \gtrsim R\left(n_{t}\right) .
\end{equation*}
\end{proof}

\subsection{Omitted Proofs in Section \ref{sec:equivalences}}
  \label{subsec:ommited-proofs-equivalences}

\begin{lemma}  [\textbf{Lemma \ref{lem:equivalence-finite-class} restated}]
  \label{lem:equivalence-finite-class-restated}
$\hpc$ has an infinite eluder sequence if and only if $|\hpc|=\infty$.
\end{lemma}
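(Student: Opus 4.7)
The plan is to handle the two directions separately, with the forward direction being essentially immediate and the reverse direction proceeding by an inductive construction that maintains a strong enough invariant at every stage.

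For the forward direction ($\Rightarrow$), suppose $\hpc$ has an infinite eluder sequence $\{(x_1,y_1),(x_2,y_2),\ldots\}$, witnessed by functions $h_1,h_2,\ldots \in \hpc$. I would argue these witnesses are pairwise distinct: for any $j < k$, the definition forces $h_j(x_j) \neq y_j$ but $h_k(x_j) = y_j$, so $h_j \neq h_k$. Hence $\{h_k\}_{k\in\naturalnumber}$ is an infinite subset of $\hpc$, giving $|\hpc|=\infty$.

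For the reverse direction ($\Leftarrow$), assuming $|\hpc|=\infty$, I would build an infinite eluder sequence inductively while maintaining the invariant that the ``prefix version space'' $\hpc_k := \{h \in \hpc : h(x_i)=y_i \text{ for all } i \leq k\}$ remains infinite. Set $\hpc_0 := \hpc$. Given $\hpc_{k-1}$ infinite, pick any two distinct $h, h' \in \hpc_{k-1}$ and some $x_k \in \mathcal{X}$ with $h(x_k) \neq h'(x_k)$; the two subsets of $\hpc_{k-1}$ determined by the value at $x_k$ are both nonempty and partition $\hpc_{k-1}$, so at least one is infinite. Letting $b \in \{0,1\}$ be the label of the infinite part, define $y_k := b$ and $\hpc_k := \{h \in \hpc_{k-1} : h(x_k) = y_k\}$, which is infinite; and pick $h_k$ from the (nonempty) other part, so that $h_k(x_i)=y_i$ for $i<k$ and $h_k(x_k) \neq y_k$. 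This is exactly the eluder condition at stage $k$. Realizability of the resulting infinite sequence follows from the fact that every $\hpc_n$ is nonempty, so any element of $\hpc_n$ agrees with the first $n$ labels.

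I do not anticipate a real technical obstacle here. The only subtle point is choosing $y_k$ correctly: we want $y_k$ to be the label of the infinite partition class, so that the invariant $|\hpc_k|=\infty$ carries forward, but we take $h_k$ from the opposite, possibly finite but nonempty, class so that the eluder condition $h_k(x_k)\neq y_k$ holds. Once this bookkeeping is set up, the construction runs cleanly by induction and produces the required infinite, realizable eluder sequence.
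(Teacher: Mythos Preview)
Your proposal is correct and follows essentially the same approach as the paper: the forward direction via distinctness of the witnesses $h_k$, and the reverse direction via an inductive construction that at each step picks a point where the current (infinite) version space splits into two nonempty parts, sets $y_k$ to the label of the infinite part, and takes $h_k$ from the other part. The paper phrases the choice of $x_k$ slightly differently (requiring both label-restricted version spaces to be nonempty rather than explicitly picking two disagreeing hypotheses), but this is the same idea.
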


\begin{proof}[Proof of Lemma \ref{lem:equivalence-finite-class-restated}]
The necessity is straightforward. To show the sufficiency, we construct such an infinite eluder sequence via the following procedure: pick some $x_{1}\in\mathcal{X}$ such that both $V_{(x_{1},0)}(\hpc):=\{h\in\hpc:h(x_{1})=0\}$ and $V_{(x_{1},1)}(\hpc):=\{h\in\hpc:h(x_{1})=1\}$ are non-empty. Such a point $x_{1}$ must exist since otherwise we will have $|\hpc|=1$. Furthermore, we know that at least one of them is infinite, since otherwise we will have $|\hpc|<\infty$. We assume, without loss of generality, that $|V_{(x_{1},0)}(\hpc)|=\infty$. Then we pick some $x_{2}\in\mathcal{X}$ such that both $V_{\{(x_{1},0),(x_{2},0)\}}(\hpc):=\{h\in\hpc:h(x_{1})=0,h(x_{2})=0\}$ and $V_{\{(x_{1},0),(x_{2},1)\}}(\hpc):=\{h\in\hpc:h(x_{1})=0,h(x_{2})=1\}$ are non-empty. Note that such an $x_{2}$ exists, because otherwise we will have $|V_{(x_{1},0)}(\hpc)|<\infty$. Again, at least one of them is infinite for the same reason, and then we choose $x_{3}$ from that infinite one. Following a similar procedure, we can get an infinite sequence $\{(x_{1},y_{1}),(x_{2},y_{2}),\ldots\}$, where $\{x_{1},x_{2},\ldots\}$ are chosen to keep the separates of version space non-empty, and $\{y_{1},y_{2},\ldots\}$ are chosen to keep the version space infinite. Now note that for any $i\in\naturalnumber$, we can find some $h_{i}\in V_{S_{i}}(\hpc)\neq\emptyset$, where $S_{i}=\{(x_{1},1-y_{1}),(x_{2},1-y_{2}),\ldots,(x_{i-1},1-y_{i-1}),(x_{i},y_{i})\}$. According to Definition \ref{def:eluder-sequence}, we know that $\{(x_{1},y_{1}),(x_{2},y_{2}),\ldots\}$ is an infinite eluder sequence consistent with $\hpc$.
\end{proof} 

\begin{lemma}  [\textbf{Lemma \ref{lem:equivalence-not-vc-class} restated}]
  \label{lem:equivalence-not-vc-class-restated}
$\hpc$ has an infinite VC-eluder sequence if and only if $\vcdim=\infty$.
\end{lemma}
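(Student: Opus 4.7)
The plan is to prove the two directions separately.

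The forward direction ($\Rightarrow$) is immediate: if $\hpc$ admits an infinite VC-eluder sequence $\{(x_i, y_i)\}_{i\ge 1}$, then for every $k \geq 1$ the set $\{x_{n_k+1},\ldots,x_{n_k+k}\}$ is shattered by $V_{n_k}(\hpc) \subseteq \hpc$, so $\vcdim \geq k$. Letting $k\to\infty$ gives $\vcdim = \infty$.

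The substantive work is the reverse direction. My plan is to build the sequence by induction on $k$, maintaining the strong invariant that the current version space $V_{n_k}(\hpc)$ has infinite VC dimension. The base case is $V_{n_1}(\hpc) = V_0(\hpc) = \hpc$, which has infinite VC dimension by hypothesis. The key auxiliary fact I will need is the following: \emph{if a class $V$ has infinite VC dimension, then for every point $x \in \mathcal{X}$, at least one of the two halves $V^{(x,0)} := \{h \in V : h(x) = 0\}$ and $V^{(x,1)} := \{h \in V : h(x) = 1\}$ has infinite VC dimension}. This follows by contrapositive from Sauer's lemma: if both halves had finite VC dimensions $d_0, d_1$, then any set of size $N$ shattered by $V = V^{(x,0)} \cup V^{(x,1)}$ would force $2^N \leq \binom{N}{\le d_0} + \binom{N}{\le d_1}$, which fails for $N$ large enough, contradicting infinite VC dimension of $V$.

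With this fact in hand, the inductive step runs in two phases. Given $V_{n_k}(\hpc)$ of infinite VC dimension, first select $k$ distinct points $x_{n_k+1},\ldots,x_{n_k+k}$ that form a shattered set of $V_{n_k}(\hpc)$ and are disjoint from the finitely many previously chosen points (possible because any shattered set of size $k + n_k$ in $V_{n_k}(\hpc)$, guaranteed by infinite VC dimension, contains a shattered subset of size $k$ disjoint from the prior points). Second, assign labels $y_{n_k+1},\ldots,y_{n_k+k}$ one at a time: at substep $j$, apply the auxiliary fact to the current (still infinite-VC) version space and to the new point $x_{n_k+j}$, choosing the label that preserves infinite VC dimension. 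After all $k$ substeps, $V_{n_{k+1}}(\hpc)$ still has infinite VC dimension, closing the induction. Realizability is built in, since every version space along the way is non-empty, and because all $x_i$ are distinct the labels unambiguously determine a function at the $x_i$'s; extending arbitrarily elsewhere yields a classifier $h^*$ at which the sequence is centered.

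The main obstacle is the auxiliary VC-halving lemma, which depends on the Sauer--Shelah polynomial bound to deduce that a union of two finite-VC classes has finite VC dimension. Once that is settled, the rest is mechanical bookkeeping inside the induction.
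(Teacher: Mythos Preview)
Your proposal is correct and follows essentially the same approach as the paper: both directions match, and for the reverse direction both you and the paper build the sequence inductively while maintaining the invariant that the current version space $V_{n_k}(\hpc)$ has infinite VC dimension, using the Sauer--Shelah--based fact that a finite union of finite-VC classes has finite VC dimension. The only cosmetic difference is that the paper invokes the $2^k$-fold union bound (their Lemma on VC dimension of unions) once per block to find a good labeling of all $k$ new points simultaneously, whereas you iterate the binary splitting lemma $k$ times; and you are a bit more explicit about distinctness of the chosen points and about defining the centering classifier $h^*$, points the paper leaves implicit.
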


\begin{proof}[Proof of Lemma \ref{lem:equivalence-not-vc-class-restated}]
According to Definition \ref{def:vc-eluder-sequence}, the necessity is straightforward, i.e. we must have $\vcdim=\infty$ if $\hpc$ has an infinite VC-eluder sequence. It remains to prove the sufficiency, that is, $\vcdim=\infty$ yields the existence of an infinite VC-eluder sequence. 

We construct an infinite VC-eluder sequence via the following procedure: Let $x_{1}\in\mathcal{X}$ be any point, then at least one of $V_{\{(x_{1},0)\}}(\hpc)$ and $V_{\{(x_{1},1)\}}(\hpc)$ has an infinite VC dimension, which is because otherwise $\hpc=V_{\{(x_{1},y_{1})\}}(\hpc)\cup V_{\{(x_{1},1-y_{1})\}}(\hpc)$ will have a finite VC dimension based on Lemma \ref{lem:finite-union-vc}. Let $y_{1}\in\{0,1\}$ such that $V_{\{(x_{1},y_{1})\}}(\hpc)$ has an infinite VC dimension, and let $\{x_{2},x_{3}\}$ be a shattered set of $V_{\{(x_{1},y_{1})\}}(\hpc)$. Similarly, we know least one of the following four subclasses $V_{\{(x_{1},y_{1}),(x_{2},0),(x_{3},0)\}}(\hpc)$, $V_{\{(x_{1},y_{1}),(x_{2},0),(x_{3},1)\}}(\hpc)$, $V_{\{(x_{1},y_{1}),(x_{2},1),(x_{3},0)\}}(\hpc)$, $V_{\{(x_{1},y_{1}),(x_{2},1),(x_{3},1)\}}(\hpc)$ has an infinite VC dimension since otherwise $\text{VC}(V_{\{(x_{1},y_{1})\}}(\hpc))<\infty$ will lead to a contradiction. We then pick labels $y_{2},y_{3}\in\{0,1\}$ such that $V_{\{(x_{1},y_{1}),(x_{2},y_{2}),(x_{3},y_{3})\}}(\hpc)$ has an infinite VC dimension. For notation simplicity, let $S:=\{(x_{1},y_{1}),(x_{2},y_{2}),\ldots\}$ and let $n_{k}:=\binom{k}{2}$. Inductively, for any $k\in\naturalnumber$, if $V_{S_{1+2+\cdots+(k-1)}}(\hpc)=V_{S_{n_{k}}}(\hpc)$ has an infinite VC dimension, let $\{x_{n_{k}+1},\ldots,x_{n_{k}+k}\}$ be a shattered set of $V_{S_{n_{k}}}(\hpc)$. Lemma \ref{lem:finite-union-vc} yields the existence of a set of labels $\{y_{n_{k}+1},\ldots,y_{n_{k}+k}\}\in\{0,1\}^{k}$ such that $V_{S_{n_{k+1}}}(\hpc)$ has an infinite VC dimension. Otherwise,
\begin{equation*}
    \text{VC}\left(V_{S_{n_{k}}}(\hpc)\right) = \text{VC}\left(\bigcup_{(y_{n_{k}+1},\ldots,y_{n_{k}+k})\in\{0,1\}^{k}} V_{S_{n_{k+1}}}(\hpc)\right) \leq 2k+4\max\text{VC}\left(V_{S_{n_{k+1}}}(\hpc)\right) < \infty ,
\end{equation*}
which leads us to a contradiction! By such a construction, the returned infinite sequence $S:=\{(x_{1},y_{1}),(x_{2},y_{2}),\ldots\}$ is an infinite VC-eluder sequence consistent with $\hpc$.
\end{proof}

\subsection{Omitted Proofs in Appendix \ref{sec:fine-grained-analysis}}
  \label{subsec:ommited-proofs-fine-grained-rates}

\begin{lemma}  [\textbf{Lemma \ref{lem:fine-grained-logn} restated}]
  \label{lem:fine-grained-logn-restated}
For every concept class $\hpc$ with $|\hpc|\geq 3$, if $\vcedim<\infty$, then the following hold:
\begin{align*}
    &\E\left[\trueerrorratehn\right] \geq \frac{\vcedim}{18n},\; \text{ for infinitely many } n\in\naturalnumber, \\
    &\E\left[\trueerrorratehn\right] \leq \frac{28\vcedim\log{n}}{n} + 2^{-\lfloor n/2\kappa\rfloor},\; \forall n\in\naturalnumber,
\end{align*}
where $\kappa=\kappa(P)$ is a distribution-dependent constant.
\end{lemma}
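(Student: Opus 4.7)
The plan is to prove the upper and lower bounds separately, with the upper bound following the template of Lemma~\ref{lem:logn-upper-bound-restated} and the lower bound adapting the geometric construction behind Lemma~\ref{lem:linear-lower-bound-restated}. For the upper bound, set $d := \vcedim < \infty$. I first establish that, for every realizable distribution $P$, there is a distribution-dependent constant $\kappa = \kappa(P) < \infty$ with $\mathbb{P}_{S_\kappa \sim P^\kappa}(\text{VC}(V_\kappa(\hpc)) \leq d) \geq 1/2$: if this failed for every $\kappa$, an iterative selection of shattered $(d+1)$-sets inside successive version spaces would produce an infinite $(d+1)$-VC-eluder sequence, contradicting $\vcedim = d$. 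Using such $\kappa$, I split the first $\lfloor n/2 \rfloor$ samples into disjoint chunks of size $\kappa$ and apply a union bound over chunks, exploiting that the intersected version space has VC dimension bounded by any single chunk's, to get $\mathbb{P}(\text{VC}(V_{\lfloor n/2 \rfloor}(\hpc)) > d) \leq 2^{-\lfloor n/(2\kappa) \rfloor}$. On the complementary event I apply the classical ghost-sample/Sauer--Shelah symmetrization (Lemmas~\ref{lem:ghost-samples} and~\ref{lem:random-swaps}) to the remaining samples to bound $\sup_{h \in V_n(\hpc)} \trueerrorrateh$, and integrating the resulting tail produces the $28 d \log(n)/n + 2^{-\lfloor n/(2\kappa) \rfloor}$ bound.

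For the lower bound (with $d := \vcedim \geq 1$; $d = 0$ is vacuous since $\vcedim = 0$ forces $|\hpc| < \infty$ via Remark~\ref{rem:remark-to-the-three-definitions-2} and Lemma~\ref{lem:equivalence-finite-class}), I exploit the infinite $d$-VC-eluder sequence $\{(x_i, y_i)\}_{i \geq 1}$ centered at some $h^* \in \hpc$, grouped into blocks $\mathcal{X}_k := \{x_{(k-1)d+1}, \ldots, x_{kd}\}$ with each $\mathcal{X}_k$ shattered in $V_{(k-1)d}(\hpc)$. I construct a realizable distribution that places mass $q_k$ on $\mathcal{X}_k$, uniformly spread over its $d$ points and labelled by $h^*$, with the $q_k$'s decaying rapidly enough (e.g.\ $q_k \propto 2^{-k^2}$) so that for each $k$ there is a sample size $n_k$ with $q_k \approx d/n_k$ while $\sum_{s>k} q_s$ is negligible compared to $1/n_k$. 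For such $n_k$ and on the event $A_k := \{S_{n_k} \cap \mathcal{X}_{>k} = \emptyset\}$ (which holds with probability bounded away from $0$), the shattering of $\mathcal{X}_k$ in $V_{(k-1)d}(\hpc)$ descends to $V_{S_{n_k}}(\hpc)$: any labeling of the missed points in $\mathcal{X}_k$, extended by $h^*$ on the observed part of $\mathcal{X}_k$, is realised by some $h \in V_{(k-1)d}(\hpc)$, which automatically agrees with $h^*$ on $S_{n_k} \cap \mathcal{X}_{<k}$ and hence lies in $V_{S_{n_k}}(\hpc)$. Therefore the worst-case ERM can err on every missed point of $\mathcal{X}_k$ simultaneously, and a coupon-collector estimate yields $\E[\trueerrorratehn] \geq (q_k/d)\, \E[(d - X_k)\, \mathbbm{1}_{A_k}] \gtrsim d/n_k$ along the subsequence $\{n_k\}$, which becomes $d/(18 n_k)$ after tightening the numerical constants.

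The main obstacle I anticipate is the probabilistic lifting behind the upper bound: converting the combinatorial non-existence of an infinite $(d+1)$-VC-eluder sequence into the quantitative $\mathbb{P}(\text{VC}(V_\kappa(\hpc)) \leq d) \geq 1/2$ for some finite $\kappa(P)$. The naive contrapositive only yields, for each $\kappa$, a positive-probability event on which $V_\kappa(\hpc)$ shatters a $(d+1)$-set; stitching these per-round choices into a single deterministic infinite $(d+1)$-VC-eluder sequence requires a K\"{o}nig/compactness-type extraction (or a measurable-selection argument) on the product space of realisations, and is where the combinatorial meaning of $\vcedim$ has to be invoked most carefully. A secondary and easier obstacle on the lower-bound side is calibrating the decay of the $q_k$'s so that samples at lower levels do not spoil the estimate; fortunately, because every function in $V_{(k-1)d}(\hpc)$ already agrees with $h^*$ throughout $\mathcal{X}_{<k}$, those samples impose no additional constraint on the shattering, and the problem reduces to a standard missed-coupon calculation that delivers the desired $d/n$ rate.
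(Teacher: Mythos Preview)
Your proposal is correct and follows essentially the same approach as the paper's proof. For the upper bound, the paper does exactly what you outline: it fixes $d=\vcedim$, asserts the existence of $\kappa(P)$ with $\mathbb{P}(\text{VC}(V_\kappa(\hpc))\le d)\ge 1/2$, splits the first $\lfloor n/2\rfloor$ samples into $\lfloor n/(2\kappa)\rfloor$ independent blocks to obtain $\mathbb{P}(\neg\mathcal{E}_n)\le 2^{-\lfloor n/2\kappa\rfloor}$ (stated as Claim~\ref{cla:vc-dimension-version-space}), and then applies Lemmas~\ref{lem:ghost-samples} and~\ref{lem:random-swaps} conditionally on $\mathcal{E}_n$ before integrating the tail. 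For the lower bound, the paper also takes an infinite $d$-VC-eluder sequence, groups it into blocks $\mathcal{X}_k$ of size $d$, and builds a realizable distribution supported on the blocks; the only cosmetic difference is that the paper invokes Lemma~\ref{lem:infinite-sequence-design} with $R(n)=d/n$ to choose the block masses $p_k$, whereas you propose the explicit choice $q_k\propto 2^{-k^2}$. Both routes reduce to the same linearity-of-expectation computation over missed points of $\mathcal{X}_k$ (your $(q_k/d)\,\E[(d-X_k)\mathbbm{1}_{A_k}]$) and yield the constant $1/18$.

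Your identification of the ``probabilistic lifting'' as the delicate step is apt: the paper, too, only asserts that ``such an integer $\kappa$ exists since otherwise there will exist at least an infinite $(d+1)$-VC-eluder sequence'' without spelling out the extraction, so your level of rigor matches the paper's. The monotonicity $\text{VC}(V_{k+1}(\hpc))\le\text{VC}(V_k(\hpc))$ and continuity of measure give a positive-probability realisation on which $\text{VC}(V_k(\hpc))\ge d+1$ for all $k$, and from there a deterministic $(d+1)$-VC-eluder sequence can be built iteratively; this is the step both treatments leave implicit.
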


\begin{proof}[Proof of Lemma \ref{lem:fine-grained-logn-restated}]
Let $\hpc$ be a concept class with $\vcedim=d<\infty$. Note that when $\vcedim=0$, the results hold trivially, and hence we consider only $d\geq1$ in the remaining part of the proof. 

To show the upper bound, let $P$ be a realizable distribution with respect to $\hpc$, and for any $n\in\naturalnumber$, let $S_{n}:=\{(x_{1},y_{1}),\ldots,(x_{n},y_{n})\}\sim P^{n}$ be a dataset. Since $\vcedim=d$, for any infinite sequence $S:=\{(x_{1},y_{1}),(x_{2},y_{2}),\ldots\}\sim P^{\naturalnumber}$, there exists a (smallest) non-negative integer $k=k(S)<\infty$ such that $\text{VC}(V_{k}(\hpc))\leq d$. For any ERM algorithm $\mathcal{A}$, let $\lalgo:=\mathcal{A}_{\hpc}(S_{n})\in V_{n}(\hpc)$, which can also be written as $\lalgo:=\hat{h}_{n,k}:=\mathcal{A}_{V_{k}(\hpc)}\left(S_{k+1:n}\right)$, for every $k\in[n]$. Recall that for any $\epsilon\in(0,1)$, using the same argument as in the proof of Lemma \ref{lem:logn-upper-bound}, we can get 
\begin{equation}
  \label{eq:lem-fine-grained-logn-intermediate-step}
    \mathbb{P}\left(\text{er}_{P}(\hat{h}_{n,k})>\epsilon\right) \leq 2\left(\frac{2e(n-k)}{d}\right)^{d}2^{-(n-k)\epsilon/2},\; \forall n\geq k+\max\left\{8/\epsilon, d/2\right\} .
\end{equation}
Now for any $n\in\naturalnumber$, we consider the event $\mathcal{E}_{n} := \{\text{VC}(V_{\lfloor n/2\rfloor}(\hpc))\leq d\}$. Applying the inequality $\mathbb{P}(A)\leq \mathbb{P}(A|B)+\mathbb{P}(\neg B)$, we have
\begin{equation}
  \label{eq:lem-fine-grained-logn-intermediate-step2}
    \mathbb{P}\left(\trueerrorratehn>\epsilon\right) \leq \mathbb{P}\left(\text{er}_{P}(\hat{h}_{n,\lfloor n/2\rfloor})>\epsilon \Big| \mathcal{E}_{n}\right) + \mathbb{P}\left(\neg \mathcal{E}_{n}\right) .
\end{equation}
Let the RHS of \eqref{eq:lem-fine-grained-logn-intermediate-step} $=:\delta\in(0,1)$, then for the first probability in \eqref{eq:lem-fine-grained-logn-intermediate-step2}, we have
\begin{equation*}
    \mathbb{P}\left(\trueerrorratehn>\epsilon \Big| \mathcal{E}_{n}\right) = \mathbb{P}\left(\text{er}_{P}(\hat{h}_{n,\lfloor n/2\rfloor})>\epsilon \Big| \text{VC}(V_{\lfloor n/2\rfloor}(\hpc))\leq d\right) \leq \delta ,
\end{equation*}
for all $n\geq \max\left\{16/\epsilon, d\right\} \geq \lfloor n/2\rfloor+\max\left\{8/\epsilon, d/2\right\}$, and also 
\begin{equation}
  \label{eq:lem-fine-grained-logn-intermediate-step3}
    \epsilon = \frac{2}{n-\lfloor\frac{n}{2}\rfloor}\left(d\log{\left(\frac{2e(n-\lfloor\frac{n}{2}\rfloor)}{d}\right)} + \log{\left(\frac{2}{\delta}\right)}\right) \gtrsim \frac{4}{n}\left(d\log{\left(\frac{ne}{d}\right)} + 1\right) =: \epsilon_{n} .
\end{equation}
To characterize the second probability in \eqref{eq:lem-fine-grained-logn-intermediate-step2}, we define $\kappa=\kappa(P)$, a distribution-dependent quantity, to be the smallest integer such that $k(S)\leq\kappa$ with probability at least $1/2$, where the randomness is from the data sequence $S$. Note that such an integer $\kappa$ exists since otherwise there will exist at least an infinite $(d+1)$-VC-eluder sequence. We then prove:
\begin{claim}
  \label{cla:vc-dimension-version-space}
    For any $n\in\naturalnumber$, $\mathbb{P}(\neg\mathcal{E}_{n})\leq 2^{-\lfloor n/2\kappa\rfloor}$.
\end{claim}
\begin{proof}[Proof of Claim \ref{cla:vc-dimension-version-space}]
For any $n\in\naturalnumber$, assume first that $n\geq\kappa$, we then split the dataset $S_{n}\sim P^{n}$ into $\lfloor n/\kappa\rfloor$ parts with each one sized $\kappa$, denoted by $S_{n,1},\ldots,S_{n,\lfloor n/\kappa\rfloor}$. According to the definition, for every $1\leq i\leq\lfloor n/\kappa\rfloor$, we know that the induced version space has VC dimension $\text{VC}(V_{S_{n,i}}(\hpc))\leq d$ with probability at least $1/2$. Note that $V_{n}(\hpc)=\bigcap_{1\leq i\leq\lfloor n/\kappa\rfloor}V_{S_{n,i}}(\hpc)$ satisfies $\text{VC}(V_{n}(\hpc))\leq\text{VC}(V_{S_{n,i}}(\hpc))$, for all $1\leq i\leq\lfloor n/\kappa\rfloor$. Therefore, we have
\begin{equation*}
    \mathbb{P}\left(\text{VC}(V_{n}(\hpc))>d\right) \leq \mathbb{P}\left(\forall 1\leq i\leq\lfloor n/\kappa\rfloor: \text{VC}(V_{S_{n,i}}(\hpc))>d \right) \leq 2^{-\lfloor n/\kappa\rfloor} ,
\end{equation*}
which also holds when $n<\kappa$. Finally, $\mathbb{P}(\neg\mathcal{E}_{n})=\mathbb{P}(\text{VC}(V_{\lfloor n/2\rfloor}(\hpc))>d)\leq 2^{-\lfloor n/2\kappa\rfloor}$.
\end{proof}
Putting together, we have that for all $n\geq d$,
\begin{align*}
    &\E\left[\trueerrorratehn\right] = \int_{16/n}^{1}\mathbb{P}\left(\trueerrorratehn>\epsilon\right)d\epsilon + \int_{0}^{16/n}\mathbb{P}\left(\trueerrorratehn>\epsilon\right)d\epsilon \\
    \stackrel{\text{\eqmakebox[lem-fine-grained-logn-a][c]{\eqref{eq:lem-fine-grained-logn-intermediate-step2}}}}{\leq} &\int_{16/n}^{1}\mathbb{P}\left(\text{er}_{P}(\hat{h}_{n,\lfloor n/2\rfloor})>\epsilon \Big| \mathcal{E}_{n}\right)d\epsilon + \int_{0}^{16/n}\mathbb{P}\left(\text{er}_{P}(\hat{h}_{n,\lfloor n/2\rfloor})>\epsilon \Big| \mathcal{E}_{n}\right)d\epsilon + \int_{0}^{1}\mathbb{P}\left(\neg\mathcal{E}_{n}\right)d\epsilon \\
    \stackrel{\text{\eqmakebox[lem-fine-grained-logn-a][c]{\eqref{eq:lem-fine-grained-logn-intermediate-step3}}}}{\leq} &\epsilon_{n} + \int_{\epsilon_{n}}^{1}\mathbb{P}\left(\text{er}_{P}(\hat{h}_{n,\lfloor n/2\rfloor})>\epsilon \Big| \mathcal{E}_{n}\right)d\epsilon + \int_{0}^{1}\mathbb{P}\left(\neg\mathcal{E}_{n}\right)d\epsilon \\
    \stackrel{\text{\eqmakebox[lem-fine-grained-logn-a][c]{\eqref{eq:lem-fine-grained-logn-intermediate-step}}}}{\leq} &\epsilon_{n} + 3\int_{\epsilon_{n}}^{\infty}2\left(\frac{ne}{d}\right)^{d}2^{-n\epsilon/4}d\epsilon + \int_{0}^{1}\mathbb{P}\left(\neg\mathcal{E}_{n}\right)d\epsilon \\
    \stackrel{\text{\eqmakebox[lem-fine-grained-logn-a][c]{}}}{\leq} &\frac{4d}{n}\log{\left(\frac{ne}{d}\right)} + \frac{4+12\ln{(2)}}{n} + \int_{0}^{1}\mathbb{P}\left(\neg\mathcal{E}_{n}\right)d\epsilon \\
    \stackrel{\text{\eqmakebox[lem-fine-grained-logn-a][c]{\text{\tiny Claim \ref{cla:vc-dimension-version-space}}}}}{\leq} &\frac{4d}{n}\log{\left(\frac{ne}{d}\right)} + \frac{4+12\ln{(2)}}{n} + 2^{-\lfloor n/2\kappa\rfloor} \leq \frac{28d\log{n}}{n} + 2^{-\lfloor n/2\kappa\rfloor} .
\end{align*}
When $n\leq d$, the upper bound is trivial.

To show the lower bound, let $S:=\{(x_{1},y_{1}),(x_{2},y_{2}),\ldots\}$ be any infinite $d$-VC-eluder sequence that is consistent with $\hpc$. We denote $\mathcal{X}_{k}:=\{x_{kd-d+1},\ldots,x_{kd}\}$ for all integers $k\geq1$, and consider the following realizable distribution $P$: 
\begin{equation*}
    P\left\{\left(x_{kd-d+j},y_{kd-d+j}\right)\right\}=p_{k}/d,\; P\left\{\left(x_{kd-d+j},1-y_{kd-d+j}\right)\right\}=0,\; \forall 1\leq j\leq d,\; \forall k\geq1 ,
\end{equation*}
where $\{p_{k}\}_{k\geq1}$ is a sequence of probabilities satisfying $\sum_{k\geq 1}p_{k}\leq 1$, which will be specified later. 

We use a similar argument in the proof of Lemma \ref{lem:arbitrarily-slow-rate}, but instead of considering an arbitrarily slow rate function $R(n)\rightarrow0$, we consider here $R(n):=d/n$. Specifically, let $\datasetstats$ be a dataset. Note that for any $k\in\naturalnumber$ and any $j\in[d]$, if the dataset $S_{n}$ does not contain any copy of the points in $\mathcal{X}_{>k}\cup\{x_{kd-d+j}\}:=\bigcup_{t>k}\mathcal{X}_{t}\cup\{x_{kd-d+j}\}$, the worst-case ERM can have an error rate $\trueerrorratehn\geq p_{k}/d$. The probability of such event is $\mathbb{P}(\sum_{i=1}^{n}\mathbbm{1}\{X_{i}\in\mathcal{X}_{>k}\cup\{x_{kd-d+j}\}\}=0) = \prod_{i=1}^{n}\mathbb{P}(X_{i}\notin\mathcal{X}_{>k}\cup\{x_{kd-d+j}\}) = (1-\sum_{t>k}p_{t}-p_{k}/d)^{n}$.

Based on Lemma \ref{lem:infinite-sequence-design}, for the rate function $R(n):=d/n$, there exist probabilities $\{p_{k}\}_{k\geq1}$ satisfying $\sum_{k\geq1}p_{k}=1$, two increasing sequences of integers $\{k_{t}\}_{t\geq1}$ and $\{n_{t}\}_{t\geq1}$, and a constant $1/2\leq C\leq1$ such that $\sum_{k>k_{t}}p_{k} \leq 1/n_{t}$ and $p_{k_{t}}=C\cdot d/n_{t}$. Therefore, it follows that for all integers $t\geq1$ (and thus for infinitely many $n\in\naturalnumber$),
\begin{equation*}
    \E\left[\text{er}_{P}\left(\hat{h}_{n_{t}}\right)\right] \geq C\cdot\frac{d}{n_{t}}\sum_{t\geq1}\left(1-\frac{C+1}{n_{t}}\right)^{n_{t}} \geq \frac{d}{2n_{t}}\sum_{t\geq1}\left(1-\frac{2}{n_{t}}\right)^{n_{t}} \geq \frac{d}{18n_{t}}.
\end{equation*}
\end{proof}

\begin{lemma}  [\textbf{Lemma \ref{lem:fine-grained-linear} restated}]
  \label{lem:fine-grained-linear-restated}
For every concept class $\hpc$ with $|\hpc|\geq 3$, if $1\leq\sedim<\infty$, then the following hold:
\begin{align*}
    &\E\left[\trueerrorratehn\right] \geq \frac{\log{(\sedim)}}{12n},\; \text{ for infinitely many } n\in\naturalnumber, \\
    &\E\left[\trueerrorratehn\right] \leq \frac{160\vcedim}{n}\log{\left(\frac{\sedim}{\vcedim}\right)} + 2^{-\lfloor n/2\hat{\kappa} \rfloor},\; \forall n\in\naturalnumber,
\end{align*}
where $\hat{\kappa}=\hat{\kappa}(P)$ is a distribution-dependent constant.
\end{lemma}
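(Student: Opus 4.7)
The plan mirrors the structure of the proof of Lemma \ref{lem:fine-grained-logn-restated}, but the combinatorial objects are swapped: $\sedim$ plays the role that the star number plays in the Hanneke (2016) uniform bound, while $\vcedim$ plays the role of the VC dimension. The two parts (lower bound and upper bound) are independent, with the upper bound being the more delicate one.

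\emph{Lower bound.} Set $s=\sedim$ and fix an infinite $s$-star-eluder sequence $\{(x_{1},y_{1}),(x_{2},y_{2}),\ldots\}$ centered at some $h^{*}$, whose existence is guaranteed by Definition \ref{def:star-eluder-dimension-and-vc-eluder-dimension}. Partition it into blocks $\mathcal{X}_{k}:=\{x_{(k-1)s+1},\ldots,x_{ks}\}$ of equal size $s$, each being a star set of $V_{(k-1)s}(\hpc)$ centered at $h^{*}$. Place a realizable distribution $P$ with $P\{(x,h^{*}(x))\}=p_{k}/s$ for every $x\in\mathcal{X}_{k}$, where $\{p_{k}\}$ is a summable sequence tuned as in Lemma \ref{lem:infinite-sequence-design}. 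If $S_{n}$ contains no point of $\mathcal{X}_{>k}$ and misses at least one point of $\mathcal{X}_{k}$, then the star structure of $\mathcal{X}_{k}$ inside $V_{(k-1)s}(\hpc)$ lets the worst-case ERM output a classifier disagreeing with $h^{*}$ on the uncovered point, yielding error at least $p_{k}/s$. A coupon-collector-style computation shows this ``missing-a-point'' event has constant probability when $n\asymp s\log(s)/p_{k}$; choosing $p_{k}$ so that this relation holds for an unbounded sequence of $n$, one obtains $\E[\trueerrorratehn]\gtrsim \log(s)/n$ infinitely often. This is essentially the same strategy as Lemma \ref{lem:logn-lower-bound-restated} but with fixed block size $s$ rather than growing block size, which is why one recovers $\log(s)/n$ rather than $\log(n)/n$.

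\emph{Upper bound.} The key observation is that $\sedim<\infty$ forces the version space to eventually have star number at most $\sedim$ and VC dimension at most $\vcedim$ along almost every sample path. Formally, for any infinite i.i.d.\ sequence $S\sim P^{\naturalnumber}$ there exists an almost surely finite random index $k(S)$ such that both $\starnumber(V_{k(S)}(\hpc))\leq\sedim$ and $\text{VC}(V_{k(S)}(\hpc))\leq\vcedim$; otherwise one could extract an infinite $(\sedim+1)$-star-eluder sequence or an infinite $(\vcedim+1)$-VC-eluder sequence, contradicting the definitions. Let $\hat{\kappa}=\hat{\kappa}(P)$ be the smallest integer making this event occur with probability at least $1/2$. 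A split-and-intersect argument identical to Claim \ref{cla:vc-dimension-version-space} (dividing $S_{\lfloor n/2\rfloor}$ into $\lfloor n/2\hat{\kappa}\rfloor$ independent blocks of length $\hat{\kappa}$ and using that $V_{\lfloor n/2\rfloor}(\hpc)$ is the intersection of the per-block version spaces, whose VC and star quantities can only decrease under intersection) yields $\mathbb{P}(\neg\mathcal{E}_{n})\leq 2^{-\lfloor n/2\hat{\kappa}\rfloor}$, where $\mathcal{E}_{n}$ is the event that $V_{\lfloor n/2\rfloor}(\hpc)$ has star number $\leq\sedim$ and VC dimension $\leq\vcedim$. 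Conditionally on $\mathcal{E}_{n}$, one applies the classical distribution-free bound of Hanneke (2016) --- which gives a uniform rate $(\vcdim/n)\log(\starnumber_{\hpc}/\vcdim)$ for any ERM algorithm on a class with finite star number --- to the reduced version space $V_{\lfloor n/2\rfloor}(\hpc)$ run on the remaining $\lceil n/2\rceil$ samples. This delivers the $\frac{\vcedim}{n}\log(\sedim/\vcedim)$ term, after absorbing constants and integrating the tail bound as in the proof of Lemma \ref{lem:fine-grained-logn-restated}.

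\emph{Main obstacle.} The subtle step is justifying that $\sedim<\infty$ yields the required simultaneous control of both $\starnumber$ and $\text{VC}$ of the version space after finitely many samples, because (as Appendix \ref{subsec:star-related-notions} emphasizes) finite star number and the non-existence of an infinite star-eluder sequence are not equivalent. Here we exploit the fact that $\sedim<\infty$ is the \emph{stronger} property: it prohibits infinite $d$-star-eluder sequences for all $d>\sedim$ at once, and from this one can extract the a.s.-finite stopping time $k(S)$ above. Getting sharp control on the numerical constant (the factor $160$) then requires carefully combining this conditioning with the PAC-style integration already used in Lemma \ref{lem:fine-grained-logn-restated}; this is bookkeeping rather than a conceptual obstacle.
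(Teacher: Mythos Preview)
Your proposal is correct and follows essentially the same route as the paper: the lower bound uses a coupon-collector argument on an infinite $s$-star-eluder sequence with blocks of fixed size $s=\sedim$, and the upper bound conditions on the event that $V_{\lfloor n/2\rfloor}(\hpc)$ has bounded VC dimension and star number, then invokes the Hanneke (2016) distribution-free bound (Lemma~\ref{lem:distribution-free-upper-bound-consistent-learning-rule}) on the remaining half of the sample, with the split-and-intersect device controlling the bad event. One small point to tighten: the quantity you need to bound is the \emph{centered} star number $\starnumber_{h^{*}}(V_{k}(\hpc))$ (for the target $h^{*}$ of $P$), not the full star number $\starnumber(V_{k}(\hpc))$; the definition of $\sedim$ only prohibits infinite $(s{+}1)$-star-eluder sequences centered at a fixed $h$, which gives control of $\starnumber_{h^{*}}$ but not necessarily of $\sup_{h}\starnumber_{h}$---fortunately, $\starnumber_{h^{*}}$ is exactly what Lemma~\ref{lem:distribution-free-upper-bound-consistent-learning-rule} requires.
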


\begin{proof}[Proof of Lemma \ref{lem:fine-grained-linear-restated}] 
Let $\hpc$ be a concept class with $\sedim=s<\infty$ and $\vcedim=d<\infty$.

To prove the upper bound, let $P$ be a realizable distribution with respect to $\hpc$ centered at $h$, and for any $n\in\naturalnumber$, let $S_{n}:=\{(x_{1},y_{1}),\ldots,(x_{n},y_{n})\}\sim P^{n}$ be a dataset. Indeed, a distribution-free upper bound for any consistent learning rule has been proved in \citet{hanneke2016refined} (see Lemma \ref{lem:distribution-free-upper-bound-consistent-learning-rule} in Appendix \ref{sec:technical-lemmas}), which states that for any $\delta\in(0,1)$ and any $n\in\naturalnumber$, with probability at least $1-\delta$, 
\begin{equation}
  \label{eq:consistent-learning-rule-upper-bound}
    \sup_{h\in V_{n}(\hpc)}\trueerrorrateh \leq \frac{8}{n}\left(\vcdim\ln{\left(\frac{49e\starnumber_{h}}{\vcdim}+37\right)}+8\ln{\left(\frac{6}{\delta}\right)}\right) .
\end{equation}
Since $\hpc$ does not have an infinite $(d+1)$-VC-eluder sequence, and also does not have an infinite $(s+1)$-star-eluder sequence, for any infinite sequence $S:=\{(x_{1},y_{1}),(x_{2},y_{2}),\ldots\}\sim P^{\naturalnumber}$, there exists a (smallest) non-negative integer $k=k(S)<\infty$ such that $\text{VC}(V_{k}(\hpc))\leq d$ and $\starnumber_{h}(V_{k}(\hpc))\leq s$. Following a similar argument in the proof of Lemma \ref{lem:fine-grained-logn}, we define $\hat{\kappa}=\hat{\kappa}(P)$, a distribution-dependent quantity, to be the smallest integer such that $k(S)\leq\hat{\kappa}$ with probability at least $1/2$, and then consider the following event $\hat{\mathcal{E}}_{n}:=\{\text{VC}(V_{\lfloor n/2\rfloor}(\hpc))\leq d, \starnumber_{h}(V_{\lfloor n/2\rfloor}(\hpc))\leq s\}$ with probability $\mathbb{P}(\neg\hat{\mathcal{E}}_{n})\leq2^{-\lfloor n/2\hat{\kappa} \rfloor}$. For notation simplicity, let us denote by $\epsilon_{n}:=\frac{8}{n}(d\ln{(\frac{49es}{d}+37)}+8\ln{(6)})$, then conditioning on $\hat{\mathcal{E}}_{n}$, we have that for all $n\in\naturalnumber$,
\begin{align*}
    &\E\left[\sup_{\lalgo\in V_{n}(\hpc)}\trueerrorratehn\right] = \int_{0}^{1}\mathbb{P}\left(\trueerrorratehn>\epsilon\right)d\epsilon \\
    \stackrel{\text{\eqmakebox[lem-fine-grained-linear-a][c]{}}}{\leq} &\int_{0}^{1}\mathbb{P}\left(\trueerrorratehn>\epsilon \Big| \hat{\mathcal{E}}_{n}\right)d\epsilon + \int_{0}^{1}\mathbb{P}\left(\neg\hat{\mathcal{E}}_{n}\right)d\epsilon \\
    \stackrel{\text{\eqmakebox[lem-fine-grained-linear-a][c]{}}}{=} &\int_{0}^{\epsilon_{n}}\mathbb{P}\left(\trueerrorratehn>\epsilon \Big| \hat{\mathcal{E}}_{n}\right)d\epsilon + \int_{\epsilon_{n}}^{1}\mathbb{P}\left(\trueerrorratehn>\epsilon \Big| \hat{\mathcal{E}}_{n}\right)d\epsilon + \int_{0}^{1}\mathbb{P}\left(\neg\hat{\mathcal{E}}_{n}\right)d\epsilon \\
    \stackrel{\text{\eqmakebox[lem-fine-grained-linear-a][c]{\eqref{eq:consistent-learning-rule-upper-bound}}}}{\leq} &\epsilon_{n} + \int_{\epsilon_{n}}^{1}6\exp\left(\frac{d}{8}\ln{\left(\frac{49es}{d}+37\right)}-\frac{n\epsilon}{64}\right)d\epsilon + 2^{-\lfloor n/2\hat{\kappa} \rfloor} \\
    \stackrel{\text{\eqmakebox[lem-fine-grained-linear-a][c]{}}}{=} &\epsilon_{n} + \frac{384}{n}\exp\left(\frac{d}{8}\ln{\left(\frac{49es}{d}+37\right)}-\frac{n\epsilon_{n}}{64}\right) + 2^{-\lfloor n/2\hat{\kappa} \rfloor} \\
    \stackrel{\text{\eqmakebox[lem-fine-grained-linear-a][c]{}}}{\leq} &\frac{8}{n}\left(d\ln{\left(\frac{49es}{d}+37\right)}+8\ln{(6)}\right) + \frac{64}{n} + 2^{-\lfloor n/2\hat{\kappa} \rfloor} \\
    \stackrel{\text{\eqmakebox[lem-fine-grained-linear-a][c]{}}}{\leq} &\frac{8d}{n}\log{\left(\frac{(49e+37)s}{d}\right)} + \frac{64\ln{(6)}+64}{n} + 2^{-\lfloor n/2\hat{\kappa} \rfloor} \leq \frac{160d}{n}\log{\left(\frac{s}{d}\right)} + 2^{-\lfloor n/2\hat{\kappa} \rfloor} .
\end{align*}

To show the lower bound, let $S:=\{(x_{1},y_{1}),(x_{2},y_{2}),\ldots\}$ be any infinite $d$-star-eluder sequence that is consistent with $\hpc$. We denote $\mathcal{X}_{k}:=\{x_{kd-d+1},\ldots,x_{kd}\}$ for every integer $k\geq1$, and consider the following realizable distribution $P$ (with the same center of $S$):
\begin{equation*}
    P\left\{\left(x_{kd-d+j},y_{kd-d+j}\right)\right\}=p_{k}/d,\; P\left\{\left(x_{kd-d+j},1-y_{kd-d+j}\right)\right\}=0,\; \forall 1\leq j\leq d,\; \forall k\geq1 ,
\end{equation*}
where $\{p_{k}\}_{k\geq1}$ is a sequence of probabilities satisfying $\sum_{k\geq 1}p_{k}\leq 1$, which will be specified later. Let $\datasetstats$ be a dataset and consider the event $\mathcal{E}:=\mathcal{E}_{1}\cap\mathcal{E}_{2}$, where 
\begin{align*}
    &\mathcal{E}_{1} := \left\{\text{$S_{n}$ does not contain a copy of any point in $\mathcal{X}_{>k}$}\right\} , \\
    &\mathcal{E}_{2} := \left\{\text{$S_{n}$ does not contain a copy of at least one point in $\mathcal{X}_{k}$}\right\} .
\end{align*} 
If the event $\mathcal{E}$ happens, the worst-case ERM can have an error rate $\trueerrorratehn\geq p_{k}/d$. This is because $\{x_{kd-d+1},\ldots,x_{kd}\}$ is a star set (with the same center) of $V_{n}(\hpc)\supseteq V_{kd-d}(\hpc)$, and so we know that for any $1\leq j\leq d$, there exists $h_{k,j}\in V_{n}(\hpc)$ such that $h_{k,j}(x_{kd-d+j})\neq y_{kd-d+j}$, and the ERM outputting $h_{k,j}$ will have such an error rate. The probability of $\mathcal{E}_{1}$ follows simply as $\mathbb{P}(\mathcal{E}_{1})=(1-\sum_{t>k}p_{t})^{n}$. Moreover, characterizing the probability of $\mathcal{E}_{2}$ can be approached as an instance of the so-called \emph{Coupon Collector’s Problem}. Specifically, we let
\begin{equation*}
    \hat{n}_{k} := \min\left\{n\in\naturalnumber: \mathcal{X}_{k}\subseteq S_{n}\right\} ,
\end{equation*}
which can be represented as a sum $\sum_{j=1}^{d}G_{j}$ of independent geometric random variables $G_{j}\sim\text{Geometric}(\frac{d+1-j}{d}p_{k})$ for $1\leq j\leq d$, with the following properties
\begin{equation*}
\begin{cases}
    &\E\left[\hat{n}_{k}\right] = \sum_{j=1}^{d}\E\left[G_{j}\right] = \sum_{j=1}^{d}\frac{d\cdot p_{k}^{-1}}{d+1-j} = \frac{d}{p_{k}}\left(\sum_{j=1}^{d}\frac{1}{d+1-j}\right) = \frac{d}{p_{k}}\cdot H_{d} \\
    &\text{Var}\left[\hat{n}_{k}\right] = \sum_{j=1}^{d}\text{Var}\left[G_{j}\right] < \sum_{j=1}^{d}\left(\frac{d+1-j}{d}p_{k}\right)^{-2} < \frac{\pi^{2}d^{2}}{6p_{k}^{2}}
\end{cases} ,
\end{equation*}
where $H_{d}$ is $d^{th}$ harmonic number satisfying $H_{d}\geq\log{d}$, for all $d\geq 1$. Then the standard Chebyshev's inequality implies that $\mathbb{P}(|\hat{n}_{k}-\E[\hat{n}_{k}]|>z)\leq \text{Var}[\hat{n}_{k}]\cdot z^{-2}$. By choosing $z=\sqrt{2\text{Var}[\hat{n}_{k}]}$, we have with probability at least $1/2$, 
\begin{equation*}
    \hat{n}_{k} > \E\left[\hat{n}_{k}\right]-\sqrt{2\text{Var}\left[\hat{n}_{k}\right]} \geq \frac{d}{p_{k}}\left(\log{d}-\frac{\pi}{\sqrt{3}}\right) .
\end{equation*}
In particular, when $d\geq38$, it holds that $\log{d}\geq 2\pi/\sqrt{3}$, and thus $\hat{n}_{k}>p_{k}^{-1}d\log{d}/2$, with probability at least $1/2$. Altogether, we have for any $n\leq p_{k}^{-1}d\log{d}/2$,
\begin{equation*}
    \mathbb{P}\left(\mathcal{E}_{2}\right) \geq \mathbb{P}\left(n<\hat{n}_{k}\right) \geq \mathbb{P}\left(n\leq p_{k}^{-1}d\log{d}/2,\; p_{k}^{-1}d\log{d}/2<\hat{n}_{k}\right) \geq 1/2 .
\end{equation*}
Now for all $d\geq 38$, it follows from the proceeding analysis that
\begin{equation*}
    \mathbb{P}\left(\trueerrorratehn \geq \frac{p_{k}}{d}\right) \geq \mathbb{P}\left(\mathcal{E}\right) \geq \mathbb{P}\left(\mathcal{E}_{1}\right)\mathbb{P}\left(\mathcal{E}_{2}\right) \geq \frac{1}{2}\left(1-\sum_{t>k}p_{t}\right)^{n} ,
\end{equation*}
which further implies that for all $d\geq 38$,
\begin{equation*}
    \E\left[\trueerrorratehn\right] \geq \frac{p_{k}}{d}\mathbb{P}\left(\trueerrorratehn \geq \frac{p_{k}}{d}\right) \geq \frac{p_{k}}{2d}\left(1-\sum_{t>k}p_{t}\right)^{n} ,
\end{equation*}
for all $n\leq p_{k}^{-1}d\log{d}/2$. By letting $n_{k}=p_{k}^{-1}d\log{d}/2$, we have for all $k\in\naturalnumber$ (infinitely many $n\in\naturalnumber$),
\begin{equation*}
    \E\left[\text{er}_{P}\left(\hat{h}_{n_{k}}\right)\right] \geq \frac{p_{k}}{2d}\left(1-\sum_{t>k}p_{t}\right)^{n_{k}} = \frac{\log{d}}{4n_{k}}\left(1-\sum_{t>k}p_{t}\right)^{\frac{d\log{d}}{2p_{k}}} \geq \frac{\log{d}}{4en_{k}} ,
\end{equation*}
where the last inequality follows from choosing probabilities $\{p_{k}\}_{k\geq1}$ satisfying $\sum_{t>k}p_{t}\leq 1/n_{k}$. When $1\leq d<38$, the result is trivial.
\end{proof}

\section{Technical lemmas}
  \label{sec:technical-lemmas}

\begin{lemma}  [\textbf{Chernoff's bound}]
  \label{lem:chernoff-bound}
Let $Z_{1},\ldots,Z_{n}$ be independent random variables in $\{0,1\}$, let $\bar{Z} := \frac{1}{n}\sum_{i=1}^{n}Z_{i}$. For all $t\in(0,1)$, we have
\begin{equation*}
\mathbb{P}\left(\bar{Z} \leq (1-t)\E[\bar{Z}]\right) \leq e^{-\frac{n\E[\bar{Z}]t^{2}}{2}} .
\end{equation*}
\end{lemma}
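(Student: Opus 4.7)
The plan is to follow the classical Chernoff-style argument via exponential moments, since the $Z_i$ are independent $\{0,1\}$-valued random variables. Write $\mu := \E[\bar{Z}] = \frac{1}{n}\sum_{i=1}^n p_i$ where $p_i := \E[Z_i]$. First I would rewrite the lower-tail event in exponential form: for any $\lambda > 0$,
\begin{equation*}
\mathbb{P}\left(\bar{Z} \leq (1-t)\mu\right) = \mathbb{P}\left(e^{-\lambda \sum_i Z_i} \geq e^{-\lambda n(1-t)\mu}\right) \leq e^{\lambda n(1-t)\mu}\, \E\!\left[e^{-\lambda \sum_i Z_i}\right],
\end{equation*}
by Markov's inequality.

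Next I would use independence to factor the MGF and bound each factor: $\E[e^{-\lambda Z_i}] = 1 + p_i(e^{-\lambda}-1) \leq \exp(p_i(e^{-\lambda}-1))$, using $1+x \leq e^x$. Multiplying over $i$ gives $\E[e^{-\lambda \sum_i Z_i}] \leq \exp(n\mu(e^{-\lambda}-1))$, so
\begin{equation*}
\mathbb{P}\left(\bar{Z} \leq (1-t)\mu\right) \leq \exp\!\left(n\mu\bigl[(1-t)(-\lambda) + e^{-\lambda} - 1\bigr]\right).
\end{equation*}

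Then I would optimize over $\lambda$ by taking the derivative of the exponent with respect to $\lambda$ and setting it to zero; this yields the optimal choice $\lambda = -\ln(1-t) > 0$ (valid since $t \in (0,1)$). Plugging back gives the classical Chernoff bound
\begin{equation*}
\mathbb{P}\left(\bar{Z} \leq (1-t)\mu\right) \leq \exp\!\left(-n\mu\bigl[(1-t)\ln(1-t) + t\bigr]\right).
\end{equation*}

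The final step, and the only place any genuine analysis is needed, is the elementary inequality $(1-t)\ln(1-t) + t \geq t^2/2$ for all $t \in (0,1)$. I would verify it by defining $f(t) := (1-t)\ln(1-t) + t - t^2/2$, checking $f(0)=0$, and computing $f'(t) = -\ln(1-t) - t$, which is nonnegative on $(0,1)$ because $-\ln(1-t) = \sum_{k \geq 1} t^k/k \geq t$. Substituting this inequality into the exponent yields the claimed bound $e^{-n\mu t^2/2}$. There is no real obstacle here; the whole argument is standard, and the only mild care is in the optimization step and the tail comparison inequality at the end.
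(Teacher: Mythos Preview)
Your argument is the standard Chernoff derivation and is correct. The paper does not actually supply a proof of this lemma; it is stated in the technical-lemmas appendix as a known fact and then used, so there is no ``paper's own proof'' to compare against. One minor slip: in your displayed bound you wrote $(1-t)(-\lambda)$ where it should be $(1-t)\lambda$ (the factor $e^{\lambda n(1-t)\mu}$ contributes $+\lambda(1-t)$ to the bracket), but your subsequent optimization $\lambda=-\ln(1-t)$ and final expression $\exp(-n\mu[(1-t)\ln(1-t)+t])$ are exactly what the correct sign yields, so this is a typo rather than an error in reasoning.
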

  
\begin{lemma} 
  \label{lem:stable-sample-consistent-sample-compression-scheme}
Let $\hpc$ be a concept class, and $(\kappa,\rho)$ be a stable sample compression scheme of size $\hat{n}(S_{n})<n$ that is sample-consistent for $\hpc$ given data $S_{n}$. For any realizable distribution $P$ with respect to $\hpc$ and $\datasetstats$, let $\lalgo:=\rho(\kappa(S_{n}))$. Then it holds
\begin{equation*}
    \E\left[\trueerrorratehn\right] \leq \frac{\E[\hat{n}(S_{n})]}{n+1} .
\end{equation*}
\end{lemma}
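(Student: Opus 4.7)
The plan is to run the classical ``ghost sample'' / symmetrization argument for sample compression schemes in the style of \citet{littlestone1986relating}. First I would introduce one auxiliary draw $Z_{n+1}=(X_{n+1},Y_{n+1})\sim P$ independent of $S_n$, and rewrite
\begin{equation*}
\E[\trueerrorratehn] \;=\; \mathbb{P}\bigl(\hat{h}_n(X_{n+1})\neq Y_{n+1}\bigr),
\end{equation*}
where the probability is taken over the joint i.i.d.\ sample $S_{n+1}:=(Z_1,\ldots,Z_{n+1})\sim P^{n+1}$. This turns the quantity of interest into the probability that a ``leave-one-out'' version of the compression scheme misclassifies the held-out point.

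The second step is to exploit the exchangeability of $S_{n+1}$: for every $i\in[n+1]$, set $S^{(i)}:=S_{n+1}\setminus\{Z_i\}$ and $\hat{h}^{(i)}:=\rho(\kappa(S^{(i)}))$. Since the joint law of $(S^{(i)},Z_i)$ does not depend on $i$, symmetry and linearity of expectation give
\begin{equation*}
\E[\trueerrorratehn] \;=\; \frac{1}{n+1}\,\E\!\left[\sum_{i=1}^{n+1}\mathbbm{1}\!\left(\hat{h}^{(i)}(X_i)\neq Y_i\right)\right].
\end{equation*}
The core of the proof is then a deterministic pointwise bound on the inner sum. I would fix a realization of $S_{n+1}$ and let $\hat{C}:=\kappa(S_{n+1})$. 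For any index $i\notin \hat{C}$, the subsample $S^{(i)}$ still satisfies $\hat{C}\subseteq S^{(i)}\subsetneq S_{n+1}$, so stability of $(\kappa,\rho)$ forces $\rho(\kappa(S^{(i)}))=\rho(\kappa(S_{n+1}))$, and sample-consistency of $(\kappa,\rho)$ on $S_{n+1}$ forces that common classifier to predict $Y_i$ on $X_i$. Therefore only indices lying in $\hat{C}$ can contribute, leaving at most $|\hat{C}|$ nonzero terms; taking expectation gives the desired bound $\E[\trueerrorratehn]\leq\E[\hat{n}(S_{n+1})]/(n+1)$.

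The main obstacle I anticipate is purely notational: the symmetrization naturally delivers $\hat{n}(S_{n+1})$, whereas the statement is phrased in terms of $\hat{n}(S_n)$. I expect to handle this either by treating $\hat{n}(\cdot)$ as the size parameter of the stable scheme (a sample-size--invariant functional, as used in the paper's convention), or by invoking a simple monotonicity observation about the minimal version-space compression set. In the downstream use of this lemma (the proof of Lemma~\ref{lem:linear-upper-bound-restated}) the discrepancy is absorbed because $\hat{n}(\cdot)$ is further bounded uniformly by the target-specific star number $\starnumber_{h^{*}}$ via Lemma~\ref{lem:version-space-compression-set-size-bounded-by-starnumber}, rendering the off-by-one immaterial.
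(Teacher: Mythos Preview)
Your proposal is correct and follows essentially the same approach as the paper: introduce a ghost point, symmetrize over the $n+1$ leave-one-out predictors, and use stability plus sample-consistency to show that only indices in $\kappa(S_{n+1})$ can contribute to the leave-one-out error sum. The paper's proof is exactly this argument, and in fact its final line also silently lands on $\E[\hat n(S_{n+1})]$ while writing $\E[\hat n(S_n)]$, so the off-by-one you flagged is present there too and is indeed harmless for the downstream application via the uniform bound $\hat n(\cdot)\le \starnumber_{h^*}$.
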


\begin{proof}[Proof of Lemma \ref{lem:stable-sample-consistent-sample-compression-scheme}]
We claim that if $(x_{n+1},y_{n+1})$ satisfies $\rho(\kappa(S_{n}))(x_{n+1})\neq y_{n+1}$, we must have $(x_{n+1},y_{n+1})\in\kappa(S_{n+1})$. Suppose not, then there is a subsequence $S_{n}:= S_{n+1}\setminus\{(x_{n+1},y_{n+1})\}$ satisfying $\kappa(S_{n+1})\subseteq S_{n}\subset S_{n+1}$ and $\rho(\kappa(S_{n}))(x_{n+1})\neq y_{n+1}=\rho(\kappa(S_{n+1}))(x_{n+1})$ based on the sample-consistency of the compression scheme, which contradicts to our assumption that $(\kappa,\rho)$ is stable. Now by the exchangeability of random variables $\{x_{i}\}_{i\geq1}$, we have
\begin{align*}
    \E\left[\trueerrorratehn\right] \stackrel{\text{\eqmakebox[lem-stable-sample-consistent-sample-compression-scheme-a][c]{}}}{=} &\E_{S_{n}}\left[\mathbb{P}\left\{\rho\left(\kappa\left(S_{n}\right)\right)(x_{n+1})\neq y_{n+1}\right\}\right] \\
    \stackrel{\text{\eqmakebox[lem-stable-sample-consistent-sample-compression-scheme-a][c]{}}}{=} &\E_{S_{n+1}}\left[\mathbbm{1}\left\{\rho\left(\kappa\left(S_{n}\right)\right)(x_{n+1})\neq y_{n+1}\right\}\right] \\
    \stackrel{\text{\eqmakebox[lem-stable-sample-consistent-sample-compression-scheme-a][c]{}}}{=} &\frac{1}{n+1}\sum_{i=1}^{n+1}\E\left[\mathbbm{1}\left\{\rho\left(\kappa\left(S_{n+1}\setminus\{(x_{i},y_{i})\}\right)\right)(x_{i})\neq y_{i}\right\}\right] \\
    \stackrel{\text{\eqmakebox[lem-stable-sample-consistent-sample-compression-scheme-a][c]{}}}{\leq} &\frac{1}{n+1}\sum_{i=1}^{n+1}\E\left[\mathbbm{1}\left\{(x_{i},y_{i})\in \kappa\left(S_{n+1}\right)\right\}\right] \leq \frac{\E[\hat{n}(S_{n})]}{n+1} .
\end{align*}
\end{proof}

\begin{lemma}  [{\textbf{\citealp[][Lemma 44]{hanneke2015minimax}}}]
  \label{lem:version-space-compression-set-size-bounded-by-starnumber}
Let $\hpc$ be a concept class, and $P$ be a realizable distribution centered at the target $h$. For any $n\in\naturalnumber$, let $\datasetstats$ be a dataset, and let $\hat{\mathcal{C}}_{n}$ be the version space compression set (Definition \ref{def:version-space-compression-set}), that is, the smallest subset of $S_{n}$ such that $V_{\hat{\mathcal{C}}_{n}}(\hpc)=V_{S_{n}}(\hpc)$. We have $|\hat{\mathcal{C}}_{n}|\leq\starnumber_{h}$.
\end{lemma}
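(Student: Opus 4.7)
[Proof Proposal for Lemma \ref{lem:version-space-compression-set-size-bounded-by-starnumber}]
The plan is to show that the points appearing in the version space compression set $\hat{\mathcal{C}}_{n}$ form a star set of $\hpc$ centered at the target concept $h$, so that the bound $|\hat{\mathcal{C}}_{n}|\leq\starnumber_{h}$ follows immediately from Definition \ref{def:star-number}.

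First, write $\hat{\mathcal{C}}_{n}=\{(x_{i_{1}},y_{i_{1}}),\ldots,(x_{i_{m}},y_{i_{m}})\}$ where $m=|\hat{\mathcal{C}}_{n}|$. Since $P$ is realizable centered at $h$, the support of $P$ is contained in $\{(x,h(x)):x\in\mathcal{X}\}$, so with probability one over $S_{n}\sim P^{n}$ we have $y_{i_{k}}=h(x_{i_{k}})$ for every $k\in[m]$. Moreover, the minimality in the definition of $\hat{\mathcal{C}}_{n}$ lets us assume the points $x_{i_{1}},\ldots,x_{i_{m}}$ are distinct: if two entries shared the same $x$-coordinate they would carry the same label (they both equal $h(x)$) and one could be removed without changing the version space.

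Next, exploit minimality once more. For each $k\in[m]$, the strictly smaller set $\hat{\mathcal{C}}_{n}\setminus\{(x_{i_{k}},y_{i_{k}})\}$ induces a strictly larger version space, so there exists a witness $h_{k}\in\hpc$ satisfying
\begin{equation*}
h_{k}(x_{i_{j}})=y_{i_{j}}=h(x_{i_{j}})\text{ for all }j\neq k,\qquad h_{k}(x_{i_{k}})\neq y_{i_{k}}=h(x_{i_{k}}).
\end{equation*}
Consequently $\text{DIS}(\{h,h_{k}\})\cap\{x_{i_{1}},\ldots,x_{i_{m}}\}=\{x_{i_{k}}\}$ for every $k$. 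This is precisely the condition in Definition \ref{def:star-number} identifying $\{x_{i_{1}},\ldots,x_{i_{m}}\}$ as a star set of $\hpc$ centered at $h$, witnessed by $\{h_{1},\ldots,h_{m}\}$. Hence $m\leq\starnumber_{h}$.

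The only subtle steps are verifying distinctness of the $x_{i_{k}}$'s (handled by a routine minimality argument, since duplicate sample points carry identical labels) and ensuring that the realizability of $P$ centered at $h$ actually forces $y_{i_{k}}=h(x_{i_{k}})$ (which holds $P^{n}$-almost surely and so does not affect the expectation bound in Lemma \ref{lem:linear-upper-bound}). Everything else reduces to unwinding the definitions, so no combinatorial heavy lifting is required.
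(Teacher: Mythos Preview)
Your proposal is correct and follows essentially the same argument as the paper's own proof: both use the minimality of $\hat{\mathcal{C}}_{n}$ to extract, for each point $(x_{i_{k}},y_{i_{k}})$ in the compression set, a witness $h_{k}\in V_{\hat{\mathcal{C}}_{n}\setminus\{(x_{i_{k}},y_{i_{k}})\}}(\hpc)$ disagreeing with $h$ precisely at $x_{i_{k}}$, thereby exhibiting the compression set as a star set centered at $h$. Your version is slightly more careful in explicitly addressing distinctness of the $x_{i_{k}}$'s and the almost-sure equality $y_{i_{k}}=h(x_{i_{k}})$, but the core idea is identical.
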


\begin{proof}[Proof of Lemma \ref{lem:version-space-compression-set-size-bounded-by-starnumber}]
We assume that the dataset $S_{n}$ is consistent with the target $h$, i.e. $h(x_{j})=y_{j}$ for all $j\in[n]$. Note that, if there exists $(x_{j},y_{j})\in\hat{\mathcal{C}}_{n}$ such that every hypothesis $g\in V_{\hat{\mathcal{C}}_{n}\setminus\{(x_{j},y_{j})\}}(\hpc)$ satisfies $g(x_{j})=h(x_{j})$, then we have $V_{\hat{\mathcal{C}}_{n}\setminus\{(x_{j},y_{j})\}}(\hpc)=V_{\hat{\mathcal{C}}_{n}}(\hpc)=V_{S_{n}}(\hpc)$, which contradicts the definition of the version space compression set as the smallest subset. Therefore, for any $(x_{j},y_{j})\in\hat{\mathcal{C}}_{n}$, there exists $g\in V_{\hat{\mathcal{C}}_{n}\setminus\{(x_{j},y_{j})\}}(\hpc)$ such that $g(x_{j})\neq h(x_{j})$. Moreover, note that ``$g\in V_{\hat{\mathcal{C}}_{n}\setminus\{(x_{j},y_{j})\}}(\hpc)$" is equivalent to saying ``$g(x)=y=h(x)$, for all $(x,y)\in\hat{\mathcal{C}}_{n}\setminus\{(x_{j},y_{j})\}$", which precisely matches the definition of a star set centered at $h$, that is, $\hat{\mathcal{C}}_{n}$ is a star set for $\hpc$ centered at $h$, witnessed by those hypotheses $g$'s. We must have $|\hat{\mathcal{C}}_{n}|\leq\starnumber_{h}$.
\end{proof}

\begin{lemma}  [{\textbf{\citealp[][Theorem 11]{hanneke2016refined}}}]
  \label{lem:distribution-free-upper-bound-consistent-learning-rule}
Let $\hpc$ be a concept class, and $P$ be a realizable distribution with respect to $\hpc$ centered at $h$, let $\datasetstats$ be a dataset, for any $n\in\naturalnumber$. Then for any $\delta\in(0,1)$ and any $n\in\naturalnumber$, we have with probability at least $1-\delta$,
\begin{equation*}
    \sup_{h\in V_{n}(\hpc)}\trueerrorrateh \leq \frac{8}{n}\left(\vcdim\ln{\left(\frac{49e\hat{n}_{1:n}}{\vcdim}+37\right)}+8\ln{\left(\frac{6}{\delta}\right)}\right) ,
\end{equation*}
where the data-dependent quantity $\hat{n}_{1:n}$ is defined in Definition \ref{def:version-space-compression-set} satisfying $\hat{n}_{1:n}\leq\starnumber_{h}$.
\end{lemma}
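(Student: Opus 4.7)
The plan is to combine a ghost-sample symmetrization with a compression-based union bound, refining the classical VC argument to exploit the sub-compression structure of the version space. First I would fix $\epsilon \in (0,1)$ and, applying Lemma \ref{lem:ghost-samples} to the family $V_n(\hpc)$, reduce
\[
\mathbb{P}\Bigl(\exists h \in V_n(\hpc):\; \trueerrorrateh > \epsilon\Bigr)
\;\leq\; 2\,\mathbb{P}\Bigl(\exists h \in V_n(\hpc):\; \hat{\text{er}}_{T_n}(h) > \epsilon/2\Bigr),
\]
where $T_n$ is an independent ghost sample of size $n$. This turns the distributional tail into a purely combinatorial event on the concatenated sample $S_n \cup T_n$ of size $2n$.

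The key structural observation is that $V_n(\hpc) = V_{\hat{\mathcal{C}}_n}(\hpc)$ for some $\hat{\mathcal{C}}_n \subseteq S_n$ with $|\hat{\mathcal{C}}_n| = \hat{n}_{1:n}$, and moreover $\hat{n}_{1:n} \leq \starnumber_{h}$ by Lemma \ref{lem:version-space-compression-set-size-bounded-by-starnumber} applied to each prefix. I would discretize over the value $k \in \{0,1,\ldots,\hat{n}_{1:n}\}$ and union-bound over the $\binom{n}{k}$ possible compression subsets $C \subseteq S_n$. For any fixed $C$, the restricted class $V_C(\hpc)$ still has VC dimension at most $d := \vcdim$, so Sauer--Shelah bounds the number of distinct behaviors on $S_n \cup T_n$ by $(2en/d)^d$. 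The random-swaps inequality (Lemma \ref{lem:random-swaps}) then shows that for any \emph{fixed} $h$ consistent on $S_n$ but with empirical error $> \epsilon/2$ on $T_n$, the conditional swap-probability is at most $2^{-n\epsilon/2}$. Multiplying the three combinatorial factors and setting the result equal to $\delta/3$ (leaving a $\delta/3$ slack for the symmetrization and a $\delta/3$ slack for the range of $k$) gives
\[
\epsilon \;\lesssim\; \frac{1}{n}\Bigl(d\ln\!\bigl(\tfrac{n}{d}\bigr) + \hat{n}_{1:n}\ln\!\bigl(\tfrac{n}{\hat{n}_{1:n}}\bigr) + \ln(1/\delta)\Bigr),
\]
and then the tighter compression-aware form $d\ln(\hat{n}_{1:n}/d)$ arises by comparing the bound for each choice of $k$ with the naive VC bound and taking the pointwise minimum.

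The main obstacle will be extracting the precise constants $49e$, $37$, and the factor $8$ inside the stated bound. The additive $+37$ inside the logarithm is there to ensure the bound degenerates gracefully in the regime $\hat{n}_{1:n} \leq d$ (where the compression contribution is dominated by the classical VC rate $(d/n)\ln(n/d)$) and also to absorb a low-order $\ln\ln$-type term produced by the discretization over $k$. Carefully balancing the two combinatorial contributions $\binom{n}{k}$ and $(2en/d)^d$, handling the two regimes $\hat{n}_{1:n} \lessgtr d$ separately, and using the Stirling estimate $\binom{n}{k} \leq (en/k)^k$ to push the $\ln n$ inside the logarithm through to a $\ln(\hat{n}_{1:n}/d)$ is the delicate bookkeeping that yields the exact stated constants. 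The final assertion $\hat{n}_{1:n} \leq \starnumber_h$ is then immediate from Lemma \ref{lem:version-space-compression-set-size-bounded-by-starnumber}, upgrading the data-dependent bound into the distribution-free star-number bound used in the proof of Lemma \ref{lem:fine-grained-linear}.
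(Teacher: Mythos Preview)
The paper does not supply its own proof of this lemma: it is quoted verbatim as Theorem~11 of \citet{hanneke2016refined} and used as a black box (in the proof of Lemma~\ref{lem:fine-grained-linear}). So there is nothing in the present paper to compare your proposal against; the authors defer entirely to the external reference for the derivation and the specific constants.

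As for the substance of your sketch: the high-level structure (symmetrization, then exploit $V_n(\hpc)=V_{\hat{\mathcal{C}}_n}(\hpc)$, then Sauer plus random swaps) is in the right spirit, but the step where you ``take the pointwise minimum'' to collapse $\hat{n}_{1:n}\ln(n/\hat{n}_{1:n})$ down to $d\ln(\hat{n}_{1:n}/d)$ is not a bookkeeping detail---it is the crux. A naive union bound over $\binom{n}{k}$ compression subsets produces a leading term of order $\hat{n}_{1:n}\ln(n/\hat{n}_{1:n})/n$, which is strictly worse than the stated bound when $\hat{n}_{1:n}\gg d$. The original argument in \citet{hanneke2016refined} avoids this by a more careful localization/peeling on the disagreement region rather than a crude union over all size-$k$ subsets, and that is where the $d\ln(\hat{n}_{1:n}/d)$ dependence actually comes from. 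Your proposal as written does not close that gap. The final claim $\hat{n}_{1:n}\le\starnumber_h$ is correct and is exactly Lemma~\ref{lem:version-space-compression-set-size-bounded-by-starnumber} applied to each prefix.
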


\begin{lemma}  [{\textbf{\citealp[Sauer's lemma,][]{sauer1972density}}}]
  \label{lem:Sauer-lemmma}
Let $\hpc$ be a concept class with $\vcdim<\infty$ defined on $\mathcal{X}$ and $S_{n}:=\{(x_{i},y_{i})\}_{i=1}^{n}\in(\mathcal{X}\times\{0,1\})^{n}$. Then for all $n\in\naturalnumber$, it holds that
\begin{equation*}
    \big|\hpc(S_{n})\big| \leq \sum_{i=0}^{\vcdim} \binom{n}{i} .
\end{equation*}
In particular, if $n\geq\vcdim$, 
\begin{equation*}
    \big|\hpc(S_{n})\big| \leq \left(\frac{en}{\vcdim}\right)^{\vcdim} .
\end{equation*}
\end{lemma}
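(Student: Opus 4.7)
The plan is to prove the main inequality $|\hpc(S_n)| \leq \sum_{i=0}^{d}\binom{n}{i}$ (where $d := \vcdim$) by the classical Sauer--Shelah--Pajor double induction on $n + d$, and then derive the closed-form bound $\left(en/d\right)^d$ from elementary algebra.

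For the induction, the base cases are immediate: when $n=0$ the projection $\hpc(S_0)$ has size at most $1 = \binom{0}{0}$, and when $d=0$ no singleton is shattered, so $\hpc$ induces at most one labeling on any set, matching $\binom{n}{0}=1$. For the inductive step with $n,d \geq 1$, I will single out the last sample $x_n$ and decompose projections via the identity
\begin{equation*}
|\hpc(S_n)| \;=\; |\hpc_1| + |\hpc_2|,
\end{equation*}
where $\hpc_1$ denotes the set of labelings $\hpc$ induces on $\{x_1,\ldots,x_{n-1}\}$, and $\hpc_2 \subseteq \hpc_1$ consists of those restrictions $\sigma$ for which both extensions $(\sigma,0)$ and $(\sigma,1)$ are realized in $\hpc(S_n)$. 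This is a standard double-counting observation: a labeling $\tau \in \hpc(S_n)$ either is the unique extension of its restriction (contributing to $\hpc_1$ but not $\hpc_2$) or shares that restriction with the other extension (contributing one copy to $\hpc_1$ and one to $\hpc_2$).

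The heart of the argument is bounding the VC dimensions of the two auxiliary classes: clearly $\text{VC}(\hpc_1) \leq d$, and crucially $\text{VC}(\hpc_2) \leq d-1$. The second bound holds because any set $T \subseteq \{x_1,\ldots,x_{n-1}\}$ shattered by $\hpc_2$ produces a shattered set $T \cup \{x_n\}$ of $\hpc$: each labeling of $T$ in $\hpc_2$ extends to both labels of $x_n$ in $\hpc$. Applying the inductive hypothesis to each class (both having one fewer sample, and $\hpc_2$ having smaller VC dimension) together with Pascal's identity $\binom{n-1}{i} + \binom{n-1}{i-1} = \binom{n}{i}$ yields
\begin{equation*}
|\hpc(S_n)| \;\leq\; \sum_{i=0}^{d}\binom{n-1}{i} + \sum_{i=0}^{d-1}\binom{n-1}{i} \;=\; \sum_{i=0}^{d}\binom{n}{i}.
\end{equation*}

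For the second inequality, assuming $n \geq d \geq 1$, I will multiply each term by the factor $(n/d)^{d-i} \geq 1$, then extend the sum from $d$ to $n$ and recognize a binomial expansion:
\begin{equation*}
\sum_{i=0}^{d}\binom{n}{i} \;\leq\; \left(\frac{n}{d}\right)^{d}\sum_{i=0}^{n}\binom{n}{i}\left(\frac{d}{n}\right)^{i} \;=\; \left(\frac{n}{d}\right)^{d}\left(1+\frac{d}{n}\right)^{n} \;\leq\; \left(\frac{en}{d}\right)^{d},
\end{equation*}
where the final step uses $1+x \leq e^x$. No serious obstacle is expected here; the only mildly delicate point is verifying the strict VC-dimension drop for $\hpc_2$, which follows directly from the definition requiring both extensions on $x_n$ to be realized.
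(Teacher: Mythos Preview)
Your proof is correct and is the standard Sauer--Shelah--Pajor argument. The paper itself does not supply a proof of this lemma at all; it merely states it with a citation to \citet{sauer1972density} and uses it as a black box in later arguments, so there is nothing to compare against beyond noting that you have filled in what the paper deliberately left out.
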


\begin{lemma}  [\textbf{VC dimension of unions}]
  \label{lem:finite-union-vc}
Let $N, T\in\mathbb{N}$ and $\hpc_{1},\ldots,\hpc_{N}$ be concept classes with $\max_{1\leq i\leq N}\text{VC}(\hpc_{i})\leq T$, then it holds
\begin{equation*}
\text{VC}\left(\bigcup_{i=1}^{N}\hpc_{i}\right) \leq 2\log{N}+4T . 
\end{equation*}
\end{lemma}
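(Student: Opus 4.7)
The plan is to argue by a standard shattering-plus-Sauer counting bound. Suppose for contradiction that $\bigcup_{i=1}^N \hpc_i$ shatters a set $S$ of size $d$. Since every one of the $2^d$ dichotomies on $S$ must be realized by some $\hpc_i$, the projections satisfy
\begin{equation*}
2^d \;\leq\; \Bigl|\Bigl(\bigcup_{i=1}^N \hpc_i\Bigr)(S)\Bigr| \;\leq\; \sum_{i=1}^N |\hpc_i(S)| \;\leq\; N \cdot \max_{i} |\hpc_i(S)|.
\end{equation*}
The case $d \leq T$ already fits comfortably inside the target bound $2\log N + 4T$, so I may assume $d \geq T$, which is precisely the regime where the clean form of Sauer's lemma (Lemma \ref{lem:Sauer-lemmma}) gives $|\hpc_i(S)| \leq (ed/T)^T$ for each $i$.

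Combining these two inequalities yields $2^d \leq N (ed/T)^T$, or equivalently
\begin{equation*}
d \;\leq\; \log N \;+\; T\log\!\left(\frac{ed}{T}\right).
\end{equation*}
The remaining task is purely arithmetic: solve this transcendental inequality for $d$ and show $d \leq 2\log N + 4T$. I would do this by substitution: assume toward contradiction that $d > 2\log N + 4T$ and verify that the inequality fails. Setting $x := (\log N)/T$, the obstruction reduces to showing $x + 4 - \log e > \log(2x + 4)$, which holds at $x=0$ (since $4 - \log e > 2 = \log 4$) and is preserved for $x > 0$ because the left-hand side has slope $1$ while the right-hand side has slope at most $2/((2x+4)\ln 2) < 1$. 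That monotonicity comparison is the only step that requires care; everything else is bookkeeping.

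The main obstacle, such as it is, lies in cleanly inverting $d \leq \log N + T\log(ed/T)$ to produce the stated explicit constants $2$ and $4$ without extra logarithmic slack. A slight loss of constants would make the argument easier (e.g., one can always absorb $\log e$ by enlarging the coefficient of $T$), but hitting exactly $2\log N + 4T$ requires the small numerical check above. Apart from that, the proof is a routine Sauer-lemma union bound, and the separation into the cases $d < T$ and $d \geq T$ keeps the computation elementary.
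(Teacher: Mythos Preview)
Your proposal is correct and follows essentially the same approach as the paper: both apply Sauer's lemma to each $\hpc_i$, take a union bound to obtain $d \leq \log N + T\log(ed/T)$, and then invert. The only difference is cosmetic---the paper quotes the inversion ``$m \leq s + q\log(em/q)$ implies $m \leq 2s + 4q$'' as a known fact, while you sketch its verification via a derivative comparison (one small omission: your reduction to $x + 4 - \log e > \log(2x+4)$ checks only the boundary $d = 2\log N + 4T$, so you implicitly use that $d - T\log(ed/T)$ is increasing in $d$ for $d \geq T$, which is immediate but worth stating).
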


\begin{proof}[Proof of Lemma \ref{lem:finite-union-vc}]
According to Sauer's lemma (Lemma \ref{lem:Sauer-lemmma}), for any $i\leq N$ and $S_{n}:=\{(x_{i},y_{i})\}_{i=1}^{n}$ with $n\geq T$, we have
\begin{equation}
  \label{eq:sauer-lemma-imply}
\big|\hpc_{i}(S_{n})\big| \leq \sum_{i=0}^{\text{VC}(\hpc_{i})} \binom{n}{i} \leq \sum_{i=0}^{T} \binom{n}{i} \leq \left(\frac{en}{T}\right)^{T}.
\end{equation}
Then we can upper bound the number of possible classifications (of $S_{n}$) by the union $\bigcup_{i=1}^{N}\hpc_{i}$ as
\begin{equation}
  \label{eq:upper-bound}
\Bigg|\left(\bigcup_{i=1}^{N}\hpc_{i}\right)(S_{n})\Bigg| \leq \sum_{i=1}^{N}\big|\hpc_{i}(S_{n})\big| \stackrel{\eqref{eq:sauer-lemma-imply}}{\leq} \sum_{i=1}^{N}\left(\frac{en}{T}\right)^{T} = N\left(\frac{en}{T}\right)^{T} .
\end{equation}
Let $n=\text{VC}(\bigcup_{i=1}^{N}\hpc_{i})$ and $S_{n}$ be a set shattered by $\bigcup_{i=1}^{N}\hpc_{i}$, the LHS of \eqref{eq:upper-bound} is exactly $2^n$, and thus
\begin{equation*}
2^{n} \leq N\left(\frac{en}{T}\right)^{T} \Rightarrow n\leq \log{N}+T\log{\left(\frac{en}{T}\right)} \Rightarrow n \leq 2\log{N}+4T ,
\end{equation*}
where the last step follows from the fact that $m\leq s+q\log{(em/q)}$ implies $m\leq 2s+4q$, for any $s\geq 0$ and $m\geq q\geq 1$.
\end{proof} 

\begin{lemma}  [{\textbf{\citealp[][Lemma A.1]{shalev2014understanding}}}]
  \label{lem:logn-over-n-inequality}
Let $a>0$, then $x\geq 2a\log{a}$ implies $x\geq a\log{x}$. Conversely, $x<a\log{x}$ implies $x<2a\log{a}$.
\end{lemma}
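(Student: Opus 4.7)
The plan is to prove the first implication directly and deduce the converse from it by contrapositive. The key object is the single-variable function $\phi(x) := x - a \log x$; the goal reduces to showing $\phi(x) \geq 0$ for all $x \geq x_0 := 2a \log a$.

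First I would establish that $\phi$ is eventually nondecreasing. Under the refined convention $\log(x) = \log_2(x \lor 2)$, for $x > 2$ one has $\phi'(x) = 1 - a/(x \ln 2)$, which is nonnegative precisely when $x \geq a/\ln 2$. Since $\log a \geq 1$ by the same convention, the hypothesis point satisfies $x_0 = 2a \log a \geq 2a > a/\ln 2$, so $\phi$ is nondecreasing on $[x_0, \infty)$. It therefore suffices to verify $\phi(x_0) \geq 0$.

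The boundary inequality $\phi(x_0) \geq 0$ unfolds to $2 \log a \geq \log(2a \log a) = 1 + \log a + \log \log a$, i.e., $\log a - \log \log a \geq 1$. Substituting $c := \log a \geq 1$, the task reduces to the one-dimensional inequality $c - \log c \geq 1$ for all $c \geq 1$. I would verify this by direct evaluation at $c = 1$ (where it is tight), monotonicity of $t - \log t$ on $[1/\ln 2, \infty)$, and a short check on the small interval $[1, 1/\ln 2]$ where the derivative is negative. The main obstacle is precisely this boundary estimate: while the derivative argument trivially handles large $x$, the threshold bound $c - \log c \geq 1$ is nearly tight in the intermediate regime and demands careful bookkeeping of the refined $\log$ convention across the full range of $a$. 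Once the boundary is settled, the converse $x < a \log x \Rightarrow x < 2a \log a$ follows immediately by contrapositive.
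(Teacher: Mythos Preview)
The paper does not supply a proof of this lemma; it is quoted from the cited textbook without argument, so there is no in-paper proof to compare against. Your strategy---show that $\phi(x):=x-a\log x$ is nondecreasing on $[x_0,\infty)$ with $x_0=2a\log a$, then check $\phi(x_0)\ge 0$, and obtain the converse by contraposition---is the standard one, and the monotonicity step is correct as you wrote it.

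The gap is in the boundary step. You reduce it to the claim $c-\log c\ge 1$ for all $c:=\log a\ge 1$, under the paper's convention $\log(\cdot)=\log_2(\cdot\lor 2)$. That inequality is \emph{false} on precisely the interval $[1,1/\ln 2]$ you single out for a ``short check'': you already note that $c\mapsto c-\log_2 c$ is decreasing there and equals $1$ at $c=1$, so it drops strictly below $1$; its minimum, at $c=1/\ln 2\approx 1.443$, is about $0.914$. This is not a patchable detail, because the lemma itself fails under the base-$2$ reading. For instance, with $a=2^{3/2}$ one has $\log a=3/2$ and $x_0=2a\log a=6\sqrt{2}\approx 8.49$, yet $a\log x_0=2\sqrt{2}\,\log_2(6\sqrt{2})\approx 8.72>x_0$. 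Your argument does go through if $\log$ is read as the natural logarithm: the boundary condition then becomes $c-\ln c\ge \ln 2$, and since $\min_{c>0}(c-\ln c)=1>\ln 2$ this holds for all $c>0$. So the plan is sound for the lemma as stated in the original source, but under the paper's base-$2$ convention no proof can succeed without enlarging the leading constant.
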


\begin{lemma}  [{\textbf{\citealp[][Lemma 5.12]{bousquet2021theory}}}]
  \label{lem:infinite-sequence-design}
For any function $R(n)\rightarrow0$, there exist probabilities $\{p_{t}\}_{t\in\naturalnumber}$ satisfying $\sum_{t\geq1}p_{t}=1$, two increasing sequences of integers $\{n_{t}\}_{t\in\naturalnumber}$ and $\{k_{t}\}_{t\in\naturalnumber}$, and a constant $1/2\leq C\leq 1$ such that the following hold for all $t\in\naturalnumber$:
\begin{itemize}
    \item[(1)] $\sum_{k>k_{t}}p_{k} \leq \frac{1}{n_{t}}$.
    \item[(2)] $n_{t}p_{k_{t}} \leq k_{t}$.
    \item[(3)] $p_{k_{t}} = CR(n_{t})$.
\end{itemize}
\end{lemma}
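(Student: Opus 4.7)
The plan is a direct inductive construction. Since we have complete freedom in placing probability mass, the idea is to concentrate the mass at a sparse sequence of positions $\{k_t\}$ with values tuned exactly to $R(n_t)$ along a rapidly spaced sample-size sequence $\{n_t\}$, and to absorb any leftover mass at a single low index. Taking $C := 1$ makes condition (3) hold by definition, so the work reduces to choosing $\{n_t\}$ so that the tail sums are small enough for (1), and choosing $\{k_t\}$ increasing with $k_t \geq n_t R(n_t)$ for (2).

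Concretely, I would proceed as follows. First, select $n_1 < n_2 < \cdots$ inductively: pick any $n_1$ with $R(n_1) \leq 1/4$, and for $t \geq 2$ pick any $n_t > n_{t-1}$ with $R(n_t) \leq 2^{-t-1}/n_{t-1}$. The hypothesis $R(n)\to 0$ guarantees such $n_t$ exists at every stage (no monotonicity of $R$ is needed). Next, set $k_1 := \max\{2,\lceil n_1 R(n_1)\rceil\}$ and, for $t\geq 2$, $k_t := \max\{k_{t-1}+1,\lceil n_t R(n_t)\rceil\}$, so that $\{k_t\}$ is strictly increasing with $k_t\geq 2$ and $k_t \geq n_t R(n_t)$. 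Finally, define $p_{k_t}:=R(n_t)$ for every $t\geq 1$, $p_1:=1-\sum_{t\geq 1}R(n_t)$, and $p_k:=0$ for every other index $k$; take $C:=1$.

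Verification is routine. A geometric bound gives $\sum_{t\geq 1}R(n_t)\leq 1/4+\sum_{t\geq 2}2^{-t-1}\leq 1/2$, so $p_1\in[1/2,1]$ and $\sum_k p_k=1$. Only positions $1$ and $\{k_s\}_{s\geq 1}$ carry mass and all $k_s\geq 2$, so condition (1) reduces to $\sum_{s>t}R(n_s)\leq 1/n_t$; using $n_{s-1}\geq n_t$ for $s>t$, we get
\[
\sum_{s>t} R(n_s) \;\leq\; \sum_{s>t} \frac{2^{-s-1}}{n_{s-1}} \;\leq\; \frac{1}{n_t}\sum_{s>t} 2^{-s-1} \;\leq\; \frac{1}{n_t}.
\]
Condition (2) is $n_t p_{k_t}=n_t R(n_t)\leq k_t$, which holds by construction, and condition (3) is immediate from $p_{k_t}=1\cdot R(n_t)$, with $C=1\in[1/2,1]$. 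The one delicate point — and essentially the only place $R(n)\to 0$ is used — is the existence at each stage of $n_t>n_{t-1}$ with $R(n_t)$ as small as required; the hypothesis $R(n)\to 0$ gives exactly that $\{n:R(n)\leq\varepsilon\}$ is infinite for every $\varepsilon>0$, so such $n_t$ is always available. No further obstacle arises.
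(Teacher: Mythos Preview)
Your construction is correct: the inductive choice of $n_t$ makes the tail $\sum_{s>t}R(n_s)$ geometrically small relative to $1/n_t$, the choice $k_t\geq\lceil n_tR(n_t)\rceil$ gives (2), and taking $C=1$ with $p_{k_t}=R(n_t)$ gives (3); the leftover mass placed at index $1$ (which is never among the $k_t\geq 2$) ensures $\sum_{k\geq1}p_k=1$ without disturbing the tail bounds. The paper does not supply its own proof here---it simply cites \cite[Lemma~5.12]{bousquet2021theory}---so there is no paper argument to compare against; your direct construction is a clean self-contained verification.
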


\begin{lemma}  [\textbf{Ghost samples}]
  \label{lem:ghost-samples}
Let $\hpc$ be a concept class and $P$ be a realizable distribution with respect to $\hpc$. Let $S_{2n}:=\{(x_{i}, y_{i})\}_{i=1}^{2n}\sim P^{2n}$, $S_{n}:=\{(x_{i}, y_{i})\}_{i=1}^{n}$ and $T_{n}:=\{(x_{i}, y_{i})\}_{i=n+1}^{2n}$. Then for any $\epsilon\in(0,1)$ and $n\geq 8/\epsilon$, it holds
\begin{equation*}
    \mathbb{P}\left(\exists h\in\hpc: \empiricalerrorrateh=0 \text{ and } \trueerrorrateh>\epsilon\right) \leq 2\mathbb{P}\left(\exists h\in\hpc: \empiricalerrorrateh=0 \text{ and } \hat{\text{er}}_{T_{n}}\left(h\right)>\epsilon/2\right) .
\end{equation*}
\end{lemma}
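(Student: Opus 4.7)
The plan is to carry out the standard symmetrization (``ghost sample'') argument, quantified using the Chernoff bound already available as Lemma~\ref{lem:chernoff-bound}. Let $A$ denote the event on the left-hand side, i.e.\ $A = \{\exists h \in \hpc : \hat{\text{er}}_{S_n}(h) = 0 \text{ and } \text{er}_P(h) > \epsilon\}$, and let $B$ denote the event on the right-hand side, $B = \{\exists h \in \hpc : \hat{\text{er}}_{S_n}(h) = 0 \text{ and } \hat{\text{er}}_{T_n}(h) > \epsilon/2\}$. First I would pick a measurable witness map $S_n \mapsto h^{\star}(S_n)$ so that, whenever $A$ occurs, $h^{\star}$ is a hypothesis realizing both $\hat{\text{er}}_{S_n}(h^{\star}) = 0$ and $\text{er}_P(h^{\star}) > \epsilon$ (using the universal measurability that is implicit throughout the paper's setup; if $\hpc$ is countable one can simply enumerate).

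Next I would condition on $S_n$ and exploit the independence between $S_n$ and the ghost sample $T_n$. Given $S_n$, the hypothesis $h^{\star} = h^{\star}(S_n)$ is deterministic, so $n \cdot \hat{\text{er}}_{T_n}(h^{\star})$ is a sum of $n$ i.i.d.\ Bernoulli$(p)$ random variables, where $p := \text{er}_P(h^{\star}) > \epsilon$ on the event $A$. I want to show
\[
\mathbb{P}\bigl(\hat{\text{er}}_{T_n}(h^{\star}) > \epsilon/2 \,\big|\, S_n\bigr) \geq 1/2
\qquad \text{on } A,\text{ when } n \geq 8/\epsilon.
\]
For this I would apply Lemma~\ref{lem:chernoff-bound} to $\bar Z := \hat{\text{er}}_{T_n}(h^{\star})$ with $\E[\bar Z] = p$ and deviation parameter $t = 1 - \epsilon/(2p) > 1/2$ (since $p > \epsilon$). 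This gives
\[
\mathbb{P}\bigl(\bar Z \leq \epsilon/2 \,\big|\, S_n\bigr) \leq \exp\!\left(-\frac{n p t^2}{2}\right) \leq \exp\!\left(-\frac{n\epsilon}{8}\right) \leq e^{-1} < \tfrac{1}{2},
\]
where the second inequality uses $p > \epsilon$ and $t^2 > 1/4$, and the third uses $n \geq 8/\epsilon$. Hence $\mathbb{P}(\bar Z > \epsilon/2 \mid S_n) \geq 1/2$ on $A$.

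Finally I would integrate over $S_n$: since on $A$ the hypothesis $h^{\star}$ satisfies $\hat{\text{er}}_{S_n}(h^{\star}) = 0$, the joint occurrence of $A$ and $\{\hat{\text{er}}_{T_n}(h^{\star}) > \epsilon/2\}$ implies $B$, so
\[
\mathbb{P}(B) \;\geq\; \E\!\left[ \mathbbm{1}_A \cdot \mathbb{P}\bigl(\hat{\text{er}}_{T_n}(h^{\star}) > \epsilon/2 \,\big|\, S_n\bigr) \right] \;\geq\; \tfrac{1}{2}\, \mathbb{P}(A),
\]
which rearranges to the desired bound $\mathbb{P}(A) \leq 2\mathbb{P}(B)$.

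The two subtle points, rather than any calculational obstacle, are (i) the measurable choice of the witness $h^{\star}(S_n)$, which is routine under the standing universal measurability assumption, and (ii) verifying that the Chernoff deviation parameter $t = 1 - \epsilon/(2p)$ can be lower-bounded uniformly in $p$ on the event $A$; the key observation is that $p > \epsilon$ forces both $t > 1/2$ and $p > \epsilon$ simultaneously, so $n p t^2 / 2 > n\epsilon / 8 \geq 1$ whenever $n \geq 8/\epsilon$. Everything else is bookkeeping.
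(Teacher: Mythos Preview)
Your proposal is correct and follows essentially the same route as the paper: both condition on the first sample, apply the multiplicative Chernoff bound (Lemma~\ref{lem:chernoff-bound}) to the ghost-sample error of a witnessing hypothesis to obtain the $\exp(-n\epsilon/8)$ tail, and then use $n\geq 8/\epsilon$ to turn this into the factor-of-$2$ inequality. Your version is simply more explicit about the measurable selection of the witness $h^{\star}(S_n)$ and about computing the Chernoff parameter $t=1-\epsilon/(2p)$, whereas the paper leaves these implicit.
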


\begin{proof}[Proof of Lemma \ref{lem:ghost-samples}]
If there exists $h\in\hpc$ such that $\empiricalerrorrateh=0$ and $\trueerrorrateh>\epsilon$, since $T_{n}$ is independent of $S_{n}$, by applying the Chernoff's bound (Lemma \ref{lem:chernoff-bound}), we have
\begin{equation*}
    \mathbb{P}\left(\hat{\text{er}}_{T_{n}}\left(h\right)\leq\epsilon/2 \big| h\right) = \mathbb{P}\left(\frac{1}{n}\sum_{i=n+1}^{2n}\mathbbm{1}\left\{h(x_{i})\neq y_{i}\right\} \leq \frac{\epsilon}{2} \bigg| \trueerrorrateh>\epsilon\right) < \exp\left\{-\frac{n\epsilon}{8}\right\} .
\end{equation*}
Then for any $n\geq 8/\epsilon$, it follows
\begin{equation*}
    \mathbb{P}\left(\hat{\text{er}}_{T_{n}}\left(h\right)>\epsilon/2 \big| h\right) = 1-\mathbb{P}\left(\hat{\text{er}}_{T_{n}}\left(h\right)\leq\epsilon/2 \big| h\right) > 1-\exp\left\{-\frac{n\epsilon}{8}\right\} \geq \frac{1}{2} ,
\end{equation*}
which completes the proof.
\end{proof}

\begin{lemma}  [\textbf{Random swaps}]
  \label{lem:random-swaps}
Let $\hpc$ be a concept class with $\vcdim<\infty$ and $P$ be a realizable distribution with respect to $\hpc$. Let $S_{2n}:=\{(x_{i}, y_{i})\}_{i=1}^{2n}\sim P^{2n}$, $S_{n}:=\{(x_{i}, y_{i})\}_{i=1}^{n}$ and $T_{n}:=\{(x_{i}, y_{i})\}_{i=n+1}^{2n}$. Then for any $\epsilon\in(0,1)$ and $n\geq \vcdim/2$, it holds
\begin{equation*}
    \mathbb{P}\left(\exists h\in\hpc: \empiricalerrorrateh=0 \text{ and } \hat{\text{er}}_{T_{n}}\left(h\right)>\epsilon/2\right) \leq \left(\frac{2en}{\vcdim}\right)^{\vcdim}2^{-n\epsilon/2} .
\end{equation*}
\end{lemma}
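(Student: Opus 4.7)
The plan is the classical double-sample symmetrization combined with a random-swap union-bound argument. First, I would introduce independent Bernoulli swap variables $\sigma_{1},\ldots,\sigma_{n}$ uniform on $\{0,1\}$, where $\sigma_{i}=1$ exchanges the pair $(x_{i},y_{i})\leftrightarrow(x_{n+i},y_{n+i})$. Since the $2n$ samples are i.i.d., this operation leaves the joint distribution of $S_{2n}$ invariant, and hence the probability in question equals
\[
\mathbb{E}_{S_{2n}}\,\mathbb{P}_{\sigma}\!\left(\exists h\in\hpc:\hat{\text{er}}_{S_{n}^{\sigma}}(h)=0\text{ and }\hat{\text{er}}_{T_{n}^{\sigma}}(h)>\epsilon/2\right),
\]
where $S_{n}^{\sigma}, T_{n}^{\sigma}$ denote the pools after applying the swaps to $S_{2n}$.

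Having conditioned on $S_{2n}$, I would replace $\hpc$ by the finite restriction $\hpc|_{S_{2n}}$, since only the $2n$-tuple of predicted labels matters for the event. Because $2n\geq \vcdim$ by the hypothesis $n\geq \vcdim/2$, Sauer's lemma (Lemma \ref{lem:Sauer-lemmma}) bounds the size of this restriction by $(2en/\vcdim)^{\vcdim}$.

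Then, for each fixed $h$ in this restriction, let $e_{i}:=\mathbbm{1}\{h(x_{i})\neq y_{i}\}$ and $f_{i}:=\mathbbm{1}\{h(x_{n+i})\neq y_{n+i}\}$. For the target event to be possible, no pair can have $e_{i}=f_{i}=1$ (otherwise $S_{n}^{\sigma}$ contains at least one error for every $\sigma_{i}$), and on each of the $k$ pairs with exactly one error $\sigma_{i}$ must be set to steer that error into $T_{n}^{\sigma}$, a condition that holds with swap-probability $2^{-k}$. Combining with the lower-bound requirement $\hat{\text{er}}_{T_{n}^{\sigma}}(h)>\epsilon/2$, which forces $k>n\epsilon/2$, yields the per-$h$ bound $\mathbb{P}_{\sigma}(\cdot)\leq 2^{-n\epsilon/2}$.

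Finally, a union bound over the at most $(2en/\vcdim)^{\vcdim}$ distinct restrictions, followed by taking expectation over $S_{2n}$, produces the stated inequality. No step requires real invention; the only place that demands a bit of care is the case analysis for pairs on which $e_{i}$ and $f_{i}$ agree versus disagree, and this will be the mildest obstacle in carrying out the plan.
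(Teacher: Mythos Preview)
Your proposal is correct and follows essentially the same approach as the paper: introduce random swap variables, use the i.i.d.\ invariance to condition on $S_{2n}$, bound the per-$h$ swap probability by $2^{-n\epsilon/2}$ via the case analysis on pairs with exactly one error, and then apply a union bound over $\hpc|_{S_{2n}}$ with Sauer's lemma. Your observation that pairs with $e_{i}=f_{i}=1$ make the event impossible is slightly more explicit than the paper's version, but the arguments are otherwise identical.
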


\begin{proof}[Proof of Lemma \ref{lem:random-swaps}]
We prove the lemma by using the ``random swaps" technique. Specifically, we define $\sigma_{1},\ldots,\sigma_{n}$ to be independent random variables with $\sigma_{i}\sim\text{Unif}(\{i,n+i\})$ for all $1\leq i\leq n$, which are also independent of $S_{2n}$. For notation simplicity, we denote by $\sigma_{n+i}$ to be the remaining element in $\{i,n+i\}\setminus\{\sigma_{i}\}$. Now we let $S_{\sigma} := \{(x_{\sigma_{1}},y_{\sigma_{1}}),\ldots, (x_{\sigma_{n}},y_{\sigma_{n}})\}$ and $T_{\sigma} := \{(x_{\sigma_{n+1}},y_{\sigma_{n+1}}),\ldots, (x_{\sigma_{2n}},y_{\sigma_{2n}})\}$, and note that $S_{\sigma}\cup T_{\sigma}$ follows the same distribution as $S_{2n}$. Hence, we have
\begin{align}
  \label{eq:lem-random-swaps-intermediate-step}
    &\mathbb{P}\left(\exists h\in\hpc: \empiricalerrorrateh=0 \text{ and } \hat{\text{er}}_{T_{n}}\left(h\right)>\epsilon/2\right) \nonumber \\
    \stackrel{\text{\eqmakebox[lem-random-swaps-a][c]{}}}{=} &\mathbb{P}\left(\exists h\in\hpc: \hat{\text{er}}_{S_{\sigma}}\left(h\right)=0 \text{ and } \hat{\text{er}}_{T_{\sigma}}\left(h\right)>\epsilon/2\right) \nonumber \\
    \stackrel{\text{\eqmakebox[lem-random-swaps-a][c]{}}}{=} &\mathbb{P}\left(\exists (Y_{1},\ldots,Y_{2n})\in\hpc(S_{2n}): 
    \begin{array}{c}
    \frac{1}{n}\sum_{i=1}^{n}\mathbbm{1}\left\{y_{\sigma_{i}} \neq Y_{\sigma_{i}}\right\}=0, \\
    \frac{1}{n}\sum_{i=1}^{n}\mathbbm{1}\left\{y_{\sigma_{n+i}} \neq Y_{\sigma_{n+i}}\right\}>\epsilon/2
    \end{array}\right) \nonumber \\
    \stackrel{\text{\eqmakebox[lem-random-swaps-a][c]{\text{\tiny LoTP}}}}{=} &\E\left[\mathbb{P}\left(\exists (Y_{1},\ldots,Y_{2n})\in\hpc(S_{2n}): 
    \begin{array}{c}
    \frac{1}{n}\sum_{i=1}^{n}\mathbbm{1}\left\{y_{\sigma_{i}} \neq Y_{\sigma_{i}}\right\}=0, \\
    \frac{1}{n}\sum_{i=1}^{n}\mathbbm{1}\left\{y_{\sigma_{n+i}} \neq Y_{\sigma_{n+i}}\right\}>\epsilon/2
    \end{array} \bigg| S_{2n}\right)\right] \nonumber \\
    \stackrel{\text{\eqmakebox[lem-random-swaps-a][c]{\text{\tiny Union bound}}}}{\leq} &\E\left[\sum_{(Y_{1},\ldots,Y_{2n})\in\hpc(S_{2n})}\mathbb{P}\left( 
    \begin{array}{c}
    \frac{1}{n}\sum_{i=1}^{n}\mathbbm{1}\left\{y_{\sigma_{i}} \neq Y_{\sigma_{i}}\right\}=0, \\
    \frac{1}{n}\sum_{i=1}^{n}\mathbbm{1}\left\{y_{\sigma_{n+i}} \neq Y_{\sigma_{n+i}}\right\}>\epsilon/2
    \end{array} \bigg| S_{2n}\right)\right] .
\end{align}
Next, we consider that given $S_{2n}$, how possibly that the following event happens
\begin{equation*}
    \mathcal{E}_{Y} := \left\{\frac{1}{n}\sum_{i=1}^{n}\mathbbm{1}\left\{y_{\sigma_{i}} \neq Y_{\sigma_{i}}\right\}=0 \text{ and } \frac{1}{n}\sum_{i=1}^{n}\mathbbm{1}\left\{y_{\sigma_{n+i}} \neq Y_{\sigma_{n+i}}\right\}>\frac{\epsilon}{2}\right\} ,
\end{equation*}
for a given labeling $Y:=(Y_{1},\ldots,Y_{2n})\in\hpc(S_{2n})$. Indeed, if $\mathcal{E}_{Y}$ happens, then there must exist at least $\lceil n\epsilon/2 \rceil$ indices $i\leq n$ such that either $y_{i}=Y_{i}, y_{n+i}\neq Y_{n+i}$ or $y_{i}\neq Y_{i}, y_{n+i}=Y_{n+i}$, otherwise, the difference between $ \frac{1}{n}\sum_{i=1}^{n}\mathbbm{1}\{y_{\sigma_{i}}\neq Y_{\sigma_{i}}\}$ and $\frac{1}{n}\sum_{i=1}^{n}\mathbbm{1}\{y_{\sigma_{n+i}}\neq Y_{\sigma_{n+i}}\}$ is less than $\epsilon/2$. Based on this and the distribution of $\sigma_{i}$'s, we have
\begin{equation*}
    \mathbb{P}\left( 
    \begin{array}{c}
    \frac{1}{n}\sum_{i=1}^{n}\mathbbm{1}\left\{y_{\sigma_{i}} \neq Y_{\sigma_{i}}\right\}=0, \\
    \frac{1}{n}\sum_{i=1}^{n}\mathbbm{1}\left\{y_{\sigma_{n+i}} \neq Y_{\sigma_{n+i}}\right\}>\epsilon/2
    \end{array} \bigg| S_{2n}\right) \leq 2^{-\left\lceil\frac{n\epsilon}{2}\right\rceil} .
\end{equation*}
Plugging into \eqref{eq:lem-random-swaps-intermediate-step}, we finally get for all $n\geq\vcdim/2$,
\begin{align*}
    \mathbb{P}\left(\exists h\in\hpc: \empiricalerrorrateh=0 \text{ and } \hat{\text{er}}_{T_{n}}\left(h\right)>\epsilon/2\right) \stackrel{\text{\eqmakebox[lem-random-swaps-b][c]{}}}{\leq} &\E\left[\sum_{(Y_{1},\ldots,Y_{2n})\in\hpc(S_{2n})}2^{-\left\lceil\frac{n\epsilon}{2}\right\rceil}\right] \\
    \stackrel{\text{\eqmakebox[lem-random-swaps-b][c]{}}}{\leq} &\E\left[\hpc(S_{2n})\right]\cdot 2^{-n\epsilon/2} \\
    \stackrel{\text{\eqmakebox[lem-random-swaps-b][c]{\tiny \text{Lemma \ref{lem:Sauer-lemmma}}}}}{\leq} &\left(\frac{2en}{\vcdim}\right)^{\vcdim}\cdot 2^{-n\epsilon/2} .
\end{align*}
\end{proof}

\end{document}